\newtheorem{proposition}{Proposition}
\newtheorem{lemma}{Lemma}
\newtheorem{theorem}{Theorem}
\newtheorem{definition}{Definition}
\newtheorem{example}{Example}
    \newtheoremstyle{TheoremNum}
        {\topsep}{\topsep}              
        {\itshape}                      
        {}                              
        {\bfseries}                     
        {.}                             
        { }                             
        {\thmname{#1}\thmnote{ \bfseries #3}}
    \theoremstyle{TheoremNum}
    \newtheorem{reptheorem}{Theorem}
    \theoremstyle{TheoremNum}
\newcommand{\Var}{\mathrm{Var}}
\DeclareMathOperator*{\expectation}{\mathbb{E}}
\DeclareMathOperator{\sgn}{sgn}
\DeclareMathOperator*{\argmin}{arg\,min}
\newif\ifdraft
\newcommand*{\underlyingSpace}{\mathbb{X}}
\newcommand*{\underlyingSpacePoint}{x}
\newcommand*{\underlyingSpaceDistribution}{\mathcal{D}_{\underlyingSpace}}
\newcommand*{\underlyingSpaceRandomVariable}{X}
\newcommand*{\underlyingSpaceSample}{T}
\newcommand*{\kTupleSpace}{\underlyingSpace^K}
\newcommand*{\kTupleToUnderlyingDistance}{\Delta}
\newcommand*{\kTuple}{\bar{\underlyingSpacePoint}}
\newcommand*{\kTupleRandomVariable}{\bar{X}}
\newcommand*{\kTupleSample}{S}
\newcommand*{\Ecc}{\mathrm{Ecc}}
\newcommand*{\EccHat}{\mathrm{\widehat{E}cc}}
\newcommand*{\medoid}{\nu}
\newcommand*{\trueEccSampleMedoid}{\Hat{\medoid}}
\newcommand*{\sampleMedoid}{\Hat{\trueEccSampleMedoid}}
\newcommand*{\errOne}{\text{err}_1}
\newcommand*{\errTwo}{\text{err}_2}
\newcommand*{\errThree}{\text{err}_3}
\newcommand*{\errMME}{\text{err}_{\text{MME}}}
\newcommand*{\errME}{\text{err}_{\text{ME}}}
\newcommand*{\candidateMedoid}{\underlyingSpacePoint^{\text{cur}}}
\newcommand*{\candidateMedoidSwap}{\underlyingSpacePoint_{il}}
\newcommand*{\candidateMedoidUpdated}{\underlyingSpacePoint^{\text{new}}}
\newcommand{\standardizedCDFOfDifference}{F_{\tilde{d}}}
\newcommand{\zscoreOfZero}{z_{0}}
\newcommand{\familyParameters}{\alpha, \beta, \gamma, k}
\newcommand{\familyParametersAugumented}{\familyParameters, n}
\newcommand{\zscoreOfZeroFirstTerm}{\frac{\mu_{1:2}^{1 - \frac{\beta}{2}} n^{\frac{1}{2}}}{\alpha^{\frac{1}{2}}}}
\newcommand{\zscoreOfZeroSecondTerm}{\frac{\delta}{\left((1 + \delta)^{\beta} + 1 + \frac{2k}{\alpha \mu_{1:2}^{\beta}} \right)^{\frac{1}{2}}}}
\newcommand{\zscoreOfZeroFirstTermInf}{\frac{\gamma^{1 - \frac{\beta}{2}} n^{\frac{1}{2}}}{\alpha^{\frac{1}{2}}}}
\newcommand{\zscoreOfZeroSecondTermInf}{\frac{\delta}{\left((1 + \delta)^{\beta} + C_1 \right)^{\frac{1}{2}}}}
\newcommand*{\relError}{\epsilon}
\newcommand{\denominatorFunction}{\mathrm{dr}}
\newcommand{\denominatorFunctionExpression}{\left((1 + \delta)^\beta + C_1\right)^{\frac{1}{2}}}
\newcommand*{\partialDerivative}[2]{\frac{\partial #1}{\partial #2}}
\newcommand{\zscoreOfZeroUBTwo}{\zscoreOfZero^{\mathrm{UB}}}
\newcommand*{\relErrorUBTwo}{\relError^{\text{UBInt}}}
\newcommand*{\relErrorUBThree}{\relError^{\mathrm{UB}}}
\newcommand*{\zscoreOfZeroUBTwoInverse}{\zscoreOfZero^{\mathrm{UB2}-1}}
\newcommand*{\range}{\mathcal{R}}
\newcommand*{\standardNormalCDF}{\Phi}
\newcommand*{\normalDistribution}{\mathcal{N}}
\newcommand*{\relErrorUBThreeNormal}{\relError^{\mathrm{UB3}}_{\normalDistribution}}
\newcommand*{\relErrorUBThreeNormalInverse}{\relError^{\mathrm{UB3}-1}_{\normalDistribution}}
\newcommand*{\relErrorNormal}{\relError_{\normalDistribution}}
\newcommand*{\GGen}{G_{\mathrm{gen}}}
\newcommand*{\GNormal}{G_{\normalDistribution}}
\newcommand*{\GBE}{G_{\mathrm{BE}}}
\newcommand*{\relErrorUBThreeBE}{\relError^{\mathrm{UB3}}_{\mathrm{BE}}}
\newcommand*{\relErrorUBThreeGen}{\relError^{\mathrm{UB3}}_{\mathrm{gen}}}
\newcommand*{\relErrorUBThreeBEInverse}{\relError^{\mathrm{UB3-1}}_{\mathrm{BE}}}
\newcommand*{\relErrorUBThreeGenInverse}{\relError^{\mathrm{UB3-1}}_{\mathrm{gen}}}
\newcommand*{\relErrorGen}{\relError_{\mathrm{gen}}}
\newcommand*{\deltaTh}{\delta^{\mathrm{th}}}
\newcommand*{\deltaThNormal}{\delta^{\mathrm{th}}_{\normalDistribution}}
\newcommand*{\deltaThGen}{\delta^{\mathrm{th}}_{\mathrm{gen}}}
\newcommand*{\kurtosis}{\kappa}
\newcommand*{\kurtosisUB}{\kappa^{\mathrm{UB}}}
\newcommand*{\kurtosisUBTwo}{\kappa^{\mathrm{UB2}}}
\newcommand*{\relErrorUBFour}{\relError^{\mathrm{UB4}}}
\newcommand*{\relErrorUBFourNormal}{\relError^{\mathrm{UB4}}_{\mathcal{N}}}
\newcommand*{\relErrorUBFourGen}{\relError^{\mathrm{UB4}}_{\mathrm{gen}}}
\newcommand*{\relErrorUBFiveGen}{\relError^{\mathrm{UB5}}_{\mathrm{gen}}}
\newcommand*{\muTh}{\mu^{\mathrm{th}}}
\newcommand*{\musOrderedM}{\mu_{1:m}, \ldots, \mu_{m:m}}
\newcommand*{\probabilityOfChoosingim}{\mathbb{P}_{i:m}}
\newcommand*{\TStarExpansion}{\frac{C_6^{\frac{1}{3}}}{2^{\frac{1}{3}} C_2 m^{\frac{2}{3}} n^{\frac{2}{3}}}}
\newcommand*{\TStarExpansionInverse}{\frac{2^{\frac{1}{3}} C_2 m^{\frac{2}{3}} n^{\frac{2}{3}}}{C_6^{\frac{1}{3}}}}
\newcommand*{\pMax}{p_{\max}}
\newcommand*{\mMin}{m_{\min}}
\newcommand*{\mMinOne}{m_{\min1}}
\newcommand*{\mMinTwo}{m_{\min2}}
\newcommand*{\mMinThree}{m_{\min3}}
\newcommand*{\mMinFour}{m_{\min4}}
\newcommand*{\CInTermsOfp}{\frac{2 \times 150^{\frac{3}{2}} C_6^{\frac{1}{2}}}{C_2^{\frac{3}{2}} p^{\frac{3}{2}}}}
\newcommand*{\CInTermsOfpInverse}{\frac{C_2^{\frac{3}{2}} p^{\frac{3}{2}}}{2 \times 150^{\frac{3}{2}} C_6^{\frac{1}{2}}}}
\begin{document}

\twocolumn[

\aistatstitle{Scalable K-Medoids via True Error Bound and Familywise Bandits}

\aistatsauthor{Aravindakshan Babu, Saurabh Agarwal, Sudarshan Babu, Hariharan Chandrasekaran}
]

\begin{abstract}
K-Medoids(KM) is a standard clustering method, used extensively on semi-metric data.
Error analyses of KM have traditionally used an in-sample notion of error,
which can be far from the true error and suffer from generalization gap.
We formalize the true K-Medoid error based on the underlying data distribution.
We decompose the true error into fundamental statistical problems of: minimum estimation (ME) and
minimum mean estimation (MME). 
We provide a convergence result for MME. 
We show $\errMME$ decreases no slower than
$\Theta(\frac{1}{n^{\frac{2}{3}}})$, where $n$ is a measure of sample size.
Inspired by this bound, we propose a computationally efficient, distributed KM algorithm namely MCPAM.  
MCPAM has expected runtime $\mathcal{O}(km)$,
where $k$ is the number of medoids and $m$
is number of samples.
MCPAM provides massive computational savings for a small tradeoff in accuracy. 
We verify the quality and scaling properties of MCPAM on various datasets.
And achieve the hitherto unachieved feat of calculating the KM of 1 billion points on semi-metric spaces.
\end{abstract}

\section{Introduction}

K-medoids \cite{kaufman1990partitioning} is an extremely general clustering method.
Given a data sample, it finds K data points that are the centers of K clusters.
The K points together are called a K-medoid.
K-medoids is extensively used on semi-metric data, that doesn't necessarily respect the triangle inequality.
Eg internet RTT \cite{fraignaudRTT},
RNA-Seq analysis \cite{ntranos2016fast},
recommender systems \cite{leskovec2014mining}
where Euclidean metric doesn't apply.
See \cite{kaufman1990partitioning, bagaria2017medoids} for good introductions.
Other K-medoid algorithms include
\cite{ng2002clarans, kaufman2008clustering, song2017pamae, eppstein2001fast, okamoto2008ranking,newling2016sub}.

Traditionally, K-medoids has used an in-sample notion of error \cite{eppstein2001fast, bagaria2017medoids}.
For example in the $K = 1$ case, the most central point of the sample is taken as the reference/true point
to calculate error.
This is analogous 
\footnote{
The analogy can be formalized by recognizing the medoid
as a generalization of the mean 
}
to using the sample mean $\hat{\mu}$ 
as the reference to calculate error,
instead of the true mean $\mu$.
However $\hat{\mu}$ is a fluctuating quantity and $|\hat{\mu} - \mu|$ can be quite large.
To reiterate, any dataset is a random, limited sampling of an underlying distribution $\mathcal{D}$.
All sample quantities are fluctuating, noisy approximations to underlying distributional quantities,
and should be treated as such.
Again, from the perspective of mixture models,
using the true mean $\mu$ as the reference point is standard practice 
\cite{vempala2004spectral,regev2017learning,Louis1982EMInformationMatrix,OakesEMInformationMatrix,basford1997Standard}.
The use of in-sample error can lead to 
 generalization gap issues \cite{hansen1996Generalization} 
%
%
Our contributions include:
\begin{itemize}
\item Formalizing the true K-medoid error
\item Fundamental insights into the $K$-medoids problem, by showing that it
decomposes into two basic statistical problems: minimum estimation (ME) \& minimum mean estimation (MME).
\item A fundamental convergence result for MME. We show $\errMME$ decreases no slower than
$\Theta(\frac{1}{n^{\frac{2}{3}}})$. Where $n$ is a measure of sample size.
\item Inspired by above analysis, a new extremely scalable K-medoid algorithm (MCPAM).
MCPAM has average runtime $\mathcal{O}(km)$, where $k$ is number of medoids and $m$
is number of samples.
This makes it the first linear 1-medoid algorithm (expected runtime).
\end{itemize}
We provide detailed comparisons of these contributions to prior literature in sections
\ref{subsection_related_work_theory},\ref{subsection_theoretical_guarantees_related_work_mcpam}.

\section{Problem Formulation}
\label{section_problem_formulation}

Let $(\underlyingSpace, d, \underlyingSpaceDistribution)$ be a semi-metric space
equipped with a probability distribution $\underlyingSpaceDistribution$.
Let $\kTupleSpace = \underlyingSpace \times \cdots \times \underlyingSpace$ 
be the cartesian product of $k$ copies of $\underlyingSpace$.
An element $\kTuple \in \kTupleSpace$ is a K-tuple and its $l^{\text{th}}$ entry is denoted $\kTuple[l]$.
The distance $\kTupleToUnderlyingDistance$ 
from a point $x \in \underlyingSpace$ 
to a K-tuple $\kTuple \in \kTupleSpace$
is the minimum of the $k$ componentwise distances: 
$\kTupleToUnderlyingDistance(x, \kTuple) = \min_{l = 1}^{k} d(x, \kTuple[l])$.
Intuitively, $\kTuple$ represents $K$ cluster centers and 
$\kTupleToUnderlyingDistance$ is the distance to the nearest center.
This is the standard distance for K-medoids \cite{kaufman1990partitioning}.
The probabilistic setting has been explored in \cite{newling2016sub},
but only from the perspective of runtime calculation, not for error calculation.

\begin{definition}[Eccentricity and Medoid]
The average distance of a K-tuple $\kTuple$ to the points in $\underlyingSpace$ is the 
eccentricity $\Ecc$. A k-medoid $\medoid$ is a K-tuple that has minimum eccentricity.

\begin{alignat*}{2}
\Ecc(\kTuple) & := \expectation_{X \sim \underlyingSpaceDistribution}
    \kTupleToUnderlyingDistance(X, \kTuple) & \quad \quad 
    \medoid & := \argmin_{\kTuple \in \kTupleSpace} \Ecc(\kTuple)
\end{alignat*}
\end{definition}

Eccentricity is inverse centrality.
We now develop estimators for $\medoid$.
Let $\kTupleRandomVariable$ be a $\kTupleSpace$ valued random variable.
Let $\kTupleSample = \{ \kTupleRandomVariable_i \}_{i = 1}^m$ be $m$ random observations from $\kTupleSpace$.

\begin{definition}[True Sample Medoid]
The true sample medoid $\trueEccSampleMedoid$ is a minimizer of $\Ecc$,
over the K-tuples in $\kTupleSample$:
$
\trueEccSampleMedoid(\kTupleSample) := \argmin_{\kTupleRandomVariable_i \in \kTupleSample} \Ecc(\kTupleRandomVariable_i)
$
\end{definition}

It is widely recognized \cite{bagaria2017medoids} that K-medoids on $\mathbb{R}$ with k = 1 
and the $L_1$ metric is the median.
We will use this in a running example to illustrate various concepts.

\begin{example}[1-medoid on $\mathbb{R}$]
\label{example_1_medoid_on_R_part_a}
Let 
$\mathbb{X} = \mathbb{R}$,
$d$ be $L_1$ metric, 
$\underlyingSpaceDistribution = \mathcal{N}(\mu = 100, \sigma = 100)$ \& $K = 1$.
Now $\mathbb{X}^K = \mathbb{R}$ \&
$\Ecc(\kTuple) =$
\mbox{$\int_{-\infty}^{\infty} |x - \bar{x}|
    \frac{1}{\sqrt{2 * \pi * 100^2}} \exp{-\frac{(x - 100)^2}{2*100^2}} dx$}.
This gives $\nu = 100$ (the median).
Given a sample of $\mathbb{X}^K$, $\kTupleSample = \{90,\ 170,\ 60,\ 200,\ 190\}$,
$\Ecc(90) \simeq 80.187$.
This is the minimum on $\kTupleSample$, so $\trueEccSampleMedoid = 90$.
\end{example}

But $\Ecc$ is not computable in practice.
We need to approximate it.
Let $\underlyingSpaceSample = \{\underlyingSpaceRandomVariable_{j}\}_{j = 1}^{n}$
be a random sample of size $n$ from $\underlyingSpace$.
Let us have $m$ such random samples
$\underlyingSpaceSample_{i} = \{\underlyingSpaceRandomVariable_{ij}\}_{j = 1}^{n}$ $i \in [1,m]$.
Let all $\underlyingSpaceRandomVariable_{ij}$ follow distribution $\underlyingSpaceDistribution$.

\begin{definition}[Sample Eccentricity and Sample Medoid]
The average distance of a K-tuple $\kTuple$ to a random sample 
$\underlyingSpaceSample = \{\underlyingSpaceRandomVariable_{j}\}_{j = 1}^{n}$
of $\underlyingSpace$
is the sample eccentricity $\EccHat$.
The sample K-medoid $\sampleMedoid$ is a K-tuple from $\kTupleSample$ that 
has minimum sample eccentricity:

\begin{alignat*}{1}
\EccHat(\kTuple, \underlyingSpaceSample) & :=
    \frac{1}{n} \sum_{j = 1}^{n} \kTupleToUnderlyingDistance(\kTuple, \underlyingSpaceRandomVariable_j) \\
\sampleMedoid(\kTupleSample, \underlyingSpaceSample_1, \cdots, \underlyingSpaceSample_m) & := 
    \argmin_{\kTupleRandomVariable_i \in \kTupleSample} \EccHat(\kTupleRandomVariable_i, \underlyingSpaceSample_i)
\end{alignat*}
\end{definition}

$\trueEccSampleMedoid$ has one level of approximation to $\medoid$, 
namely the use of $\kTupleSample$ as a proxy for $\kTupleSpace$.
Whereas $\sampleMedoid$ has two levels of approximation to $\medoid$,
the additional one being the use of $T$ as a proxy for $\underlyingSpace$.

\begin{example}[1-medoid on $\mathbb{R}$ (cont'd)]
\label{example_1_medoid_on_R_part_b}
Consider the setting of example \ref{example_1_medoid_on_R_part_a}.
Let $T = \kTupleSample$.
Then $\EccHat(\kTuple) = \frac{1}{5} \sum_{x \in T}  |x - \kTuple|$ 
and $\sampleMedoid = 170$.
With two more data points $\{90,\ 170,\ 60,\ 200,\ 190,\ -10,\ 150\}$,
$\sampleMedoid = 150$.
$\trueEccSampleMedoid$ and $\medoid$ do not change.
\end{example}

The $\kTupleSample$, $m$, $\underlyingSpaceSample_i$ and $n$ are central to $\sampleMedoid$.
We will reuse them throughout the paper, so we reiterate them as a formal definition.
\begin{definition}[$\kTupleSpace$ Sample: $\kTupleSample$]
$\kTupleSample = \{\kTupleRandomVariable_i\}_{i = 1}^m$ is a sample of size $m$ from $\kTupleSpace$.
Each $\kTupleRandomVariable_i$ is a $\kTupleSpace$ valued random variable.
\end{definition}
\begin{definition}[$\underlyingSpace$ Sample: $\underlyingSpaceSample$]
For each $\kTupleRandomVariable_i \in S$, we have a sample $\underlyingSpaceSample_i$
of size $n$ from $\underlyingSpace$.
$\underlyingSpaceSample_i = \{\underlyingSpaceRandomVariable_{ij}\}_{j = 1}^n$ will be used
to estimate eccentricity of $\kTupleRandomVariable_i$.
$\underlyingSpaceRandomVariable_{ij}$ is a $\underlyingSpaceDistribution$ distributed 
random variable.
\end{definition}

We now express a number of existing K-medoid algorithms as sample medoids
$\sampleMedoid(\kTupleSample, \underlyingSpaceSample_1, \ldots, \underlyingSpaceSample_m)$
by appropriate choice of $\kTupleSample, T_i$.
Most existing algorithms derive the $\kTupleSample$ and $T_i$ samples
from a common iid sample $R$ of $\underlyingSpace$.
For instance in PAM \cite{kaufman1990partitioning},
$R = \{X_1, \ldots, X_n\}$ is $n$ iid samples from $\underlyingSpace$.
The $T_i$ are all equal to $R$, i.e. $T_i = \{X_{ij}\}_{j = 1}^n = \{X_j\}_{j = 1}^n = R$.
$\kTupleSample$ is a subset of $R^K = R \times \cdot \times R$.
In more detail, $\kTupleSample$ is constructed by picking a $\kTupleRandomVariable_{\text{current}}$
from $R^K$ at random and then:

\begin{enumerate}
\item Calculating $\EccHat$ for all single swap
\footnote{
A single swap neighbour of a $\bar{X}$ is got by
swapping exactly one entry of $\bar{X}$ with another $X_i \in R$
}
neighbours of $\kTupleRandomVariable_{\text{current}}$
\item Setting $\kTupleRandomVariable_{\text{current}}$ to neighbour with lowest $\EccHat$
\item Repeating from step 1 until no further decrease in $\EccHat$
\end{enumerate}

$\kTupleSample$ is all the $\kTupleRandomVariable$ for which $\EccHat$ is calculated
(and minimized over).
$\kTupleSample$ is a function of $R^K$. Since $R^K$ is a random sample of $\mathbb{X}^K$,
it follows that $\kTupleSample$ is too. 
Finally note $m = |\kTupleSample| \leq |R^K| = n^K$.
We also describe IPAM, an iid version of PAM.
Here, $\kTupleSample$ is derived from the first $n$ samples of $R$ in the same way.
However for each $\EccHat$ calculation in step 1, a fresh batch of $n$ samples are taken.

CLARA \cite{kaufman2008clustering} is essentially PAM, but $n$ is chosen to be quite small.
Other algorithms such as CLARANS \cite{ng2002clarans} and
RAND \cite{eppstein2001fast}, 
are also expressed as $\sampleMedoid$
in table \ref{table_expression_as_sampleMedoid}.
Algorithms TOPRANK \cite{okamoto2008ranking}, trimed \cite{newling2016sub} and
meddit \cite{bagaria2017medoids} are closely related to our framework.
They solve ME problem via an exhaustive exploration of $\kTupleSample$.
They solve the MME problem via different estimators, for example trimed uses the
triangle inequality to limit $n$ for certain $\EccHat(\kTupleRandomVariable_i)$ evaluation, etc.
We do not express iid versions of algorithms besides PAM.
But the modification is the same:
for each $\EccHat$ evaluation, use a fresh sample from $\underlyingSpace$.

\begin{table}
\caption{\label{table_expression_as_sampleMedoid}
Various K-medoid algorithms as $\sampleMedoid$
}
\tiny
	\begin{tabular}{>{\raggedright}p{0.2\linewidth}>{\raggedright}p{0.2\linewidth}>{\raggedright}p{0.25\linewidth}>{\raggedright\arraybackslash}p{0.15\linewidth}}
\toprule
Estimator & $R$ (iid samples)            & $\kTupleSample$                   & $T_i$ \\
\midrule
EXHAUSTIVE & $X_1, \ldots, X_n$          & $R^K$                 & $R$  \\
\hline
PAM       & $X_1, \ldots, X_n$          & Walk through $R^K$  & $R$  \\
\hline
IPAM      & $X_1, \ldots, X_{n + m n}$ &  Walk through $R_1^K$ 
                                         ($R_1 = \{X_i\}_{i = 1}^n$) & $\{X_{n i + j}\}_{j = 1}^n$,
                                                                        $i \in [1,m]$  \\
\hline
CLARANS   & $X_1,\ldots,X_n$    & Walk through $R^K$    & R \\
\hline
RAND      & $X_1,\dots,X_m$     & $R$     & random subset of $R$ 
                                            $|T_i| < m$ \\
\bottomrule
\end{tabular}
\end{table}

\subsection{Issues with Existing K-medoid Error; Definition of True k-Medoid Error}

The literature has traditionally focused on bounding the error relative to the
EXHAUSTIVE $\sampleMedoid$ (see \cite{eppstein2001fast, song2017pamae}),
in some cases even to zero (see \cite{okamoto2008ranking, newling2016sub, bagaria2017medoids}).
This is important and valuable, but uses a sample quantity as reference.
Using $\sampleMedoid$ as reference is akin to using sample mean (resp. median)
as the reference for mean (resp. median) estimation.
$\sampleMedoid$ is a fluctuating sample quantity 
and can be quite far from the true center of the distribution.
For the K = 1 case, EXHAUSTIVE $\sampleMedoid$ is the sample median and
the fluctuation of $\sampleMedoid$ is seen in examples
\ref{example_1_medoid_on_R_part_a} \& \ref{example_1_medoid_on_R_part_b}.
The median example is quite pertinent, since
the random nature of sample median
is widely recognized and various confidence intervals and estimates of error to true median
have been developed for the sample median \cite{nair1940Median,chu1955SampleMedian}.
In the extreme case,
$\sampleMedoid$ may not be a consistent estimator of $\medoid$ and may never converge.
Taking an example from mean estimation, we note that
the sample mean of the Cauchy distribution never converges.

From a ML perspective, in-sample errors suffer from generalization gap.
This has been studied in the context of clustering in \cite{hansen1996Generalization}.
For the same reason it is standard practice in mixture modeling
\cite{vempala2004spectral,regev2017learning,Louis1982EMInformationMatrix,OakesEMInformationMatrix,basford1997Standard}.
to use
the true centers as the references.
Errors using $\medoid$ as reference will address all these gaps.
We now define error with respect to $\medoid$.

\begin{definition}[K-Medoid Errors]
\label{definition_KM_errors}
We have two partial notions of errors:
\begin{alignat*}{2}
\errOne & :=
    \frac{
    \expectation
    |\Ecc(\sampleMedoid) - \Ecc(\trueEccSampleMedoid)|
    }{
    \expectation(\Ecc(\trueEccSampleMedoid))
    } & \quad
\errTwo & :=
    \frac{
    \expectation
    |\Ecc(\trueEccSampleMedoid) - \Ecc(\medoid)|
    }{
    \expectation(\Ecc(\medoid))
    }
\end{alignat*}
And one true error
\begin{alignat*}{1}
\errThree & :=
    \frac{
    \expectation
    |\Ecc(\sampleMedoid) - \Ecc(\medoid)|
    }{
    \expectation(\Ecc(\medoid))
    }
\end{alignat*}
\end{definition}


We study $\errOne$ and $\errTwo$, as they provide a crucial decomposition of $\errThree$
into canonical problems.



\section{Results on K-Medoids Errors}
\label{section_kmedoids_theory_results}

The chief results of this section:
\begin{itemize}
\item Decomposition of $\errThree$ into two canonical problems
\item A new fundamental convergence result for one of these,
    namely the minimum mean estimation (MME) problem.
\end{itemize}

\subsection{$\errThree$ Decomposition}
We start by defining two fundamental statistical problems \ref{definition_MME} \ref{definition_ME}.
Then we decompose $\errThree$ into these problems.

\begin{definition}[Minimum Mean Estimation (MME)]
\label{definition_MME}
Let $m$ distributions $\{\mathcal{E}_i\}_{i = 1}^m$ be given.
Let $n$ samples from each distribution be given, $E = \{E_{ij}\}, i \in [1,m], j \in [1,n]$.
We want to identify the distribution with minimum mean.

Let $\mu_i$ be the mean of the $i$th distribution.
Let $i_{\min}$ be the index of the minimum mean distribution.
Let $\hat{i}_{\min}(E)$ be an estimate.
Then define the MME error as $\errMME := |\mu_{\hat{i}_{\min}} - \mu_{i_{\min}}| / |\mu_{i_{\min}}|$
\end{definition}

\begin{definition}[Minimum Estimation]
\label{definition_ME}
Given $m$ samples from a distribution, estimate the minimum of the distribution $x_{\min}$ 
using the min of the sample $\hat{x}_{\min}$.
Then define the ME error as $\errME := |\hat{x}_{\min} - x_{\min}| / |x_{\min}|$
\end{definition}

\begin{reptheorem}[\ref{theorem_decomposition_of_errThree_into_canonical_problems}]
\begin{alignat*}{1}
\errThree := \errMME + \errMME \errME + \errME
\end{alignat*}
\end{reptheorem}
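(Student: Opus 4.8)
The plan is to recognize $\errOne$ as exactly the MME error $\errMME$ and $\errTwo$ as exactly the ME error $\errME$ for the two instances that the K-medoids pipeline naturally induces, and then to derive the displayed identity from a deterministic ordering of the three eccentricities together with a one-line rearrangement.

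First I would pin down the two identifications. For MME, condition on $\kTupleSample$ and let $\mathcal{E}_i$ be the law of $\kTupleToUnderlyingDistance(\kTupleRandomVariable_i, \underlyingSpaceRandomVariable)$ with $\underlyingSpaceRandomVariable \sim \underlyingSpaceDistribution$, so that $\mu_i = \Ecc(\kTupleRandomVariable_i)$ and the $n$ given samples from $\mathcal{E}_i$ are the $\kTupleToUnderlyingDistance(\kTupleRandomVariable_i, \underlyingSpaceRandomVariable_{ij})$, whose empirical mean is $\EccHat(\kTupleRandomVariable_i, \underlyingSpaceSample_i)$. Then the MME estimator $\hat{i}_{\min} = \argmin_i \EccHat(\kTupleRandomVariable_i, \underlyingSpaceSample_i)$ is precisely the index picking out $\sampleMedoid$, while $i_{\min} = \argmin_i \Ecc(\kTupleRandomVariable_i)$ picks out $\trueEccSampleMedoid$; hence $|\mu_{\hat{i}_{\min}} - \mu_{i_{\min}}| = |\Ecc(\sampleMedoid) - \Ecc(\trueEccSampleMedoid)|$ and $|\mu_{i_{\min}}| = \Ecc(\trueEccSampleMedoid)$, giving $\errOne = \errMME$. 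For ME, let the one-dimensional distribution be the law of $\Ecc(\kTupleRandomVariable)$ for the $\kTupleSpace$-valued random variable $\kTupleRandomVariable$; its $m$ samples are the $\Ecc(\kTupleRandomVariable_i)$, their minimum is $\Ecc(\trueEccSampleMedoid) = \hat{x}_{\min}$, and the minimum of the distribution is $\min_{\kTuple \in \kTupleSpace} \Ecc(\kTuple) = \Ecc(\medoid) = x_{\min}$; hence $\errTwo = \errME$. Here all error ratios are read as ratios of expectations, consistently with Definition~\ref{definition_KM_errors}, and $\Ecc(\medoid)$ is deterministic so that $\expectation \Ecc(\medoid) = \Ecc(\medoid)$.

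Next I would establish the pointwise chain $\Ecc(\sampleMedoid) \ge \Ecc(\trueEccSampleMedoid) \ge \Ecc(\medoid) \ge 0$, valid almost surely: the first inequality because $\sampleMedoid$ is some element of $\kTupleSample$ whereas $\Ecc(\trueEccSampleMedoid) = \min_{\kTupleRandomVariable_i \in \kTupleSample}\Ecc(\kTupleRandomVariable_i)$; the second because each $\kTupleRandomVariable_i \in \kTupleSpace$ whereas $\Ecc(\medoid) = \min_{\kTuple \in \kTupleSpace}\Ecc(\kTuple)$; the third because $\Ecc$ is an expectation of nonnegative distances. Since this ordering is deterministic, every absolute value collapses and we obtain the exact telescoping $|\Ecc(\sampleMedoid) - \Ecc(\medoid)| = |\Ecc(\sampleMedoid) - \Ecc(\trueEccSampleMedoid)| + |\Ecc(\trueEccSampleMedoid) - \Ecc(\medoid)|$, which is preserved under $\expectation$.

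Finally I would assemble the identity. Writing $a = \expectation\Ecc(\sampleMedoid)$, $b = \expectation\Ecc(\trueEccSampleMedoid)$ and $c = \Ecc(\medoid)$, the telescoping gives $\errThree = \frac{a-c}{c} = \frac{a-b}{c} + \frac{b-c}{c}$, where the last term equals $\errTwo$ and rearranging yields $\frac{b}{c} = 1 + \errTwo$; multiplying and dividing the middle term by $b$ gives $\frac{a-b}{c} = \frac{a-b}{b}\cdot\frac{b}{c} = \errOne(1 + \errTwo)$. Hence $\errThree = \errOne(1+\errTwo) + \errTwo = \errOne + \errOne\errTwo + \errTwo$, and substituting $\errOne = \errMME$, $\errTwo = \errME$ completes the proof. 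The one genuinely delicate step is the first: verifying that $\errOne$ and $\errTwo$ are bona fide instances of MME and ME — in particular the ME identification $x_{\min} = \Ecc(\medoid)$ needs the minimum of $\Ecc$ over $\kTupleSpace$ to coincide with the essential infimum of $\Ecc(\kTupleRandomVariable)$ under the sampling law, and the expectation-versus-pointwise convention for the ratios must be fixed consistently; everything past that point is the deterministic algebra above.
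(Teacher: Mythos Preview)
Your proposal is correct and follows essentially the same route as the paper: it first proves $\errThree = \errOne + \errOne\errTwo + \errTwo$ by telescoping through $\Ecc(\trueEccSampleMedoid)$, using the pointwise ordering $\Ecc(\sampleMedoid)\ge\Ecc(\trueEccSampleMedoid)\ge\Ecc(\medoid)$ to drop the absolute values and the identity $\expectation\Ecc(\trueEccSampleMedoid)=(1+\errTwo)\,\Ecc(\medoid)$ to rearrange, and then identifies $\errOne=\errMME$ and $\errTwo=\errME$ exactly as you do. If anything, you are more explicit than the paper about why the absolute values split cleanly and about the subtlety in matching $x_{\min}$ with $\Ecc(\medoid)$ under the sampling law.
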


The structuring of the true error $\errThree$ 
(proof at appendix \ref{section_medoid_error_analysis},
theorem \ref{theorem_decomposition_of_errThree_into_canonical_problems})
into these canonical problems holds with great generality.
This is one of our chief contributions, as it reveals the internal workings of K-medoids.
Essentially K-medoids has been decomposed into a ME problem followed by a MME problem.

The ME problem is well studied in the Extreme Value Theory literature
\cite{balkema1990EVT,smith1982EVT,leadbetterEVT}.
A central theorem in EVT is the Fisher-Tippet-Gnedenko (FTG) theorem providing asymptotic distributions
for the sample minimum. This reveals a deep connection between the K-medoids problem
and the dynamics of the FTG theorem.
The various K-medoid strategies are now re-interpreted as variance reduction strategies
for sample min.
The path is now open to leverage the extensive EVT literature for better min estimation strategies.

The MME problem is a novel continuous variant of the discrete best arm identification problem 
found in \cite{gabillon2011BAI,bagaria2017medoids}.
Formulation of the MME problem as a stand-alone problem will again help craft novel 
K-medoid strategies by solving MME in isolation.
Finally, examining the tradeoff between ME and MME components of the error
will be crucial in designing optimal K-medoid algorithms.
An initial version of this has been successfully done in section \ref{section_MCPAM}
to develop the MCPAM algorithm.

\subsection{Error Bounds}

Before we develop bounds on errors, we develop a model for distance distributions.
Given a R.V $\underlyingSpaceRandomVariable$ with distribution $\underlyingSpaceDistribution$.
We have a family of random variables
$\kTupleToUnderlyingDistance(\underlyingSpaceRandomVariable, \kTuple)$,
indexed by $\kTuple \in \kTupleSpace$.
This is a family of distance distributions.
We have developed a very general model, called the power-variance family 
to encode such families.
\begin{definition}
\label{definition_power_variance_distribution_family}
Consider a family $\mathcal{F}$ of distributions $\mathcal{E}$ parametrized by the mean $\mu$:
$
\mathcal{F} = \{\mathcal{E}(\mu) | \mu \in \mathbb{R}^{>0}\}
$
Such that the variance is a power function of the mean:
$
\sigma^2(\mu) := \alpha \mu^{\beta} + k
$
The parameters $\alpha, \beta, \gamma, k$ are constants for the entire family $\mathcal{F}$
and satisfy:
$
\beta \geq 0,\ \alpha > 0,\ \mu \geq \gamma > 0,\ \alpha \gamma^{\beta} \geq -k
$
\end{definition}

\begin{definition}
\label{definition_power_variance_compatible_triple}
Consider a triple $(\underlyingSpace, d, \underlyingSpaceDistribution)$.
Let $\underlyingSpaceRandomVariable$ be a R.V with distribution $\underlyingSpaceDistribution$.
We then have the family of random variables
$\mathcal{F} = 
    \{\kTupleToUnderlyingDistance(\underlyingSpaceRandomVariable, \kTuple) | \kTuple \in \kTupleSpace\}$.
If $\mathcal{F}$ is a power-variance family
as per definition \ref{definition_power_variance_distribution_family},
then the triple $(\underlyingSpace, d, \underlyingSpaceDistribution)$ is termed power-variance
compatible.
\end{definition}

The inequality $\mu \geq \gamma > 0$ models the strict positivity of
distances.
The inequality \mbox{$\alpha \gamma^{\beta} \geq -k$} is needed to ensure positivity of $\sigma^2$.
This model is quite general and permits us to easily encode
distance distributions.
As a concrete example of definitions 
\ref{definition_power_variance_distribution_family} \&
\ref{definition_power_variance_compatible_triple},
consider example \ref{example_eccentricity_of_gaussian}.

\ifdraft
TODO: ADD IN SOME STUFF ABOUT HOW WE END UP WITH THE FAMILY OF DISTRIBUTIONS.
I.E. MONTE C HAS ONLY ONE DIST. HOW COME HERE WE HAVE A FAMILY OF DISTS?
ANS: WE START OFF WITH ONE UNDERLYING DIST, THAT GIVES RAISE TO A FAMILY OF
ECC OF GIVEN POINT DISTS. AND OF COURSE WE ALSO HAVE THE TRUE ECC GENERATING DIST AS WELL.
SO UNDERLYING DIST, TRUE ECC SAMPLING DIST, ECC-HAT OF EACH SAMPLE POINT DIST.
THE IDEA IS TO SHOW THAT THE FAMILY OF DISTS IS A VERY NATURAL CONSTRUCTION
FOR THIS KIND OF PROBLEM. AND BY EXTENSION OUR DIST MODEL IS VERY NATURAL TOO.
\fi 

To calculate $\expectation \errMME$ requires calculating probabilities of the form:
\mbox{
$
\mathbb{P}(\hat{\mu}_{i} < \hat{\mu}_{1} \cdots \hat{\mu}_{(i - 1)},\hat{\mu}_{(i + 1)} \cdots \hat{\mu}_{m})
$}.
Where $\hat{\mu}_i$ is the sample mean of $\mathcal{E}(\mu_i) \in \mathcal{F}$.
Even for $m = 3$ and Gaussian $\mathcal{E}$
this calculation requires the integration of a complicated function of a polynomial of
$\Phi$ (standard normal cdf) and is intractable. One of our chief theorems is the upper bound of 
expected MME error 
(theorem \ref{theorem_upper_bound_on_n_for_given_tolerance} from appendix 
\ref{subsection_m_mean_case_main_results}).
In simplified form:

\begin{reptheorem}[\ref{theorem_upper_bound_on_n_for_given_tolerance}]
Let $\mathcal{F}$ be a power-variance distribution family, having
continuous cdfs and uniformly bounded kurtosis\footnote{
kurtosis (or Pearson kurtosis) is the fourth standardized moment $\expectation \left[\frac{X  - \mu}{\sigma}\right]^4$.
} $\leq \kurtosisUB$.
Let $\errMME$ be as in definition \ref{definition_MME}
and let the samples from the distributions be independent.
Let $0 < p \leq \pMax, m > \mMin, \beta  < \frac{4}{3},
n = \left\lceil
    C m^{1/2} / p^{3/2}
    \right\rceil$ .
Where $\pMax$ \& $\mMin$ are constants defined in defns
\ref{definition_pMax} \& \ref{definition_mMin}.
And $C$ is a constant for a given $\mathcal{F}$.
Then:
$
\errMME < \frac{p}{100} 
$
\end{reptheorem}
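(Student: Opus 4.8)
The plan is to reduce $\errMME$ to a weighted sum of pairwise misordering probabilities and then control that sum by a threshold argument. Take the estimator $\hat{i}_{\min}=\argmin_i\hat{\mu}_i$ (the one $\sampleMedoid$ uses), with $\hat{\mu}_i$ the sample mean of the $n$ observations from $\mathcal{E}_i$. Write $\mu_1$ for the smallest mean $\mu_{i_{\min}}$, and set $\delta_i:=\mu_i/\mu_1-1\ge 0$ for every $i$; this is well defined with positive denominators because all means are $\ge\gamma>0$. Since $i_{\min}$ attains the minimum, $\mu_{\hat{i}_{\min}}\ge\mu_1$, so $\errMME=\mu_{\hat{i}_{\min}}/\mu_1-1=\delta_{\hat{i}_{\min}}$, and $\{\hat{i}_{\min}=i\}$ forces $\hat{\mu}_i\le\hat{\mu}_1$. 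The events $\{\hat{i}_{\min}=i\}$ being disjoint, taking expectations gives $\expectation[\errMME]\le\sum_{i\ne i_{\min}}\delta_i\,\mathbb{P}(\hat{\mu}_i\le\hat{\mu}_1)$. Everything thus reduces to bounding $q_i:=\mathbb{P}(\hat{\mu}_i\le\hat{\mu}_1)$ for each competitor and then the sum $\sum_i\delta_i q_i$.

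Next I would bound $q_i$ two ways. The variable $\hat{\mu}_i-\hat{\mu}_1$ is an average of $2n$ independent summands with mean $\delta_i\mu_1>0$ and variance $(\sigma^2(\mu_i)+\sigma^2(\mu_1))/n$ where $\sigma^2(\mu)=\alpha\mu^\beta+k$; the standardized position of $0$ is $\zscoreOfZero(\delta_i)=\delta_i\mu_1\sqrt{n}/\sqrt{\sigma^2(\mu_i)+\sigma^2(\mu_1)}$, which (using $\mu_i=(1+\delta_i)\mu_1$) equals $\frac{\mu_1^{1-\beta/2}\sqrt{n}}{\sqrt{\alpha}}\cdot\frac{\delta_i}{\sqrt{(1+\delta_i)^\beta+1+2k/(\alpha\mu_1^\beta)}}$. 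First, a Berry--Esseen bound for independent non-identical summands: $q_i\le\standardNormalCDF(-\zscoreOfZero(\delta_i))+C_{\mathrm{BE}}/\sqrt{n}$, where $C_{\mathrm{BE}}$ depends only on $\kurtosisUB$ — the third absolute moments obey $\expectation|E-\mu|^3\le\kappa^{3/4}\sigma^3\le(\kurtosisUB)^{3/4}\sigma^3$ by Jensen, and $\sigma_i^3+\sigma_1^3\le(\sigma_i^2+\sigma_1^2)^{3/2}$ by a power-mean inequality, so the Berry--Esseen ratio is $\le(\kurtosisUB)^{3/4}/\sqrt{n}$ up to the universal constant. Second, a fourth-moment Markov bound with no $1/\sqrt{n}$ floor: $q_i\le\expectation[(\hat{\mu}_i-\hat{\mu}_1-\delta_i\mu_1)^4]/(\delta_i\mu_1)^4\le C_\kappa(\sigma^2(\mu_i)+\sigma^2(\mu_1))^2/(n^2(\delta_i\mu_1)^4)$, the numerator being a routine expansion of a difference of sample means and $C_\kappa$ again a function of $\kurtosisUB$ alone. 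The first is tight for moderate gaps; the second is what survives for distant distributions.

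Then I would introduce a cut $\deltaTh$. Competitors with $\delta_i\le\deltaTh$ contribute at most $\deltaTh$ to $\expectation[\errMME]$ (disjointness of the selection events). For $\delta_i>\deltaTh$ I feed in the minimum of the two tail bounds; using $\mu_1\ge\gamma$ and $\beta<4/3$ one checks that $\delta\mapsto\delta\,q_i(\delta)$ is decreasing past $\deltaTh$ — this is exactly where $\beta<4/3$ enters, since it keeps the variance growth $(1+\delta)^\beta$ in the fourth-moment numerator from overwhelming the $\delta^{-4}$ there — so each of the fewer-than-$m$ such terms is dominated by its value at $\deltaTh$. This yields an explicit $\expectation[\errMME]\le\deltaTh+m\,\psi(\deltaTh;\familyParameters,n)$ with $\psi$ a simple algebraic function; minimizing over $\deltaTh$ (balancing a linear term against a negative power) produces a closed-form bound $\relErrorUBThree$ which, consistent with the $\Theta(n^{-2/3})$ rate of the abstract, decays polynomially in $n$. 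Imposing $\relErrorUBThree<p/100$ and solving for $n$ gives $n\ge\widetilde{C}\,m^{1/2}/p^{3/2}$ for $\widetilde{C}$ depending on $\familyParameters$ and $\kurtosisUB$; along the way the map $\zscoreOfZero\leftrightarrow\delta$ (equivalently $\delta\mapsto\delta/\denominatorFunction(\delta)$, which has no elementary inverse) is inverted approximately, and the resulting $\errorTApproximation$ together with the several family-dependent constants is controlled and folded into a single $C:=\widetilde{C}$ using the hypotheses $p\le\pMax$, $m>\mMin$, $\beta<4/3$. Then $n=\lceil C\,m^{1/2}/p^{3/2}\rceil$ gives $\errMME<p/100$.

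The hard part will be the far-away regime in the threshold step: one needs a tail bound on $q_i$ that is simultaneously floor-free in $n$ (ruling out Berry--Esseen's additive $1/\sqrt{n}$ there) and decays fast enough in $\delta_i$ that, after multiplying by $\delta_i$, summing over up to $m$ competitors, and optimizing the cut, one still recovers the advertised $m^{1/2}/p^{3/2}$ scaling and not something weaker. Since the family hypothesis provides only bounded kurtosis — no exponential moments — the fourth-moment bound is the strongest available tool, and $\beta<4/3$ is precisely the range in which it is still strong enough; relaxing it would force stronger moment assumptions on $\mathcal{F}$. The remaining labor is bookkeeping: making every constant in $\relErrorUBThree$ explicit enough to invert cleanly, which is what the explicit $\pMax$ and $\mMin$ are there to absorb.
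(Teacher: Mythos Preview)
Your overall architecture matches the paper's: pairwise reduction $\expectation[\errMME]\le\sum_i\delta_i\mathbb{P}(\hat\mu_i\le\hat\mu_1)$, then a head--tail split at a threshold $\deltaTh$ giving $\deltaTh+mT$, then optimization. The gap is in the tail bound you feed into $T$. With only bounded kurtosis, the fourth-moment Markov inequality gives (in standardized form) $q_i\le C_\kappa/\zscoreOfZero^4$, hence $\delta q(\delta)\lesssim C/(n^2\delta^3)$ for $\delta$ in the linear regime $|\zscoreOfZero|\sim C_2\sqrt{n}\,\delta$. Plugging this into $\deltaTh+mT$ with $T=C/(n^2(\deltaTh)^3)$ and optimizing yields $(\deltaTh)^4\sim m/n^2$, i.e.\ a bound of order $m^{1/4}/n^{1/2}$, not $m^{1/3}/n^{2/3}$. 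Solving $m^{1/4}/n^{1/2}=p/100$ for $n$ produces $n\sim m^{1/2}/p^{2}$, strictly worse in $p$ than the $m^{1/2}/p^{3/2}$ the theorem asserts. Your uniform Berry--Esseen term cannot rescue this: its additive $C_{\mathrm{BE}}/\sqrt{n}$ floor makes $\delta q(\delta)$ increasing, so it is useless as a uniform bound past $\deltaTh$.

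What the paper actually uses is a \emph{non-uniform} Berry--Esseen theorem (Paditz), giving $|\standardizedCDFOfDifference(x)-\Phi(x)|\le C_6/(|x|^3\sqrt{n})$. Combined with the Gaussian tail bound $\Phi(x)\le\phi(x)/|x|$, this yields $q_i\le C_6/(|\zscoreOfZero|^3\sqrt{n})+\text{(exponentially small)}$, so $\delta q(\delta)\lesssim C_6/(C_2^3 n^2\delta^2)$ --- one power of $\delta$ less in the denominator than your fourth-moment bound. That single power is exactly what converts the optimized rate from $m^{1/4}/n^{1/2}$ to $m^{1/3}/n^{2/3}$ and hence $p^{2}$ to $p^{3/2}$. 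Relatedly, your diagnosis of the $\beta<4/3$ condition is off: with the fourth-moment bound, $\delta q(\delta)\sim\delta^{2\beta-3}$ for large $\delta$, which is decreasing whenever $\beta<3/2$; the sharper cutoff $\beta<4/3$ arises from the non-uniform BE term, whose large-$\delta$ piece behaves like $\delta^{(3\beta-4)/2}$. So to prove the theorem as stated you need the non-uniform Berry--Esseen ingredient; the rest of your outline then goes through essentially as the paper does it.
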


In other words, under general conditions, we can control the MME error to arbitrarily small tolerance $p$ with
a $n = \Theta(m^{1/2}/p^{3/2})$.
A more general version is theorem \ref{theorem_minimum_mean_estimation_error_convergence_rate}
in appendix \ref{subsection_m_mean_case_main_results}.
Under general conditions it gives the bound:
$
\errMME < \Theta(m^{1/3}/n^{2/3})
$
These are fundamental results for MME and one of our chief contributions. They are the analog of the
standard square root rate of convergence of Monte Carlo algorithms. 
Unlike the asympototic CLT based convergence our result is an {\em exact} 
upper bound when its conditions are satisfied.

Next, we use this bound to control $\errOne$ for iid estimators
such as IPAM etc.
The proof is in theorem \ref{theorem_bound_on_absdiff_sampleMedoid_trueEccSampleMedoid}
in appendix \ref{section_medoid_error_analysis}).
A simplified statement:
\begin{reptheorem}[\ref{theorem_bound_on_absdiff_sampleMedoid_trueEccSampleMedoid}]
Consider a power-variance compatible triple
$(\underlyingSpace, d, \underlyingSpaceDistribution)$.
Consider $\sampleMedoid$ with $S \perp T_i$ and iid $\{T_i\}_{i = 1}^m$.
Let the num samples $m,n$ and $p > 0$ be such that the
conditions of theorem \ref{theorem_upper_bound_on_n_for_given_tolerance} are met.
Then:
\begin{alignat*}{1}
\expectation \errOne & \leq \frac{p}{100}
\end{alignat*}
\end{reptheorem}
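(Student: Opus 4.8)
The plan is to observe that, once the candidate pool $\kTupleSample$ is fixed, picking $\sampleMedoid$ from $\kTupleSample$ by minimizing sample eccentricity is \emph{exactly} an instance of the MME problem of Definition~\ref{definition_MME}; so I will apply Theorem~\ref{theorem_upper_bound_on_n_for_given_tolerance} conditionally on $\kTupleSample$ and then integrate $\kTupleSample$ out. Concretely, condition on $\kTupleSample = \{\kTupleRandomVariable_i\}_{i=1}^m$ and set $\mathcal{E}_i := \kTupleToUnderlyingDistance(\underlyingSpaceRandomVariable,\kTupleRandomVariable_i)$ with $\underlyingSpaceRandomVariable\sim\underlyingSpaceDistribution$. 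Then $\mu_i := \expectation\mathcal{E}_i = \Ecc(\kTupleRandomVariable_i)$, and the $j$-th point of $\underlyingSpaceSample_i$ contributes the i.i.d.\ draw $E_{ij} := \kTupleToUnderlyingDistance(\kTupleRandomVariable_i,\underlyingSpaceRandomVariable_{ij})\sim\mathcal{E}_i$, whose empirical mean $\frac1n\sum_j E_{ij}$ is precisely $\EccHat(\kTupleRandomVariable_i,\underlyingSpaceSample_i)$. Hence the index minimizing the sample means is the one defining $\sampleMedoid$ (i.e.\ $\hat i_{\min}$), the index minimizing the true means is the one defining $\trueEccSampleMedoid$ (i.e.\ $i_{\min}$), and on this conditional probability space the MME error equals $|\Ecc(\sampleMedoid)-\Ecc(\trueEccSampleMedoid)|/\Ecc(\trueEccSampleMedoid)$. (Ties occur with probability zero under the continuous-cdf hypothesis and are irrelevant to the bound.)

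Next I would verify that, with probability one over the draw of $\kTupleSample$, this conditional MME instance meets the hypotheses of Theorem~\ref{theorem_upper_bound_on_n_for_given_tolerance}. Power-variance compatibility of $(\underlyingSpace,d,\underlyingSpaceDistribution)$ means the \emph{entire} family $\{\kTupleToUnderlyingDistance(\underlyingSpaceRandomVariable,\kTuple):\kTuple\in\kTupleSpace\}$ is power-variance with fixed parameters $\alpha,\beta,\gamma,k$; the realized collection $\{\mathcal{E}_i\}_{i=1}^m$ is a sub-collection, hence still power-variance with the \emph{same} parameters. Consequently the constants $\pMax,\mMin,C$ of Theorem~\ref{theorem_upper_bound_on_n_for_given_tolerance}—which depend only on $\alpha,\beta,\gamma,k$, on $m$, and on $\kurtosisUB$—are the ones attached to the triple and do not depend on the realization of $\kTupleSample$, so the assumed inequalities $0<p\le\pMax$, $m>\mMin$, $\beta<\tfrac43$, $n=\lceil C m^{1/2}/p^{3/2}\rceil$ persist. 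Continuity of the cdfs and the uniform kurtosis bound $\le\kurtosisUB$ are inherited from the triple. The remaining hypothesis—independence of the samples across the $m$ distributions—is exactly where $S\perp T_i$ and the i.i.d.-ness of $\{T_i\}$ enter: conditioned on $\kTupleSample$, each $\underlyingSpaceSample_i$ is still $n$ i.i.d.\ draws from $\underlyingSpaceDistribution$ and the batches are mutually independent, so $\{E_{ij}\}_j$ are i.i.d.\ $\sim\mathcal{E}_i$ and independent across $i$. (This is precisely the step that would fail for PAM-type estimators, where $\kTupleSample$ and the $T_i$ are manufactured from one common sample $R$.)

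Then Theorem~\ref{theorem_upper_bound_on_n_for_given_tolerance} applied conditionally yields $\expectation[\errMME\mid\kTupleSample]<p/100$ almost surely, i.e.\ $\expectation\!\left[\,|\Ecc(\sampleMedoid)-\Ecc(\trueEccSampleMedoid)|\mid\kTupleSample\right]<\tfrac{p}{100}\,\Ecc(\trueEccSampleMedoid)$, where I used that $\Ecc(\trueEccSampleMedoid)$ is $\kTupleSample$-measurable. Since every distance-distribution mean satisfies $\mu\ge\gamma>0$ we have $\Ecc(\trueEccSampleMedoid)\ge\gamma>0$; taking expectations over $\kTupleSample$ and applying the tower property,
\begin{equation*}
\expectation\,|\Ecc(\sampleMedoid)-\Ecc(\trueEccSampleMedoid)|
 = \expectation\!\left[\Ecc(\trueEccSampleMedoid)\,\expectation[\errMME\mid\kTupleSample]\right]
 \le \frac{p}{100}\,\expectation\,\Ecc(\trueEccSampleMedoid),
\end{equation*}
and dividing by $\expectation\,\Ecc(\trueEccSampleMedoid)>0$ gives $\expectation\,\errOne = \errOne < \tfrac{p}{100}$, which is the claim. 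The main obstacle I anticipate is not a computation but the bookkeeping of the middle two steps: arguing cleanly that the constants and regularity conditions of Theorem~\ref{theorem_upper_bound_on_n_for_given_tolerance} transfer \emph{uniformly over realizations of $\kTupleSample$} to the conditional family, and isolating the fact that it is exactly the independence assumption $S\perp T_i$ (with $\{T_i\}$ i.i.d.) that keeps the conditional samples independent so the theorem is applicable at all.
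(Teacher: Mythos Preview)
Your proposal is correct and follows essentially the same route as the paper: condition on $\kTupleSample$, recognize the conditional problem as an MME instance whose distributions lie in the triple's power-variance family, apply Theorem~\ref{theorem_upper_bound_on_n_for_given_tolerance} conditionally, and then integrate $\kTupleSample$ out. The paper phrases the last step via Fubini--Tonelli and the observation $\Ecc(\sampleMedoid)\ge\Ecc(\trueEccSampleMedoid)$ (so the absolute value drops), while you keep the absolute value and use the tower property; these are the same argument, and your version is in fact more explicit about why the family constants $(\alpha,\beta,\gamma,k,\kurtosisUB)$ and hence $\pMax,\mMin,C$ are uniform over realizations of $\kTupleSample$.
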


It is straightforward to show monotonic decrease of $\errME$ for raising $m$ 
via Jensens or FTG theorem.
Combined with the above result, this controls $\errThree$.

\subsection{Example and Experimental Verification}

We give a concrete example to illustrate the above.
And follow it up with experimental verification of the error bound.

\begin{example}[Eccentricity of Gaussian]
\label{example_eccentricity_of_gaussian}
Let
$(\underlyingSpace, d, \underlyingSpaceDistribution) = (\mathbb{R},  {||}.{||}_2^2, \normalDistribution(0,1))$
and $K = 1$.
Then $\underlyingSpace^K = \mathbb{R}$.
Let $\kTupleSample = \{\kTupleRandomVariable_i\}_{i = 1}^{m}$
and $\underlyingSpaceSample_i = \{\underlyingSpaceRandomVariable_{ij}\}_{j = 1}^{n}$ be as per IPAM:

\begin{alignat*}{1}
\kTupleRandomVariable_{i} & \sim \normalDistribution(0, 1) \quad i = [1,m] \\
\underlyingSpaceRandomVariable_{ij} & \sim \normalDistribution(0, 1) \quad i = [1,m], j = [1,n]
\end{alignat*}

Then

\begin{alignat*}{1}
\EccHat(\kTupleRandomVariable_i, \{\underlyingSpaceRandomVariable_{ij}\}_{j = 1}^m) & =
    \frac{1}{n} \sum_{j = 1}^{n} || \kTupleRandomVariable_i - \underlyingSpaceRandomVariable_{ij} {||}_2^2
\end{alignat*}

Given the $\kTupleRandomVariable_i$, the distances 
$\kTupleToUnderlyingDistance(\underlyingSpaceRandomVariable_{ij}, \kTupleRandomVariable_i)$
are distributed as

\begin{alignat*}{1}
|| \kTupleRandomVariable_i - \underlyingSpaceRandomVariable_{ij} {||}_2^2 \sim
    \chi^2(1, \underlyingSpaceRandomVariable_i^2)
\end{alignat*}

Where $\chi^2(1, \lambda)$  denotes a non-central Chi-squared random variable with 1 degree of 
freedom and having mean, variance:
$\mu = 1 + \lambda$, $\sigma^2 = 2(1 + 2 \lambda)$.
In our case
We can rewrite the variance as $\sigma^2 = 4 \mu - 2$.
This is a power-variance distribution with $\beta = 1, \alpha = 4, k = -2, \gamma = 1$.
Clearly all requirements on the parameters are satisfied, including $\alpha \gamma^\beta \geq -k$.

We need to uniformly upper bound kurtosis for our upper bounds to hold.
The kurtosis $\kappa$ of non-central Chi-squared R.V is given by:

\begin{alignat*}{1}
\kurtosis(\lambda) = 3 + 12 \frac{(1 + 4 \lambda)}{(1 + 2 \lambda)^2}
\end{alignat*}

This is strictly decreasing for all $\lambda > 0$.
And has a maximum at $\lambda = 0$. This gives the upper bound:

\begin{alignat*}{1}
\kurtosis(\lambda) \leq \kurtosis(0) = 15
\end{alignat*}
\end{example}

\ifdraft
TODO: TALK ABOUT HOW AWESOME OUR PARAMETRIZATION 
\fi

\ifdraft
TODO: NOTE TO SELF. 
CONSIDER OLDER FORM: WHEN $\mu_i > 0$, $\mu_i^\beta > 0$ AND HENCE $\alpha > 0$.
ALPHA $>$ 0 AND MUI**BETA $>$ 0 ARE REQUIRED.
THEY ARE USED IN EQNs \ref{eqn:taking_squareroots_1} AND \label{eqn:taking_squareroots_2}
AND EARLIER AT THE DISTRIBUTION OF F~ SECOND FORM CALCULATION.
AT LEAST FOR NOW I DONT SEE THE LAW (AB)**C = A**C B**C WORKING
FOR -VE NUMBERS.
\fi

\ifdraft
TODO: Get rid of this restriction by using big-O notation.
TODO: Consider the case with error sampling.
\fi

This distribution model is conceptually similar to the well known Tweedie family of distributions,
but more suited to model families of distance distributions.
In figure \ref{figure_experimental_verification},
we experimentally verify the error bound for the setting of example \ref{example_eccentricity_of_gaussian}.

\begin{figure}
\centering
\includegraphics[width=8cm]{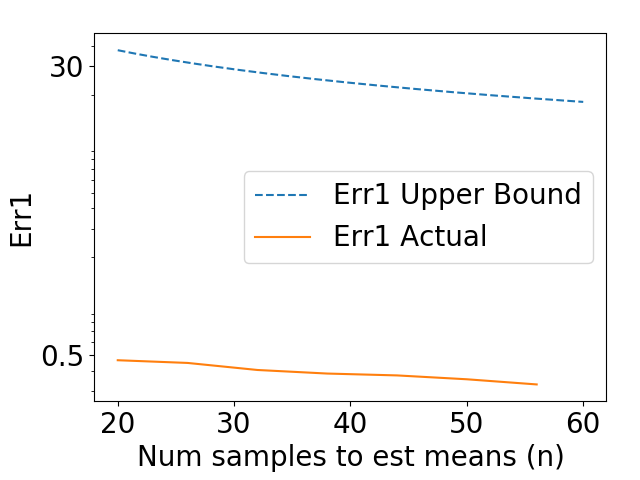}
\caption{
\label{figure_experimental_verification}
Experimental verification of $\errOne$ bound.
We sample $n$ times from each non central chi square distributions with 1 degree of freedom.
We use these $n$ samples to estimate the distribution with minimum mean and
then calculate $\errOne$. 
For each $n$, the experiment is repeated 200 times and the average $\errOne$ is calculated.
We confirm that the actual $\errOne$ is below the upper bound given by our analysis.
}
\end{figure}

\subsection{Related Work}
\label{subsection_related_work_theory}

MME like problems have been well studied under 
the rich theory of multi armed bandits.
However our setting has significant differences from earlier settings.
For instance, in stochastic linear bandits \cite{abbasi2011improved,dani2008stochastic}, 
the decision space is $\mathbb{R}^d$ and rewards are linear.
We don't make any assumptions about the decision space.

A more direct comparison can be made to \cite{agrawal2012analysis}.
The decision space $D$ is arbitrary. But the rewards over $D$ are restricted to be a discrete set 
with a gap parameter $\Delta > 0$.
Analyzing K-Medoids true error requires consideration of a continuous rewards set 
with $\Delta \in [0,\infty)$.
In exchange,
our problem has more structure in terms of the relation between the rewards and noise (power variance),
which they do not assume.
Due to the differing nature of problems the error bounds are also different 
$\Theta(\log(n)/n \times m/\Delta^4_{\max})$
vs $\Theta(m^{1/3}/n^{2/3})$. 
A direct comparison is not possible, but we have taken their total regret, normalized by total amount
of data ($m n$) as a rough analog and assumed all gaps $\Delta_{i \geq 2} = \Delta_{\max}$ (which decreases the bound).
There are also links to the best arm identification problems 
found in \cite{gabillon2011BAI,bagaria2017medoids}.
Again the discrete gap parameter $\Delta > 0$ in both settings means that the upper bounds found in those
papers go to infinity in our continuous setting $\Delta \in [0, \infty)$.
We refer to our MME setting as familywise bandits.
Since the worst case bound is over arms that are from a continuous family of distributions.

To our knowledge, we are the first to 
(i) define the true error $\errThree$
(ii) decompose $\errThree$ into canonical problems
(iii) prove a convergence result for familywise bandits / $\errOne$.
Various K-medoid error analyses include
\cite{eppstein2001fast,okamoto2008ranking,newling2016sub,bagaria2017medoids} (1-medoid), 
\cite{sergei2019clusteringbook_1} (PAM) and \cite{song2017pamae} (PAMAE).
However, error is calculated with respect to a sample medoid: EXHAUSTIVE $\sampleMedoid$.

\section{Faster, Large Scale $K$-Medoids}
\label{section_MCPAM}

Lemma \ref{lemma_upper_bound_relError_gaussian_case} 
(appendix \ref{subsection_generalization_of_2_mean_upper_bounds_to_m_means})
suggests that for Gaussian and sub-Gaussian eccentricity distributions,
$\errOne$ is exceedingly fast decreasing in $n$ for a given tolerance $T$.
Given that EVT error convergence rates are usually slow \cite{leadbetterEVT,jianwenEVT},
this suggests that more computational resources are used to control $\errME$
and fewer to control $\errMME$.
This suggests an optimal K-medoids regime of $m >> n$.
This is inline with the 1-medoid algorithms RAND, TOPRANK, trimed and meddit
And differs from the K-medoid algorithms PAM, CLARA, PAMAE ($m = \Theta(n)$)
\footnote{
To understand the statement $m = \Theta(n)$ for PAM, 
consider the earlier description of PAM.
Corresponding to a single iteration of PAM steps 1 - 3,
there are $k n$ single swaps.
It is well known that there are a small number of PAM iterations,
hence $m = \Theta(kn) = \Theta(n)$.
Similarly for other algorithms
}
\& CLARANS ($m << n$).
We propose a novel K-medoid algorithm MCPAM in the $m >> n$ regime.

MCPAM (Monte Carlo PAM) estimates the $n$ required for a given error tolerance
using a Monte Carlo approach (listing \ref{algorithm_MCPAM}).
The core idea is to estimate $\Ecc$ of candidate medoids
via sequential Monte Carlo sampling. We start with an initial medoid and $n = 10^3$ 
and increment by factors of $10$ until a medoid swap having lower $\Ecc$ with high probability, is found.
We chose $z_{\alpha}$ to give $1 - \alpha$ coverage and calculate symmetric confidence intervals: 
\begin{alignat*}{1}
\Var(\EccHat(\kTuple, \{y_j\}))
    & = \frac{1}{n} \sum_{j = 1}^{n} (d(y_j, \kTuple))^2 - \EccHat(\kTuple, \{y_j\}) \\
\EccHat_{\text{lo}}(\kTuple) 
    & = \EccHat(\kTuple) - z_{\alpha} (\Var(\EccHat(\kTuple)))^{\frac{1}{2}}
\end{alignat*}

\begin{algorithm*}
\caption{MCPAM}
\label{algorithm_MCPAM}
\begin{algorithmic}[1]
\REQUIRE Initial candidate k-medoid $\candidateMedoid$;
    samples of $\underlyingSpace$: $\{x_i\}$, $i = [1,m]$;
    consts $\tau \geq 0, n_{\max} > 0$
\STATE Calc $\EccHat(\candidateMedoid, \{x_i\})$,
    $\EccHat_{\text{lo}}(\candidateMedoid, \{x_i\})$, $\EccHat_{\text{hi}}(\candidateMedoid, \{x_i\})$
\STATE $\candidateMedoidUpdated \leftarrow \candidateMedoid$
\REPEAT
    \STATE $\candidateMedoid \leftarrow \candidateMedoidUpdated$ ;  $n \leftarrow 1000$
    \WHILE{$n < n_{\max}$}
        \STATE sample $\{y_j\}$ $j = [1,n]$ from $\{x_i\}$
        \STATE $\text{minhi} \leftarrow \argmin_{i,l \in [1,m], \times [1,k]} \EccHat_{\text{hi}}(\candidateMedoidSwap; \{y_j\})$ 
        \STATE $\text{minlo} \leftarrow \argmin_{i,l \in [1,m], \times [1,k]} \EccHat_{\text{lo}}(\candidateMedoidSwap; \{y_j\})$ 
        \IF{$\EccHat_{\text{hi}}(\candidateMedoid; \{x_i\}) < \EccHat_{\text{lo}}(\kTuple_{\text{minlo}}) + \tau$}
            \STATE break \quad \COMMENT{no significant improvement available}
        \ELSIF{$\EccHat_{\text{lo}}(\candidateMedoid; \{x_i\})) \geq \EccHat_{\text{hi}}(\kTuple_{\text{minhi}}) + \tau$}
            \STATE $\candidateMedoidUpdated \leftarrow  \kTuple_{\text{minhi}}$
                \quad \COMMENT{significant improvement available}
            \STATE Calc $\EccHat(\candidateMedoidUpdated, \{x_i\})$,
                $\EccHat_{\text{lo}}(\candidateMedoidUpdated, \{x_i\})$,
                $\EccHat_{\text{hi}}(\candidateMedoidUpdated, \{x_i\})$
            \STATE break
        \ELSE
            \STATE \COMMENT{significant improvement not found, but may exist}
            \STATE $n = 10 * n$ \quad \COMMENT{Try with larger sample size}
        \ENDIF 
    \ENDWHILE
\UNTIL{$\EccHat(\candidateMedoid; \{x_i\}) - \EccHat(\candidateMedoidUpdated; \{x_i\}) \leq 0$}
\RETURN $\candidateMedoid$, $\EccHat(\candidateMedoid, \{x_i\})$, 
    $\EccHat_{\text{lo}}(\candidateMedoid, \{x_i\})$, $\EccHat_{\text{hi}}(\candidateMedoid, \{x_i\})$
\end{algorithmic}
\end{algorithm*}

Loop 5 of MCPAM is called  MCPAM\_INNER.\\
{\textbf{practical optimizations:}} $\tau, n_{\max}$ are practical controls for time vs accuracy.
In the $m >> n$ regime, the confidence intervals of $\EccHat(\candidateMedoid, \{x_i\})$
are quite small compared to the error bars of $\EccHat(\candidateMedoidSwap, \{y_i\})$.
As a practical optimization we use $\EccHat(\candidateMedoid, \{x_i\})$ instead of 
$\EccHat_{\text{lo}}(\candidateMedoid, \{x_i\})$,
$\EccHat_{\text{hi}}(\candidateMedoid, \{x_i\})$.
Practically, we find a strong correlation
between $\Ecc_{\text{lo}}(\kTuple_{\text{minlo}})$ \& $\Ecc_{\text{hi}}(\kTuple_{\text{minhi}})$.
Hence we use $\kTuple_{\text{minhi}}$ for both lines 9 and 11, and combine by
exiting without swapping if 
$\EccHat(\kTuple_{\text{minhi}}, \{y_j\}) \pm z_{\alpha} \sigma_{\EccHat(\kTuple_{\text{minhi}}, \{y_j\})}$
is within $\tau$ of 
$\EccHat(\candidateMedoid, \{x_i\})$. Else we either swap or continue the loop depending on whether
$\EccHat(\candidateMedoid, \{x_i\})$ is higher or lower.

{\textbf{Single Medoid Variant:}} 
When $k = 1$, the set of swaps is unchanged as $\candidateMedoid$ changes (line 4).
This allows us to unify the outer and inner loops (lines 3 \& 5).
Hence we propose a simplified 1-medoid variant of MCPAM.
We repeat at line 18 unconditionally, modify line 14 to be a continue of loop 5.
and modify line 10 to terminate the program.
Then the only way to exit loop 5 and the program is via condition 9.

{\textbf{Distributed MCPAM:}} The computationally heavy steps are 
7,8 $\mathcal{O}(kmn)$ \& 1,13 $\mathcal{O}(km)$.
These are parallel operations on $\{x_i\}$ and each $x_i$ needs the full set $\{y_j\}$ for its calculations.
Assume the $n$ chosen by MCPAM will be $n << m$ 
(this is confirmed in theorem \ref{theorem_mcpam_guarantees_k_geq_1}).
Then, given $c$ worker nodes and 1 master node,
$\{x_i\}$ is distributed in $c$ chunks to workers.
Since $n << m$, we simply sample on the master and broadcast
a copy of $\{y_j\}$ to each worker.
Our costs are given in theorem \ref{theorem_mcpam_guarantees_k_geq_1}.

\subsection{Theoretical Guarantees and Related Work}
\label{subsection_theoretical_guarantees_related_work_mcpam}

To simplify computational costs we assume $m,n > k$ in this subsection.
We analyze MCPAM\_INNER, with $\tau = 0, n_{\max} = \infty$
(we will lose some accuracy in exchange for speed if $\tau > 0, n_{\max} < \infty$).
In the MCPAM setting,
we assume that the minimum gap $\Delta$ between the eccentricities of all k-points
constructable on $\{x_i\}$,
is strictly positive and indpendent of $m$.
This is similar to the assumption in \cite{bagaria2017medoids}.
Simplified versions of our convergence results are:
\begin{reptheorem}[\ref{theorem_mcpam_guarantees_k_geq_1}][MCPAM $K \geq 1$]
When $\Delta > 0$ and $\Delta = \Theta(1)$.
MCPAM\_INNER (loop 5) has expected runtime $\mathcal{O}(km)$.
Upon exit from MCPAM\_INNER, with high probability we have either:
(i) a decrease in $\Ecc$ from $\Ecc(\candidateMedoid)$ or
(ii) the swap set constructed around $\candidateMedoid$ has no smaller $\Ecc$.
MCPAM provides a true confidence interval on the eccentricity for the final medoid estimate.
For the distributed version, computational cost is $\mathcal{O}(\frac{km}{c})$ 
and communication cost is $\Theta(1)$.
\end{reptheorem}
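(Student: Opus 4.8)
The plan is to treat the four assertions in turn, all resting on the fact that the geometric schedule $n\leftarrow 10n$ in loop 5 is halted as soon as the confidence intervals become narrower than the eccentricity gap $\Delta$. Let $\sigma_{\max}$ be the largest per–swap standard deviation of a distance $d(\cdot,\kTuple)$ under the line-6 resampling (finite, since $\{y_j\}$ is drawn from the finite bounded set $\{x_i\}$), and put $n^\ast := \Theta(z_\alpha^2\sigma_{\max}^2/\Delta^2)$, which is $\Theta(1)$ in $m$ because $\Delta=\Theta(1)$ is assumed independent of $m$. For $n\ge n^\ast$ every interval has half-width $z_\alpha(\Var(\EccHat))^{1/2}\asymp z_\alpha\sigma_{\max}/\sqrt n<\Delta/5$ with probability $1-\mathcal{O}(\sigma_{\max}^2/(n\Delta^2))$ --- Chebyshev supplies this conservatively for finite $n$, Berry--Esseen recovers the nominal $1-\alpha$ normal coverage --- and on that event the empirical order of the $km+1$ eccentricities in play equals the true order; since the gap assumption makes $\candidateMedoid$ either strictly better than every single-swap neighbour or strictly worse than the best one, exactly one of the tests at line 9 and line 11 fires and loop 5 exits. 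Hence loop 5 fails to exit at sample size $n\ge n^\ast$ with probability $\mathcal{O}(1/n)$, which decays faster than the per-iteration cost $\mathcal{O}(kmn)$ of lines 7--8 grows; summing over the schedule, $\expectation[\text{work}]=\sum_n\mathbb{P}(\text{reach }n)\,\mathcal{O}(kmn)=\mathcal{O}(km\,n^\ast)=\mathcal{O}(km)$, with $\mathcal{O}(1)$ expected iterations.

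For the exit dichotomy, condition on the high-probability event $\mathcal{G}$ that $\Ecc(\candidateMedoid)\in[\EccHat_{\text{lo}}(\candidateMedoid),\EccHat_{\text{hi}}(\candidateMedoid)]$ and $\Ecc(\candidateMedoidSwap)\in[\EccHat_{\text{lo}}(\candidateMedoidSwap),\EccHat_{\text{hi}}(\candidateMedoidSwap)]$ hold simultaneously over all $km$ single-swap neighbours (union bound over the intervals, or inflate $z_\alpha$ to $z_{\alpha/(km+1)}$). If loop 5 exits at line 10 with $\tau=0$ then $\Ecc(\candidateMedoid)\le\EccHat_{\text{hi}}(\candidateMedoid)<\EccHat_{\text{lo}}(\kTuple_{\text{minlo}})=\min_{i,l}\EccHat_{\text{lo}}(\candidateMedoidSwap)\le\Ecc(\candidateMedoidSwap)$ for every swap, which is case (ii); if it exits at line 14 then $\Ecc(\candidateMedoid)\ge\EccHat_{\text{lo}}(\candidateMedoid)\ge\EccHat_{\text{hi}}(\kTuple_{\text{minhi}})\ge\Ecc(\kTuple_{\text{minhi}})$, so the assignment $\candidateMedoidUpdated\leftarrow\kTuple_{\text{minhi}}$ does not raise $\Ecc$ and the $\Delta$-gap forces a strict drop, which is case (i). Since the outer loop 3 repeats only while the UNTIL test (line 18) certifies a strict decrease in $\EccHat(\cdot,\{x_i\})$, it performs finitely many iterations and exits holding the same guarantee.

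For the confidence-interval claim, observe that at lines 1 and 13 the quantities $\EccHat(\candidateMedoid,\{x_i\})$, $\EccHat_{\text{lo}}$, $\EccHat_{\text{hi}}$ are formed from all $m$ i.i.d.\ samples $\{x_i\}$, so $\EccHat(\candidateMedoid,\{x_i\})=\frac1m\sum_i d(x_i,\candidateMedoid)$ has mean $\Ecc(\candidateMedoid)$ and the returned symmetric interval is a bona fide (CLT- or Chebyshev-based) confidence interval for the \emph{true} eccentricity of the output medoid rather than an in-sample surrogate; the only caveat, negligible in the $m\gg n$ regime, is the $O(1/m)$ selection bias from $\candidateMedoid$ being a tuple of points of $\{x_i\}$. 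For the distributed version, partition $\{x_i\}$ into $c$ chunks of size $m/c$; each worker evaluates $\EccHat_{\text{lo}}$ and $\EccHat_{\text{hi}}$ for the $\mathcal{O}(km/c)$ swaps anchored in its chunk against the broadcast subsample $\{y_j\}$, whose size $n=n^\ast=\Theta(1)$, so per-worker computation is $\mathcal{O}(km/c)$; each iteration of loop 5 needs only a broadcast of $\Theta(1)$ words and a reduction of the $c$ local argmins, and there are $\mathcal{O}(1)$ such iterations, giving $\Theta(1)$ communication per worker.

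The main obstacle is making the expected-runtime bound rigorous: one must check that the probability of not exiting past $n^\ast$ shrinks strictly faster than the $10\times$ growth of the iteration cost (the Chebyshev $1/n$ tail does this, but the two geometric series have to be balanced so the contribution beyond $n^\ast$ is only $\mathcal{O}(km\,n^\ast)$), and one must commit to how the normal-based intervals are handled for finite $n$ --- a fully rigorous treatment replaces them by Chebyshev or Berry--Esseen intervals and carries the constants through $n^\ast$ and the union bound. A secondary check is that the distribution-level gap $\Delta$ descends to a gap of $\Delta-O(\sigma_{\max}/\sqrt m)=\Theta(1)$ at the level of the $\{x_i\}$-eccentricities that the line-6 intervals actually track.
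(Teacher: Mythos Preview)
Your argument is correct in spirit and more quantitative than the paper's, but it follows a genuinely different route. The paper proves termination via the SLLN (sample mean and sample variance converge a.s., so interval widths shrink to zero a.s., and with $\Delta>0$ the intervals eventually separate), then lets $F$ be the number of loop-5 iterations and shows $\expectation(F)<\infty$ by a ratio test on the non-homogeneous geometric series $\sum_i i\,p(10^{i+2})\prod_{j<i}(1-p(10^{j+2}))$, asserting that this sum is independent of $m$; the confidence-interval validity is handled by citing the Glynn--Whitt sequential-stopping CLT rather than arguing from scratch. By contrast, you fix an explicit threshold $n^\ast=\Theta(z_\alpha^2\sigma_{\max}^2/\Delta^2)$ via Chebyshev/Berry--Esseen, bound the non-exit probability at each level, and sum the expected work directly; this buys you a concrete handle on the constants and makes the ``independent of $m$'' claim explicit rather than asserted. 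Your treatment of the exit dichotomy (conditioning on the coverage event $\mathcal G$ and reading off cases (i)/(ii) from lines 10 and 14) and of the distributed cost is also more detailed than the paper's one-line appeals.

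One point worth flagging: your runtime argument says the Chebyshev $\mathcal O(1/n)$ non-exit probability ``decays faster than the per-iteration cost $\mathcal O(kmn)$ grows,'' but $1/n$ and $n$ are exactly balanced; what actually makes the sum converge is that $\mathbb P(\text{reach }n)$ is a \emph{product} of the earlier non-exit probabilities, which shrinks super-geometrically once past $n^\ast$. You should also be aware that exiting via line 9 (case (ii)) requires \emph{every} swap interval to clear $\EccHat_{\text{hi}}(\candidateMedoid)$, which in principle introduces a union bound over $km$ terms and hence a $\log(km)$ factor in $n^\ast$ under sub-Gaussian tails (or a $km$ factor under Chebyshev); the paper's proof hides this same issue inside its unproven assertion that $\expectation(F)$ is independent of $m$, so you are not worse off, but neither argument fully closes this gap.
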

\begin{reptheorem}[\ref{theorem_mcpam_guarantees_k_1}][MCPAM $K = 1$]
When $\Delta > 0$ and $\Delta = \Theta(1)$.
MCPAM has expected runtime $\mathcal{O}(m)$.
It finds the sample medoid with high probability and provides a true confidence interval
on the eccentricity of the same.
\end{reptheorem}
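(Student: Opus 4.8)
The plan is to reduce the $K=1$ claim to the already-established $K\ge 1$ guarantee (Theorem~\ref{theorem_mcpam_guarantees_k_geq_1}) together with one structural fact that is special to $k=1$: a single swap of a $1$-medoid with any $x_i$ simply produces $x_i$, so the swap set built around $\candidateMedoid$ is the \emph{entire} data set $\{x_i\}_{i=1}^m$ and does not change as $\candidateMedoid$ changes. This is exactly why the single-medoid variant can collapse the outer ``swap'' loop and the inner ``sample-size'' loop into one loop, and it upgrades the conclusion of Theorem~\ref{theorem_mcpam_guarantees_k_geq_1} from ``$\candidateMedoid$ is a local optimum of the swap neighbourhood'' to ``$\candidateMedoid = \argmin_{i\in[1,m]}\Ecc(x_i)$'', which is the sample medoid. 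That gives correctness (item~2). Item~3 is then essentially free: the algorithm always returns $(\EccHat,\EccHat_{\mathrm{lo}},\EccHat_{\mathrm{hi}})$ computed from the \emph{full} sample $\{x_i\}$ treated as i.i.d.\ draws from $\underlyingSpaceDistribution$, so by the central limit theorem (or, for a non-asymptotic version, Chebyshev's inequality) this is a valid level-$(1-\alpha)$ confidence interval for $\Ecc$ of the returned point, and it is a ``true'' interval because it is centred on the distributional eccentricity rather than on a sample reference.

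For the runtime I would first condition on the event that every confidence interval the algorithm invokes contains its target mean. On this event each accepted swap strictly decreases $\Ecc(\candidateMedoid)$, and by the gap assumption the decrease is at least $\Delta$; since $\Ecc\ge 0$ and $\Ecc(\candidateMedoid)$ starts at $\Ecc(\candidateMedoid_{\mathrm{init}})=\Theta(1)$, there are at most $\Ecc(\candidateMedoid_{\mathrm{init}})/\Delta=\mathcal{O}(1)$ swaps, i.e.\ $\mathcal{O}(1)$ ``phases'' of the unified loop. Within a phase the schedule $n=10^3,10^4,\dots$ drives the Monte-Carlo half-widths down like $\sigma/\sqrt n$ (the variance bound of the power-variance family), so once $n$ exceeds a threshold $n^\star$ depending only on $\Delta$ and the bounded variances---both $\Theta(1)$---the comparisons on lines~9 and~11 become decisive and the phase ends. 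Each loop iteration costs $\mathcal{O}(mn)$ (evaluate $\EccHat$ for all $m$ swap candidates on $n$ samples, plus $\mathcal{O}(m)$ for the two $\argmin$s and the full-sample recomputation after a swap), and the geometric schedule makes the total per-phase cost $\mathcal{O}(m\,n^\star)=\mathcal{O}(m)$, hence $\mathcal{O}(m)$ overall. To pass from this conditional bound to an \emph{expected} $\mathcal{O}(m)$ runtime I would bound, via the bounded-kurtosis (fourth-moment) concentration already available, the probability that $n$ must be pushed to $10^t$, so that $\sum_t \Pr[\text{round }t]\cdot\mathcal{O}(m\,10^t)$ still telescopes to $\mathcal{O}(m)$; the tidiest packaging is to quote the expected-runtime statement of Theorem~\ref{theorem_mcpam_guarantees_k_geq_1} for each phase at $k=1$ so this accounting is inherited rather than redone, and then only the $\mathcal{O}(1)$-phase count is new.

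The hard part will be the ``no improvement available'' test (line~9) when it must hold uniformly over the $m$ swap candidates without $n$ growing with $m$: a crude Chebyshev union bound over the $m$ candidate means forces $n=\Omega(\sqrt m)$ (or worse), which inflates the runtime to $\mathcal{O}(m^{3/2})$ and breaks the theorem. Keeping $\mathcal{O}(m)$ requires exploiting that this test pits the current medoid's \emph{full-sample} interval---half-width $O(\sigma/\sqrt m)$, essentially exact---against a single aggregate (the best-looking alternative), and that by the gap any spurious competitor must have its Monte-Carlo mean undershoot by $\approx\Delta=\Theta(1)$, an event of probability $O(1/n^2)$ per candidate under the kurtosis bound; pushing the resulting $O(m/n^2)$ down to a constant still needs either a sharper (sub-Gaussian, e.g.\ under bounded distances) tail or to treat the confidence level $\alpha$ as fixed and absorb a $\log m$. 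I would spend most of the effort there, and on checking that the $\mathcal{O}(1)$-swaps argument still goes through once the conditioning on ``all intervals good'' is removed (this also needs $\Ecc$ bounded above, which follows from the power-variance assumptions); the loop-unification and confidence-interval parts are routine once Theorem~\ref{theorem_mcpam_guarantees_k_geq_1} is in hand.
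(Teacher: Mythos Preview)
Your approach is essentially the paper's: reduce to Theorem~\ref{theorem_mcpam_guarantees_k_geq_1} and use the $k=1$ structural fact that the swap neighbourhood of any $\candidateMedoid$ is all of $\{x_i\}_{i=1}^m$, so that ``no improvement in the swap set'' upgrades to ``$\candidateMedoid$ is the sample medoid.'' The paper's actual proof is three sentences: it observes that in the single-medoid variant the only exit from loop~5 is via condition~9, states that the arguments of Theorem~\ref{theorem_mcpam_guarantees_k_geq_1} apply unchanged, and concludes that at exit $\candidateMedoid$ has the lowest $\Ecc$ with probability $1-\alpha$.

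Your write-up is considerably more careful than the paper's. In particular, your phase decomposition (at most $\Ecc(\candidateMedoid_{\mathrm{init}})/\Delta=\mathcal{O}(1)$ swaps, each phase $\mathcal{O}(m)$ by the $k\ge1$ result) is a genuine addition: the paper simply asserts the $k\ge1$ expected-runtime argument carries over to the unified loop without separating swap phases from sample-size escalations. Your identification of the ``hard part''---that line~9 must hold uniformly over $m$ candidates without forcing $n$ to grow with $m$---is a legitimate concern about the theorem's rigour, but the paper does not address it either (its $p(n)$ and $\expectation F$ in Theorem~\ref{theorem_mcpam_guarantees_k_geq_1} are treated as depending only on the distribution and $\Delta$, not on $m$). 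So you are not missing anything the paper supplies; if anything you are flagging a looseness the paper leaves implicit.
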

Proofs are in appendix \ref{section_MCPAM_theory}: theorems \ref{theorem_mcpam_guarantees_k_1},
\ref{theorem_mcpam_guarantees_k_geq_1}.
The MCPAM outer loop almost always runs only a few times (like other K-medoid methods), so
$\mathcal{O}(km)$ is the practical runtime.
Note, the confidence intervals are not for estimates of $\Ecc(\medoid)$,
but are true confidence intervals for $\Ecc(\candidateMedoid)$.
To our knowledge, MCPAM is the first fully general (i.e. semi-metric) K-medoids algorithm to have:
(i) expected linear runtime in 1-medoid case
(ii) computational cost of $\mathcal{O}(km)$ (detailed comparisons in ln 224 - 237)
(iii) constant communication cost, providing massive scalability in distributed setting
(iv) been scaled to 1 billion points.
All this while closely matching PAM’s quality (fig \ref{table_quality_comparison}).

We compare to the 1-medoid algorithms first.
RAND \cite{eppstein2001fast} computes $\epsilon$-approximate sample 1-medoid w.h.p in 
$\mathcal{O}(\frac{m \log(m)}{\epsilon})$ time for all finite datasets.
However, $\epsilon$ is measured relative to the network diameter $\Delta$, which can be fragile.
The 1-medoid algorithms TOPRANK, trimed, meddit all have various distributional assumptions on the data.
TOPRANK \cite{okamoto2008ranking} finds the sample 1-medoid w.h.p 
in $\mathcal{O}(m^{\frac{5}{3}} \log^{\frac{4}{3}} m)$. 
trimed \cite{newling2016sub} finds the sample 1-medoid in 
$\mathcal{O}(m^{\frac{3}{2}} 2^{\Theta(d)})$ for dimension $d$
while requiring the distances to satisfy the triangle inequality.
This and the exponential dependence on $d$ significantly limit the applicability of trimed.
meddit \cite{bagaria2017medoids} finds the sample 1-medoid w.h.p
in $\mathcal{O}(m \log(m))$ time, when $\Delta = \Theta(1) > 0$. These are all worst case times.

The runtimes per iteration for PAM \cite{kaufman1990partitioning}, 
CLARANS \cite{ng2002clarans} are $\mathcal{O}(km^2)$ 
(although swaps are subsampled in CLARANS, they are kept proportional to $km$).
CLARA \cite{kaufman2008clustering} is $\mathcal{O}(kn^2)$,
PAMAE \cite{song2017pamae} is $\mathcal{O}(m + r k n^2)$ (distributed $\mathcal{O}(m + k n^2)$),
where $n$ is sample size typically set to a multiple of $k$
and $r$ is number of reruns.
PAMAE has an accuracy guarantee, bounding the error in the estimate relative to the sample medoid.
However, PAMAE is rather restrictive requiring the data come from a normed vector space.

\subsection{Results and Comparisions}

We perform scaling and quality comparisons on various datasets from literature \cite{Ssets,mallah2013leaves,frey1991letter, yang2016foursquare} 
and 2 newly created datasets (appendix \ref{subsubsection_mcpam_datasets}).
We compare to PAM, as that is the 'gold standard' in quality for K-medoids.
Since PAM uses strictly more data than MCPAM (the $T_i$ are much smaller for MCPAM),
 PAM will have better quality.
However, per our motivation, we expect MCPAM to have a small drop in quality with
a massive speedup in runtime.
This is verified in table \ref{table_quality_comparison} and \ref{picture_single_node_scaling}.
MCPAM was run with K++ initialization \cite{Arthur07K++}, 10 times on each dataset.
PAM, MCPAM were run with the true number of clusters.
We compare to DBSCAN as it is widely used for non-Euclidean distributed clustering
\cite{esterKDD,gotzHPDBSCAN,heMRDBSCAN}.
To estimate epsilon parameter for DBSCAN, we (i) visually identify the knee of knn plot,
(ii) do extensive grid search around that (iii) pick epsilon giving maximum ARI.
The full details of our experimental setup are given in appendix \ref{subsubsection_mcpam_experimental_setup}.
We also compare MCPAM to PAM via clustering cost, again MCPAM is quite close to PAM in quality:
figure \ref{figure_cc_comparison} in appendix \ref{subsubsection_experimental_results_contd}.

The variance of eccentricity distributions tends to be correlated with
the mean.
Since $n$ is just required to control the variance of eccentricity distributions
relative to their means,
it is quite small for a variety of natural and artifical datasets, almost always $n = 10^3$ in practice.
This enables MCPAM to scale to very large dataset sizes.
Figure \ref{picture_single_node_scaling} shows MCPAM scaling linearly as 
$m$ increases keeping underlying distribution same.
Figure \ref{figure_distributed_results} 
shows distributed MCPAM results.
We are, to the best of our knowledge, the first team to scale a semi-metric K-medoids algorithm to
1 billion points (with 6 attributes)
(see BillionOne dataset appendix \ref{subsubsection_mcpam_datasets}).
PAMAE \cite{song2017pamae} does run on a dataset of around 4 billion,
but the data is restricted to Euclidean space,
other distributed semi-metric K-medoids algorithms
\cite{jiang2014parallel, zhang2004parallel, he2011parallel, yue2016parallel, martino2017efficient}
are not run at this scale.
To compare to other non-Euclidean algorithms, HPDBSCAN \cite{gotzHPDBSCAN} demonstrates runs
upto 82 million points with 4 attributes.

\begin{figure*}
\caption{
\label{table_quality_comparison}
Quality comparison between PAM and MCPAM.
We use ARI as it is based on underlying truth and is a loose proxy for $\errThree$.
Higher ARI is better \cite{rand1971objective}.
MCPAM ARI is quite close to PAM 8\% degradation on average, compare to
26\% for DBSCAN.
}
\centering
\scalebox{0.7}{
\begin{tabular}{@{}|l|l|l|l|l|l|l|l|l|@{}}
\toprule
\textbf{Dataset} 
    & \textbf{K}
    & \textbf{PAM-ARI}
    & \multicolumn{1}{c|}{\textbf{\begin{tabular}[c]{@{}c@{}}MCPAM-ARI\\ (lo)\end{tabular}}} 
    & \multicolumn{1}{c|}{\textbf{\begin{tabular}[c]{@{}c@{}}MCPAM-ARI\\ (mean)\end{tabular}}} 
    & \multicolumn{1}{c|}{\textbf{\begin{tabular}[c]{@{}c@{}}MCPAM-ARI\\ (hi)\end{tabular}}} 
    & \textbf{DBSCAN-ARI}
    & \multicolumn{1}{c|}{\textbf{\begin{tabular}[c]{@{}c@{}}MCPAM \\ rel. err\end{tabular}}} 
    & \multicolumn{1}{c|}{\textbf{\begin{tabular}[c]{@{}c@{}}DBSCAN \\ rel. err\end{tabular}}} \\ \midrule
S1 & 15 & 0.985 & 0.98 & 0.983 & 0.986 & 0.947 & 0.268 & 3.91 \\ \midrule 
S2 & 15 & 0.934 & 0.9 & 0.92 & 0.939 & 0.664 & 1.5 & 28.8 \\ \midrule 
S3 & 15 & 0.72 & 0.639 & 0.684 & 0.73 & 0.448 & 4.99 & 37.8 \\ \midrule 
S4 & 15 & 0.635 & 0.529 & 0.576 & 0.623 & 0.394 & 9.22 & 37.9 \\ \midrule 
leaves & 100 & 0.278 & 0.194 & 0.211 & 0.228 & 0.0161 & 24.2 & 94.2 \\ \midrule 
letter1 & 26 & 0.264 & 0.249 & 0.268 & 0.287 & 0.408 & -1.76 & -54.8 \\ \midrule 
letter2 & 26 & 0.164 & 0.144 & 0.166 & 0.189 & 0.204 & -1.46 & -24.3 \\ \midrule 
letter3 & 26 & 0.196 & 0.158 & 0.193 & 0.228 & 0.254 & 1.61 & -29.4 \\ \midrule 
letter4 & 26 & 0.162 & 0.13 & 0.154 & 0.177 & 0.135 & 5.48 & 16.8 \\ \midrule 
M1 & 32 & 0.756 & 0.546 & 0.641 & 0.735 & 0.519 & 15.3 & 31.4 \\ \midrule 
M2 & 32 & 0.483 & 0.36 & 0.393 & 0.426 & 0.191 & 18.6 & 60.5 \\ \midrule 
M3 & 32 & 0.353 & 0.273 & 0.291 & 0.309 & 0.137 & 17.6 & 61.2 \\ \midrule 
M4 & 32 & 0.246 & 0.212 & 0.227 & 0.241 & 0.0696 & 7.81 & 71.7 \\ \midrule 
\multicolumn{7}{|l|}{Avg-all Datasets} & 7.954 & 25.819 \\ \bottomrule
\end{tabular}}

\end{figure*}

\begin{figure*}[!htb]
\centering
\includegraphics[width=13cm]{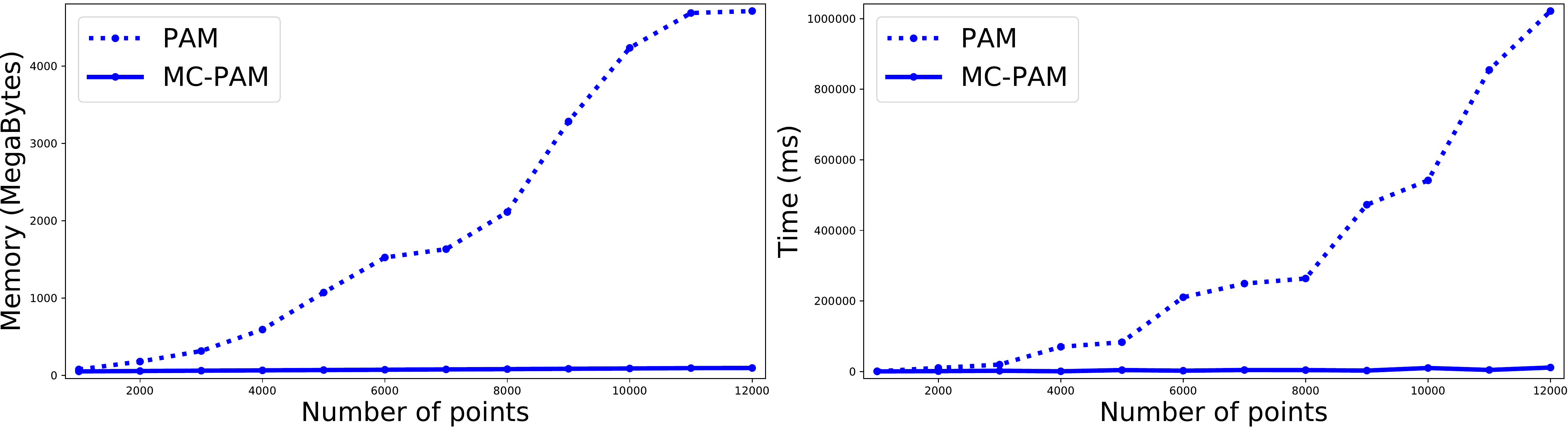}
\caption{
\label{picture_single_node_scaling}
Memory and time scaling of PAM and MCPAM on a single CPU,
as $m$ increases. The points were subsampled from the letters data set.
}
\end{figure*}

\begin{figure*}[!htb]
\centering
\includegraphics[width=13.7cm]{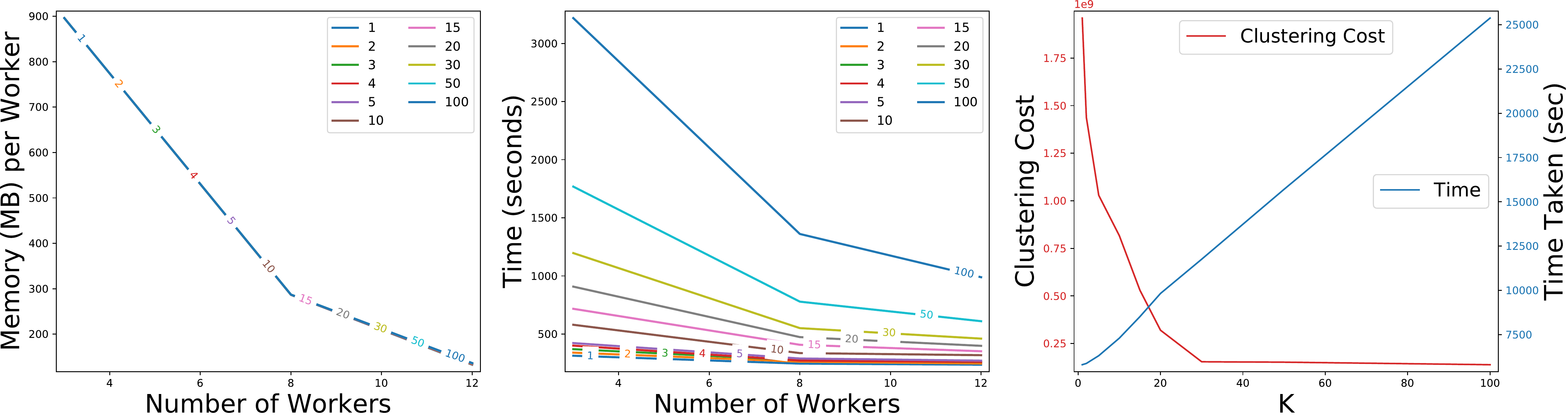}
\caption{
\label{figure_distributed_results}
The first \& second plots show
memory usage \& runtime of distributed MCPAM as num CPUs increase.
Each line corresponds to a particular $k$, showing the progression across $k$ also.
We see near-linear scale-up.
This is on 32 million point Foursquare data (appendix \ref{subsubsection_mcpam_datasets}).
y-axis quantities are medians across multiple MCPAM loop 3 iterations.
The $3^{\text{rd}}$ plot shows clustering cost \& runtime on BillionOne data 
\ref{subsubsection_mcpam_datasets} as $k$ increases, numCPUs = 12
(full scaling data in appendix \ref{subsubsection_experimental_results_contd},
table \ref{billion_scaling}).
}
\end{figure*}

\section{Conclusions}

We have formalized the true K-medoid error $\errThree$, and revealed a core dynamic inside this error
by decomposing it into the ME and MME problems. We have a new 'familywise bandits' convergence
result to bound MME.
Inspired by the tradeoffs in these bounds, we have proposed the first $K( > 1)$ medoids algorithm
in the $n << m$ regime: MCPAM. MCPAM has good scaling and quality properties, and is the first
K-medoids algorithm to provide true confidence interval on $\Ecc$ of estimated medoid.
This work opens some interesting future directions:
(i) Identifying conditions under which $\sampleMedoid$ diverges.
Based on our study of the core $\zscoreOfZero$ function in 
appendix \ref{subsection_convenient_expression_for_2D_relative_error}, we suspect
there are diverging distributions when the power-variance parameter $\beta > 2$.
(ii) Our analysis combined with EVT provides a possible, natural 
approach to analyzing the number of K-medoid outer loop iterations.

\bibliographystyle{plain}
\bibliography{kmedoids_references}

\pagebreak

\appendix

\section{Medoid Error Analysis: $\text{err}_{1,2,3}$}
\label{section_medoid_error_analysis}

In this section we analyze $\errOne, \errTwo, \errThree$ 
and provide an error decomposition and bounds.

\begin{lemma}[Decomposition of $\errThree$ into $\errOne$, $\errTwo$]
\label{lemma_decomposition_of_errThree_into_canonical_problems}
\begin{alignat*}{1}
\errThree := \errOne + \errOne \errTwo + \errTwo
\end{alignat*}
\end{lemma}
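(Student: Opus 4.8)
The claimed identity is
\[
\errThree = \errOne + \errOne\,\errTwo + \errTwo,
\]
with the three errors as in Definition \ref{definition_KM_errors}. The key observation is that each $\err_i$ is a ratio of an expected absolute difference to an expected eccentricity, and that $1 + \errOne$, $1 + \errTwo$, $1 + \errThree$ telescope multiplicatively once the absolute values are removed. The plan is to first argue that the absolute values are in fact redundant, then chain the resulting ratios, and finally re-expand.

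\textbf{Step 1: Remove the absolute values via an ordering argument.} Since $\medoid$ minimizes $\Ecc$ over all of $\kTupleSpace$, and $\trueEccSampleMedoid$ minimizes $\Ecc$ over the subsample $\kTupleSample \subseteq \kTupleSpace$, and $\sampleMedoid$ minimizes the \emph{sample} eccentricity $\EccHat$ over $\kTupleSample$ (so it need not minimize $\Ecc$ even over $\kTupleSample$), we get the pointwise chain
\[
\Ecc(\medoid) \;\le\; \Ecc(\trueEccSampleMedoid) \;\le\; \Ecc(\sampleMedoid).
\]
The first inequality is because $\kTupleSample \subseteq \kTupleSpace$; the second because $\trueEccSampleMedoid$ is the $\Ecc$-minimizer over $\kTupleSample$ while $\sampleMedoid$ is merely some element of $\kTupleSample$. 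Hence $|\Ecc(\sampleMedoid) - \Ecc(\trueEccSampleMedoid)| = \Ecc(\sampleMedoid) - \Ecc(\trueEccSampleMedoid)$, $|\Ecc(\trueEccSampleMedoid) - \Ecc(\medoid)| = \Ecc(\trueEccSampleMedoid) - \Ecc(\medoid)$, and $|\Ecc(\sampleMedoid) - \Ecc(\medoid)| = \Ecc(\sampleMedoid) - \Ecc(\medoid)$, all of these holding pointwise and therefore in expectation as well, so the expected absolute differences equal the corresponding differences of expectations.

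\textbf{Step 2: Chain the ratios.} Writing $a := \expectation\,\Ecc(\sampleMedoid)$, $b := \expectation\,\Ecc(\trueEccSampleMedoid)$, $c := \expectation\,\Ecc(\medoid)$, Step 1 gives
\[
\errOne = \frac{a-b}{b}, \qquad \errTwo = \frac{b-c}{c}, \qquad \errThree = \frac{a-c}{c},
\]
so $1 + \errOne = a/b$, $1 + \errTwo = b/c$, $1 + \errThree = a/c$, whence $(1+\errOne)(1+\errTwo) = a/c = 1 + \errThree$. Expanding the left side yields $1 + \errOne + \errTwo + \errOne\errTwo = 1 + \errThree$, i.e. $\errThree = \errOne + \errTwo + \errOne\errTwo$, which is the claim.

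\textbf{Main obstacle.} The only real subtlety is Step 1: the identity as stated (without absolute values on the right) genuinely requires the monotone ordering $\Ecc(\medoid) \le \Ecc(\trueEccSampleMedoid) \le \Ecc(\sampleMedoid)$ to hold, so that expectations of absolute differences collapse to differences of expectations. This ordering is immediate from the nesting of the minimization domains and the fact that $\sampleMedoid \in \kTupleSample$, but it should be stated explicitly since without it one would only get an inequality (triangle inequality in expectation gives $\errThree \le \errOne + \errTwo + \errOne\errTwo$ in general). Everything after that is the elementary algebraic telescoping in Step 2.
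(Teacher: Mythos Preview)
Your proof is correct and follows essentially the same approach as the paper: both arguments use the pointwise ordering $\Ecc(\medoid) \le \Ecc(\trueEccSampleMedoid) \le \Ecc(\sampleMedoid)$ to replace expected absolute differences by differences of expectations, and then do straightforward algebra on the resulting ratios. Your writeup is in fact more careful than the paper's, which passes from $|A+B|$ to $|A|+|B|$ (and from $\errTwo$ to $\expectation\,\Ecc(\trueEccSampleMedoid) = (1+\errTwo)\,\expectation\,\Ecc(\medoid)$) as equalities without explicitly invoking the ordering; your Step~1 makes that justification explicit, and your multiplicative telescoping $(1+\errOne)(1+\errTwo)=1+\errThree$ is a slightly cleaner packaging of the same computation the paper carries out by direct substitution.
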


\begin{proof}
For a given $S$ and $\{\underlyingSpaceSample_i\}_{i = 1}^m$:
\begin{alignat*}{1}
\errThree 
    & = \frac{\expectation \left|\Ecc(\sampleMedoid) - \Ecc(\medoid) \right|}{\expectation(\Ecc(\medoid))} \\
    & = \frac{\expectation \left|\Ecc(\sampleMedoid) - \Ecc(\trueEccSampleMedoid) + \Ecc(\trueEccSampleMedoid) - \Ecc(\medoid) \right|}{\expectation(\Ecc(\medoid))} \\
    & = \frac{\expectation \left|\Ecc(\sampleMedoid) - \Ecc(\trueEccSampleMedoid) \right| + \left| \Ecc(\trueEccSampleMedoid) - \Ecc(\medoid) \right|}{\expectation(\Ecc(\medoid))} \\
    & = \errOne \frac{\expectation(\Ecc(\trueEccSampleMedoid))}{\expectation(\Ecc(\medoid))} + \errTwo
\end{alignat*}
But $\expectation(\Ecc(\trueEccSampleMedoid)) = (1 + \errTwo) \expectation(\Ecc(\medoid))$.
So:
\begin{alignat*}{1}
\errThree 
    & = \errOne + \errOne \errTwo + \errTwo
\end{alignat*}

\end{proof}

\begin{theorem}[Decomposition of $\errThree$ into Canonical Problems]
\label{theorem_decomposition_of_errThree_into_canonical_problems}
\begin{alignat*}{1}
\errThree := \errMME + \errMME \errME + \errME
\end{alignat*}
\end{theorem}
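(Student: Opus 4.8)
The plan is to obtain the statement immediately from Lemma~\ref{lemma_decomposition_of_errThree_into_canonical_problems}, which already gives $\errThree = \errOne + \errOne\errTwo + \errTwo$, by exhibiting $\errOne$ as an instance of $\errMME$ (Definition~\ref{definition_MME}) and $\errTwo$ as an instance of $\errME$ (Definition~\ref{definition_ME}). So after invoking the lemma, the whole proof is to set up two ``dictionaries'' translating the abstract MME and ME problems into the medoid quantities of Definition~\ref{definition_KM_errors}; everything else is a renaming.

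For the $\errOne \leftrightarrow \errMME$ correspondence: conditionally on the tuple sample $\kTupleSample = \{\kTupleRandomVariable_i\}$, take the $i$-th MME distribution $\mathcal{E}_i$ to be the law of $\kTupleToUnderlyingDistance(\underlyingSpaceRandomVariable,\kTupleRandomVariable_i)$ and its $n$ samples to be $E_{ij} := \kTupleToUnderlyingDistance(\underlyingSpaceRandomVariable_{ij},\kTupleRandomVariable_i)$. Then $\mu_i = \Ecc(\kTupleRandomVariable_i)$ and the sample mean of $\mathcal{E}_i$ is $\hat\mu_i = \EccHat(\kTupleRandomVariable_i,\underlyingSpaceSample_i)$, so $\kTupleRandomVariable_{i_{\min}} = \trueEccSampleMedoid$ and $\kTupleRandomVariable_{\hat i_{\min}} = \sampleMedoid$ by the very definitions of the true sample medoid and the sample medoid. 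Hence the numerator $\expectation|\mu_{\hat i_{\min}} - \mu_{i_{\min}}|$ and denominator $\expectation|\mu_{i_{\min}}|$ of the (expectation-normalized) $\errMME$ are exactly $\expectation|\Ecc(\sampleMedoid) - \Ecc(\trueEccSampleMedoid)|$ and $\expectation\Ecc(\trueEccSampleMedoid)$, i.e. $\errMME = \errOne$.

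For the $\errTwo \leftrightarrow \errME$ correspondence: take the single ME distribution to be the law of $\Ecc(\kTupleRandomVariable)$ with $\kTupleRandomVariable$ the $\kTupleSpace$-valued random variable, and its $m$ samples to be $\Ecc(\kTupleRandomVariable_1),\ldots,\Ecc(\kTupleRandomVariable_m)$; these are iid because the $\kTupleRandomVariable_i$ are. Then $\hat x_{\min} = \min_i \Ecc(\kTupleRandomVariable_i) = \Ecc(\trueEccSampleMedoid)$, while $x_{\min}$, the minimum of that distribution, equals $\inf_{\kTuple\in\kTupleSpace}\Ecc(\kTuple) = \Ecc(\medoid)$. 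So $\errME = |\Ecc(\trueEccSampleMedoid) - \Ecc(\medoid)|/|\Ecc(\medoid)| = \errTwo$ (with $\Ecc(\medoid)$ deterministic, the expectation normalization of Definition~\ref{definition_KM_errors} is transparent). Substituting $\errOne = \errMME$ and $\errTwo = \errME$ into Lemma~\ref{lemma_decomposition_of_errThree_into_canonical_problems} then yields $\errThree = \errMME + \errMME\errME + \errME$.

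The only genuine computation is Lemma~\ref{lemma_decomposition_of_errThree_into_canonical_problems} itself — split the absolute value using $\Ecc(\medoid)\le\Ecc(\trueEccSampleMedoid)\le\Ecc(\sampleMedoid)$ and rescale by $\expectation\Ecc(\trueEccSampleMedoid) = (1+\errTwo)\expectation\Ecc(\medoid)$ — which is already proved. The step that needs a sentence of care is the ME reduction's claim $x_{\min} = \Ecc(\medoid)$: this uses that $\medoid$ is by definition an exact minimizer of $\Ecc$ over all of $\kTupleSpace$, so $\Ecc(\medoid)$ is the infimum of the support of $\Ecc(\kTupleRandomVariable)$ whenever $\kTupleRandomVariable$ charges every neighbourhood of $\medoid$; I would state this support condition explicitly as part of the reduction. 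A secondary subtlety worth a remark is that the MME ``distributions'' $\mathcal{E}_i$ are themselves random (they depend on the random tuples), so the first dictionary is conditional on $\kTupleSample$; nothing else in the argument is affected.
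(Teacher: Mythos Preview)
Your proposal is correct and follows essentially the same approach as the paper: identify $\errOne$ with $\errMME$ via the dictionary $\mathcal{E}_i = \mathrm{law}(\kTupleToUnderlyingDistance(\underlyingSpaceRandomVariable,\kTupleRandomVariable_i))$, identify $\errTwo$ with $\errME$ via the dictionary $\mathrm{law}(\Ecc(\kTupleRandomVariable))$, and then invoke Lemma~\ref{lemma_decomposition_of_errThree_into_canonical_problems}. Your write-up is in fact more careful than the paper's own proof, which does not flag the support condition needed for $x_{\min}=\Ecc(\medoid)$ or the conditional-on-$\kTupleSample$ nature of the MME reduction.
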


\begin{proof}
Consider a given $S$ and $\{\underlyingSpaceSample_i\}_{i = 1}^m$.
For each $\kTupleRandomVariable_i \in S$ \& $\underlyingSpaceRandomVariable_{ij} \in \underlyingSpaceSample_i$,
$\kTupleToUnderlyingDistance(\underlyingSpaceRandomVariable_{ij}, \kTupleRandomVariable_i)$ 
is distributed with mean $\Ecc(\kTupleRandomVariable_i)$.
From the definitions it is clear that $\sampleMedoid$ is estimating 
the distribution with minimum mean from these distributions.
Further, $\trueEccSampleMedoid$ is the distribution with minimum mean.
Hence if we consider the minimum mean estimation problem with
the randomly chosen means $\Ecc(\kTupleRandomVariable_i)$ $\kTupleRandomVariable_i \in S$, we have:

\begin{alignat*}{1}
\errOne & = \errMME
\end{alignat*}

$S$ is a random sample of $\underlyingSpace^K$.
Hence $\{\Ecc(\kTupleRandomVariable) | \kTupleRandomVariable \in S\}$ is a random sample 
of $\{\Ecc(\kTuple) | \kTuple \in \underlyingSpace^K\}$.
The min $\trueEccSampleMedoid$ of the first set is estimating the min $\medoid$ of the second set.
Hence:

\begin{alignat*}{1}
\errTwo & = \errME
\end{alignat*}

The result is now immediate from lemma \ref{lemma_decomposition_of_errThree_into_canonical_problems}.
\end{proof}

\begin{theorem}[Bound on $\errOne$]
\label{theorem_bound_on_absdiff_sampleMedoid_trueEccSampleMedoid}
Consider a power-variance compatible triple 
$(\underlyingSpace, d, \underlyingSpaceDistribution)$.
Given samples $\kTupleSample, \underlyingSpaceSample_1, \ldots, \underlyingSpaceSample_m$,
let the following independence assumptions be satisfied:

\begin{itemize}
\item $\kTupleSample \perp \underlyingSpaceSample_i \quad \forall i \in [1,m]$
\item $\underlyingSpaceRandomVariable_{i j} \perp \underlyingSpaceRandomVariable_{i^{\prime}j^{\prime}}$
    for $i j \neq i^{\prime}j^{\prime}$
\end{itemize}

Let the num samples $m,n$ and $p > 0$ be such that the 
conditions of theorem \ref{theorem_upper_bound_on_n_for_given_tolerance} are met.
Then taking expectation over $\kTupleSample, \underlyingSpaceSample_1, \ldots, \underlyingSpaceSample_m$,
we have:
\begin{alignat*}{1}
\expectation \errOne & \leq \frac{p}{100}
\end{alignat*}
\end{theorem}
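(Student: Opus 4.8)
The plan is to reduce the claim to the minimum mean estimation bound of Theorem~\ref{theorem_upper_bound_on_n_for_given_tolerance} by conditioning on the $\kTupleSpace$-sample $\kTupleSample$. First I would fix a realization of $\kTupleSample = \{\kTupleRandomVariable_i\}_{i=1}^m$. Given this realization the quantities $\Ecc(\kTupleRandomVariable_1), \ldots, \Ecc(\kTupleRandomVariable_m)$ are deterministic, and for each $i$ the distances $\{\kTupleToUnderlyingDistance(\underlyingSpaceRandomVariable_{ij}, \kTupleRandomVariable_i)\}_{j=1}^n$ are $n$ samples drawn from the distribution of $\kTupleToUnderlyingDistance(\underlyingSpaceRandomVariable, \kTupleRandomVariable_i)$, whose mean is $\Ecc(\kTupleRandomVariable_i)$. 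Thus, conditionally on $\kTupleSample$, we are exactly in the MME setting of Definition~\ref{definition_MME}: $m$ distributions, $n$ samples apiece, with $\sampleMedoid$ realizing the estimate $\hat\imath_{\min}$ and $\trueEccSampleMedoid$ the true minimum-mean distribution. As in the proof of Theorem~\ref{theorem_decomposition_of_errThree_into_canonical_problems}, and because $\Ecc(\trueEccSampleMedoid)$ is non-random once $\kTupleSample$ is fixed, the conditional-on-$\kTupleSample$ version of $\errOne$ coincides with $\expectation[\errMME \mid \kTupleSample]$ for this instance.

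Next I would verify that the hypotheses of Theorem~\ref{theorem_upper_bound_on_n_for_given_tolerance} hold for this conditional instance. Power-variance compatibility of the triple (Definition~\ref{definition_power_variance_compatible_triple}) says the full family $\{\kTupleToUnderlyingDistance(\underlyingSpaceRandomVariable, \kTuple)\mid \kTuple \in \kTupleSpace\}$ is a power-variance family; hence each of the $m$ distributions picked out by the realized $\kTupleRandomVariable_i$ has variance $\alpha\mu^\beta + k$, a continuous cdf, and kurtosis $\le \kurtosisUB$, with the family constants $\familyParameters$ independent of which $\kTuple$'s were drawn. Independence of the samples --- required by that theorem --- is exactly what the two stated assumptions give: $\kTupleSample \perp \underlyingSpaceSample_i$ makes the per-distribution samples independent of the selected distributions, and $\underlyingSpaceRandomVariable_{ij}\perp \underlyingSpaceRandomVariable_{i'j'}$ makes the $mn$ samples mutually independent. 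Since $m,n,p$ are assumed to satisfy the conditions of Theorem~\ref{theorem_upper_bound_on_n_for_given_tolerance}, applying it conditionally on $\kTupleSample$ gives
\[
\frac{\expectation\bigl[\,\lvert\Ecc(\sampleMedoid) - \Ecc(\trueEccSampleMedoid)\rvert \,\bigm|\, \kTupleSample\,\bigr]}{\expectation\bigl[\Ecc(\trueEccSampleMedoid)\bigm|\kTupleSample\bigr]} \;\le\; \frac{p}{100}.
\]

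Finally I would remove the conditioning with the tower property. The crucial feature is that the bound $p/100$ on the right is uniform over all realizations of $\kTupleSample$ --- it depends only on $m,n,p$ and the fixed family constants, not on the particular means $\Ecc(\kTupleRandomVariable_i)$ --- so I can multiply through by $\expectation[\Ecc(\trueEccSampleMedoid)\mid\kTupleSample]$ and take $\expectation_{\kTupleSample}$ to obtain $\expectation\,\lvert\Ecc(\sampleMedoid) - \Ecc(\trueEccSampleMedoid)\rvert \le \tfrac{p}{100}\,\expectation\,\Ecc(\trueEccSampleMedoid)$, which is precisely $\expectation\,\errOne \le p/100$. I expect the main obstacle to be bookkeeping rather than deep content: one must be careful that $\errOne$ is a ratio of expectations (not an expectation of a ratio), so the MME bound has to be used in the displayed numerator-over-denominator form and then averaged against the denominator --- exploiting that $\Ecc(\trueEccSampleMedoid)$ is constant given $\kTupleSample$; and one must make sure the distance distributions attached to the realized $\kTupleRandomVariable_i$ genuinely inherit the power-variance structure and the uniform kurtosis bound $\kurtosisUB$ from compatibility of the triple, which is the only place the geometry of $(\underlyingSpace, d, \underlyingSpaceDistribution)$ enters.
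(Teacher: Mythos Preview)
Your proposal is correct and follows essentially the same route as the paper: condition on $\kTupleSample$, recognize the conditional problem as an MME instance, apply Theorem~\ref{theorem_upper_bound_on_n_for_given_tolerance} uniformly in the realized means, and then average over $\kTupleSample$ (the paper phrases the last step via Fubini--Tonelli on the product measure rather than the tower property, but the content is identical). Your closing remarks about handling the ratio-of-expectations and inheriting the power-variance structure match exactly the care the paper's proof takes.
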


\begin{proof}
Expectations are over the joint distribution of $S, \{\underlyingSpaceSample_i\}_{i = 1}^m$ unless noted otherwise.
By independence assumption $S \perp \{\underlyingSpaceSample_i\}_{i = 1}^m$ we have that the joint law of 
$S$ and $\{\underlyingSpaceSample_i\}_{i = 1}^m$ is a product measure. By non-negativity of $\Ecc$ we can 
apply Fubini-Tonelli:
\begin{alignat}{1}
\label{equation_expectation_over_samples_intermediate_one}
\expectation \Ecc(\sampleMedoid) =
    \expectation_{S} \expectation_{\{\underlyingSpaceSample_i\}_{i = 1}^m} \Ecc(\sampleMedoid)
\end{alignat}

Consider the inner integral in more detail:

\begin{alignat*}{1}
\expectation_{\{\underlyingSpaceSample_i\}_{i = 1}^m} \Ecc(\sampleMedoid) =
    \expectation_{\{\underlyingSpaceSample_i\}_{i = 1}^m} \Ecc(\sampleMedoid(S, \underlyingSpaceSample_1, \ldots, \underlyingSpaceSample_m)) | S
\end{alignat*}

For a given $\kTupleRandomVariable_i \in S$,
$\kTupleToUnderlyingDistance(\underlyingSpaceRandomVariable_{ij}, \kTupleRandomVariable_i)$ 
is a function of R.V $\underlyingSpaceRandomVariable_{ij} \in \underlyingSpaceSample_i$
and is distributed with mean $\Ecc(\kTupleRandomVariable_i)$.
From the definitions it is clear that $\sampleMedoid$ is estimating 
the distribution with minimum mean from these distributions.
Further, $\trueEccSampleMedoid$ is the distribution with minimum mean.

By assumption these distributions belong to a power-variance family.
Further the assumption $X_{ij} \perp X_{i^{\prime}j^{\prime}}$ gives the
required iid structure. Thus we have satisfied the conditions 
of theorem $\ref{theorem_upper_bound_on_n_for_given_tolerance}$
(upper bound on $n$ for given error percentage $p$).
So:

\begin{alignat}{1}
\label{equation_expectation_over_samples_intermediate_two}
\expectation_{\{\underlyingSpaceSample_i\}_{i = 1}^m} \Ecc(\sampleMedoid) | S \leq
    (1 + \frac{p}{100}) \Ecc(\trueEccSampleMedoid) | S
\end{alignat}

Now:

\begin{alignat*}{1}
\expectation |\Ecc(\sampleMedoid) - \Ecc(\trueEccSampleMedoid)| & =
    \expectation \Ecc(\sampleMedoid) - \expectation \Ecc(\trueEccSampleMedoid) \\
    & \leq \expectation_{S} (1 + \frac{p}{100}) \Ecc(\trueEccSampleMedoid) - \expectation_{S} \Ecc(\trueEccSampleMedoid) \\
    & = \frac{p}{100} \expectation \Ecc(\trueEccSampleMedoid)
\end{alignat*}

Where we have used equations 
\ref{equation_expectation_over_samples_intermediate_one} and
\ref{equation_expectation_over_samples_intermediate_two}.
The result is now immediate.

\end{proof}

\section{MME Standard Estimator: Introduction}
\label{appendix_MME_standard_estimator_introduction}

We define a standard estimator for the MME (Minimum Mean Estimation) problem defined in \ref{definition_MME}.
We are interested in bounding its error. We do this in 3 parts.
First we formulate the estimator and its expected error (appendix \ref{appendix_MME_standard_estimator_introduction}).
Next, we upper bound the error in the 2-D case, where we restrict to two distributions 
(appendix \ref{section_two_mean_case}).
Finally, we generalize the bound to the M-D case, where we have $m$ distributions
(appendix \ref{section_MD_mean_case}).

\subsection{MME Standard Estimator}

Consider the setting of the MME problem (definition \ref{definition_MME}).
We use the sample means $\hat{\mu}_i = \frac{1}{n} \sum_{j = 1}^{n} E_{ij}$ to estimate
$i_{\min}$.

\begin{definition}[MME Standard Estimator]
\label{definition_minimum_mean_estimator}
Let $\hat{i}_{\min}$ be the index of the minimum sample mean:
\begin{eqnarray*}
\hat{i}_{\min} := \argmin_i \hat{\mu}_i
\end{eqnarray*}
\end{definition}

\begin{definition}[Expected Error of $\hat{i}_{\min}$]
\begin{alignat*}{1}
\relError(\mu_1,\ldots,\mu_m) :=
     \expectation_{E_{ij}} \frac{\mu_{\hat{i}_{\min}}
        - \mu_{i_{\min}}}{\mu_{i_{\min}}}
\end{alignat*}
\end{definition}

Note $\relError =  \expectation \errMME$.
While dealing with the standard MME estimator 
(appendices \ref{appendix_MME_standard_estimator_introduction}, 
\ref{section_two_mean_case}, \ref{section_MD_mean_case}),
we use a few notational conveniences:
\begin{itemize}
\item $\hat{i}_{\min}$ will refer specifically to 
the estimator in definition \ref{definition_minimum_mean_estimator}.
\item $\expectation$ is implicitly over ${E_{ij}}$.
\item Akin to order statistics we use the notation $\mu_{i:m}$ to denote the $i$th smallest $\mu_i$.
There is the obvious mapping from the $i:m$ index to the $i$ index,
and $\mu_{1:m} = \mu_{i_{\min}}$.
Hereafter we use the $i:m$ indexing, for the most part.
\end{itemize}

We now derive a simple formula for the relative error.
\begin{definition}[Relative Exceedances]
\label{definition_relative_exceedances}
The relative exceedance of the $i:m^{\mathrm{th}}$ mean:
\begin{eqnarray*}
\delta_{i:m} & := & \frac{\mu_{i:m} - \mu_{1:m}}{\mu_{1:m}} 
\end{eqnarray*}
\end{definition}

\begin{definition}[Probability of Choosing $i:m^{\mathrm{th}}$ Mean]
\label{definition_probability_of_choosing_i:mth_mean}
The probability of the $i:m^{\mathrm{th}}$ mean estimate undershooting all other mean estimates:
\footnote{For the case where the semi-metric is almost surely restricted to a countable set of values,
ties are broken at random with equal probability for each possibility.
}:
\begin{alignat*}{1}
\mathbb{P}_{i:m}(\mu_{1:m}, \ldots, \mu_{m:m}) :=
    \mathbb{P}(\hat{\mu}_{i:m} < \hat{\mu}_{1:m} & \cdots \hat{\mu}_{(i - 1):m}, \\
        & \hat{\mu}_{(i + 1):m} \cdots \hat{\mu}_{m:m})
\end{alignat*}
\end{definition}

We now have the following obvious proposition.
\begin{proposition}[Formula for Expected Relative Error]
\label{proposition_formula_for_expected_relative_error}
We can reorder the $\mu_i$ without changing $\epsilon$,
and in particular:

\begin{eqnarray*}
\epsilon(\mu_1,\ldots,\mu_m) & = & \epsilon(\mu_{1:m},\ldots,\mu_{m:m})
\end{eqnarray*}

Also we have:
\begin{eqnarray*}
\epsilon(\mu_{1:m},\ldots,\mu_{m:m}) & = & \sum_{i = 2}^{m} \delta_{i:m} \mathbb{P}_{i:m}(\mu_{1:m}, \ldots, \mu_{i:m})
\end{eqnarray*}
\ifdraft
FUTURE: WHEN EXTENDING THE PROOF TO THE SAMPLE REUSE (I.E CANDIDATE MEDOIDS ARE
THE POINTS USED FOR ESTIMATING ECC AS WELL), CHECK THE ABOVE REORDERING LOGIC.
\fi
\end{proposition}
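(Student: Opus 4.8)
The plan is to establish the formula for $\epsilon(\mu_{1:m},\ldots,\mu_{m:m})$ by a direct conditioning argument, splitting the expectation over the event that the estimator selects the $i{:}m^{\mathrm{th}}$ distribution. First I would observe that the reordering invariance is essentially definitional: the estimator $\hat i_{\min} = \argmin_i \hat\mu_i$ and the quantity $\mu_{\hat i_{\min}} - \mu_{i_{\min}}$ are symmetric functions of the pairs $(\mu_i, \mathcal{E}_i)$, so relabelling the indices so that $\mu_1 \le \mu_2 \le \cdots \le \mu_m$ changes neither the distribution of the samples nor the value of the error. This justifies writing $\epsilon(\mu_1,\ldots,\mu_m) = \epsilon(\mu_{1:m},\ldots,\mu_{m:m})$ and lets us work henceforth in the ordered indexing.

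**Next I would** decompose the expectation according to which sample mean is smallest. By the law of total expectation over the (almost surely disjoint, up to ties) events $\{\hat i_{\min} = i{:}m\}$ for $i = 1,\ldots,m$,
\begin{alignat*}{1}
\epsilon(\mu_{1:m},\ldots,\mu_{m:m})
    & = \expectation \frac{\mu_{\hat i_{\min}} - \mu_{1:m}}{\mu_{1:m}}
      = \sum_{i = 1}^{m} \frac{\mu_{i:m} - \mu_{1:m}}{\mu_{1:m}} \, \mathbb{P}(\hat i_{\min} = i{:}m).
\end{alignat*}
The $i = 1$ term vanishes because $\mu_{1:m} - \mu_{1:m} = 0$, so the sum may start at $i = 2$. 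By Definition~\ref{definition_relative_exceedances}, $(\mu_{i:m} - \mu_{1:m})/\mu_{1:m} = \delta_{i:m}$, and by Definition~\ref{definition_probability_of_choosing_i:mth_mean}, $\mathbb{P}(\hat i_{\min} = i{:}m) = \mathbb{P}_{i:m}(\mu_{1:m},\ldots,\mu_{m:m})$ — the event that $\hat\mu_{i:m}$ undershoots all the other sample means. This yields exactly $\sum_{i=2}^m \delta_{i:m}\, \mathbb{P}_{i:m}(\mu_{1:m},\ldots,\mu_{m:m})$.

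**The one subtlety** — and I expect it to be the only real obstacle — is the handling of ties among the sample means, which matters when the semi-metric takes values in a countable set. The footnote in Definition~\ref{definition_probability_of_choosing_i:mth_mean} stipulates that ties are broken uniformly at random, so $\argmin_i \hat\mu_i$ is a genuine random variable with the events $\{\hat i_{\min} = i{:}m\}$ forming a partition of the probability space; one checks that under this convention $\sum_i \mathbb{P}(\hat i_{\min} = i{:}m) = 1$ and that each $\mathbb{P}(\hat i_{\min} = i{:}m)$ agrees with $\mathbb{P}_{i:m}$ as defined (the strict inequalities in the definition together with the random tie-break exactly account for the contribution). In the continuous-cdf case this is vacuous since ties occur with probability zero. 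Finally I would note that the argument written above only uses $\mathbb{P}_{i:m}(\mu_{1:m},\ldots,\mu_{m:m})$, whereas the proposition states it in the truncated form $\mathbb{P}_{i:m}(\mu_{1:m},\ldots,\mu_{i:m})$; since the event "$\hat\mu_{i:m}$ is below $\hat\mu_{1:m},\ldots,\hat\mu_{(i-1):m}$" already forces $\hat\mu_{i:m}$ below the smaller means and the remaining means $\mu_{(i+1):m},\ldots,\mu_{m:m}$ only make the event harder to violate, I would remark that in the later analysis one lower-bounds or approximates using only the first $i$ means — so the truncated notation is the quantity actually propagated forward, and the equality as stated should be read with that convention in mind (the draft note flags exactly this point for the sample-reuse extension).
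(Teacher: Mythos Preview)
Your argument is correct and is exactly the natural one; the paper itself offers no proof, simply introducing the proposition as ``obvious.'' Your decomposition via the law of total expectation on the events $\{\hat i_{\min} = i{:}m\}$ is the intended reading.

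One remark on your final paragraph: you are over-interpreting the truncated argument list $\mathbb{P}_{i:m}(\mu_{1:m},\ldots,\mu_{i:m})$. This is not a deliberate convention to be rationalised; it is a typo. Elsewhere in the paper (for instance in the proof of the reduction lemma from $m$ means to $2$ means) the same quantity is written with all $m$ arguments, $\mathbb{P}_{i:m}(\mu_{1:m},\ldots,\mu_{m:m})$, matching Definition~\ref{definition_probability_of_choosing_i:mth_mean}. Your attempted justification is also slightly garbled: requiring $\hat\mu_{i:m}$ to undershoot the \emph{additional} sample means $\hat\mu_{(i+1):m},\ldots,\hat\mu_{m:m}$ makes the event harder to \emph{satisfy}, not harder to violate, so the truncated-argument probability would be an upper bound on the full one, not equal to it. None of this affects the soundness of your main derivation; just replace the truncated list with the full one and drop the last paragraph.
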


In the next appendix we upper bound the relative error when $m = 2$.

\section{The Two Mean Case}
\label{section_two_mean_case}

We want to upper bound $\relError$ for $m = 2$. We do this in four subsections:
\begin{itemize}
\item First, we find a more convenient expression for relative error in the 2D case
    (subsection \ref{subsection_convenient_expression_for_2D_relative_error})
\item Then, we find an abstract upper bound for $\relError$
    (subsection \ref{subsection_upper_bounding_relError_two_means_abstract_bound}).
    The abstract bound requires the existence of an upper bounding '$G$' function,
    satisfying certain properties.
\item Subsequently, we show the existence of such a $G$ function for
    the Gaussian case (subsection \ref{subsection_upper_bounding_relError_two_means_gaussian_case})
\item Finally, we show the existence of a $G$ function for arbitrary distributions
    (subsection \ref{subsection_upper_bounding_relError_two_means_general_case}), by extrapolating 
    out of the Gaussian case via the Berry-Esseen theorem.
\end{itemize}

\subsection{Convenient Expression for 2D Relative Error}
\label{subsection_convenient_expression_for_2D_relative_error}

In this subsection we derive a more convenient expression for relative error in the 2D case.
Let the distributions $\mathcal{E}_i$ come from a power-variance distribution family
$\mathcal{F}$
(see definition \ref{definition_power_variance_distribution_family}).
The $\mathcal{E}$ are parametized by $\familyParameters$, and hence so is $\relError$:
\begin{alignat*}{1}
    \relError(\mu_1,\ldots,\mu_m;\familyParametersAugumented)
\end{alignat*}
For notational convenience we suppress the dependence on $\familyParametersAugumented$,
and write:
\begin{alignat*}{1}
    \relError(\mu_1,\ldots,\mu_m)
\end{alignat*}
Further, we assume the samples $E_{ij}$ (see definition \ref{definition_power_variance_distribution_family})
are independent
i.e. $E_{i,j} \perp E_{i^{\prime},j^{\prime}}$ when $(i,j) \neq (i^{\prime},j^{\prime})$.

\begin{definition}[2D Relative Error (Reparametrized)]
\label{definition_relative_error_2d_case}
Given the set \mbox{$\{\mathcal{E}(\mu_{i:m}) | i = 1,2 \} \subset \mathcal{F}$}.
Let $\delta_{2:2}$ be as in definition \ref{definition_relative_exceedances}.
For convenience set $\delta  := \delta_{2:2}$.
So $\mu_{2:2} = (1 + \delta) \mu_{1:2}$ and
we have the obvious reparametrization of $\relError$:
\begin{eqnarray*}
\epsilon(\mu_{1:2}, \delta) := \epsilon(\mu_{1:2}, \mu_{2:2})
\end{eqnarray*}
\end{definition}

\begin{definition}[Difference of Samples]
\label{definition_difference_of_samples}
Define $D_j$ as the difference of the j-th pair of samples from the distributions $\mathcal{E}_{i:m}$:
\begin{eqnarray*}
D_j & = & E_{2:2,j} - E_{1:2,j}
\end{eqnarray*}
\end{definition}

\ifdraft
Then:
\begin{eqnarray*}
\mathbb{E}(D) & = & \delta \mu_{1:2} \\
\Var(D) & = & \alpha (\mu_{1:2}^{\beta} + \mu_{2:2}^{\beta}) + 2k \\
        & = & \alpha \mu_{1:2}^{\beta} \left((1 + \delta)^{\beta} + 1 +
            \frac{2k}{\alpha \mu_{1:2}^{\beta}}\right)
\end{eqnarray*}

When $k \leq 0$, our assumptions give us:
\begin{eqnarray*}
\frac{2 k}{\alpha \mu_{1:2}^{\beta}} \geq \frac{2k}{\alpha \gamma^{\beta}} \geq -2
\end{eqnarray*}
So:
\begin{eqnarray*}
(1 + \delta)^{\beta} + 1 + \frac{2k}{\alpha \mu_{1:2}^{\beta}} \geq 0 
\end{eqnarray*}
And hence $\Var(D) > 0$.

TODO: WHY THE POSITIVITY REQUIREMENTS? CAN WE GET RID OF THEM? THE RESULT SHOULD HOLD FOR -VE  MU1.
IF YOU TRY WITH BETA = .5, MU1 = -1 AND MU2 = 2, (1+DELTA) = -2, K = 0
THEN MU2**BETA = 1.414 .
BUT (MU1 (1+DELTA))**BETA = (MU1**BETA) ((1+DELTA)**BETA) = i * 2i = -2 .
BREAKDOWN!!!
\fi

\begin{definition}[Difference of Sample Means]
Define $\hat{d}$ as the difference of sample means:
\begin{eqnarray*}
\hat{d} & := & \hat{\mu}_{2:2} - \hat{\mu}_{1:2}
\end{eqnarray*}
\end{definition}

\ifdraft
Note $\familyParametersAugumented$ are familywise parameters and $\mu_{1:2}, \delta$ are
free parameters unconstrained w.r.t each other.
TODO: PURSUE THE IMPLICATIONS OF THIS. IS THERE A POTENTIAL BUG??.
\fi

\ifdraft
CONSIDER THE ZSCORE DEFINED BELOW.
IT HAS A FEW SQRT OPERATIONS, FOR EXAMPLE START WITH EQUATION (NOT SHOWN)
ZSCORE AS +VE SQRT OF VAR IN DR AND MEAN IN NR, THEN THESE TWO LINES.
\begin{eqnarray*}
\zscoreOfZero & = & \frac{\delta \mu_1}{\sqrt{\frac{\alpha}{n}}}
       \frac{1}{\sqrt{\mu_1^{\beta} ((1 + \delta)^{\beta} + 1)}}
    \quad (\because \frac{\alpha}{n} > 0, \mu_1^{\beta} ((1 + \delta)^{\beta} + 1) > 0) \\
 & = & - \frac{\delta}{\sqrt{\frac{\alpha}{n}}}
       \frac{\mu_1^{1 - \frac{\beta}{2}}}{\sqrt{(1 + \delta)^{\beta} + 1}}
    \quad (\because \mu_1^{\beta} > 0, ((1 + \delta)^{\beta} + 1) > 0)
\end{eqnarray*}
WE NOTE THAT THE SQUARE ROOT IS THE CALCULATION OF A STANDARD DEVIATION AND HENCE IS
 THE POSITIVE ROOT.
TODO: AFTER THE FIRST SQRT, WE ARE USING +SQRT(AB) = (+SQRT(A))(+SQRT(B))
WHEN A $>$ 0 AND B $>$ 0. THIS AN ALGEBRAIC EQUALITY WITHOUT ANY 'MEANING'.
NOTE THERE IS ANOTHER SOLUTION HERE NAMELY +SQRT(AB) = (-SQRT(A))(-SQRT(B)).
IS OUR CHOICE OKAY?.
WHILE THESE ARE THE SAME MATHEMATICALLY, THE CONCERN IS THAT MANIPULATIONS ON (-SQRT(B))
WILL GIVE DIFFERENT FINAL ANSWERS THAN THE FIRST FORM.
THE OTHER (PRIMARY) CONCERN IS: WHEN TAKING THE +VE SQRT WE HAVE 'PHYSICAL JUSTIFICATION'
OR EXTRA-EXPONENTIAL-LOGIC JUSTIFICATION THAT THIS IS FOR A STD-DEV AND HENCE IT IS THE
+VE SQRT. 
THAT IS WE USE OUR KNOWLEDGE OF HOW SQRT(AB) WAS PUT TOGETHER WITH ITSELF TO GIVE AB,
TO OPERATE THE EXPONENTIAL MACHINE.
THEN EXPONENTIALS SOMETIMES REQUIRE EXTRA-LOGICAL JUSTIFICATIONS / KNOWLEDGE OF HOW
THE COMPONENTS WERE PUT TOGETHER.
SO WITHOUT KNOWING THE 'PHYSICAL SIGN' OF A, B AND HOW THEY WERE PUT TOGETHER TO GIVE AB
HOW CAN WE SPLIT THE PRODUCT INTO INDIVIDUAL TERMS: +SQRT(AB) = (+SQRT(A))(+SQRT(B)) .
THE QUESTION IS, IS THERE ANOTHER MEANINGFUL AND DIFFERENT WAY TO SPLIT +SQRT(AB) THAT
WE ARE OVERLOOKING?
THIS ISSUE OCCURS IN EQNS \ref{eqn:taking_squareroots_1} AND
\ref{eqn:taking_squareroots_2} (THEY'RE ABOVE)
SOLUTION: WHEN WE TAKE THE +VE SQRT THE FIRST TIME, WE ARE MAKING A CHOICE.
WE COULDVE GONE WITH THE -VE ROOT, BUT WE DONT.
ONCE THAT CHOICE IS MADE, THE REST IS MECHANICAL LOGIC.
+SQRT(AB) MEANS SQRT(AB) WHICH BY PROP OF EXPONENTIAL FNS = SQRT(A) SQRT(B) WHEN A,B > 0.
AND ALL THE SQRTS IN PREV SENTENCE ARE '+VE SQRTS'.
THAT IS SQRT(B) MEANS SOMETHING VERY PRECISE NOW = B**.5 .
IN GENERAL IF SQRT = BETA, YOU CAN DO ALL MANIPULATIONS
USING THE EXPONENTIAL FUNCTION B**BETA (EITHER AS FUNCTION OF B OR OF BETA) WHICH IS 
DEFINED AS THE +VE NUMBER THROUGHOUT EG EXP(BETA) IS +VE FOR ALL REAL BETA.
NO CONFUSION OF DO I HAVE TO TAKE -VE ROOT WHEN CALCULATING SQRT(B) SO
SQRT(B) = 1.414 WHEN B = 2 NO TWO WAYS ABOUT IT.
ANOTHER EXAMPLE OF MECHANICAL LOGIC WAS THE SQRT(AB) = SQRT(A) SQRT(B) WHEN A,B > 0 STEP.
ITS LOGICALLY TRUE, DO IT.
AT LEAST FOR NOW I DONT THINK IT HOLDS WHEN A,B < 0 .
ANOTHER  EXAMPLE OF MECHANICAL LOGIC IS WHEN WE SAY B**BETA > 0 WHEN B > 0 AND BETA REAL.
IF YOU WANTED A NEGATIVE SQRT, PUT A MINUS IN FRONT ONCE AND THEN WORK WITH THE +VE SQRTS.
IE -SQRT(AB) AND THEN TREAT SQRT(AB) AS '+VE SQRT'.
BECAUSE SQRT(100) ONLY MEANS ONE THING, +10.
IF YOU WANT -10, THEN WRITE -SQRT(100) AND WORK WITH THAT.
ELSE (IF YOU USE SQRT(100) INSTEAD) YOU WILL WORK WITH SQRT(100) = 10 AND ALL LOGIC IS SETUP
SO MANIPULATIONS TO SQRT(100) WILL EQUAL TO +10.
ONE LINER SUMMARY OF SOLUTION: ONCE YOU MAKE THE CHOICE TO GO WITH +SQRT(100) IT MEANS 
ONLY ONE THING = +10 AND ALL MANIPULATIONS ON SQRT(100) ARE SETUP TO MAKE EQUAL TO THAT (+10).
\fi

\begin{definition}
\label{definition_zscoreOfZero}
Define the z-score of zero ($\zscoreOfZero$) for the random variable $\hat{d}$:
\begin{eqnarray}
\label{equation_zscore_of_zero}
\zscoreOfZero & := & - \frac{\mathbb{E}(\hat{d})}{\sqrt{\Var{(\hat{d})}}}
\end{eqnarray}

Then using $\mu_{2:2} = (1 + \delta) \mu_{1:2}$, we get:
\begin{eqnarray*}
\mathbb{E}(\hat{d}) & = & \delta \mu_{1:2} \\
\Var(\hat{d}) & = & \frac{1}{n} \alpha (\mu_{1:2}^{\beta} + \mu_{2:2}^{\beta}) + 2k \\
    & = & \frac{\alpha}{n} \mu_{1:2}^{\beta} \left((1 + \delta)^{\beta} + 1 + \frac{2k}{\alpha \mu_{1:2}^{\beta}}\right)
\end{eqnarray*}
and:
\begin{alignat}{1}
\label{equation_expression_for_zscoreOfZero}
\zscoreOfZero(\mu_{1:2}, \delta) & = \\
    & - \frac{\mu_{1:2}^{1 - \frac{\beta}{2}} \sqrt{n} }{\alpha^{\frac{1}{2}}}
    \frac{\delta}{\left((1 + \delta)^{\beta} + 1 + \frac{2k}{\alpha \mu_{1:2}^{\beta}} \right)^{\frac{1}{2}}}
\end{alignat}
\end{definition}

\ifdraft
NOTE: THE SQRT IS +VE SINCE, THIS IS SQRT OF A VARIANCE I.E A STD DEVIATION.
HEREAFTER THE EXPONENTIATION OPERATIONS WILL CONTINUE WORKING WITH THE +VE SQRT BY CONVENTION.
\fi

\begin{proposition}[2D Relative Error Formula]
\label{proposition_2D_relative_error_formula}
For the 2D relative error from definition \ref{definition_relative_error_2d_case},
we have:
\begin{eqnarray*}
\epsilon(\mu_{1:2}, \delta)
    & = & \delta \standardizedCDFOfDifference(\zscoreOfZero(\mu_{1:2}, \delta))
\end{eqnarray*}

Where $\tilde{d}$ is the standardized (mean zero and unit variance) version
of $\hat{d}$ and $\standardizedCDFOfDifference$ is its cumulative distribution function.
\end{proposition}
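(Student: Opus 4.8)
The plan is to specialize the general relative-error formula of Proposition~\ref{proposition_formula_for_expected_relative_error} to $m = 2$ and then rewrite the single surviving probability as a value of the standardized cdf. First I would observe that for $m = 2$ the sum $\sum_{i=2}^{m}\delta_{i:m}\,\mathbb{P}_{i:m}$ collapses to the lone term $\delta_{2:2}\,\mathbb{P}_{2:2}(\mu_{1:2},\mu_{2:2})$, and by Definition~\ref{definition_relative_error_2d_case} we have $\delta_{2:2} = \delta$, so $\epsilon(\mu_{1:2},\delta) = \delta\,\mathbb{P}(\hat{\mu}_{2:2} < \hat{\mu}_{1:2})$.

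Next I would translate the event $\{\hat{\mu}_{2:2} < \hat{\mu}_{1:2}\}$ into a statement about $\hat{d} = \hat{\mu}_{2:2} - \hat{\mu}_{1:2}$, namely $\{\hat{d} < 0\}$, and standardize: write $\tilde{d} = (\hat{d} - \mathbb{E}\hat{d})/\sqrt{\Var\hat{d}}$, which is legitimate because the power-variance constraints $\beta \ge 0$, $\alpha > 0$, $\mu \ge \gamma > 0$, $\alpha\gamma^{\beta} \ge -k$ force $\Var(\hat{d}) = \frac{\alpha}{n}\mu_{1:2}^{\beta}\bigl((1+\delta)^{\beta} + 1 + \frac{2k}{\alpha\mu_{1:2}^{\beta}}\bigr) > 0$. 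Under this standardization $\{\hat{d} < 0\}$ becomes $\{\tilde{d} < -\mathbb{E}(\hat{d})/\sqrt{\Var(\hat{d})}\} = \{\tilde{d} < \zscoreOfZero\}$, with $\zscoreOfZero$ exactly the quantity of Definition~\ref{definition_zscoreOfZero}; hence $\mathbb{P}(\hat{d} < 0) = \standardizedCDFOfDifference(\zscoreOfZero(\mu_{1:2},\delta))$ and the claimed identity follows by substitution.

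The only point requiring care — and the mild obstacle I would flag — is the boundary event $\{\hat{d} = 0\}$: when the semi-metric is supported on a countable set, the definition of $\mathbb{P}_{2:2}$ counts ties with weight $\tfrac12$, so strictly $\mathbb{P}_{2:2} = \mathbb{P}(\hat{d} < 0) + \tfrac12\mathbb{P}(\hat{d} = 0)$, whereas the right-continuous cdf gives $\standardizedCDFOfDifference(\zscoreOfZero) = \mathbb{P}(\hat{d} \le 0)$. I would resolve this by noting that in the regime where the proposition is invoked the relevant cdfs are continuous (the standing hypothesis of Theorem~\ref{theorem_upper_bound_on_n_for_given_tolerance}), so $\mathbb{P}(\hat{d} = 0) = 0$ and the two expressions coincide; alternatively one may adopt the symmetric tie-breaking convention for the value of $\standardizedCDFOfDifference$ at the single abscissa $\zscoreOfZero$, which alters nothing else. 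With that caveat settled, the argument is a two-line computation.
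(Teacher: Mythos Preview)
Your proposal is correct and follows essentially the same route as the paper: specialize Proposition~\ref{proposition_formula_for_expected_relative_error} to $m=2$, rewrite $\mathbb{P}(\hat{\mu}_{2:2}<\hat{\mu}_{1:2})$ as $\mathbb{P}(\hat{d}<0)$, and standardize to obtain $\standardizedCDFOfDifference(\zscoreOfZero)$. If anything, you are more careful than the paper, which silently passes from $\mathbb{P}(\hat{d}<0)$ to $\mathbb{P}(\hat{d}\le 0)$ without comment; your explicit treatment of the tie event $\{\hat{d}=0\}$ and appeal to the continuity hypothesis is the right way to close that gap.
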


\begin{proof}
The expected relative error is:
\begin{eqnarray*}
\relError(\mu_{1:2}, \delta)
    & = & \relError(\mu_{1:2}, \mu_{2:2})
    \quad \quad (\mathrm{by\ definition\ }\ref{definition_relative_error_2d_case}) \\
    & = & \delta_{2:2} \mathbb{P}_{2:2}(\mu_{1:2}, \mu_{2:2})
    \quad \quad (\mathrm{by\ proposition\ }\ref{proposition_formula_for_expected_relative_error})
\end{eqnarray*}
But:
\begin{eqnarray*}
\mathbb{P}_{2:2}(\mu_{1:2}, \mu_{2:2}) =  \mathbb{P}(\hat{\mu}_{2:2} < \hat{\mu}_{1:2}) =  \mathbb{P}(\hat{d} < 0)
\end{eqnarray*}
\ifdraft
FUTURE: WHEN EXTENDING THE PROOF TO THE SAMPLE REUSE (I.E CANDIDATE MEDOIDS ARE
THE POINTS USED FOR ESTIMATING ECC AS WELL), CHECK THE ABOVE LOGIC.
\fi
$\hat{d}$ is standardized (mean zero and variance one) by the linear transformation:
\begin{eqnarray*}
l(x) & := & \frac{x - \mathbb{E}(\hat{d})}{\sqrt{\Var{(\hat{d})}}}
\end{eqnarray*}
So
\begin{eqnarray*}
\mathbb{P}(\hat{d} \leq 0) = \mathbb{P}(l(\hat{d}) \leq l(0))
    = \standardizedCDFOfDifference(l(0)) = \standardizedCDFOfDifference(\zscoreOfZero)
\end{eqnarray*}
\ifdraft
TODO: CHECK THAT THE ABOVE EQUALITY IS TRUE.
\fi

\end{proof}

We recap the quantities used henceforth:
\begin{itemize}
\item $\delta = \frac{\mu_{2:2} - \mu_{1:2}}{\mu_{1:2}}$ 
\item $\zscoreOfZero$ is the z-score of zero for R.V $\hat{d}$
\item $\standardizedCDFOfDifference$ is the c.d.f of R.V $\tilde{d}$.
\end{itemize}
These quantities make $\relError$ tractable.

\subsection{Upper Bounding $\relError$ for Two Means: Abstract Bound}
\label{subsection_upper_bounding_relError_two_means_abstract_bound}

In this section we derive an abstract upper bound on $\relError(\mu_{1:2}, \delta)$.
We do this in three stages:

\begin{itemize}
\item First we upper bound $\zscoreOfZero$ for a given $\delta$ over all $\mu_{1:2}$
\item Second we upper bound $\relError$ for a given $\delta$ over all $\mu_{1:2}$
\item Third we upper bound $\relError$ over $\mu_{1:2}$ and $\delta$
\end{itemize}

\ifdraft
For a given family $\mathcal{E}$ specified by $n, \alpha, \beta, \gamma, k$, we are interested in
upper bounding the worst case relative error:

\begin{eqnarray*}
\epsilon(; n, \alpha, \beta, \gamma, k) & = &
    \sup_{
    \left.
    \begin{array}{l}
    \mu_{1:2} \in [ \gamma, \infty) \\
    \mu_{2:2} \in [ \gamma, \infty)
    \end{array}
    \right.
    }
    \epsilon(\mu_{1:2}, \mu_{2:2}; n, \alpha, \beta, \gamma, k) \\
    & = &
    \sup_{
    \left.
    \begin{array}{l}
    \mu_{1:2} \in [ \gamma, \infty) \\
    \delta \in [0, \infty)
    \end{array}
    \right.
    }
    \epsilon(\mu_{1:2}, \delta; n, \alpha, \beta, \gamma, k)
\end{eqnarray*}

It is convenient to breakdown the maximization into two stages. First we maximize over
$\mu_{1:2}$.
\fi

\subsubsection{Upper Bound on $\zscoreOfZero$ for a given $\delta$ over all $\mu_{1:2}$}

Define:

\begin{eqnarray*}
\zscoreOfZero(\delta)
    & := &
    \sup_{\mu_{1:2} \in [\gamma, \infty)}
    - \zscoreOfZeroFirstTerm
    \zscoreOfZeroSecondTerm \\
\end{eqnarray*}

We will upper bound this function in the regime $\beta \in [0,2]$.
Consider:
\begin{alignat*}{1}
\zscoreOfZero(\delta)
    & =
    - \left(
    \inf_{\mu_{1:2} \in [\gamma, \infty)}
    \zscoreOfZeroFirstTerm
    \zscoreOfZeroSecondTerm
    \right) \\
    & \leq
    -
    \left(
    \inf_{\mu_{1:2} \in [\gamma, \infty)}
    \zscoreOfZeroFirstTerm
    \right) \\
    & \quad \quad
    \left(
    \inf_{\mu_{1:2} \in [\gamma, \infty)}
    \zscoreOfZeroSecondTerm
    \right) \\
    & =
    -
    \left(
    \zscoreOfZeroFirstTermInf
    \right) 
    \left(
    \zscoreOfZeroSecondTermInf
    \right)
\end{alignat*}

\ifdraft
(Since the two terms in the product are positive and non-negative respectively 
$\forall \familyParametersAugumented, \mu_{1:2}, \delta$)
\fi

Where we have defined the constant:
\begin{eqnarray*}
C_1 & = &
    \left\{
    \begin{array}{ll}
    1  + \frac{2 k}{\alpha \gamma^\beta} & k > 0 \\
    1    & k \leq 0
    \end{array}
    \right.
\end{eqnarray*}
and in the last step we have used:
\begin{alignat*}{1}
\beta \leq 2
    \Rightarrow
    \gamma^{1 - \frac{\beta}{2}} \leq \mu_{1:2}^{1 - \frac{\beta}{2}}
    \quad \quad \forall \mu_{1:2} \in [\gamma, \infty)
\end{alignat*}

Next, we upper bound the denominator term:
\begin{alignat*}{1}
\denominatorFunction(\delta) := \left((1 + \delta)^\beta + C_1 \right)^{\frac{1}{2}}
\end{alignat*}

\ifdraft
\begin{proposition}[Monotonicity of $\denominatorFunction(\delta)$]
\label{proposition:monotonicity_of_denominatorFunction}

When $\beta > 0$
\begin{eqnarray*}
\partialDerivative{\denominatorFunction}{\delta} (\delta) > 0 \quad \quad \forall \delta \geq 0
\end{eqnarray*}

When $\beta = 0$
\begin{eqnarray*}
\partialDerivative{\denominatorFunction}{\delta}(\delta) = 0 \quad \quad \forall \delta \geq 0
\end{eqnarray*}
\end{proposition}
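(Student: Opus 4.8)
The plan is to differentiate $\denominatorFunction(\delta) = \denominatorFunctionExpression$ directly via the chain rule and then read off the sign of each factor. For $\delta \geq 0$ one gets
\begin{align*}
\partialDerivative{\denominatorFunction}{\delta}(\delta)
    = \frac{\beta\,(1 + \delta)^{\beta - 1}}{2\left((1 + \delta)^{\beta} + C_1\right)^{1/2}}.
\end{align*}
I would then argue factor by factor. Since $\delta \geq 0$ forces $1 + \delta \geq 1 > 0$, the power $(1 + \delta)^{\beta - 1}$ is a strictly positive real number for every real exponent, in particular when $\beta \in [0,1)$ where $\beta - 1 < 0$. From the definition of $C_1$ together with $\alpha > 0$ and $\gamma > 0$ we have $C_1 \geq 1 > 0$, so the radicand $(1 + \delta)^{\beta} + C_1$ is strictly positive and the square root in the denominator is a well-defined positive real. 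When $\beta > 0$ the prefactor $\beta/2$ is strictly positive as well, so $\partialDerivative{\denominatorFunction}{\delta}$ is a ratio of strictly positive quantities, hence strictly positive for all $\delta \geq 0$; this gives the first claim.

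For $\beta = 0$ I would note that $(1 + \delta)^{0} = 1$, so $\denominatorFunction(\delta) = (1 + C_1)^{1/2}$ does not depend on $\delta$ and its derivative vanishes identically on $\delta \geq 0$; equivalently, the factor $\beta$ appearing in the derivative formula above is zero. The whole argument is routine calculus; the only step worth stating carefully is the positivity of $(1 + \delta)^{\beta - 1}$ when $\beta - 1 < 0$, which holds simply because the base is bounded below by $1$. I do not expect any genuine obstacle — the proposition is a short differentiability check that will be used to set up the subsequent upper bound on $\denominatorFunction(\delta)$.
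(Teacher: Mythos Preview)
Your proposal is correct and follows essentially the same approach as the paper: differentiate $\denominatorFunction$ by the chain rule and read off the sign factor by factor, concluding that $\sgn\bigl(\partialDerivative{\denominatorFunction}{\delta}\bigr) = \sgn(\beta/2)$. The paper's proof is slightly terser but the argument is identical.
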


\begin{proof}
$\forall \delta \geq 0$
\begin{eqnarray*}
\partialDerivative{\denominatorFunction}{\delta}(\delta)
    & = &
    \frac{1}{2} \left((1 + \delta)^{\beta} + C_1 \right)^{-\frac{1}{2}} \beta (1 + \delta)^{\beta - 1} \\
\sgn\left(\partialDerivative{\denominatorFunction}{\delta}(\delta)\right)
    & = &
    \sgn\left(\frac{\beta}{2}\right)
\end{eqnarray*}

Since 
$\forall \delta \geq 0$:
$\left((1 + \delta)^{\beta} + C_1 \right)^{-\frac{1}{2}} > 0$ and
$(1 + \delta)^{\beta - 1} > 0$ 
\end{proof}
\fi
Define

\begin{eqnarray*}
\delta_c := \max(1, C_1^{\frac{1}{\beta}} - 1)
\end{eqnarray*}

\ifdraft
We now simplify the dynamics of $\denominatorFunction$ when $\delta \geq \delta_c$.
When $\beta = 0$, $\delta_c = \infty$ and so suffices to 
consider the case $\delta \geq \delta_c$ and $\beta > 0$,
then $\delta \geq  1$ and 

\begin{eqnarray*}
\delta & \geq & C_1^{\frac{1}{\beta}} - 1 \\
\delta + 1 & \geq & C_1^{\frac{1}{\beta}} \\
(1 + \delta)^{\beta} & \geq & C_1 \\
\end{eqnarray*}
(Since $f(x) = x^{\beta}$ is strictly increasing $\forall \beta > 0$)
Using the above we get:

\begin{eqnarray*}
\denominatorFunctionExpression & \leq & \left((1 + \delta)^\beta + (1 + \delta)^\beta \right)^{\frac{1}{2}} \\
    & = & 2^{\frac{1}{2}} (1 + \delta)^{\frac{\beta}{2}} \\
    & \leq &  2^{\frac{1}{2}} (\delta + \delta)^{\frac{\beta}{2}} \\
    & = & 2^{\frac{1}{2} + \frac{\beta}{2}} \delta^{\frac{\beta}{2}}
\end{eqnarray*}
\fi

When $\forall \delta \geq \delta_c$ and $\beta > 0$, use $C_1 \geq (1 + \delta)^{\beta}$
and $\delta \geq 1$ to get:

\begin{eqnarray}
\label{equation:denominatorFunctionUBRightDomain}
\denominatorFunction(\delta) \leq 2^{\frac{1 + \beta}{2}} \delta^{\frac{\beta}{2}}
\end{eqnarray}

This also bounds $\denominatorFunction$ for $\delta \leq \delta_c$, 
since $\denominatorFunction$ is strictly increasing when $\beta > 0$.

\ifdraft
\begin{eqnarray*}
\denominatorFunction(\delta) & \leq & \denominatorFunction(\delta_c) \\
    & \leq & 2^{\frac{1 + \beta}{2}} \delta_c^{\frac{\beta}{2}}
    (\mathrm{by\ equation\ \ref{equation:denominatorFunctionUBRightDomain}} )
\end{eqnarray*}

So $\forall \beta > 0$:

\begin{eqnarray}
\label{equation:denominatorFunctionUB}
\denominatorFunction(\delta) & \leq & 
    \left\{
    \begin{array}{ll}
    2^{\frac{1 + \beta}{2}} \delta_c^{\frac{\beta}{2}} & 0 \leq \delta \leq \delta_c \\
    2^{\frac{1 + \beta}{2}} \delta^{\frac{\beta}{2}} & \delta \geq \delta_c
    \end{array}
    \right.
\end{eqnarray}

\fi

We get when $\beta > 0$:
\begin{eqnarray*}
\frac{1}{\denominatorFunction(\delta)} & \geq &
    \left\{
    \begin{array}{ll}
    \frac{1}{2^{\frac{1 + \beta}{2}} \delta_c^{\frac{\beta}{2}}} & 0 \leq \delta \leq \delta_c \\
    \frac{1}{2^{\frac{1 + \beta}{2}} \delta^{\frac{\beta}{2}}} & \delta \geq \delta_c
    \end{array}
    \right.
\end{eqnarray*}

\ifdraft

Now $- \left( \zscoreOfZeroFirstTermInf \right) \delta \leq 0$ $\forall \delta \geq 0$ and so

\begin{eqnarray*}
\zscoreOfZeroUBOne(\delta)
    & \leq &
    \left\{
    \begin{array}{ll}

    - \left( \zscoreOfZeroFirstTermInf \right)
    \left( \frac{\delta}{2^{\frac{1 + \beta}{2}} \delta_c^{\frac{\beta}{2}}} \right)
    & 0 \leq \delta \leq \delta_c \\

    - \left( \zscoreOfZeroFirstTermInf \right)
    \left( \frac{\delta}{2^{\frac{1 + \beta}{2}} \delta^{\frac{\beta}{2}}} \right)
    & \delta \geq \delta_c
    \end{array}
    \right.
\end{eqnarray*}

Now we define $\zscoreOfZeroUBTwo$, when $\beta > 0$:

It is clear that when $\beta > 0$:
$\zscoreOfZeroUBOne(\delta) \leq \zscoreOfZeroUBTwo(\delta)$ $\forall \delta \geq 0$.

\fi

This gives the upper bound on $\zscoreOfZero(\delta)$ for $\beta \in (0,2]$:

\begin{eqnarray}
\label{equation_zscoreOfZeroUBTwo_beta_greater_zero}
\zscoreOfZero(\delta)
    & \leq &
    \left\{
    \begin{array}{ll}

    - \left( \zscoreOfZeroFirstTermInf \right)
    \left( \frac{\delta}{2^{\frac{1 + \beta}{2}} \delta_c^{\frac{\beta}{2}}} \right)
    & 0 \leq \delta \leq \delta_c \\

    - \left( \zscoreOfZeroFirstTermInf \right)
    \left( \frac{\delta^{1 - \frac{\beta}{2}}}{2^{\frac{1 + \beta}{2}}} \right)
    & \delta \geq \delta_c
    \end{array}
    \right.
\end{eqnarray}

When $\beta = 0$:

\begin{eqnarray*}
\zscoreOfZero(\delta)
    & \leq &
    - \left( \zscoreOfZeroFirstTermInf \right)
    \left( \frac{\delta}{\left( 1 + C_1 \right)^{\frac{1}{2}}} \right) \forall \delta \geq 0
\end{eqnarray*}

In both cases the  upper bound may be interpreted as a piecewise function,
initially linear and then a polynomial.
In the $\beta = 0$ case, the linear part occupies the whole of $\mathbb{R}^{\geq 0}$ since $\delta_c = \infty$
We now unify these two cases.

\begin{definition}[$\zscoreOfZeroUBTwo$]
\label{definition_zscoreOfZeroUBTwo}
\begin{eqnarray*}
C_2
    & := & 
    \left\{
    \begin{array}{ll}
    \frac{\gamma^{1 - \frac{\beta}{2}}}{\alpha^{\frac{1}{2}}} \frac{1}{\left(1 + C_1\right)^{\frac{1}{2}}} &
    \beta = 0 \\
    \frac{\gamma^{1 - \frac{\beta}{2}}}{\alpha^{\frac{1}{2}}}
        \frac{1}{2^{\frac{1 + \beta}{2}} \delta_c^{\frac{\beta}{2}}} &
    \beta > 0 \\
    \end{array}
    \right.
\end{eqnarray*}

\begin{eqnarray*}
C_3
& := &
    \left\{
    \begin{array}{ll}
    \frac{\gamma^{1 - \frac{\beta}{2}}}{\alpha^{\frac{1}{2}}}
        \frac{1}{2^{\frac{1 + \beta}{2}}} &
    \beta > 0,\ \ \delta \geq \delta_c \\
    \end{array}
    \right.
\end{eqnarray*}

\begin{eqnarray*}
\zscoreOfZeroUBTwo(\delta)
    & := &
    \left\{
    \begin{array}{ll}

    - C_2 n^{\frac{1}{2}} \delta
    & 0 \leq \delta \leq \delta_c \\

    - C_3 n^{\frac{1}{2}} \delta^{1 - \frac{\beta}{2}}
    & \delta \geq \delta_c
    \end{array}
    \right.
\end{eqnarray*}
\end{definition}

We now have:
\begin{alignat*}{1}
\zscoreOfZero(\delta) \leq \zscoreOfZeroUBTwo(\delta) & \quad \forall \delta \geq 0, \forall \beta \in [0,2]
\end{alignat*}

For a given value of $\beta$, $\zscoreOfZeroUBTwo$ is continuous on the closed subsets
$0 \leq \delta\leq \delta_c$ and $\delta_c \leq \delta < \infty$
(easier to see this by considering the $\beta > 0$ case from equation 
\ref{equation_zscoreOfZeroUBTwo_beta_greater_zero} separately).
Hence by pasting lemma, $\zscoreOfZeroUBTwo$ is continuous in $\delta$.

Next we prove the monotonicity of $\zscoreOfZeroUBTwo$.
The $\beta = 0$ case is trivial.
Consider the case $\beta > 0$.
It is easy to see $\zscoreOfZeroUBTwo$ is strictly decreasing on $0 \leq \delta \leq \delta_c$ (linear part).
The polynomial $C \delta^{1 - \frac{\beta}{2}}$ is strictly decreasing $\forall \delta > 0$, when $\beta < 2$.
Hence $\zscoreOfZeroUBTwo$ is strictly decreasing on $\delta_c \leq \delta$.
%
%
We collect all the above in a proposition.

\begin{proposition}[Simple Upper Bound on $\zscoreOfZero(\delta)$]
\label{proposition_zscoreOfZeroUBTwo}
The function $\zscoreOfZeroUBTwo(\delta)$  (definition \ref{definition_zscoreOfZeroUBTwo})
is continuous, strictly decreasing and invertible
on $[0, -\infty)$. Furthermore:

\begin{eqnarray*}
\zscoreOfZero(\delta) & \leq & \zscoreOfZeroUBTwo(\delta) \ \ 
\forall \delta \geq 0
\end{eqnarray*}
\end{proposition}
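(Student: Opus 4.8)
The plan is to assemble the proposition from the chains of estimates already derived above, so the argument is mostly bookkeeping. For the inequality $\zscoreOfZero(\delta)\le\zscoreOfZeroUBTwo(\delta)$ I would recall the derivation in three steps: (i) the infimum over $\mu_{1:2}\in[\gamma,\infty)$ of the product appearing in the definition of $\zscoreOfZero(\delta)$ is bounded below by the product of the two separate infima, which is legitimate because the first factor is strictly positive and the second nonnegative for every admissible $\familyParametersAugumented,\mu_{1:2},\delta$; (ii) $\inf_{\mu_{1:2}\ge\gamma}\mu_{1:2}^{1-\frac{\beta}{2}}=\gamma^{1-\frac{\beta}{2}}$, which is exactly where $\beta\le 2$ enters, since it forces $1-\frac{\beta}{2}\ge 0$; (iii) $1/\denominatorFunction(\delta)$ is bounded below using equation \ref{equation:denominatorFunctionUBRightDomain} on $\delta\ge\delta_c$ and extended to $\delta\le\delta_c$ by the monotonicity of $\denominatorFunction$ when $\beta>0$, while for $\beta=0$ one has $\delta_c=\infty$ and only the linear branch occurs, so the bound is immediate. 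Collecting these and naming the resulting prefactors $C_2,C_3$ reproduces the two-branch function $\zscoreOfZeroUBTwo(\delta)$ of Definition \ref{definition_zscoreOfZeroUBTwo}, giving the claimed inequality for all $\delta\ge 0$ and all $\beta\in[0,2]$.

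For continuity I would note that $\zscoreOfZeroUBTwo$ is continuous on each of the closed sets $[0,\delta_c]$ (it is linear there) and $[\delta_c,\infty)$ (it is a constant multiple of $-\delta^{1-\frac{\beta}{2}}$, continuous for $\delta\ge 0$), and then check that the two formulas agree at the join: from the definitions of $C_2$ and $C_3$,
\[
C_2\,\delta_c \;=\; \frac{\gamma^{1-\frac{\beta}{2}}}{\alpha^{\frac12}\,2^{\frac{1+\beta}{2}}}\,\delta_c^{1-\frac{\beta}{2}} \;=\; C_3\,\delta_c^{1-\frac{\beta}{2}},
\]
so the one-sided values of the two branches coincide at $\delta_c$. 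The pasting lemma then yields continuity on $[0,\infty)$; for $\beta=0$ there is nothing to glue. Strict monotonicity is handled branch by branch: on $[0,\delta_c]$ the slope is $-C_2 n^{1/2}<0$, and on $[\delta_c,\infty)$, since $0<1-\frac{\beta}{2}$ (this is where the strict bound $\beta<2$ is needed, which is well within the standing hypotheses --- Theorem \ref{theorem_upper_bound_on_n_for_given_tolerance} even imposes $\beta<\frac{4}{3}$), the map $\delta\mapsto\delta^{1-\frac{\beta}{2}}$ is strictly increasing, so $-C_3 n^{1/2}\delta^{1-\frac{\beta}{2}}$ is strictly decreasing; a function that is continuous on $[0,\infty)$ and strictly decreasing on each of two abutting subintervals is strictly decreasing on their union.

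Finally, invertibility: a continuous, strictly decreasing function on $[0,\infty)$ is a bijection onto its image, and since $\zscoreOfZeroUBTwo(0)=0$ while $\zscoreOfZeroUBTwo(\delta)\to-\infty$ as $\delta\to\infty$, that image is $(-\infty,0]$ --- the ``$[0,-\infty)$'' of the statement --- so the inverse $\zscoreOfZeroUBTwoInverse$ exists (and is itself continuous and strictly decreasing). The only genuine computation in all of this is the matching identity at $\delta_c$ displayed above; the main thing to be careful about is the $\beta=0$ versus $\beta>0$ dichotomy (with the convention $\delta_c=\infty$ when $\beta=0$, so that the linear branch covers all of $\mathbb{R}^{\ge 0}$), together with keeping straight where $\beta\le 2$ suffices (for the bound) and where the strict inequality $\beta<2$ is required (for strict monotonicity of the polynomial branch).
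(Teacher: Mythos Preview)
Your proposal is correct and follows essentially the same route as the paper: the inequality via the product-of-infima argument together with $\beta\le 2$ and the bound on $\denominatorFunction$, continuity via the pasting lemma on $[0,\delta_c]$ and $[\delta_c,\infty)$, and strict monotonicity branch by branch using $\beta<2$. You are in fact a bit more explicit than the paper, supplying the matching identity $C_2\delta_c=C_3\delta_c^{1-\beta/2}$ at the join and spelling out the image/invertibility argument, both of which the paper leaves implicit.
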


\subsubsection{Upper Bound on $\relError$ for a given $\delta$ over all $\mu_{1:2}$}

Define:
\begin{eqnarray*}
    \epsilon(\delta)
    & = &
    \sup_{
    \left.
    \begin{array}{l}
    \mu_{1:2} \in [ \gamma, \infty) \\
    \end{array}
    \right.
    }
    \epsilon(\mu_{1:2}, \delta)
\end{eqnarray*}

\ifdraft
For convenience we suppress the familywise parameters and denote this by $\epsilon(\delta)$ hereafter.
We now present a helpful proposition (note the extra requirement of continuity, which is usually not needed).

\begin{proposition}[sup Over Non-Decreasing Function]
\label{proposition:sup_over_non_decreasing_function}
Let $f,g$ be functions and $\mathcal{D} \in \mathbb{R}$.
Further, let $f$ be non-decreasing and continuous.
Then
\begin{eqnarray*}
\sup_{x \in \mathcal{D}} f(g(x)) & = & f(\sup_{x \in \mathcal{D}} g(x))
\end{eqnarray*}
\end{proposition}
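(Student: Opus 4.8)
The final statement to prove is Proposition \ref{proposition:sup_over_non_decreasing_function}: for $f$ non-decreasing and continuous and any set $\mathcal{D} \subseteq \mathbb{R}$, we have $\sup_{x \in \mathcal{D}} f(g(x)) = f(\sup_{x \in \mathcal{D}} g(x))$.

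\textbf{Plan.} The plan is to prove the two inequalities $\leq$ and $\geq$ separately. Write $s := \sup_{x \in \mathcal{D}} g(x)$ and $L := \sup_{x \in \mathcal{D}} f(g(x))$. The easy direction is $L \leq f(s)$: for every $x \in \mathcal{D}$ we have $g(x) \leq s$, so by monotonicity $f(g(x)) \leq f(s)$, and taking the supremum over $x$ gives $L \leq f(s)$. This direction uses only that $f$ is non-decreasing, not continuity. For the reverse inequality $f(s) \leq L$, I would take any sequence $x_k \in \mathcal{D}$ with $g(x_k) \to s$ (such a sequence exists by definition of supremum, treating the case $s = +\infty$ by letting $g(x_k) \to +\infty$). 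By continuity of $f$, $f(g(x_k)) \to f(s)$ (in the $s = +\infty$ case one interprets this as $f(g(x_k))$ tending to $\sup f$ along the range, which is handled by monotonicity plus the limit definition). Since each $f(g(x_k)) \leq L$, passing to the limit yields $f(s) \leq L$. Combining the two inequalities gives the claim.

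\textbf{Main obstacle.} The subtle point — and the reason continuity is explicitly assumed — is that the supremum $s$ need not be attained in $\mathcal{D}$, nor need $g$ attain it; without continuity of $f$ one could have a jump of $f$ exactly at $s$ so that $f(g(x_k))$ stays bounded away from $f(s)$ from below, breaking the $\geq$ direction. So the care is entirely in the limiting argument: choosing the approximating sequence and invoking sequential continuity of $f$ at the point $s$ (or handling $s = \pm\infty$ via the extended-real-line conventions, where "continuity" means the appropriate one-sided limit behaviour). In the intended application, $f = \standardNormalCDF$ or a Berry--Esseen-type cdf bound $G$ and $g = \zscoreOfZeroUBTwo(\cdot)$ composed with the relevant variable, both of which are genuinely continuous, so this is exactly the hypothesis available. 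I would also note in passing that the statement is trivially true when $\mathcal{D} = \emptyset$ under the usual conventions, or simply assume $\mathcal{D} \neq \emptyset$.

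Once this proposition is in hand, it will be applied with $f$ the (continuous, non-decreasing) cdf $\standardizedCDFOfDifference$ or its upper bound and $g = \zscoreOfZero(\mu_{1:2}, \delta)$ viewed as a function of $\mu_{1:2}$, together with the bound $\relError(\mu_{1:2}, \delta) = \delta\, \standardizedCDFOfDifference(\zscoreOfZero(\mu_{1:2}, \delta))$ from Proposition \ref{proposition_2D_relative_error_formula}, to push the supremum over $\mu_{1:2}$ inside the cdf and reduce it to the supremum of $\zscoreOfZero$, which was already bounded by $\zscoreOfZeroUBTwo$ in Proposition \ref{proposition_zscoreOfZeroUBTwo}.
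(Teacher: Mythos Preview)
Your proof is correct and follows essentially the same approach as the paper: both establish that $f(s)$ is an upper bound via monotonicity, then show it is the least upper bound by approximating $s$ from within $g(\mathcal{D})$ and invoking continuity of $f$. The paper phrases the second step in $\varepsilon$--$\delta$ language rather than via sequences, but the argument is identical in substance, and your remarks on the intended application to Proposition~\ref{proposition_upper_bound_on_relError_delta} are also accurate.
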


\begin{proof}

Let
\begin{eqnarray*}
X & := & \{ g(x) | x \in \mathcal{D} \} \\
x_{\sup} & := & \sup X \\
Y & := & f(X)
\end{eqnarray*}

We want to show that $f(x_{\sup})$ is the sup of $Y$.
First we establish the upper bound property.
Let $f^{-1}(y)$ denote the preimage of $y$.
Then $\forall y \in Y$, $f^{-1}(y) \cap X \neq \emptyset$.
So by non-decreasing $f$:

\begin{eqnarray*}
x_{\sup} \geq f^{-1}(y) \cap X \Rightarrow f(x_{\sup}) \geq y
\end{eqnarray*}

$\forall \epsilon > 0$, since $f$ is continuous $\exists \delta > 0$ such that 
\begin{eqnarray*}
f(x_{\sup} - \delta, x_{\sup} + \delta) \subseteq (f(x_{\sup}) - \epsilon, f(x_{\sup}) + \epsilon) \\
f(x_{\sup} - \delta, x_{\sup}] \subseteq (f(x_{\sup}) - \epsilon, f(x_{\sup}) + \epsilon)
\end{eqnarray*}
But $f(x_{\sup})$ is an upper bound on $Y$
\begin{eqnarray*}
f(x_{\sup} - \delta, x_{\sup}] \subseteq (-\infty, f(x_{\sup})]
\end{eqnarray*}
Intersecting:
\begin{eqnarray*}
f(x_{\sup} - \delta, x_{\sup}] \subseteq (f(x_{\sup}) - \epsilon, f(x_{\sup})]
\end{eqnarray*}

Since $x_{\sup}$ is sup of X, $\exists x^{\prime} \in X$ such that
$x^{\prime} \in ( x_{\sup} - \delta, x_{\sup}]$.
Combining with above: \mbox{$f(x^{\prime}) > f(x_{\sup}) - \epsilon$} .
Hence $f(x_{\sup})$ is the least upper bound of $Y$.
\end{proof}

\fi

Then we have:

\begin{proposition}[Upper Bound on $\relError(\delta)$]
\label{proposition_upper_bound_on_relError_delta}
If $\standardizedCDFOfDifference$ is continuous
and $\beta \in [0,2]$, then:
\begin{alignat*}{1}
\relError(\delta) \leq \delta \standardizedCDFOfDifference(\zscoreOfZeroUBTwo(\delta)) \quad \forall \delta > 0
\end{alignat*}
\end{proposition}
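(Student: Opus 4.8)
The plan is to combine the exact 2D relative-error identity of Proposition \ref{proposition_2D_relative_error_formula} with the $\zscoreOfZero$ bound of Proposition \ref{proposition_zscoreOfZeroUBTwo}, using nothing about $\standardizedCDFOfDifference$ beyond the fact that a cdf is non-decreasing. Fix $\delta > 0$. By Proposition \ref{proposition_2D_relative_error_formula}, for every $\mu_{1:2} \in [\gamma,\infty)$ we have $\relError(\mu_{1:2},\delta) = \delta\,\standardizedCDFOfDifference(\zscoreOfZero(\mu_{1:2},\delta))$. Taking the supremum over $\mu_{1:2}$ and pulling the positive constant $\delta$ outside the sup,
\[
\relError(\delta) = \sup_{\mu_{1:2}\in[\gamma,\infty)} \delta\,\standardizedCDFOfDifference(\zscoreOfZero(\mu_{1:2},\delta)) = \delta \sup_{\mu_{1:2}\in[\gamma,\infty)} \standardizedCDFOfDifference(\zscoreOfZero(\mu_{1:2},\delta)).
\]

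Next recall that $\zscoreOfZero(\delta)$ was defined precisely as $\sup_{\mu_{1:2}\in[\gamma,\infty)}\zscoreOfZero(\mu_{1:2},\delta)$, so $\zscoreOfZero(\mu_{1:2},\delta)\le\zscoreOfZero(\delta)$ for every $\mu_{1:2}$. Since $\standardizedCDFOfDifference$ is a cdf it is non-decreasing, hence $\standardizedCDFOfDifference(\zscoreOfZero(\mu_{1:2},\delta))\le\standardizedCDFOfDifference(\zscoreOfZero(\delta))$ for every $\mu_{1:2}$, and taking the sup gives $\relError(\delta)\le\delta\,\standardizedCDFOfDifference(\zscoreOfZero(\delta))$. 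Now invoke Proposition \ref{proposition_zscoreOfZeroUBTwo} — this is where the hypothesis $\beta\in[0,2]$ is consumed — to get $\zscoreOfZero(\delta)\le\zscoreOfZeroUBTwo(\delta)$ for all $\delta\ge0$; applying monotonicity of $\standardizedCDFOfDifference$ once more yields $\standardizedCDFOfDifference(\zscoreOfZero(\delta))\le\standardizedCDFOfDifference(\zscoreOfZeroUBTwo(\delta))$, and multiplying through by $\delta>0$ gives $\relError(\delta)\le\delta\,\standardizedCDFOfDifference(\zscoreOfZeroUBTwo(\delta))$ as claimed.

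The proof is short because the real work is already behind us in Proposition \ref{proposition_zscoreOfZeroUBTwo}; the only points that need care are (i) keeping $\delta$ strictly positive so it leaves the sup without flipping the inequality, and (ii) the role of continuity of $\standardizedCDFOfDifference$ — monotonicity alone suffices for the inequality above, but continuity is what would upgrade the sup-swap to an equality (the "sup over a non-decreasing continuous function" identity) and, more importantly, is what makes $\standardizedCDFOfDifference(\zscoreOfZeroUBTwo(\delta))$ a well-behaved reference quantity to carry into the subsequent Gaussian and Berry–Esseen refinements. Here $\standardizedCDFOfDifference$ is treated as a single fixed cdf (it is $\standardNormalCDF$ in the Gaussian specialization of subsection \ref{subsection_upper_bounding_relError_two_means_gaussian_case}), so there is no difficulty with the standardized law of $\hat d$ varying with $\mu_{1:2}$; that dependence is what the $G$-function machinery absorbs later. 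I do not foresee a genuine obstacle in this step.
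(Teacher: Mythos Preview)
Your proof is correct and follows essentially the same route as the paper: start from the 2D identity of Proposition~\ref{proposition_2D_relative_error_formula}, pass to the sup over $\mu_{1:2}$, and then apply Proposition~\ref{proposition_zscoreOfZeroUBTwo} together with the monotonicity of $\standardizedCDFOfDifference$. The only cosmetic difference is that the paper first uses continuity to write the equality $\relError(\delta)=\delta\,\standardizedCDFOfDifference(\zscoreOfZero(\delta))$ and then bounds $\zscoreOfZero(\delta)$ by $\zscoreOfZeroUBTwo(\delta)$, whereas you bound pointwise by monotonicity alone and take the sup afterward; as you correctly note, monotonicity suffices for the stated inequality and continuity is only needed if one wants that intermediate equality.
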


\begin{proof}
Now by proposition \ref{proposition_formula_for_expected_relative_error} (formula for relative error)

\begin{eqnarray*}
\epsilon(\delta) & = & 
    \sup_{
    \left.
    \begin{array}{l}
    \mu_{1:2} \in [ \gamma, \infty) \\
    \end{array}
    \right.
    }
    \delta \standardizedCDFOfDifference(\zscoreOfZero(\mu_{1:2}, \delta)) \\
    & = & \delta \standardizedCDFOfDifference \left(\sup_{\mu_{1:2} \in [\gamma, \infty)} \zscoreOfZero(\mu_{1:2}, \delta)\right) \\
    & = & \delta \standardizedCDFOfDifference \left(\zscoreOfZero(\delta)\right) \\
    & \leq & \delta \standardizedCDFOfDifference \left(\zscoreOfZeroUBTwo(\delta)\right)
\end{eqnarray*}
We have used proposition \ref{proposition_zscoreOfZeroUBTwo} and non-decreasing property of a cdf for the last step.
\end{proof}

\ifdraft
NOTE: AT THIS POINT WE JUST KNOW THAT $\zscoreOfZero(\delta)$ IS A FUNCTION.
DONT EVEN KNOW IF ITS CONTINUOUS.

TODO: INCLUDE CONTINUITY CONDITION IN ALL THE UPCOMING ERR BOUND RESULTS.

\fi

\ifdraft
We now prove a simple monotonicity result.

\begin{proposition}[Piecewise Monotonicity]
\label{proposition_piecewise_monotonicity}
Let $f(x)$ be strictly decreasing on $x \leq x_c$ and on $x \geq x_c$.
Then $f(x)$ is strictly decreasing everywhere.
\end{proposition}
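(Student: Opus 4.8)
The plan is a straightforward case analysis on the positions of two arbitrary points relative to the breakpoint $x_c$. To establish that $f$ is strictly decreasing everywhere it suffices to fix $x_1 < x_2$ and show $f(x_1) > f(x_2)$, and the proof splits into three cases according to where $x_1$ and $x_2$ sit relative to $x_c$.

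First I would dispose of the two ``aligned'' cases. If $x_2 \le x_c$, then both $x_1$ and $x_2$ lie in the region $x \le x_c$, so the hypothesis that $f$ is strictly decreasing on $x \le x_c$ gives $f(x_1) > f(x_2)$ immediately; symmetrically, if $x_1 \ge x_c$, both points lie in $x \ge x_c$ and strict decrease on that region gives the same conclusion. The only remaining possibility is $x_1 < x_c < x_2$, and here I would use $x_c$ as a bridge point: since $x_1 < x_c$ with both $\le x_c$, strict decrease on $x \le x_c$ gives $f(x_1) > f(x_c)$; since $x_c < x_2$ with both $\ge x_c$, strict decrease on $x \ge x_c$ gives $f(x_c) > f(x_2)$. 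Chaining these two strict inequalities yields $f(x_1) > f(x_2)$, completing the case analysis.

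There is essentially no obstacle here; the argument is purely a transitivity chain. The one subtlety worth flagging is that the hypotheses must include the breakpoint $x_c$ in \emph{both} half-lines (i.e. the two regions are the closed intervals $(-\infty, x_c]$ and $[x_c, \infty)$), which is precisely what ``strictly decreasing on $x \le x_c$ and on $x \ge x_c$'' supplies. If $x_c$ were excluded from one side, a jump discontinuity at $x_c$ could violate global monotonicity and the bridging step would break down, so I would make sure this point is stated explicitly when invoking the proposition for $\zscoreOfZeroUBTwo$.
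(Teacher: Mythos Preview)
Your proof is correct and follows essentially the same case analysis as the paper's own argument: both split on the position of $x_1, x_2$ relative to $x_c$ and bridge through $f(x_c)$ in the straddling case. Your additional remark about needing $x_c$ to belong to both closed half-lines is a worthwhile clarification that the paper leaves implicit.
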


\begin{proof}
Given any pair $x_1 < x_2$ we have three cases:

\begin{enumerate}

\item{$x_1 < x_2 \leq x_c$ or $x_c \leq x_1 < x_2$}

In both possibilities, $f(x_1) < f(x_2)$ by assumption.

\item{$x_1 \leq x_c < x_2$ or $x_1 < x_c \leq x_2$}

For the first possibility, by assumption we have

\begin{eqnarray*}
f(x_1) & \leq & f(x_c) \\
f(x_c) & < & f(x_2)
\end{eqnarray*}

Combining we get: $f(x_1) < f(x_2)$ .
The other possibility is symmetric.

\item{$x_1 < x_c < x_2$}

By assumption:

\begin{eqnarray*}
f(x_1) & < & f(x_c) \\
f(x_c) & < & f(x_2)
\end{eqnarray*}
Combining we get: $f(x_1) < f(x_2)$ .

\end{enumerate}
\end{proof}

\fi

\subsubsection{Maximizing Over $\delta$}

We seek an upper bound on $\epsilon(\delta)$ of the following form (essentially a tail bound).
For some $T, \deltaTh > 0$:

\begin{eqnarray*}
\epsilon(\delta) < T \quad \quad \forall \delta > \deltaTh
\end{eqnarray*}

\begin{definition}[$\relErrorUBThree$]
\label{definition_relErrorUBThree}
Define

\begin{eqnarray*}
\relErrorUBThree(\delta) := G(\zscoreOfZeroUBTwo(\delta)) \quad \quad \forall \delta > 0
\end{eqnarray*}

Where $G$ is a function satisfying:

\begin{eqnarray*}
G(x) & > & \zscoreOfZeroUBTwoInverse(x) \standardizedCDFOfDifference(x) \quad \quad \forall x \in (0, -\infty)
\end{eqnarray*}
\end{definition}

Then we have the following bound.

\begin{lemma}[Abstract Upper Bound on $\relError(\delta)$ for Two Means]
\label{lemma_abstract_upper_bound_on_relError_two_means}
Consider a power-variance distribution family $\mathcal{F}$,
with $\beta \in [0,2]$.
Let the corresponding $\standardizedCDFOfDifference$ be continuous.
Then, if there exists $G(x)$ such that:

\begin{eqnarray}
\label{equation_definition_of_G}
G(x) > \zscoreOfZeroUBTwoInverse(x) \standardizedCDFOfDifference(x) \quad \quad \forall x \in (0, _\infty)
\end{eqnarray}

Then $\relErrorUBThree$ (definition \ref{definition_relErrorUBThree}) satisfies:

\begin{eqnarray*}
\relError(\delta) < \relErrorUBThree(\delta) \quad \quad \forall \delta > 0
\end{eqnarray*}

\end{lemma}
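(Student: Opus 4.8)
The plan is to substitute $x = \zscoreOfZeroUBTwo(\delta)$ and reduce the whole statement to the defining inequality for $G$. First I would fix an arbitrary $\delta > 0$ and set $x := \zscoreOfZeroUBTwo(\delta)$. By Proposition \ref{proposition_zscoreOfZeroUBTwo}, $\zscoreOfZeroUBTwo$ is continuous and strictly decreasing on $[0,\infty)$, and reading off the linear piece of Definition \ref{definition_zscoreOfZeroUBTwo} gives $\zscoreOfZeroUBTwo(0) = 0$; hence $\zscoreOfZeroUBTwo$ maps $(0,\infty)$ bijectively onto the negative reals. In particular $x$ lies in the range $(0,-\infty)$ in the paper's notation, and $\delta = \zscoreOfZeroUBTwoInverse(x)$.

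Next I would invoke Proposition \ref{proposition_upper_bound_on_relError_delta}: since $\standardizedCDFOfDifference$ is continuous and $\beta \in [0,2]$, we have $\relError(\delta) \leq \delta\,\standardizedCDFOfDifference(\zscoreOfZeroUBTwo(\delta))$. Rewriting the right-hand side using $\delta = \zscoreOfZeroUBTwoInverse(x)$ and $\zscoreOfZeroUBTwo(\delta) = x$ turns this into $\relError(\delta) \leq \zscoreOfZeroUBTwoInverse(x)\,\standardizedCDFOfDifference(x)$.

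Finally I would apply the hypothesis \ref{equation_definition_of_G}, which guarantees $G(x) > \zscoreOfZeroUBTwoInverse(x)\,\standardizedCDFOfDifference(x)$ for every $x$ in the negative reals, and hence for our $x$. Chaining the non-strict bound coming from Proposition \ref{proposition_upper_bound_on_relError_delta} with this strict bound yields $\relError(\delta) \leq \zscoreOfZeroUBTwoInverse(x)\,\standardizedCDFOfDifference(x) < G(x) = G(\zscoreOfZeroUBTwo(\delta)) = \relErrorUBThree(\delta)$, which is exactly the claim, since $\delta > 0$ was arbitrary.

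There is no genuine obstacle here beyond bookkeeping; the two points requiring care are (i) confirming $\zscoreOfZeroUBTwo(0) = 0$ so that every $\delta > 0$ maps into the domain $(0,-\infty)$ on which the $G$-inequality is assumed, and (ii) noting that the strictness of the conclusion comes entirely from the strict inequality defining $G$, so that chaining a $\leq$ with a $<$ is precisely what delivers $\relError(\delta) < \relErrorUBThree(\delta)$. Everything else is direct substitution of the reparametrization $x = \zscoreOfZeroUBTwo(\delta)$.
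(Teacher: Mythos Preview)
Your proof is correct and follows essentially the same approach as the paper: both fix $\delta > 0$, invoke Proposition~\ref{proposition_upper_bound_on_relError_delta} to obtain $\relError(\delta)\le\delta\,\standardizedCDFOfDifference(\zscoreOfZeroUBTwo(\delta))$, rewrite $\delta=\zscoreOfZeroUBTwoInverse(\zscoreOfZeroUBTwo(\delta))$ via Proposition~\ref{proposition_zscoreOfZeroUBTwo}, and then apply the defining strict inequality for $G$ at $x=\zscoreOfZeroUBTwo(\delta)$. If anything, your version is slightly more explicit in verifying $\zscoreOfZeroUBTwo(0)=0$ so that $x$ lands in the domain on which the $G$-inequality is assumed, a point the paper treats implicitly.
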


\begin{proof}
We denote the upper bound from proposition \ref{proposition_upper_bound_on_relError_delta}
by $\relErrorUBTwo$.

\begin{eqnarray*}
\relErrorUBTwo(\delta)
     & := & 
     \delta \standardizedCDFOfDifference(\zscoreOfZeroUBTwo(\delta))
     \quad \quad \forall \delta > 0
\end{eqnarray*}

By proposition \ref{proposition_zscoreOfZeroUBTwo} (properties of $\zscoreOfZeroUBTwo$),
the inverse of $\zscoreOfZeroUBTwo$ exists and we can rewrite $\relErrorUBTwo$ as a function 
of $\zscoreOfZeroUBTwo$. We have $\forall \delta > 0$:

\begin{eqnarray}
\label{equation_relErrorUBTwo_as_function_of_zscoreOfZeroUBTwo}
\relErrorUBTwo(\delta) = 
\relErrorUBTwo(\zscoreOfZeroUBTwo(\delta)) = 
   \zscoreOfZeroUBTwoInverse(\zscoreOfZeroUBTwo(\delta))\standardizedCDFOfDifference(\zscoreOfZeroUBTwo(\delta))
\end{eqnarray}

\ifdraft
TODO: GET RID OF THIS ABUSE OF NOTATION.
\fi
Where we have abused notation using $\relErrorUBTwo(\zscoreOfZeroUBTwo(\delta))$ to denote 
the dependence of $\relErrorUBTwo$ on $\delta$ solely via $\zscoreOfZeroUBTwo$.

Then we have a new upper bound

\begin{alignat*}{2}
\relErrorUBThree(\zscoreOfZeroUBTwo(\delta))
    & = G(\zscoreOfZeroUBTwo(\delta))
    & & \forall \delta > 0 \\
    & > \zscoreOfZeroUBTwoInverse(\zscoreOfZeroUBTwo(\delta)) \standardizedCDFOfDifference(\zscoreOfZeroUBTwo(\delta)) \\
    & = \relErrorUBTwo(\delta)
    &  & (\mathrm{by\ equation\ }\ref{equation_relErrorUBTwo_as_function_of_zscoreOfZeroUBTwo} ) \\
    & \geq \relError(\delta)
\end{alignat*}

\end{proof}

This is the chief result of this subsection.
This is one of our core results, because it is applicable with {\em great generality}
and gives us an {\em easy way} to bound
errors for any distribution family $\mathcal{F}$, provided we can find a
suitable $G$.

\ifdraft
We derive a general formula for inverting piecewise functions.

\begin{proposition}[Piecewise Inverse Formula]
\label{proposition_piecewise_inverse_formula}
Let $f_1(x), f(x)$ be strictly decreasing functions with domains
$\mathcal{D}(f_1), \mathcal{D}(f)$ and ranges
$\range(f_1), \range(f)$. Further let

\begin{eqnarray*}
f(x) = f_1(x) \quad \quad x \in S
\end{eqnarray*}

Then given $T \in \range(f) \cap \range(f_1)$, 
if $f_1^{-1}(T) \in S$:
\begin{eqnarray*}
f^{-1}(T) & = & f_1^{-1}(T)
\end{eqnarray*}
\end{proposition}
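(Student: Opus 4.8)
The plan is to exploit that both $f$ and $f_1$, being strictly decreasing, are injective on their domains, so their inverses $f^{-1}, f_1^{-1}$ are genuine single-valued functions on their respective ranges. This is the only structural fact needed; the proof is then a short chain of equalities.

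First I would set $x_0 := f_1^{-1}(T)$. This is well-defined and unique precisely because $f_1$ is strictly monotone, hence injective; it is legitimate to write it because the hypothesis $T \in \range(f) \cap \range(f_1)$ in particular gives $T \in \range(f_1)$. By assumption $x_0 \in S$. Since the hypothesis "$f(x) = f_1(x)$ for $x \in S$" forces $S \subseteq \mathcal{D}(f) \cap \mathcal{D}(f_1)$, the value $f(x_0)$ is defined, and the agreement of $f$ and $f_1$ on $S$ yields $f(x_0) = f_1(x_0) = T$.

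Next I would invoke injectivity of $f$: because $f$ is strictly decreasing, the equation $f(x) = T$ has at most one solution in $\mathcal{D}(f)$, and we have just exhibited one, namely $x_0$. Hence $f^{-1}(T) = x_0 = f_1^{-1}(T)$, which is the claim. (The hypothesis $T \in \range(f)$ is what makes $f^{-1}(T)$ meaningful to speak of in the first place, though the argument above also re-derives $T \in \range(f)$ constructively.)

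There is essentially no obstacle here — the statement is a one-line consequence of monotonicity/injectivity. The only thing requiring a little care is the bookkeeping of domains and ranges, i.e.\ checking that each expression written ($f_1^{-1}(T)$, $f(x_0)$, $f^{-1}(T)$) is actually defined; once the containments $S \subseteq \mathcal{D}(f)\cap\mathcal{D}(f_1)$ and $T \in \range(f)\cap\range(f_1)$ are noted, everything goes through. In the later application one takes $f_1$ to be the linear branch of $\zscoreOfZeroUBTwo$ (or the analogous piece of $\relErrorUBThree$) and $f = \zscoreOfZeroUBTwo$, with $S = [0,\delta_c]$, so this proposition reduces inverting the piecewise bound to inverting a single monomial branch.
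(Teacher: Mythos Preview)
Your proof is correct and follows essentially the same approach as the paper: both argue that strict monotonicity gives invertibility, then verify $f(f_1^{-1}(T)) = f_1(f_1^{-1}(T)) = T$ using the agreement on $S$, and conclude by injectivity of $f$. Your version is more explicit about the domain and range bookkeeping, but the argument is the same.
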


\begin{proof}
Since both functions are strictly decreasing they are invertible.
Then given $T \in \range(f_1) \cap \range(f)$, if $f^{-1}(T) \in S$

\begin{eqnarray*}
f(f_1^{-1}(T)) = f_1(f_1^{-1}(T)) = T
\end{eqnarray*}
\end{proof}
\fi

\subsection{Upper Bounding $\relError$ for Two Means: Gaussian Case}
\label{subsection_upper_bounding_relError_two_means_gaussian_case}

In this subsection we assume that the distribution family ($\mathcal{E}$) is Gaussian ($\normalDistribution$).
As usual the family is further specified by the parameters $\familyParametersAugumented$.
\ifdraft
TODO: IN THIS SUBSECTION, REPLACE EPS(DELTA) WITH EPS\_NORMAL(DELTA)
\fi
We start by finding an upper bound for $\standardizedCDFOfDifference = \standardNormalCDF$.
Here $\standardNormalCDF$ is the cdf of a standard normal.
We use an ubiquitous tail bound (proof omitted).

\begin{proposition}[A Gaussian Tail Bound]
\label{proposition_gaussian_tail_bound}
This is reproduced from \cite{fellerProbability}.
Let $a > 0$, then:
\begin{eqnarray*}
\Phi(-a) < \frac{1}{a} \phi(-a)
\end{eqnarray*}
\end{proposition}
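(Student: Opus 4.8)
The plan is to use the classical symmetry-plus-change-of-variable argument for the Mills ratio. First I would use the symmetry $\phi(-t) = \phi(t)$ of the standard normal density to write $\standardNormalCDF(-a) = \int_{-\infty}^{-a}\phi(t)\,dt = \int_{a}^{\infty}\phi(t)\,dt$. The key observation is that on the range of integration we have $t \geq a$, hence $1 \leq t/a$, with strict inequality on the open interval $(a,\infty)$; since that interval carries positive Lebesgue measure and $\phi > 0$ there, integrating the pointwise bound $\phi(t) \leq \frac{t}{a}\phi(t)$ yields the strict inequality $\int_{a}^{\infty}\phi(t)\,dt < \frac{1}{a}\int_{a}^{\infty} t\,\phi(t)\,dt$.

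Next I would evaluate the remaining integral in closed form. Since $\phi'(t) = -t\,\phi(t)$, we get $\int_{a}^{\infty} t\,\phi(t)\,dt = \bigl[-\phi(t)\bigr]_{a}^{\infty} = \phi(a)$, the boundary term at $+\infty$ vanishing because $\phi(t)\to 0$. Combining the two displays gives $\standardNormalCDF(-a) < \frac{1}{a}\phi(a) = \frac{1}{a}\phi(-a)$, which is exactly the claimed bound.

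An alternative route, should one wish to avoid the improper integral, is to set $g(a) := \frac{1}{a}\phi(-a) - \standardNormalCDF(-a)$ and differentiate: using $\frac{d}{da}\standardNormalCDF(-a) = -\phi(-a)$ and $\frac{d}{da}\phi(-a) = -a\,\phi(-a)$ one finds $g'(a) = -\frac{1}{a^2}\phi(-a) < 0$ for all $a>0$, while $\lim_{a\to\infty} g(a) = 0$ by the rapid decay of $\phi$. Hence $g$ is strictly decreasing with limit $0$, so $g(a) > 0$ for every $a > 0$.

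There is no real obstacle here — the statement is classical (the paper in fact cites it from Feller and omits the proof), and either argument is only a couple of lines. The single point deserving a moment's care is the \emph{strictness} of the inequality: it survives integration precisely because the pointwise bound $1 \le t/a$ is strict on a set of positive measure on which the integrand is positive.
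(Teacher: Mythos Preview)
Your proof is correct; both the integral-comparison argument and the derivative argument are standard and complete, and you rightly flag why strictness survives integration. The paper itself omits the proof entirely (it merely cites Feller), so there is no in-paper argument to compare against.
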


Then we set

\begin{eqnarray*}
G_{\normalDistribution}(x)
    =
    \zscoreOfZeroUBTwoInverse(x) \frac{1}{-x} \frac{\exp\left(-\frac{x^2}{2}\right)}{\sqrt{2 \pi}}
    \quad \quad \forall x \in (0, -\infty)
\end{eqnarray*}

Observe \footnote{
When $\beta \in [0, 2)$, $\zscoreOfZeroUBTwo$ is unbounded and so
$\zscoreOfZeroUBTwo([0, \infty)) = [0, -\infty)$.
Further $\zscoreOfZeroUBTwo(0) = 0$ always.}
that $\zscoreOfZeroUBTwoInverse(x) > 0$ $\forall x \in (0, -\infty)$ .
So by using the tail bound in proposition \ref{proposition_gaussian_tail_bound}:

\begin{eqnarray*}
G(x) 
    & = & \zscoreOfZeroUBTwoInverse(x) \frac{1}{-x} \frac{\exp\left(-\frac{x^2}{2}\right)}{\sqrt{2 \pi}} \\
    & > & \zscoreOfZeroUBTwoInverse(x) \standardizedCDFOfDifference(x)
    \quad \quad \forall x \in (0, -\infty)
\end{eqnarray*}

Thus satisfying equation \ref{equation_definition_of_G} and so $\forall \delta > 0$:

\begin{eqnarray*}
\relErrorUBThreeNormal(\delta) 
    =
    \zscoreOfZeroUBTwoInverse(\zscoreOfZeroUBTwo(\delta)) \frac{1}{-\zscoreOfZeroUBTwo(\delta)} 
    \frac{\exp \left(- \frac{\left(\zscoreOfZeroUBTwo(\delta)\right)^2}{2} \right)}{\sqrt{2 \pi}} \\
    =
    -\frac{\delta}{\zscoreOfZeroUBTwo(\delta)} 
    \frac{\exp \left(- \frac{\left(\zscoreOfZeroUBTwo(\delta)\right)^2}{2} \right)}{\sqrt{2 \pi}}
\end{eqnarray*}

Substituting for $\zscoreOfZeroUBTwo$:

\begin{eqnarray*}
\relErrorUBThreeNormal(\delta)
    =
    \left\{
    \begin{array}{ll}
    \frac{1}{\sqrt{2 \pi}} \frac{1}{C_2 n^{\frac{1}{2}}} \exp\left[-\frac{1}{2} C_2^2 n \delta^2 \right] &
        0 < \delta \leq \delta_c \\
    \frac{1}{\sqrt{2 \pi}} \frac{\delta^{\frac{\beta}{2}}}{C_3 n^{\frac{1}{2}}} \exp\left[-\frac{1}{2} C_3^2 n \delta^{2 - \beta} \right] &
        \delta \geq \delta_c
    \end{array}
    \right.
\end{eqnarray*}

Given a function $f$, for the log (or any strictly increasing function) we have:

\begin{eqnarray*}
\sgn\left(\partialDerivative{f}{x}\right) = \sgn\left(\partialDerivative{\log(f)}{x}\right)
\end{eqnarray*}

Consider the function:

\begin{eqnarray*}
f(x) = \frac{1}{\sqrt{2 \pi}}
     \frac{x^{\frac{\beta}{2}}}{C_3 n^{\frac{1}{2}}} \exp\left[-\frac{1}{2} C_3^2 n x^{2 - \beta} \right]
     \quad \quad \forall x > 0
\end{eqnarray*}
Then

\begin{eqnarray*}
\partialDerivative{\log(f)}{x}
    =
    \frac{\beta}{2} \frac{1}{x} - \frac{C_3^2}{2} n (2 - \beta) x^{(1 - \beta)}
\end{eqnarray*}

Hence $\partialDerivative{f}{x} < 0$ iff:

\begin{eqnarray*}
\frac{\beta}{2} \frac{1}{x} - \frac{C_3^2}{2} n (2 - \beta) x^{(1 - \beta)} & < & 0 \\
x & > & \left( \frac{\beta}{C_3^2 n (2 - \beta)} \right)^{\frac{1}{2 - \beta}}
\end{eqnarray*}
Where on the last line we used that $g(y) = y^{\frac{1}{2 - \beta}}$ is strictly increasing
\mbox{$\forall \beta \in [0,2)$} and $y \geq 0$ 
We want $f$ to be increasing for all $x \geq \delta_c$. So set $x = \delta_c$

\begin{eqnarray*}
\delta_c & > & \left( \frac{\beta}{C_3^2 n (2 - \beta)} \right)^{\frac{1}{2 - \beta}} \\
\Rightarrow n & > & \frac{\beta}{C_3^2 (2 - \beta) \delta_c^{2 - \beta}}
\end{eqnarray*}

Now 
$\relErrorUBThreeNormal(\delta)$ is strictly decreasing when the above equation is satisfied.
Hereafter we assume this.
$\range(f)$ denotes the range of function $f$.
Then with $\delta > 0$:
\begin{eqnarray*}
\range(\relErrorUBThreeNormal)
    & = & (\relErrorUBThreeNormal(0), \lim_{\delta \rightarrow \infty} \relErrorUBThreeNormal(\delta)) \\
    & = & (\frac{1}{\sqrt{2 \pi} C_2 n^{\frac{1}{2}}}, 0)
    \quad \quad (\mathrm{interval\ is\ reversed})
\end{eqnarray*}

Note: In the $\beta > 0$ case:

\begin{eqnarray*}
\lim_{\delta \rightarrow \infty} \relErrorUBThreeNormal(\delta))
    & = & \lim_{\delta \rightarrow \infty} D_1 \delta^{\frac{\beta}{2}}
        \exp\left[-D_2 \delta^{2 - \beta}\right] \\
    &   & \quad \quad (D_1, D_2 > 0 \mathrm{\ are\ constants})\\
    & = & \lim_{x \rightarrow \infty} D_1 x^{\frac{\beta}{4 - 2 \beta}} \exp\left[ - D_2 x \right]
    \quad \quad (x = \delta^{2 - \beta}) \\
    & = & 0 \\
\end{eqnarray*}

For the last step we have used finite numberr of applications of L'Hopital.
\ifdraft
TODO: TECHNICALLY WE ARE STILL NOT HOME, SINCE THE CHANGE OF VARIABLE HAS TWO INFINITE ENDPOINTS
IN IT ( https://math.stackexchange.com/a/167948 )
ALSO 
\begin{verbatim}
https://math.stackexchange.com/questions/1304574/example_of_a_limit_question_requiring_infinite_applications_of_lhospitals_rule
\end{verbatim}
PERHAPS REDO USING POWER SERIES APPROACH.
OR USE LHOSPITAL W/O C-O-V
\fi
Hence $T \in \range(\relErrorUBThreeNormal)$ is equivalent to:
\begin{eqnarray*}
T < \frac{1}{\sqrt{2 \pi} C_2 n^{\frac{1}{2}}}
\end{eqnarray*}

Given a target $T > 0$ to bound relative error, we search for a corresponding $\deltaThNormal$.
If $T \geq \frac{1}{\sqrt{2 \pi} C_2 n^{\frac{1}{2}}}$ ($T$ larger than max-range),
then $\relErrorUBThreeNormal(\delta) < T \quad \forall \delta > 0$.
Hence $\deltaThNormal = 0$ in this case.

Next assume $T < \frac{1}{\sqrt{2 \pi} C_2 n^{\frac{1}{2}}}$.
Set:

\begin{eqnarray*}
f_1(\delta) = \frac{1}{\sqrt{2 \pi} C_2 n^{\frac{1}{2}}}
    \exp\left[-\frac{1}{2} C_2^2 n \delta^2 \right]
    \quad \quad \forall \delta > 0
\end{eqnarray*}

Now $T \in \range(f_1) = \range(\relErrorUBThreeNormal)$ and

\begin{eqnarray*}
f_1^{-1}(T) & = & \left(-\frac{2}{C_2^2 n} \log(\sqrt{2 \pi n} C_2 T)\right)^{\frac{1}{2}}
\end{eqnarray*}

When $f_1^{-1}(T) \leq \delta_c$
i.e. when $n  \geq \frac{1}{2 \pi C_2^2 T^2}\exp\left[-C_2^2  n \delta_c^2\right]$,
we have
\begin{eqnarray*}
\relErrorUBThreeNormalInverse(T) & = & f_1^{-1}(T) = \left(-\frac{2}{C_2^2 n} \log(\sqrt{2 \pi n} C_2 T)\right)^{\frac{1}{2}}
\end{eqnarray*}

Hence $\deltaThNormal = \left(-\frac{2}{C_2^2 n} \log(\sqrt{2 \pi n} C_2 T)\right)^{\frac{1}{2}}$ in this case.

By lemma \ref{lemma_abstract_upper_bound_on_relError_two_means}
(abstract upper bound on $\relError(\delta)$), we get

\begin{eqnarray*}
\relError(\delta) < T \quad \quad \forall \delta > \deltaThNormal
\end{eqnarray*}

We collect all these  in a lemma.

\begin{lemma}[Upper Bound on $\relError$ for Two Means: Gaussian Case]
\label{lemma_upper_bound_on_relError_two_means_gaussian_case}
When $\mathcal{E}$ is the Gaussian family and given values for $\familyParametersAugumented$.
Define:

\begin{eqnarray}
\label{equation_definition_of_relErrorUBThreeNormal}
\relErrorUBThreeNormal(\delta)
    :=
    \left\{
    \begin{array}{ll}
    \frac{1}{\sqrt{2 \pi}} \frac{1}{C_2 n^{\frac{1}{2}}} \exp\left[-\frac{1}{2} C_2^2 n \delta^2 \right] &
        0 < \delta \leq \delta_c \\
    \frac{1}{\sqrt{2 \pi}} \frac{\delta^{\frac{\beta}{2}}}{C_3 n^{\frac{1}{2}}} \exp\left[-\frac{1}{2} C_3^2 n \delta^{2 - \beta} \right] &
        \delta \geq \delta_c
    \end{array}
    \right.
\end{eqnarray}

This is an upper bound:

\begin{eqnarray*}
\relErrorUBThreeNormal(\delta) > \relErrorNormal(\delta) \quad \quad \forall \delta > 0
\end{eqnarray*}

Additionally, given a $T > 0$ if $n$ satisfies:
\begin{eqnarray}
\label{equation_relErrorUBThreeNormal_monotonicity_requirement}
n & > & \frac{\beta}{C_3^2 (2 - \beta) \delta_c^{2 - \beta}} \\
\nonumber
  & & \quad \quad (\relErrorUBThreeNormal\ \mathrm{monotonicity\ requirement}) \\
\label{equation_relErrorUBThreeNormal_piecewise_inverse_requirement}
n & \geq & \frac{1}{2 \pi C_2^2 T^2}\exp\left[-C_2^2  n \delta_c^2\right] \\
\nonumber
  & & \quad (\relErrorUBThreeNormal\ \mathrm{piecewise\ inverse\ requirement})
\end{eqnarray}

Then $\relErrorUBThreeNormal$ is strictly decreasing.
And we can define a $\deltaThNormal$

\begin{eqnarray}
\label{equation_definition_of_deltaThNormal}
\deltaTh_{\normalDistribution} :=
    \left\{
    \begin{array}{ll}
    \left(-\frac{2}{C_2^2 n} \log(\sqrt{2 \pi n} C_2 T)\right)^{\frac{1}{2}}
    & T < \frac{1}{\sqrt{2 \pi} C_2 n^{\frac{1}{2}}} \\
    0 & \mathrm{else}
    \end{array}
    \right.
\end{eqnarray}

Such that:

\begin{eqnarray*}
\relErrorNormal(\delta) < T \quad \quad \forall \delta > \deltaTh_{\normalDistribution}
\end{eqnarray*}

\end{lemma}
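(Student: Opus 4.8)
\emph{Proof plan.} The strategy is to instantiate the abstract bound of Lemma~\ref{lemma_abstract_upper_bound_on_relError_two_means} with $\standardizedCDFOfDifference = \standardNormalCDF$ and then make every quantity explicit. First I would confirm that the candidate $\GNormal(x) := \zscoreOfZeroUBTwoInverse(x)\,\frac{1}{-x}\,\frac{\exp(-x^2/2)}{\sqrt{2\pi}}$ satisfies the defining inequality~\ref{equation_definition_of_G}. Since $\beta \in [0,2)$, Proposition~\ref{proposition_zscoreOfZeroUBTwo} gives that $\zscoreOfZeroUBTwo$ is continuous, strictly decreasing and invertible, mapping $[0,\infty)$ onto $[0,-\infty)$ with $\zscoreOfZeroUBTwo(0) = 0$, so $\zscoreOfZeroUBTwoInverse(x) > 0$ for $x \in (0,-\infty)$; then the tail bound $\standardNormalCDF(-a) < \tfrac1a\phi(-a)$ (Proposition~\ref{proposition_gaussian_tail_bound}) with $a = -x > 0$ yields $\GNormal(x) > \zscoreOfZeroUBTwoInverse(x)\standardNormalCDF(x)$. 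As $\standardNormalCDF$ is continuous, Lemma~\ref{lemma_abstract_upper_bound_on_relError_two_means} applies and gives $\relErrorNormal(\delta) < \relErrorUBThreeNormal(\delta) := \GNormal(\zscoreOfZeroUBTwo(\delta))$ for all $\delta > 0$; using $\zscoreOfZeroUBTwoInverse(\zscoreOfZeroUBTwo(\delta)) = \delta$ and substituting the two-piece formula for $\zscoreOfZeroUBTwo$ from Definition~\ref{definition_zscoreOfZeroUBTwo} recovers precisely expression~\ref{equation_definition_of_relErrorUBThreeNormal}. This settles the first claim.

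Next I would prove strict monotonicity of $\relErrorUBThreeNormal$ under hypotheses~\ref{equation_relErrorUBThreeNormal_monotonicity_requirement} and~\ref{equation_relErrorUBThreeNormal_piecewise_inverse_requirement}. On $0 < \delta \le \delta_c$ the first piece $\frac{1}{\sqrt{2\pi}C_2 n^{1/2}}\exp[-\tfrac12 C_2^2 n\delta^2]$ is plainly strictly decreasing. On $\delta \ge \delta_c$ I would pass to logarithms (legitimate for a positive function, since $\log$ is increasing: $\sgn\,\partial_\delta f = \sgn\,\partial_\delta\log f$), compute $\partial_\delta\log\relErrorUBThreeNormal = \frac{\beta}{2\delta} - \tfrac12 C_3^2 n(2-\beta)\delta^{1-\beta}$, and observe this is negative exactly for $\delta > \big(\beta/(C_3^2 n(2-\beta))\big)^{1/(2-\beta)}$; monotonicity of $t\mapsto t^{1/(2-\beta)}$ then shows hypothesis~\ref{equation_relErrorUBThreeNormal_monotonicity_requirement} is precisely what pushes this threshold strictly below $\delta_c$, so $\relErrorUBThreeNormal$ is strictly decreasing on $[\delta_c,\infty)$ as well (when $\beta = 0$ one has $\delta_c = \infty$ and only the first piece is present, so this step is vacuous). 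Since $\zscoreOfZeroUBTwo$ is continuous (pasting lemma, as recorded before Proposition~\ref{proposition_zscoreOfZeroUBTwo}) and nonzero for $\delta > 0$, $\relErrorUBThreeNormal$ is continuous on $(0,\infty)$, and the two pieces agree at $\delta_c$ by construction, so strict decrease on the two abutting closed intervals forces global strict decrease.

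Finally I would compute the range and identify $\deltaThNormal$. A continuous strictly decreasing function on $(0,\infty)$ has range the open interval between its one-sided limits; here $\lim_{\delta\to 0^+}\relErrorUBThreeNormal = \frac{1}{\sqrt{2\pi}C_2 n^{1/2}}$ and $\lim_{\delta\to\infty}\relErrorUBThreeNormal = 0$, so the range is $\big(0,\ \frac{1}{\sqrt{2\pi}C_2 n^{1/2}}\big)$. If $T \ge \frac{1}{\sqrt{2\pi}C_2 n^{1/2}}$, then $\relErrorUBThreeNormal(\delta) < T$ for all $\delta > 0$ and $\deltaThNormal = 0$ works. If $T < \frac{1}{\sqrt{2\pi}C_2 n^{1/2}}$, I would write $f_1(\delta) := \frac{1}{\sqrt{2\pi}C_2 n^{1/2}}\exp[-\tfrac12 C_2^2 n\delta^2]$ for the first-piece formula; it is strictly decreasing with $f_1^{-1}(T) = \big(-\tfrac{2}{C_2^2 n}\log(\sqrt{2\pi n}\,C_2 T)\big)^{1/2}$, a well-defined positive real because $\sqrt{2\pi n}\,C_2 T < 1$. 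A one-line rearrangement shows $f_1^{-1}(T) \le \delta_c$ is equivalent to hypothesis~\ref{equation_relErrorUBThreeNormal_piecewise_inverse_requirement}, and when this holds $f_1^{-1}(T)$ lies in the domain $(0,\delta_c]$ of the first piece, so $\relErrorUBThreeNormalInverse(T) = f_1^{-1}(T) = \deltaThNormal$ as in~\ref{equation_definition_of_deltaThNormal}. Strict monotonicity then gives $\relErrorUBThreeNormal(\delta) < T$ for every $\delta > \deltaThNormal$, and combining with the first claim yields $\relErrorNormal(\delta) < T$ there, as required.

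None of the individual steps is deep; the \emph{main obstacle} is the bookkeeping around the piecewise structure — verifying that the two pieces glue continuously, that hypothesis~\ref{equation_relErrorUBThreeNormal_monotonicity_requirement} genuinely places the monotonicity threshold to the left of $\delta_c$, and that $f_1^{-1}(T) \le \delta_c$ is algebraically exactly condition~\ref{equation_relErrorUBThreeNormal_piecewise_inverse_requirement}. The one genuinely delicate analytic point is justifying $\lim_{\delta\to\infty}\relErrorUBThreeNormal(\delta) = 0$ rigorously, since a naive substitution $x = \delta^{2-\beta}$ leaves a $0\cdot\infty$ indeterminate form spanning two infinite endpoints; the clean fix is to bound $\delta^{\beta/2}\exp[-D\delta^{2-\beta}]$ directly by writing it as a constant times $u^{\beta/(2(2-\beta))}e^{-u}$ and using $e^u \ge u^k/k!$ with $k > \beta/(2(2-\beta))$, which makes the exponent of $u$ negative.
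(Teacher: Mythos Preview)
Your proposal is correct and follows essentially the same approach as the paper: instantiate Lemma~\ref{lemma_abstract_upper_bound_on_relError_two_means} with $\GNormal$ built from the Mills-ratio tail bound, substitute the piecewise $\zscoreOfZeroUBTwo$ to get the explicit formula, establish monotonicity via the log-derivative on the second piece, and invert the first piece to obtain $\deltaThNormal$. The only deviation is your treatment of $\lim_{\delta\to\infty}\relErrorUBThreeNormal(\delta)=0$ via the elementary bound $e^u \ge u^k/k!$, whereas the paper uses the substitution $x=\delta^{2-\beta}$ followed by finitely many applications of L'H\^opital; your version is arguably cleaner and sidesteps the endpoint issue the paper itself flags.
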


Typically we will use this result as follows. Given a target $T$.
We will first find the minimum $n$  that satisfies equations
\ref{equation_relErrorUBThreeNormal_monotonicity_requirement},
\ref{equation_relErrorUBThreeNormal_piecewise_inverse_requirement}.
Then we can choose $n \geq n_{\min}$ to make $\deltaTh_{\normalDistribution}$ as small as we want.

\subsection{Upper Bounding $\relError$ for Two Means: General Case}
\label{subsection_upper_bounding_relError_two_means_general_case}

\ifdraft
TODO: DEFINE EPS\_GEN  PROPERLY IN THIS SUBSECTION.
\fi
In this subsection we work with an arbitrary distribution family ($\mathcal{E}$).
As usual the family is specified by the parameters $\familyParametersAugumented$.
And we want to find a worst case upper bound on $\relError$.
We split this into two stages.
In subsection 
\ref{subsubsection_familywise_tail_bound_on_standardizedCDFOfDifference}
we find a familywise
tail error bound between $\standardizedCDFOfDifference$ and $\Phi$.
In subsection \ref{subsubsection_bounding_relErrorGen} we use this
to bound $\relErrorGen$.

\subsubsection{Familywise Tail Bound on $\standardizedCDFOfDifference$}
\label{subsubsection_familywise_tail_bound_on_standardizedCDFOfDifference}

By the CLT, the Gaussian cdf $\Phi$ is an attractor for the cdf of $\hat{d}_S$.
Hence we could attempt to generalize the bound in lemma 
\ref{lemma_upper_bound_on_relError_two_means_gaussian_case}
(Gaussian upper bound on $\relError$)
to non-Gaussian distributions by setting $\standardizedCDFOfDifference = \Phi$, when `$n$ is large enough'.
However, there are significant issues with a naive application of CLT in our context.

\begin{description}

\item[Varying Distributions] Firstly, we are applying the CLT over a family of distributions. 
That is, the random variable $\hat{d}$ is the sample mean of one out of a possible family
of distributions and not one fixed distribution.
Further the family is infinite (indexed by real valued parameters $\mu_{1:2}, \mu_{2:2}$).
It is easy to construct families such that the CLT {\bf requires infinite sample size $n$ to 
converge} to a given tolerance for all members of the family.

\item[Central Convergence vs Tail]
The CLT is known to converge fast near the mean.
And primarily that is how the CLT is used (to construct confidence intervals around the mean);
However we are interested in the tail of the cdf as well.
{\bf Convergence may require vastly more samples than usual}.
\end{description}

To deal with the above challenges we develop a familywise version of the
non-uniform Berry-Esseen theorem for the difference of sample averages.
Our starting point is the following Berry-Esseen theorem:
\ifdraft
Paditz[10] 1989 from 215\_07.pdf by K Neamanee and Thongtha
"Improvement  of non uniform BE via Paditz Sigonov Theorems"
\fi

\begin{theorem}[Non-Uniform Berry-Esseen \cite{PaditzBE}]
\label{theorem:non_uniform_berry_esseen}
Let $X_i$ be independent random variables such that:
\begin{eqnarray*}
\mathbb{E} X_i & = & 0 \\
\sum_{i = 1}^{n} \mathbb{E} X_i^2 & = & 1 \\
\mathbb{E}|X_i|^3 & < & \infty \quad \forall i
\end{eqnarray*}

Then set $W_n := \sum_{i = 1}^{n} X_i$ so $\Var(W_n) = 1$.
And let $F_{W_n}$ denote the cdf of $W_n$.
Then:

\begin{eqnarray*}
|F_{W_n}(x) - \Phi(x)| \leq \frac{C_4}{1 + |x|^3} \sum_{i = 1}^n \mathbb{E} |X_i|^3
\end{eqnarray*}

Where $C_4 < 32$.
\end{theorem}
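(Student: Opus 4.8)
This is the classical non-uniform Berry--Esseen inequality (with the explicit constant $C_4 < 32$ sharpened in \cite{PaditzBE}), so the plan is to cite \cite{PaditzBE} rather than reprove it; for orientation I record the standard line of attack. Write $\rho := \sum_{i=1}^n \mathbb{E}|X_i|^3$. Replacing each $X_i$ by $-X_i$ shows it suffices to treat $x \ge 0$. For $x$ in a fixed bounded interval, say $0 \le x \le 1$, the weight $(1+|x|^3)^{-1}$ is bounded below by $1/2$, so the non-uniform claim reduces there to the \emph{uniform} Berry--Esseen bound $\sup_x|F_{W_n}(x)-\Phi(x)| \le C\rho$. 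The latter I would obtain by Esseen's smoothing inequality --- bounding the Kolmogorov distance by an integral of $|t|^{-1}$ times $|\widehat{F_{W_n}}(t)-e^{-t^2/2}|$ over $[-T,T]$, plus a term of order $1/T$ --- together with the termwise estimate $|\mathbb{E}e^{itX_k}-1+\tfrac12 t^2\mathbb{E}X_k^2| \le \tfrac16|t|^3\mathbb{E}|X_k|^3$ and the bound $|\prod a_k - \prod b_k| \le \sum|a_k-b_k|$ for complex numbers of modulus at most $1$.

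The real content is the tail regime $x > 1$, where both $F_{W_n}(x)$ and $\Phi(x)$ are close to $1$. There I would truncate at level $x$: set $\bar X_k := X_k\mathbf{1}\{|X_k|\le x\}$, recentered to have mean zero. On $\{W_n > x\}$ either some $|X_k|>x$ --- an event of probability at most $x^{-3}\rho$ by Markov applied to third moments --- or the bounded sum $\sum_k\bar X_k$ exceeds $x$ up to a small mean correction. Because the truncated variables lie in $[-x,x]$ they possess exponential moments, so one can apply the method of conjugate (exponentially tilted) distributions: tilt the law of $\sum_k\bar X_k$ by $e^{hy}$ with $h$ of order $x$ chosen so the tilted mean equals $x$, converting the tail event into a \emph{central} event for the tilted law, where the uniform Berry--Esseen estimate for the bounded tilted variables applies; the Radon--Nikodym derivative supplies the factor $e^{-x^2/2}$, which one matches against the genuine Gaussian tail $1-\Phi(x) \le x^{-1}\phi(x)$ (proposition \ref{proposition_gaussian_tail_bound}). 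Carrying this through over dyadic ranges of $x$, and reverting to the crude Chebyshev-type bound $\mathbb{P}(W_n>x) \le x^{-3}\mathbb{E}|W_n|^3$ once $x$ is so large that the Gaussian tail is negligible, assembles to the factor $(1+|x|^3)^{-1}\rho$.

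The main obstacle is not the architecture but the constants: a soft version of the above yields \emph{some} universal $C_4$, but pushing it below $32$ requires optimizing the smoothing radius $T$, the truncation level, and the tilting parameter jointly, and tracking every constant through the Taylor remainders and the tilting (Chernoff) step --- exactly the careful bookkeeping carried out in \cite{PaditzBE}. Since the downstream familywise argument only needs a finite absolute constant, I would simply invoke \cite{PaditzBE} for the sharp form.
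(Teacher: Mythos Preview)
Your proposal is correct and matches the paper exactly: the paper does not prove this theorem at all but simply states it and cites \cite{PaditzBE}, just as you suggest. Your additional sketch of the truncation-plus-exponential-tilting argument is accurate and more than the paper provides, but is not needed for the downstream results, which only use the bound as a black box (via Proposition~\ref{proposition:non_uniform_berry_esseen_for_sample_average}).
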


Then we have a simple corollary.
\begin{proposition}[Non-Uniform Berry Esseen for Sample Average]
\label{proposition:non_uniform_berry_esseen_for_sample_average}
Let $Y_i$ be $n$ i.i.d random variables such that:
\begin{eqnarray*}
\mathbb{E} Y_i & = & \mu \\
\Var(Y_i) & = & \sigma^2 \\
\mathbb{E}|Y_i - \mu|^3 & < & \infty
\end{eqnarray*}

Let $A_n := \frac{1}{n} \sum_{i} Y_i$ be the sample average.
Let $W_n := \frac{A_n - \mu}{\sigma / \sqrt{n}}$ be the standardized version of $A_n$.
And let $F_{W_n}$ denote the cdf of $W_n$.
Then:
\begin{eqnarray*}
|F_{W_n}(x) - \Phi(x)| \leq \frac{C_4}{|x|^3} \frac{\mathbb{E}|Y_i - \mu|^3}{n^{\frac{1}{2}} \sigma^3}
\end{eqnarray*}
\end{proposition}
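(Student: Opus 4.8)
The plan is to derive this as a direct corollary of Theorem~\ref{theorem:non_uniform_berry_esseen} via the standard normalization trick. First I would set
\[
X_i := \frac{Y_i - \mu}{\sigma \sqrt{n}}, \qquad i = 1, \ldots, n,
\]
and check the hypotheses of Theorem~\ref{theorem:non_uniform_berry_esseen}: the $X_i$ are independent (being functions of the independent $Y_i$); $\mathbb{E} X_i = 0$ since $\mathbb{E} Y_i = \mu$; and $\sum_{i=1}^n \mathbb{E} X_i^2 = \sum_{i=1}^n \frac{\Var(Y_i)}{\sigma^2 n} = \sum_{i=1}^n \frac{1}{n} = 1$, so the normalization is exact. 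The finiteness of $\mathbb{E}|X_i|^3 = \frac{\mathbb{E}|Y_i - \mu|^3}{\sigma^3 n^{3/2}}$ follows immediately from the assumed finiteness of $\mathbb{E}|Y_i - \mu|^3$.

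Next I would identify $W_n = \sum_{i=1}^n X_i$ with the standardized sample average appearing in the statement: $\sum_{i=1}^n X_i = \frac{1}{\sigma\sqrt{n}}\sum_{i=1}^n (Y_i - \mu) = \frac{n(A_n - \mu)}{\sigma\sqrt{n}} = \frac{A_n - \mu}{\sigma/\sqrt{n}}$, so the two uses of $W_n$ coincide and $F_{W_n}$ is unambiguous. Applying Theorem~\ref{theorem:non_uniform_berry_esseen} and using that the $Y_i$ are identically distributed (so each $\mathbb{E}|X_i|^3$ is the same) gives
\[
|F_{W_n}(x) - \Phi(x)| \leq \frac{C_4}{1 + |x|^3} \sum_{i=1}^n \mathbb{E}|X_i|^3 = \frac{C_4}{1 + |x|^3}\cdot n \cdot \frac{\mathbb{E}|Y_i - \mu|^3}{\sigma^3 n^{3/2}} = \frac{C_4}{1 + |x|^3}\cdot\frac{\mathbb{E}|Y_i - \mu|^3}{\sigma^3 n^{1/2}}.
\]
Finally, bounding $1 + |x|^3 \geq |x|^3$ (valid for $x \neq 0$; at $x = 0$ the stated right-hand side is $+\infty$ and the bound is vacuous) yields the claimed inequality.

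There is essentially no hard step here: the only points requiring a little care are (i) verifying that the normalization makes $\sum_{i=1}^n \mathbb{E} X_i^2$ \emph{exactly} $1$, so that Theorem~\ref{theorem:non_uniform_berry_esseen} applies verbatim with no slack, and (ii) noting that the passage from $\frac{1}{1+|x|^3}$ to $\frac{1}{|x|^3}$ deliberately loosens the bound — this weaker form is the one that is convenient for the subsequent familywise tail analysis in subsection~\ref{subsubsection_bounding_relErrorGen}, where the factor $|x|^{-3}$ is integrated against contributions in $\delta$. One could retain the sharper $\frac{1}{1+|x|^3}$ factor, but it is not needed downstream.
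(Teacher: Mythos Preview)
Your proof is correct and follows essentially the same route as the paper: define $X_i = (Y_i - \mu)/(\sigma\sqrt{n})$, verify the hypotheses of Theorem~\ref{theorem:non_uniform_berry_esseen}, identify $W_n = \sum_i X_i$, and read off the bound. You are in fact slightly more explicit than the paper in spelling out the final step $1 + |x|^3 \geq |x|^3$, which the paper silently absorbs.
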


\begin{proof}
With the above setup define:
\begin{eqnarray*}
X_i := \frac{Y_i - \mu}{\sqrt{n} \sigma}
\end{eqnarray*}
So
\begin{eqnarray*}
\mathbb{E} |X_i|^3 = \mathbb{E} |\frac{Y_i - \mu}{\sqrt{n} \sigma}|^3
    = \frac{1}{n^{\frac{3}{2} \sigma^3}} \mathbb{E}|Y_i - \mu|^3
\end{eqnarray*}
Thus $X_i$ are i.i.d and
\begin{eqnarray*}
\mathbb{E} X_i & = & 0 \\
\sum_{i = 1}^{n} \mathbb{E} X_i^2 & = & 1 \\
\mathbb{E}|X_i|^3 & < & \infty \quad \forall i \\
\sum_{i = 1}^{n} \mathbb{E} |X_i|^3 & = & \frac{1}{n^{\frac{1}{2} \sigma^3}} \mathbb{E}|Y_i - \mu|^3
\end{eqnarray*}
Further 
\begin{eqnarray*}
W_n = \frac{\left(\frac{1}{n} \sum_i Y_i \right) - \mu}{\sigma/\sqrt{n}} = \sum_i X_i
\end{eqnarray*}
The conditions of theorem \ref{theorem:non_uniform_berry_esseen} are satisfied and 
we have:
\begin{eqnarray*}
|F_{W_n}(x) - \Phi(x)| \leq \frac{C_4}{|x|^3} \frac{\mathbb{E}|Y_i - \mu|^3}{n^{\frac{1}{2}} \sigma^3}
\end{eqnarray*}

\end{proof}

For distance distributions it can be hard to calculate $\mathbb{E}|Y_i - \mu|^3$.
We use the common trick of bounding by a higher moment via Jensens.
But first a technical statement.

\begin{proposition}
\label{proposition:technical_statement_on_expectation}
Let $Y_0$ be a random variable and $g_1$, $g_2$ be functions. Define:
\begin{eqnarray*}
Y_1 & := & g_1(Y_0) \\
Y_2 & := & g_2(Y_1)
\end{eqnarray*}
Then:
\begin{eqnarray*}
\mathbb{E}_{Y_1} g_2(Y_1) & = & \mathbb{E}_{Y_0} g_2(g_1(Y_0))
\end{eqnarray*}
\end{proposition}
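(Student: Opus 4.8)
The plan is to recognize this as nothing more than the law of the unconscious statistician (LOTUS), i.e.\ the change-of-variables formula for a pushforward measure, and to derive it in the standard layered fashion. First I would fix the underlying probability space $(\Omega, \mathcal{F}, \mathbb{P})$ on which $Y_0$ is defined and take as a standing assumption that $g_1$ and $g_2$ are (Borel) measurable, so that $Y_1 = g_1 \circ Y_0$ and $Y_2 = g_2 \circ Y_1$ are genuine random variables. Let $\mu_0$ denote the law of $Y_0$ on $\mathbb{R}$ and let $\mu_1 := (g_1)_* \mu_0$ be the law of $Y_1$, so that by the very definition of the pushforward $\mu_1(B) = \mu_0\big(g_1^{-1}(B)\big)$ for every Borel set $B$.

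Next I would invoke the abstract change-of-variables theorem: for every measurable $h \geq 0$,
\[
\int h \, d\mu_1 \;=\; \int (h \circ g_1) \, d\mu_0 .
\]
This is proved in the usual way: it holds by definition of $\mu_1$ when $h = \mathbf{1}_B$ is an indicator; it extends to nonnegative simple functions by linearity of the integral; and it extends to arbitrary nonnegative measurable $h$ by writing $h$ as an increasing limit of simple functions and applying monotone convergence on both sides. Taking $h = g_2$ then gives
\[
\mathbb{E}_{Y_1}\, g_2(Y_1) \;=\; \int g_2 \, d\mu_1 \;=\; \int (g_2 \circ g_1)\, d\mu_0 \;=\; \mathbb{E}_{Y_0}\, g_2(g_1(Y_0)),
\]
which is exactly the claimed identity. (The first and last equalities are themselves instances of LOTUS relating an expectation back to an integral against a law; alternatively one can bypass $\mu_0,\mu_1$ entirely and note that both sides equal $\int_\Omega g_2(g_1(Y_0(\omega)))\, d\mathbb{P}(\omega)$, so the statement is essentially a tautology once measurability is granted.)

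There is no real obstacle here; the only point requiring a little care is the hypothesis set so that both sides are well-defined. The cleanest route, and the one matching how the result is used in the sequel, is to state and prove it for nonnegative $g_2$ (it will be applied to quantities of the form $|Y_i - \mu|^3$ in Proposition~\ref{proposition:non_uniform_berry_esseen_for_sample_average}), in which case both sides live in $[0,\infty]$ and no integrability assumption is needed. If a signed $g_2$ is desired, one simply adds the hypothesis $\mathbb{E}\,|g_2(g_1(Y_0))| < \infty$ and applies the nonnegative case separately to the positive and negative parts $g_2^{+}$ and $g_2^{-}$, then subtracts.
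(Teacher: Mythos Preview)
Your proposal is correct and, at its core, matches the paper's argument: the paper simply asserts the two identities $\mathbb{E}_{Y}\,g(Y)=\mathbb{E}_{g(Y)}\,g(Y)$ and $\mathbb{E}_{Z}\,h(Z)=\mathbb{E}_{g(Y)}\,h(g(Y))$ for $Z=g(Y)$, then shows both sides equal $\mathbb{E}_{g_2(g_1(Y_0))}[g_2(g_1(Y_0))]$, which is exactly your parenthetical observation that both sides are $\int_\Omega g_2(g_1(Y_0(\omega)))\,d\mathbb{P}(\omega)$. Your write-up is more careful about measurability and the nonnegativity/integrability hypothesis than the paper, but the underlying idea is the same LOTUS/pushforward tautology.
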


\begin{proof}
Clearly, if $g$ is a function and $Y$ is a random variable, we have:
\begin{eqnarray}
\label{equation:expectation_statement_1}
\mathbb{E}_{Y} g(Y) & = & \mathbb{E}_{g(Y)} g(Y)
\end{eqnarray}
If $h$ is a function and define r.v $Z := g(Y)$, we have:
\begin{eqnarray}
\label{equation:expectation_statement_2}
\mathbb{E}_{Z}h(Z) = \mathbb{E}_{g(Y)} h(g(Y))
\end{eqnarray}
By using the above two statements:
\begin{eqnarray*}
\mathbb{E}_{Y_1} (g_2(Y_1)) & = & \mathbb{E}_{g_1(Y_0)} (g_2(g_1(Y_0))) \\
    & = & \mathbb{E}_{g_2(g_1(Y_0))} (g_2(g_1(Y_0)))
\end{eqnarray*}
And:
\begin{eqnarray*}
\mathbb{E}_{Y_0} (g_2(g_1(Y_0))) & = & \mathbb{E}_{g_2(g_1(Y_0))} (g_2(g_1(Y_0))) \\
\end{eqnarray*}
\end{proof}

\begin{proposition}[Upper Bound of Third Absolute Moment]
\label{proposition:upper_bound_of_third_absolute_moment}
\begin{eqnarray*}
\mathbb{E}_{Y}|Y|^3 \leq \left( \mathbb{E}_Y(Y^4) \right)^{\frac{3}{4}}
\end{eqnarray*}
\end{proposition}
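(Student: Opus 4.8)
This is a direct instance of Jensen's inequality (equivalently, Lyapunov's moment inequality), so the whole argument is short. The plan is to apply Jensen to the convex map $\phi(t) := t^{4/3}$ on $[0, \infty)$, evaluated at the non-negative random variable $Z := |Y|^3$. First I would record that $\phi$ is indeed convex on $[0,\infty)$: its second derivative $\tfrac{4}{9} t^{-2/3}$ is non-negative there. Since $Z \geq 0$ almost surely, $\phi(Z)$ is well defined, and Jensen gives $\phi(\mathbb{E} Z) \leq \mathbb{E}\, \phi(Z)$, i.e. $\left(\mathbb{E} |Y|^3\right)^{4/3} \leq \mathbb{E}\left[\left(|Y|^3\right)^{4/3}\right]$.

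Next I would simplify the right-hand side. To turn $\mathbb{E}\left[\left(|Y|^3\right)^{4/3}\right]$ into $\mathbb{E}|Y|^4$ rigorously as a statement about expectations of composed functions, I would invoke Proposition \ref{proposition:technical_statement_on_expectation} with $Y_0 = Y$, $g_1(y) = |y|^3$ and $g_2(z) = z^{4/3}$, which yields $\mathbb{E}_{Y_1} g_2(Y_1) = \mathbb{E}_{Y_0}\, g_2(g_1(Y_0)) = \mathbb{E}_Y |Y|^{4}$. Since $|Y|^4 = Y^4$ pointwise, this equals $\mathbb{E}_Y(Y^4)$, so we have obtained $\left(\mathbb{E}_Y |Y|^3\right)^{4/3} \leq \mathbb{E}_Y(Y^4)$.

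Finally I would raise both sides to the power $3/4$, which is a strictly increasing operation on $[0,\infty)$ (and both sides are non-negative), to conclude $\mathbb{E}_Y |Y|^3 \leq \left(\mathbb{E}_Y(Y^4)\right)^{3/4}$, as claimed.

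\textbf{Main obstacle.} There really is none; the only point deserving a line of care is finiteness of the moments, so that Jensen and the exponentiation are legitimate. If $\mathbb{E}_Y(Y^4) = \infty$ the inequality is trivial; if it is finite, then the Jensen step already shows $\mathbb{E}_Y |Y|^3 < \infty$, so no degenerate case arises and the proof goes through verbatim.
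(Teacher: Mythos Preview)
Your proof is correct and is essentially the same as the paper's: both are one-line applications of Jensen's inequality, the paper using the concave map $x \mapsto x^{3/4}$ on $X = Y^4$ to get $\mathbb{E}|Y|^3 = \mathbb{E}[(Y^4)^{3/4}] \leq (\mathbb{E} Y^4)^{3/4}$ directly, while you use the dual convex map $t \mapsto t^{4/3}$ on $Z = |Y|^3$ and then take the $3/4$-th root. The two formulations are interchangeable and the paper likewise invokes Proposition~\ref{proposition:technical_statement_on_expectation} to justify the change-of-variable in the expectation.
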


\begin{proof}
Recall Jensen's.
When $g$ is concave and finite:
\begin{eqnarray*}
\mathbb{E}_{X}[g(X)] \leq g(\mathbb{E}_X[X])
\end{eqnarray*}
Let $g(x) = x^{\frac{3}{4}}$. Then $g(x)$ is concave and we have:
\begin{eqnarray*}
\mathbb{E}_X X^{\frac{3}{4}} \leq \left( \mathbb{E}_X(X) \right)^{\frac{3}{4}}
\end{eqnarray*}
Set $X := Y^4$
\begin{eqnarray*}
\mathrm{LHS} & = & \mathbb{E}_{Y^4} \left( Y^4\right)^{\frac{3}{4}} \\
    & = & \mathbb{E}_{Y}(Y^4)^{\frac{3}{4}} \quad \quad (\mathrm{proposition}\ \ref{proposition:technical_statement_on_expectation}) \\
    & = & \mathbb{E}_{Y}(|Y|^4)^{\frac{3}{4}} \\
    & = & \mathbb{E}_Y |Y|^3
\end{eqnarray*}
\begin{eqnarray*}
\mathrm{RHS} & = & \left( \mathbb{E}_{Y^4} (Y^4) \right)^{\frac{3}{4}} \\
    & = & \left( \mathbb{E}_{Y} Y^4 \right)^{\frac{3}{4}} \quad \quad (\mathrm{proposition}\ \ref{proposition:technical_statement_on_expectation})
\end{eqnarray*}

\end{proof}

\begin{proposition}[Upper Bound of Third Absolute Moment of a Standardized Difference]
\label{proposition_upper_bound_of_third_absolute_moment_of_a_standardized_difference}
Let $\kappa_{Y} := \mathbb{E} \left( \frac{Y - \mu_Y}{\sigma_Y} \right)^4$ denote the kurtosis of a
random variable $Y$.
Let $Y_1, Y_2$ be independent random variables with $\sigma_{Y_i}, \kappa_{Y_i} < \infty$.
and set $Y := Y_2 - Y_1$. Then:
\begin{eqnarray*}
\kappa_{Y} 
    & = &
    \frac{\sigma_{Y_2}^4 \kappa_{Y_2} + 6 \sigma_{Y_2}^2 \sigma_{Y_1}^2 + \sigma_{Y_1}^4 \kappa_{Y_1}}
    {(\sigma_{Y_2}^2  + \sigma_{Y_1}^2)^2} \\
\mathbb{E} \frac{|Y - \mu_Y|^3}{\sigma_Y^3} & \leq & 
    \left(
    \frac{[\sigma_{Y_2}^4 \kappa_{Y_2} + 6 \sigma_{Y_2}^2 \sigma_{Y_1}^2 + \sigma_{Y_1}^4 \kappa_{Y_1}]}
         {(\sigma_{Y_2}^2 + \sigma_{Y_1}^2)^2}
    \right)^{\frac{3}{4}}
\end{eqnarray*}
\end{proposition}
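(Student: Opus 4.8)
The plan is to reduce the claim to a direct moment computation for $Y = Y_2 - Y_1$ after centering, and then invoke Proposition \ref{proposition:upper_bound_of_third_absolute_moment} applied to the standardized variable.

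First I would note that both the kurtosis and the centered third absolute moment are invariant under translation of $Y_1$ and $Y_2$, so without loss of generality I may assume $\mathbb{E} Y_1 = \mathbb{E} Y_2 = 0$; then $\mu_Y = 0$ and $Y = Y_2 - Y_1$. By independence, $\sigma_Y^2 = \Var(Y_2 - Y_1) = \sigma_{Y_2}^2 + \sigma_{Y_1}^2$. Next I would expand $\mathbb{E}[(Y_2 - Y_1)^4]$ by the binomial theorem: the cross terms $\mathbb{E}[Y_2^3 Y_1]$ and $\mathbb{E}[Y_2 Y_1^3]$ vanish by independence together with the zero-mean assumption, while $\mathbb{E}[Y_2^2 Y_1^2] = \mathbb{E}[Y_2^2]\mathbb{E}[Y_1^2] = \sigma_{Y_2}^2 \sigma_{Y_1}^2$. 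Using $\mathbb{E}[Y_i^4] = \kappa_{Y_i} \sigma_{Y_i}^4$ (legitimate since the $Y_i$ are now centered), this gives $\mathbb{E}[Y^4] = \kappa_{Y_2}\sigma_{Y_2}^4 + 6\sigma_{Y_2}^2\sigma_{Y_1}^2 + \kappa_{Y_1}\sigma_{Y_1}^4$, and dividing by $\sigma_Y^4 = (\sigma_{Y_2}^2 + \sigma_{Y_1}^2)^2$ yields the stated formula for $\kappa_Y$.

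For the third-absolute-moment inequality I would apply Proposition \ref{proposition:upper_bound_of_third_absolute_moment} not to $Y$ directly but to its standardized version $\tilde Y := (Y - \mu_Y)/\sigma_Y$. This gives $\mathbb{E}|\tilde Y|^3 \le (\mathbb{E}[\tilde Y^4])^{3/4}$, and $\mathbb{E}[\tilde Y^4]$ is by definition the kurtosis $\kappa_Y$, so $\mathbb{E}|\tilde Y|^3 \le \kappa_Y^{3/4}$. Rewriting $\mathbb{E}|\tilde Y|^3 = \mathbb{E}|Y - \mu_Y|^3 / \sigma_Y^3$ and substituting the formula for $\kappa_Y$ derived above produces exactly the claimed bound.

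There is no substantive obstacle here; the proposition is essentially bookkeeping. The only points requiring care are (i) performing the Jensen-type bound of Proposition \ref{proposition:upper_bound_of_third_absolute_moment} on the standardized variable, so that the fourth-moment side turns into the kurtosis rather than a raw moment, and (ii) correctly tracking which cross moments survive in the expansion of $\mathbb{E}[(Y_2 - Y_1)^4]$ — the independence plus zero-mean reduction kills all the odd cross terms and leaves only the $6\sigma_{Y_2}^2\sigma_{Y_1}^2$ mixed term.
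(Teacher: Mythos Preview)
Your proposal is correct and follows essentially the same approach as the paper: establish the Jensen-type bound $\mathbb{E}|Y-\mu_Y|^3/\sigma_Y^3 \le \kappa_Y^{3/4}$ by applying Proposition~\ref{proposition:upper_bound_of_third_absolute_moment} to the standardized variable, and separately derive the explicit formula for $\kappa_Y$. The only cosmetic difference is in the kurtosis computation: the paper invokes the general cokurtosis expansion for $\kappa_{X_1+X_2}$ and then specializes to the independent case, whereas you center first and expand $(Y_2-Y_1)^4$ directly via the binomial theorem; your route is slightly more elementary but arrives at the same identity by the same mechanism (independence killing the odd cross moments).
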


\begin{proof}
Define the standardized version of $Y$ as $X := \frac{Y - \mu_Y}{\sigma_Y}$. Hence:
\begin{eqnarray*}
\mathbb{E}|X|^3 & = & \mathbb{E} \frac{|Y - \mu_Y|^3}{\sigma_Y^3} \\
\mathbb{E} X^4  & = & \mathbb{E} \left( \frac{Y - \mu_Y}{\sigma_Y} \right)^4 = \kappa_Y
\end{eqnarray*}
Then by proposition \ref{proposition:upper_bound_of_third_absolute_moment} we get:
\begin{eqnarray}
\label{equation:upper_bound_on_standardized_third_absolute_moment}
\mathbb{E} \frac{|Y - \mu_Y|^3}{\sigma_Y^3} & \leq & (\kappa_Y)^{\frac{3}{4}}
\end{eqnarray}
Next we derive a formula for $\kappa_Y$. 
The standard formula for kurtosis of sum of 2 random variables is:
$X_1, X_2$:
\begin{eqnarray*}
\kappa_{X_1 + X_2} & = &
 \frac{1}{\sigma_{X_1 + X_2}^4} \Big [
    \sigma_{X_1}^4 \kappa_{X_1} \\
    & & + 4 \sigma_{X_1}^3 \sigma_{X_2} \kappa(X_1, X_1, X_1, X_2) \\
    & & + 6 \sigma_{X_1}^2 \sigma_{X_2}^2 \kappa(X_1, X_1, X_2, X_2) \\
    & & + 4 \sigma_{X_1} \sigma_{X_2}^3 \kappa(X_1, X_2, X_2, X_2) + \sigma_{X_2}^4 \kappa_{X_2}
   \Big ]
\end{eqnarray*}
Where $\kappa(,,,)$ denotes the cokurtosis function. For independent random variables:
\begin{eqnarray*}
\kappa(X_1, X_1, X_1, X_2) & = & 0 \\
\kappa(X_1, X_2, X_2, X_2) & = & 0 \\
\kappa(X_1, X_1, X_2, X_2) & = & 1
\end{eqnarray*}
So
\begin{eqnarray*}
\kappa_{X_1 + X_2} & = & \frac{1}{\sigma_{X_1 + X_2}^4}
     \left[
     \sigma_{X_1}^4 \kappa_{X_1} + 6 \sigma_{X_1}^2 \sigma_{X_2}^2 + \sigma_{X_2}^4 \kappa_{X_2}
     \right]
\end{eqnarray*}
Now set $X_1 = Y_2$ and $X_2 = -Y_1$ to get:
\begin{eqnarray*}
\kappa_{Y_2 - Y_1} & = & \frac{1}{\sigma_{Y_2 - Y_1}^4}
     \left[
     \sigma_{Y_2}^4 \kappa_{Y_2} + 6 \sigma_{Y_2}^2 \sigma_{-Y_1}^2 + \sigma_{-Y_1}^4 \kappa_{-Y_1}
     \right]
\end{eqnarray*}
But $\sigma_{-Y_1} = \sigma_{Y_1}$, $\kappa_{-Y_1} = \kappa_{Y_1}$ and
$\sigma_{Y_2 - Y_1} = \sqrt{\sigma_{Y_2}^2 + \sigma_{Y_1}^2}$ so:
\begin{eqnarray*}
\kappa_{Y_2 - Y_1}  & = & \frac{1}{(\sigma_{Y_2}^2  + \sigma_{Y_1}^2)^2}
    \left[ \sigma_{Y_2}^4 \kappa_{Y_2} + 6 \sigma_{Y_2}^2 \sigma_{Y_1}^2 + \sigma_{Y_1}^4 \kappa_{Y_1} \right]
\end{eqnarray*}

Combining with equation \ref{equation:upper_bound_on_standardized_third_absolute_moment} we get:
\begin{eqnarray*}
\mathbb{E}\frac{|Y - \mu_{Y}|^3}{\sigma_Y^3} \leq \quad \quad \quad & & \\
    \left(
    \frac{1}{(\sigma_{Y_2}^2 + \sigma_{Y_1}^2)^2}
    \left[ \sigma_{Y_2}^4 \kappa_{Y_2} + 6 \sigma_{Y_2}^2 \sigma_{Y_1}^2 + \sigma_{Y_1}^4 \kappa_{Y_1} \right]
    \right)^{\frac{3}{4}}
\end{eqnarray*}
\end{proof}

\begin{proposition}[Familywise Upper Bound on Third Absolute Moment of a Standardized Difference]
\label{proposition_familywise_upper_bound_on_third_absolute_moment_of_standardized_difference}
Given a pair $\mu_{1:2}, \mu_{2:2} \in [0, \infty)$,
Let D be as defined in \ref{definition_difference_of_samples}.
Let the distribution family $\mathcal{F}$ be such that the kurtosis of any member 
of the family is upper-bounded (possibly tightly) by a constant $\kurtosisUB$.
Then:

\begin{eqnarray*}
\kappa_D & \leq & \max(3, \kurtosisUB) \\
\mathbb{E} \frac{|D - \mu_D|^3}{\sigma_D^3} & \leq & 
    \left(
    \max(3, \kurtosisUB)
    \right)^{\frac{3}{4}}
\end{eqnarray*}
\end{proposition}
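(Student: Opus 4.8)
The plan is to combine the exact kurtosis formula for a difference of two independent variables, already established in Proposition~\ref{proposition_upper_bound_of_third_absolute_moment_of_a_standardized_difference}, with an elementary one-variable optimization over the component variances.

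First I would identify $Y_1 := E_{1:2,1}$ and $Y_2 := E_{2:2,1}$, so that $D$ (definition~\ref{definition_difference_of_samples}) has the law of $Y_2 - Y_1$. These are independent by the standing assumption $E_{i,j}\perp E_{i',j'}$ for $(i,j)\neq(i',j')$, and each has finite variance and fourth moment since $\kappa_{Y_1},\kappa_{Y_2}\leq\kurtosisUB$; moreover $\sigma_{Y_1}^2+\sigma_{Y_2}^2>0$ for a nondegenerate family, so the standardization is well defined. Proposition~\ref{proposition_upper_bound_of_third_absolute_moment_of_a_standardized_difference} then gives
\[
\kappa_D=\frac{\sigma_{Y_2}^4\kappa_{Y_2}+6\sigma_{Y_2}^2\sigma_{Y_1}^2+\sigma_{Y_1}^4\kappa_{Y_1}}{(\sigma_{Y_2}^2+\sigma_{Y_1}^2)^2}.
\]
Bounding $\kappa_{Y_1},\kappa_{Y_2}$ by $M:=\kurtosisUB$ and writing $t:=\sigma_{Y_2}^2/(\sigma_{Y_2}^2+\sigma_{Y_1}^2)\in[0,1]$, this yields $\kappa_D\leq M\bigl(t^2+(1-t)^2\bigr)+6t(1-t)$.

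The key step is the substitution $u:=t(1-t)\in[0,\tfrac14]$, which turns the right-hand side into the affine expression $M(1-2u)+6u=M+(6-2M)u$. When $M\geq 3$ the slope $6-2M$ is nonpositive, so the maximum over $u\in[0,\tfrac14]$ is attained at $u=0$ and equals $M$; when $M<3$ the maximum is at $u=\tfrac14$ and equals $\tfrac{M+3}{2}<3$. In either case $\kappa_D\leq\max(3,M)$. Finally, applying Proposition~\ref{proposition:upper_bound_of_third_absolute_moment} to the standardized variable $X:=(D-\mu_D)/\sigma_D$ gives $\mathbb{E}|X|^3\leq(\mathbb{E}X^4)^{3/4}=\kappa_D^{3/4}$, and since $x\mapsto x^{3/4}$ is increasing on $[0,\infty)$ we conclude $\mathbb{E}|D-\mu_D|^3/\sigma_D^3\leq(\max(3,\kurtosisUB))^{3/4}$.

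The only genuinely nonroutine move is the reparametrization $u=t(1-t)$: a crude bound such as $6\sigma_{Y_2}^2\sigma_{Y_1}^2\leq 3(\sigma_{Y_2}^4+\sigma_{Y_1}^4)$ only produces the weaker constant $M+3$, so one must exploit the exact coefficients in the kurtosis formula to collapse the estimate to $\max(3,M)$. Everything else is bookkeeping with the earlier propositions.
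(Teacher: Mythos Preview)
Your argument is correct and follows essentially the same route as the paper: both use the exact kurtosis formula from Proposition~\ref{proposition_upper_bound_of_third_absolute_moment_of_a_standardized_difference}, bound the individual kurtoses, and rewrite the result as $M + (6-2M)\cdot\dfrac{\sigma_{Y_1}^2\sigma_{Y_2}^2}{(\sigma_{Y_1}^2+\sigma_{Y_2}^2)^2}$, which is exactly your $M+(6-2M)u$. The only organizational difference is that the paper replaces each $\kappa_{Y_i}$ by $\max(3,\kurtosisUB)$ at the outset, so the coefficient $6-2M$ is automatically nonpositive and no case split is needed, whereas you bound by $\kurtosisUB$ and then split on $M\gtrless 3$; the algebra and the resulting constant are identical.
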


\begin{proof}

Define a new familywise upper bound on the kurtosis

\begin{eqnarray*}
\kurtosisUBTwo := \max(3, \kurtosisUB) \geq \kurtosisUB
\end{eqnarray*}

Given an arbitrary pair $\mu_{1:2}, \mu_{2:2}$,

Consider an arbitrary $\mu_{1:2}, \mu_{2:2}$ pair.
Consider $D$ as in definition $\ref{definition_difference_of_samples}$.
$D = E_{2:2} - E_{1:2}$ and $E_{2:2} \perp E_{1:2}$.
Let $\sigma_{i:2}^2 = \Var{E_{i:2}}$ and $\kappa_{i:2} = \kappa_{E_{i:2}}$.
Since $\sigma_{i:2} < \infty$ (finite function of mean) \& $\kappa_{i:2} \leq \kurtosisUBTwo < \infty$,
we can apply proposition \ref{proposition_upper_bound_of_third_absolute_moment_of_a_standardized_difference} to get:
\begin{eqnarray*}
\mathbb{E} \frac{|D - \mu_{D}|^3}{\sigma_{D}^3} & \leq &
    \left(
    \frac{\sigma_{2:2}^4 \kappa_{2:2} + 6 \sigma_{2:2}^2 \sigma_{1:2}^2 + \sigma_{1:2}^4 \kappa_{1:2}}
    {(\sigma_{2:2}^2 + \sigma_{1:2}^2)^2}
    \right)^{\frac{3}{4}}
\end{eqnarray*}

\begin{eqnarray*}
\kappa_D(\mu_{1:2}, \mu_{2:2}) 
    & = &
    \left(
    \frac{\sigma_{2:2}^4 \kappa_{2:2} + 6 \sigma_{2:2}^2 \sigma_{1:2}^2 + \sigma_{1:2}^4 \kappa_{1:2}}
    {(\sigma_{2:2}^2 + \sigma_{1:2}^2)^2}
    \right)
\end{eqnarray*}

Clearly this expression is strictly increasing in $\kappa_{i:2}$, keeping everything else fixed.
So we have:
\begin{eqnarray*}
\kappa_D(\mu_{1:2}, \mu_{2:2}) 
    & \leq &
    \left(
    \frac{\sigma_{2:2}^4 \kurtosisUBTwo + 6 \sigma_{2:2}^2 \sigma_{1:2}^2 + \sigma_{1:2}^4 \kurtosisUBTwo}
    {(\sigma_{2:2}^2 + \sigma_{1:2}^2)^2}
    \right) \\
    & = &
    \left(
    \frac{\sigma_{2:2}^4 \kurtosisUBTwo + 6 \sigma_{2:2}^2 \sigma_{1:2}^2 + \sigma_{1:2}^4 \kurtosisUBTwo}
    {(\sigma_{2:2}^2 + \sigma_{1:2}^2)^2}
    \right) \\
    & = & 
    \kurtosisUBTwo + 
    \left(
    \frac{
    \sigma_{2:2}^2 \sigma_{1:2}^2 (6 - 2 \kurtosisUBTwo)
    }
    {(\sigma_{2:2}^2 + \sigma_{1:2}^2)^2}
    \right) \\
    & \leq & \kurtosisUBTwo
\end{eqnarray*}

\end{proof}

Now we bound the difference between the functions $\standardizedCDFOfDifference$ and $\Phi$.

\begin{proposition}[Non Uniform Berry Esseen for $\standardizedCDFOfDifference$]
\label{proposition:non_uniform_berry_esseen_for_standardizedCDFOfDifference}:
Consider $\standardizedCDFOfDifference$ as defined in proposition \ref{proposition_2D_relative_error_formula}.
Given a distribution $\mathcal{E}(\mu) \in \mathcal{F}$, let $\kappa(\mu)$ denote it's kurtosis as a function of $\mu$.
Further, let:

\begin{eqnarray*}
\kappa(\mu) \leq \kurtosisUB  \quad \quad \forall \mu \in [\gamma, \infty)
\end{eqnarray*}

And define:

\begin{eqnarray*}
C_5 & := & (\max(3, \kurtosisUB))^{\frac{3}{4}} \\
C_6 & := & C_4 C_5
\end{eqnarray*}

Then for any $\mu_1, \mu_2 \in [\gamma, \infty)$:

\begin{eqnarray*}
|\standardizedCDFOfDifference(x) - \Phi(x)| \leq
    \frac{C_6}{|x|^3 n^{\frac{1}{2}}}
\end{eqnarray*}
\end{proposition}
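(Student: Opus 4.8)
The plan is to recognize that $\hat d$ is itself a sample average, and then simply chain together the two non-uniform Berry--Esseen results already established. First I would observe that, writing $D_j = E_{2:2,j} - E_{1:2,j}$ as in Definition \ref{definition_difference_of_samples}, we have $\hat d = \hat\mu_{2:2} - \hat\mu_{1:2} = \frac{1}{n}\sum_{j=1}^n D_j$, so $\hat d$ is the sample average of the $D_j$. The independence hypothesis $E_{i,j}\perp E_{i',j'}$ makes the $D_j$ i.i.d., with common mean $\mu_D = \mathbb{E}(D)$ and variance $\sigma_D^2 = \Var(D)$; both are finite since the variance of any member of $\mathcal F$ equals $\alpha\mu^\beta + k < \infty$, and $\sigma_D^2 > 0$ follows from the power-variance constraints (e.g.\ $\alpha\gamma^\beta \ge -k$). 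Moreover $\mathbb{E}|D - \mu_D|^3 < \infty$: the kurtosis of $D$ is finite (Proposition \ref{proposition_familywise_upper_bound_on_third_absolute_moment_of_standardized_difference}), so $\mathbb{E}(D-\mu_D)^4 < \infty$, and then Proposition \ref{proposition:upper_bound_of_third_absolute_moment} bounds the third absolute moment by the $3/4$ power of the fourth.

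Next I would apply Proposition \ref{proposition:non_uniform_berry_esseen_for_sample_average} with $Y_i = D_j$, $\mu = \mu_D$, $\sigma = \sigma_D$. The standardized version of $\hat d$ appearing there is exactly the $\tilde d$ of Proposition \ref{proposition_2D_relative_error_formula}, so its cdf coincides with $\standardizedCDFOfDifference$, yielding
\[
|\standardizedCDFOfDifference(x) - \Phi(x)| \;\le\; \frac{C_4}{|x|^3}\,\frac{\mathbb{E}|D - \mu_D|^3}{n^{1/2}\,\sigma_D^3}.
\]
Then I would invoke Proposition \ref{proposition_familywise_upper_bound_on_third_absolute_moment_of_standardized_difference}: since $\mu_1,\mu_2\in[\gamma,\infty)$ places both $\mathcal E(\mu_1)$ and $\mathcal E(\mu_2)$ in $\mathcal F$, and every member has kurtosis at most $\kurtosisUB$, it gives $\mathbb{E}|D-\mu_D|^3/\sigma_D^3 \le (\max(3,\kurtosisUB))^{3/4} = C_5$. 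Substituting this into the display yields $|\standardizedCDFOfDifference(x) - \Phi(x)| \le C_4 C_5/(|x|^3 n^{1/2}) = C_6/(|x|^3 n^{1/2})$, the desired bound.

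There is no real obstacle here: the proposition is essentially a repackaging of the two preceding ones. The only points requiring a little care are the bookkeeping identification --- confirming that the ``$F_{W_n}$'' of Proposition \ref{proposition:non_uniform_berry_esseen_for_sample_average} is the same object as the $\standardizedCDFOfDifference$ of Proposition \ref{proposition_2D_relative_error_formula}, namely the cdf of the standardized $\hat d$ --- and checking, uniformly over all pairs $(\mu_1,\mu_2)$ in range, that the moment hypotheses of the Berry--Esseen corollary (finite $\sigma_D$, finite $\mathbb{E}|D-\mu_D|^3$) genuinely hold, which is precisely where the familywise kurtosis bound $\kurtosisUB$ does its work.
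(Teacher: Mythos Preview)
Your proposal is correct and follows essentially the same route as the paper: identify $\hat d$ as the sample average of the i.i.d.\ $D_j$, apply the non-uniform Berry--Esseen corollary (Proposition \ref{proposition:non_uniform_berry_esseen_for_sample_average}) with $Y_i = D_i$ so that $W_n = \tilde d$ and $F_{W_n} = \standardizedCDFOfDifference$, and then bound the standardized third absolute moment by $C_5$ via Proposition \ref{proposition_familywise_upper_bound_on_third_absolute_moment_of_standardized_difference}. Your extra remarks on why $\sigma_D^2 > 0$ and the third moment is finite are slightly more detailed than the paper's version but do not change the argument.
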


\begin{proof}
Given a pair $\mu_1, \mu_2 \in [\gamma, \infty)$,
by proposition \ref{proposition_familywise_upper_bound_on_third_absolute_moment_of_standardized_difference}:
\begin{eqnarray*}
\mathbb{E} \frac{|D - \mu_D|^3}{\sigma_D^3} & \leq & C_5
\end{eqnarray*}
And since $\sigma_D < \infty$
\begin{eqnarray*}
\mathbb{E} |D - \mu_{D}|^3 \leq \sigma_D^3 C_5 < \infty
\end{eqnarray*}

Additionally $D_i$ are iid. We can now apply proposition 
\ref{proposition:non_uniform_berry_esseen_for_sample_average}
by setting $Y_i :=  D_i$. We get:
\begin{eqnarray*}
A_n          & = & \hat{d} \\
W_n          & = & \tilde{d} \\
|\standardizedCDFOfDifference(x) - \Phi(x)| & \leq & 
    \frac{C_4}{|x|^3 n^{\frac{1}{2}}}
    \mathbb{E} \frac{|D - \mu_{D}|^3}{\sigma_{D}^3}
\end{eqnarray*}
and finally:
\begin{eqnarray*}
|\standardizedCDFOfDifference(x) - \Phi(x)|
    \leq \frac{C_4 C_5}{|x|^3 n^{\frac{1}{2}}}
    = \frac{C_6}{|x|^3 n^{\frac{1}{2}}}
\end{eqnarray*}
\end{proof}

\subsubsection{Bounding $\relErrorGen$}
\label{subsubsection_bounding_relErrorGen}

Herafter we assume $\mathcal{F}$ is such that the kurtosis are uniformly bounded:

\begin{eqnarray*}
\kappa(\mu) \leq \kurtosisUB  \quad \quad \forall \mu \in [\gamma, \infty)
\end{eqnarray*}

Then by the above proposition $\forall x < 0$:

\begin{eqnarray*}
\standardizedCDFOfDifference(x)
    & < & \Phi(x) + \frac{C_6}{|x|^3 n^{\frac{1}{2}}} \\
    & \leq &
    \frac{1}{-x} \frac{\exp\left(- \frac{x^2}{2}\right)}{\sqrt{2 \pi}} + \frac{C_6}{|x|^3 n^{\frac{1}{2}}} \\
    & & \quad (\mathrm{by\ proposition\ }\ref{proposition_gaussian_tail_bound}\ \mathrm{Gaussian\ tail\ bound})
\end{eqnarray*}

Now $\zscoreOfZeroUBTwoInverse(x) > 0 \ \ \forall x \in (0, -\infty)$. So:

\begin{alignat*}{1}
\zscoreOfZeroUBTwoInverse(x) & \standardizedCDFOfDifference(x) <  \\
    & \zscoreOfZeroUBTwoInverse(x)
    \left[
        \frac{1}{\sqrt{2 \pi}} \frac{\exp\left(- \frac{x^2}{2}\right)}{-x}
        + \frac{C_6}{|x|^3 n^{\frac{1}{2}}}
    \right]
\end{alignat*}

And we have the required $G$ satisfying equation \ref{equation_definition_of_G}:

\begin{eqnarray*}
\GGen(x)
    & = &
    \zscoreOfZeroUBTwoInverse(x)
    \left[
        \frac{1}{\sqrt{2 \pi}} \frac{\exp\left(- \frac{x^2}{2}\right)}{-x}
        + \frac{C_6}{|x|^3 n^{\frac{1}{2}}}
    \right] \\
    & & \quad \quad \forall x < 0
\end{eqnarray*}

We split this into two terms:

\begin{eqnarray*}
\GGen(x) = \GNormal(x) + \GBE(x)
\end{eqnarray*}

Where:

\begin{eqnarray*}
\GBE(x) = 
    \zscoreOfZeroUBTwoInverse(x) \frac{C_6}{|x|^3 n^{\frac{1}{2}}}
    \quad \quad \forall x < 0
\end{eqnarray*}

Correspondingly define the relative error components:

\begin{eqnarray*}
\relErrorUBThreeBE(\delta)
    & := & \GBE(\zscoreOfZeroUBTwo(\delta)) \\
    & = & \zscoreOfZeroUBTwoInverse(\zscoreOfZeroUBTwo(\delta))
         \frac{C_6}{|\zscoreOfZeroUBTwo(\delta)|^3 n^{\frac{1}{2}}}
         \quad \quad \forall \delta > 0 \\
    & = & \delta
         \frac{C_6}{|\zscoreOfZeroUBTwo(\delta)|^3 n^{\frac{1}{2}}}
\end{eqnarray*}

So:

\begin{eqnarray*}
\relErrorUBThreeBE(\delta)
    & = &
    \left\{
    \begin{array}{ll}
    \frac{C_6}{C_2^3 n^2} \delta^{-2} & 0 < \delta \leq \delta_c \\
    \frac{C_6}{C_3^3 n^2} \delta^{\frac{3 \beta - 4}{2}} & \delta \geq \delta_c
    \end{array}
    \right.
\end{eqnarray*}

Easy to see that $\relErrorUBThreeBE$ is strictly decreasing when $\beta < \frac{4}{3} = 1.33\ldots$

Now we define:

\begin{eqnarray*}
\relErrorUBThreeGen(\delta)
    & := & \GGen(\zscoreOfZeroUBTwo(\delta)) \\
    & = & \GNormal(\zscoreOfZeroUBTwo(\delta)) + \GBE(\zscoreOfZeroUBTwo(\delta)) \\
    & = & \relErrorUBThreeNormal(\delta) + \relErrorUBThreeBE(\delta)
\end{eqnarray*}

$\relErrorUBThreeGen(\delta)$ is strictly decreasing when equation
\ref{equation_relErrorUBThreeNormal_monotonicity_requirement} is satisfied and $\beta > \frac{4}{3}$.
Henceforth we will assume these conditions are satisfied.

We now seek a formula to invert $\relErrorUBThreeBE$. Let

\begin{eqnarray*}
f_1(\delta) = \frac{C_6}{C_2^3 n^2} \delta^{-2} \quad \quad \forall \delta > 0
\end{eqnarray*}

Then $\range(f_1) = (\infty, 0) = \range(\relErrorUBThreeBE)$.
So, given $T > 0$ we have the inverse:

\begin{eqnarray*}
f_1^{-1}(T) = \frac{C_6^{\frac{1}{2}}}{C_2^{\frac{3}{2}} n T^{\frac{1}{2}}}
\end{eqnarray*}

We have:

\begin{eqnarray*}
\relErrorUBThreeBEInverse(T) = f_1^{-1}(T) \quad \quad \mathrm{when\ } f_1^{-1}(T) \leq  \delta_c
\end{eqnarray*}

The condition $f_1^{-1}(T) \leq  \delta_c$ is equivalent to:

\begin{eqnarray*}
n \geq \frac{C_6^{\frac{1}{2}}}{C_2^{\frac{3}{2}} T^{\frac{1}{2}} \delta_c}
\end{eqnarray*}

Now we derive a formula for the inverse of $\relErrorUBThreeGen$.
Directly inverting $\relErrorUBThreeGen$ (in the $\delta \leq \delta_c$ region) will give
a transcendental equation. 
We get around this with a simple approach.
We invert both the component functions ($\relErrorUBThreeNormal$ and $\relErrorUBThreeBE$)
for a target of $\frac{T}{2}$, and take the max of the resulting $\delta$.
Since these are strictly decreasing functions, their sum
will be $< T$ at that $\delta$. We now work out the details.

Given a desired target $T > 0$,
let us assume conditions are satisfied ensuring monotonicity
and first piecewise invertibility\footnote{By first piecewise invertibility we mean
that the function inverse is the inverse of the first piecewise segment}
of $\relErrorUBThreeNormal$ and $\relErrorUBThreeBE$.
Note that the conditions for first piecewise invertibility of 
 $\relErrorUBThreeNormal$ and $\relErrorUBThreeBE$ are to be applied for a target
of $\frac{T}{2}$ and not $T$.
Then consider the case:

\begin{eqnarray*}
\frac{T}{2}
    \in \range(\relErrorUBThreeNormal) \cap \range(\relErrorUBThreeBE)
    = \left(\frac{1}{\sqrt{2 \pi} C_2 n^{\frac{1}{2}}}, 0 \right)
\end{eqnarray*}

Set:

\begin{eqnarray*}
\deltaThGen
    & = &
    \max\left(\relErrorUBThreeNormalInverse(\frac{T}{2}), \relErrorUBThreeBEInverse(\frac{T}{2})\right) \\
    & = & \max\left(
    \left(-\frac{2}{C_2^2 n} \log(\sqrt{\frac{\pi n}{2}} C_2 T)\right)^{\frac{1}{2}},
    \frac{C_6^{\frac{1}{2}} \sqrt{2} }{C_2^{\frac{3}{2}} n T^{\frac{1}{2}}}
 \right)
\end{eqnarray*}

Next consider the case: $\frac{T}{2} \geq \frac{1}{\sqrt{2 \pi} C_2 n^{\frac{1}{2}}}$
By lemma \ref{lemma_upper_bound_on_relError_two_means_gaussian_case} (Gaussian upper bound on rel error two means):

\begin{eqnarray*}
\relErrorUBThreeNormal(\delta) < \frac{T}{2} \quad \quad \forall \delta > 0
\end{eqnarray*}

So set:

\begin{eqnarray*}
\deltaThGen = \relErrorUBThreeBEInverse(\frac{T}{2})
\end{eqnarray*}

Then in both cases:

\begin{eqnarray*}
\relErrorUBThreeGen(\deltaThGen)
    & = & \relErrorUBThreeNormal(\deltaThGen) + \relErrorUBThreeBE(\deltaThGen) \\
    & \leq & \frac{T}{2} + \frac{T}{2} \\
    & = & T
\end{eqnarray*}

Hence define $\deltaThGen(T) :=$

\begin{eqnarray*}
    \left\{
    \begin{array}{ll}
    \max(
    \left(-\frac{2}{C_2^2 n} \log(\sqrt{\frac{\pi n}{2}} C_2 T)\right)^{\frac{1}{2}}, &
    \frac{C_6^{\frac{1}{2}} \sqrt{2} }{C_2^{\frac{3}{2}} n T^{\frac{1}{2}}}
    ) \\ 
    & T < \sqrt{\frac{2}{\pi}} \frac{1}{C_2 n^{\frac{1}{2}}} \\
    \\
    \frac{C_6^{\frac{1}{2}} \sqrt{2} }{C_2^{\frac{3}{2}} n T^{\frac{1}{2}}} & 
    T \geq \sqrt{\frac{2}{\pi}} \frac{1}{C_2 n^{\frac{1}{2}}}
    \end{array}
    \right.
\end{eqnarray*}

And get:

\begin{eqnarray*}
\relErrorUBThreeGenInverse(T) \leq \deltaThGen
\end{eqnarray*}

Then by lemma \ref{lemma_abstract_upper_bound_on_relError_two_means} (abstract upper bound 
on relative error):

\begin{eqnarray*}
\relErrorGen(\delta) < T \quad \quad \forall \delta > \deltaThGen
\end{eqnarray*}

We collect all the above in a lemma.

\begin{lemma}[Upper Bound on $\relError$ for Two Means: General Case]
\label{lemma_upper_bound_on_relError_two_means_general_case}
Let $\mathcal{E}$ be some distribution family having parameters $\familyParametersAugumented$.
For all distributions $\mathcal{E}_{i:m}$ in this family, let the cdf be continuous
and the kurtosis be uniformly bounded:
\begin{eqnarray*}
\kappa(\mu_{i:m}) \leq \kurtosisUB  \quad \quad \forall \mu_{i:m} \in [\gamma, \infty)
\end{eqnarray*}
Let $\relErrorUBThreeBE, \relErrorUBThreeGen$ be defined as:

\begin{eqnarray*}
\relErrorUBThreeBE(\delta)
    & := &
    \left\{
    \begin{array}{ll}
    \frac{C_6}{C_2^3 n^2} \delta^{-2} & 0 < \delta \leq \delta_c \\
    \frac{C_6}{C_3^3 n^2} \delta^{\frac{3 \beta - 4}{2}} & \delta \geq \delta_c
    \end{array}
    \right. \\
\relErrorUBThreeGen(\delta) & := & \relErrorUBThreeNormal(\delta) + \relErrorUBThreeBE(\delta) \quad \quad \forall \delta > 0
\end{eqnarray*}

This is an upper bound:

\begin{eqnarray*}
\relErrorUBThreeGen(\delta) > \relErrorGen(\delta) \quad \quad \forall \delta > 0
\end{eqnarray*}

Given a $T$, let $n, T$ satisfy the following conditions:

\begin{eqnarray*}
T & > & 0 \\
n & > & \frac{\beta}{C_3^2 (2 - \beta) \delta_c^{2 - \beta}} \\
    & & \quad \quad (\mathrm{ensures\ monotonicity\ of\ }\relErrorUBThreeNormal) \\
n & \geq & \frac{2}{\pi C_2^2 T^2} \exp\left[- C_2^2 n \delta_c^2\right] \\
    & & \quad \quad (\mathrm{required\ for\ first\ piecewise\ invertibility\ of\ } \\
    & & \quad \quad \relErrorUBThreeNormal \mathrm{\ at\ } \frac{T}{2}) \\
\beta & < & \frac{4}{3} = 1.333\ldots \\
    & & \quad \quad (\mathrm{ensures\ monotonicity\ of\ }\relErrorUBThreeBE) \\
n & \geq & \frac{C_6^{\frac{1}{2}} 2^{\frac{1}{2}}}{C_2^{\frac{3}{2}}T^{\frac{1}{2}}\delta_c} \\
    & & \quad \quad (\mathrm{required\ for\ first\ piecewise\ invertibility\ of\ } \\
    & & \quad \quad \relErrorUBThreeBE \mathrm{\ at\ } \frac{T}{2})
\end{eqnarray*}

and define $\deltaThGen(T) :=$

\begin{alignat}{1}
\label{equation_deltaThGen_definition}
    \left\{
    \begin{array}{ll}
    \max(
    \left(-\frac{2}{C_2^2 n} \log(\sqrt{\frac{\pi n}{2}} C_2 T)\right)^{\frac{1}{2}}, &
    \frac{C_6^{\frac{1}{2}} \sqrt{2} }{C_2^{\frac{3}{2}} n T^{\frac{1}{2}}}
    ) \\
    & T < \sqrt{\frac{2}{\pi}} \frac{1}{C_2 n^{\frac{1}{2}}}
    \\
    \frac{C_6^{\frac{1}{2}} \sqrt{2} }{C_2^{\frac{3}{2}} n T^{\frac{1}{2}}} & 
    T \geq \sqrt{\frac{2}{\pi}} \frac{1}{C_2 n^{\frac{1}{2}}}
    \end{array}
    \right.
\end{alignat}

\ifdraft
COMMENT: THE WAY TO INTERPRET THE ABOVE IS THAT.
GIVEN A n, T PAIR THAT SATISFY THE CONDITIONS, WE HAVE A NICE FUNCTION FOR $\relErrorUBThreeGen$.
THIS FUNCTION HAS VARIOUS MONO DECR AND INVERTBLTY PROPERTIES THAT
GIVE FOR THAT n, T PAIR A SIMPLE FORMULA FOR A $\deltaThGen$
THAT IS AN UPPER BOUND FOR THE ACTUAL PREIMAGE OF T UNDER $\relErrorUBThreeGen$.
THIS FORMULA ITSELF IS DEFINED FOR ALL $T > 0$ AND $n > 0$.
HOWEVER, IT IS ONLY VALID (I.E GIVES AN UPPER BOUND ON THE PREIMAGE OF T)
FOR $n, T$ THAT SATISFY THE CONDITIONS.

FURTHER, UNDER THE SAME CONDITIONS. $\relErrorUBThreeGen$ IS STRICTLY DECR
AND HENCE FOR ALL $\delta > \deltaThGen$ WE HAVE THE BELOW BOUND.
\fi

Then:
\begin{eqnarray*}
\relErrorGen(\delta) < T \quad \quad \forall \delta > \deltaThGen
\end{eqnarray*}
\end{lemma}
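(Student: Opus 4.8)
The plan is to assemble the ingredients already worked out in subsection~\ref{subsubsection_bounding_relErrorGen}, since the lemma is essentially a packaging of that development. First I would prove the pointwise bound $\relErrorUBThreeGen(\delta) > \relErrorGen(\delta)$. Fix $x < 0$. Combining the familywise non-uniform Berry--Esseen estimate $|\standardizedCDFOfDifference(x) - \standardNormalCDF(x)| \le C_6/(|x|^3 n^{1/2})$ of proposition~\ref{proposition:non_uniform_berry_esseen_for_standardizedCDFOfDifference} (which needs only the uniform kurtosis bound) with the Gaussian tail bound of proposition~\ref{proposition_gaussian_tail_bound} gives $\standardizedCDFOfDifference(x) < \frac{1}{-x}\frac{\exp(-x^2/2)}{\sqrt{2\pi}} + \frac{C_6}{|x|^3 n^{1/2}}$. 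Since $\zscoreOfZeroUBTwoInverse(x) > 0$ on $(0,-\infty)$ by proposition~\ref{proposition_zscoreOfZeroUBTwo}, multiplying through produces $\GGen(x) = \GNormal(x) + \GBE(x)$ satisfying the hypothesis~\ref{equation_definition_of_G} of lemma~\ref{lemma_abstract_upper_bound_on_relError_two_means}; composing with $\zscoreOfZeroUBTwo$ and substituting its two-piece form from definition~\ref{definition_zscoreOfZeroUBTwo} reproduces exactly $\relErrorUBThreeGen = \relErrorUBThreeNormal + \relErrorUBThreeBE$ with $\relErrorUBThreeBE$ as in the statement. Since $\beta < 4/3 < 2$ and the cdf is continuous, lemma~\ref{lemma_abstract_upper_bound_on_relError_two_means} applies and yields $\relErrorGen(\delta) < \relErrorUBThreeGen(\delta)$ for all $\delta > 0$.

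For the tail bound I would work in three steps. (i) Monotonicity: under $n > \beta/(C_3^2(2-\beta)\delta_c^{2-\beta})$, $\relErrorUBThreeNormal$ is strictly decreasing by the monotonicity clause of lemma~\ref{lemma_upper_bound_on_relError_two_means_gaussian_case}, and a routine log-derivative computation shows $\relErrorUBThreeBE$ is strictly decreasing on each of its two branches exactly when $\beta < 4/3$, so the sum $\relErrorUBThreeGen$ is strictly decreasing. (ii) Inversion at $T/2$: the first-piecewise-invertibility hypotheses $n \ge \frac{2}{\pi C_2^2 T^2}\exp[-C_2^2 n \delta_c^2]$ and $n \ge C_6^{1/2}2^{1/2}/(C_2^{3/2}T^{1/2}\delta_c)$ are precisely what force the $T/2$-preimages of $\relErrorUBThreeNormal$ and $\relErrorUBThreeBE$ to lie in the leading branch $\delta \le \delta_c$, so those preimages equal $\left(-\frac{2}{C_2^2 n}\log(\sqrt{\pi n/2}\,C_2 T)\right)^{1/2}$ and $C_6^{1/2}\sqrt{2}/(C_2^{3/2} n T^{1/2})$ respectively. (iii) Case split according to whether $T/2$ falls in the range $(\frac{1}{\sqrt{2\pi}C_2 n^{1/2}},0)$ of $\relErrorUBThreeNormal$: if $T < \sqrt{2/\pi}\,\frac{1}{C_2 n^{1/2}}$ take $\deltaThGen = \max(\relErrorUBThreeNormalInverse(T/2),\relErrorUBThreeBEInverse(T/2))$; otherwise $\relErrorUBThreeNormal(\delta) < \frac{1}{\sqrt{2\pi}C_2 n^{1/2}} \le T/2$ for all $\delta > 0$ and take $\deltaThGen = \relErrorUBThreeBEInverse(T/2)$. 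This matches equation~\ref{equation_deltaThGen_definition}.

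To finish, in either case $\deltaThGen$ dominates the $T/2$-preimages of both (strictly decreasing) summands, hence $\relErrorUBThreeGen(\deltaThGen) = \relErrorUBThreeNormal(\deltaThGen) + \relErrorUBThreeBE(\deltaThGen) \le T/2 + T/2 = T$; strict monotonicity of $\relErrorUBThreeGen$ then upgrades this to $\relErrorUBThreeGen(\delta) < T$ for all $\delta > \deltaThGen$, and the pointwise bound from the first part gives $\relErrorGen(\delta) < \relErrorUBThreeGen(\delta) < T$, as claimed. I expect the main obstacle to be bookkeeping rather than any single inequality: one must propagate the piecewise structure of $\zscoreOfZeroUBTwo$, $\relErrorUBThreeNormal$ and $\relErrorUBThreeBE$ consistently through each substitution, and carefully check that the invertibility hypotheses --- stated at the target $T/2$, not $T$, because the sum is split evenly --- genuinely confine the relevant preimages to the first branch, so that the closed forms used for $\relErrorUBThreeNormalInverse$ and $\relErrorUBThreeBEInverse$ are valid.
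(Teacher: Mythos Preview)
Your proposal is correct and follows essentially the same approach as the paper: construct $\GGen = \GNormal + \GBE$ via Berry--Esseen plus the Gaussian tail bound, invoke lemma~\ref{lemma_abstract_upper_bound_on_relError_two_means} for the pointwise inequality, then split the target as $T/2 + T/2$, invert each summand on its first branch using the stated monotonicity and invertibility hypotheses, and take the maximum of the two preimages. Your identification of the bookkeeping around the $T/2$ (rather than $T$) invertibility conditions and the case split on $\range(\relErrorUBThreeNormal)$ is exactly what the paper does.
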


This $\deltaThGen$ formula tells us that the $\delta$ threshold required to ensure 
$\relError(\delta)$ less than a specified tolerance is a relatively rapidly decreasing function of the
threshold $T$ (either behaving like $\sqrt{-C \log(C T)}$ or $\frac{C}{T^{\frac{1}{2}}}$).
For a fixed threshold $T$, the $\deltaThGen$ is a rapidly decreasing function of $n$ as
well (either behaving  like $\sqrt{-C \frac{\log(C n)}{n}}$ or $\frac{C}{n}$).
Combined, these indicate small requirements on $n$ to hit a target relative error.

Our usage of the error bound will be to  fix an acceptable error threshold $T$
and to fix an acceptable threshold on $\deltaThGen = \relErrorUBThreeGenInverse(T)$.
Then we find $n$ as small as possible that guarantees both simultaneously.
The bound on $\deltaThGen$ is required to generalize from the 2-dimensional case
(min of two sample means) to the m-dimensional case (min of m sample means).

\section{M-Mean Case}
\label{section_MD_mean_case}

In this section we derive a series of results upper bounding $\relError$,
when we have more than $2$ means.
In the first subsection \ref{subsection_generalization_of_2_mean_upper_bounds_to_m_means}
(Generalization of $2$ Mean Upper Bounds to $m$ means)
 we prove an important and central result 
(lemma \ref{lemma_relError_upper_bound_reduction_from_m_means_to_2_means})
 that generalizes a $2$ mean bound on $\relError$ to a $m \geq 2$ mean bound,
for \emph{all} distributions with full generality.
In the second subsection \ref{subsection_m_mean_case_main_results} (M-mean Case: Main Results),
we state and prove the fundamental result for the minimum mean estimation problem.
This is theorem \ref{theorem_minimum_mean_estimation_error_convergence_rate} 

\subsection{Generalization of $2$ Mean Upper Bounds to $m$ means}
\label{subsection_generalization_of_2_mean_upper_bounds_to_m_means}

The central result is lemma \ref{lemma_relError_upper_bound_reduction_from_m_means_to_2_means} 
which provides a generalization mechanism for {\em all} distributions provided the $2$ mean
case is bounded.
Subsequent results generalize the various two mean results of section \ref{section_two_mean_case}.

\ifdraft
TODO: WRITE IN THE DIFF OF TWO VARS CASE, THAT THE ERROR IS BEING OVERESTIMATED
BY USING P2 = Y geq 0, SINCE THE ZERO CASE IS SHARED BETWEEN mu1 AND mu2.
\fi

We start by reducing the $m$ mean $\mathbb{P}_{i:m}$ to its two mean counterpart.

\begin{proposition}[$\probabilityOfChoosingim$ Upper Bound: Reduction from $m$-Means to $2$-Means]
\label{proposition_probabilityOfChoosingim_upper_bound_reduction_from_m_means_to_2_means}:
\begin{eqnarray*}
\probabilityOfChoosingim(\musOrderedM) & \leq & \mathbb{P}(\mu_{i:m} < \mu_{1:m})
\end{eqnarray*}
\end{proposition}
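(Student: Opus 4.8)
The plan is to observe that $\probabilityOfChoosingim$ is, by Definition~\ref{definition_probability_of_choosing_i:mth_mean}, the probability of an \emph{intersection} of events — namely that $\hat{\mu}_{i:m}$ undershoots \emph{every} other sample mean — and that this intersection is contained in any one of its constituent events. Concretely, assume $i \geq 2$ (for $i = 1$ there is nothing to prove, and in Proposition~\ref{proposition_formula_for_expected_relative_error} the index runs over $i \geq 2$ anyway). Then the event
\[
\Big\{ \hat{\mu}_{i:m} < \hat{\mu}_{1:m}, \ldots, \hat{\mu}_{(i-1):m}, \hat{\mu}_{(i+1):m}, \ldots, \hat{\mu}_{m:m} \Big\}
\]
is contained in the single event $\{\hat{\mu}_{i:m} < \hat{\mu}_{1:m}\}$ — just keep the comparison against the minimum-mean index $1\!:\!m$ and drop the rest. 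Monotonicity of the probability measure then yields $\probabilityOfChoosingim(\musOrderedM) \leq \mathbb{P}(\hat{\mu}_{i:m} < \hat{\mu}_{1:m})$, and the right-hand side is precisely the two-mean quantity $\mathbb{P}_{2:2}$ evaluated at the pair $(\mu_{1:m}, \mu_{i:m})$, in the sense of Proposition~\ref{proposition_2D_relative_error_formula}. (I read the statement's right-hand side as this pairwise sample-mean comparison, since the version with true means $\mu$ would be vacuous for $i \geq 2$.)

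The only point that needs a word of care is the tie-breaking convention in Definition~\ref{definition_probability_of_choosing_i:mth_mean}: when the semi-metric is supported on a countable set, $\probabilityOfChoosingim$ may also absorb fractions of the probabilities of tie events. But even on a tie between indices $i\!:\!m$ and $1\!:\!m$ the realized sample means satisfy $\hat{\mu}_{i:m} = \hat{\mu}_{1:m} \leq \hat{\mu}_{1:m}$, so the event ``$i\!:\!m$ is selected'' is still contained in $\{\hat{\mu}_{i:m} \leq \hat{\mu}_{1:m}\}$; replacing $\leq$ by $<$ only discards (part of) a tie event and hence can only decrease the probability. In the continuous-cdf case the distinction is moot since ties have probability zero. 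Either way the claimed inequality stands.

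There is no real obstacle here — the statement is a one-line containment-of-events argument followed by monotonicity of $\mathbb{P}$. Its role is purely structural: it licenses replacing the $m$-way comparison governing $\probabilityOfChoosingim$ by a single pairwise comparison, which is exactly what lets the subsequent results of this section import the two-mean bounds of Section~\ref{section_two_mean_case}. If one wanted to be fully explicit I would first rewrite the defining event of $\probabilityOfChoosingim$ (including the randomized tie-break) as an explicit set, then display the containment in $\{\hat{\mu}_{i:m} \le \hat{\mu}_{1:m}\}$, and finally invoke monotonicity — but that is routine bookkeeping rather than a difficulty.
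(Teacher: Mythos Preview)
Your proposal is correct and matches the paper's own proof essentially line for line: the paper simply writes out Definition~\ref{definition_probability_of_choosing_i:mth_mean} and drops all comparisons except the one against $\hat{\mu}_{1:m}$, obtaining $\probabilityOfChoosingim(\musOrderedM) \leq \mathbb{P}(\hat{\mu}_{i:m} < \hat{\mu}_{1:m})$ by monotonicity. Your reading of the right-hand side as $\mathbb{P}(\hat{\mu}_{i:m} < \hat{\mu}_{1:m})$ rather than the vacuous $\mathbb{P}(\mu_{i:m} < \mu_{1:m})$ is exactly what the paper intends, and your discussion of tie-breaking is more careful than the paper's, which omits it.
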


\begin{proof}
Using the definition of $\probabilityOfChoosingim$ from definition \ref{definition_probability_of_choosing_i:mth_mean}:

\begin{align*}
\probabilityOfChoosingim(\musOrderedM) = & \mathbb{P}(\hat{\mu}_{i:m} < \hat{\mu}_{1:m}, \ldots, \hat{\mu}_{(i-1):m},  \\ &\hat{\mu}_{(i+1):m}, \ldots, \hat{\mu}_{m:m}) \\
    & \leq \mathbb{P}(\hat{\mu}_{i:m} < \hat{\mu}_{1:m})
\end{align*}
\end{proof}

\ifdraft
TODO: STRENGTHEN LEMMA BY ACCOUNTING FOR THE DISTRIBUTION OF XI (OR MUI).
ABASTRACT THE DISTRIBUTION BY TAKING A PIVOT VALUE BEYOND WHICH THE TAIL BEHAVIOUR KICKS IN.
SO PROBA > PIVOT IS FAIRLY SMALL.
EASIEST WAY TO DO IT IS TO CALCULATE EXPECTATION OF L.
MORE GENERALLY CAN THINK OF THE 2D CASE. THE BELOW C\_CRIT BOUND IS NOW MULTIPLIED BY A PROBA
LESS THAN ONE. THE ABOVE C\_CRIT PART SHOULD THEN BE THE PRODUCT INTEGRAL OF TWO TAILS
THE EXPECTED GIVEN MU2 TAIL TIMES THE MU-DIST TAIL.
BASICALLY VIEW THE EXPECTED GIVEN MU2 AS A FUNCTION ON R+ AND THE DISTRIBUTION OF MUI
AS A MEASURE  ON R+.
I THINK THE CALCULATION OF EXPECTATION OF L SHOULD CONFER A LARGE PART OF THE BENEFITS OF 
ACCOUNTING FOR DIST MUI. PERHAPS MISSING A CONSTANT IN THE BELOW CCRIT (BUT MAYBE NOT)
AND MISSING ANOTHER TAIL ON THE ABOVE CCRIT.

LATER ADDENDUM. YOU HAVE TO BE CAREFUL WHEN OPTIMIZING L.
I THINK ITS THE DIFFERENCE BETWEEN PLUGGING IN EXPECTATION OF L TO THE FUNCTION
VS EXPECTATION OF THE FUNCTION OF L.
OR SOMETHING ELSE. ANYWAY BE CAREFUL.
\fi

\begin{lemma}[$\relError$ Upper Bound: Reduction from $m$ Means to $2$ Means]
\label{lemma_relError_upper_bound_reduction_from_m_means_to_2_means}
Given a distribution family $\mathcal{F}$. Suppose the relative error $\relError$ in the 
$2$ mean case is bounded as:

\begin{align*}
\relError(\mu_{1:2}, \mu_{2:2})  < T
\quad \quad \forall \mu_{1:2} \in [\gamma, \infty), \mu_{2:2} > \mu_{1:2} (1 + \deltaTh)
\end{align*}

For some $\deltaTh, T$. Then define a function:

\begin{eqnarray*}
\relErrorUBFour(\deltaTh, T, m) & := & \deltaTh + m T
\end{eqnarray*}

This is an upper bound on the relative error in the $m$ mean case:

\begin{align*}
\relError(\mu_{1:m}, \ldots, \mu_{m:m}) 
	 <  \relErrorUBFour(\deltaTh, T, m) \\
    \quad \quad \forall \mu_{i:m} \in [\gamma, \infty)
\end{align*}

\end{lemma}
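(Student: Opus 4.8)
The plan is to start from the exact expression for the $m$-mean relative error and reduce each summand to a two-mean quantity. By Proposition \ref{proposition_formula_for_expected_relative_error},
\begin{equation*}
\relError(\musOrderedM) = \sum_{i = 2}^{m} \delta_{i:m}\, \probabilityOfChoosingim(\mu_{1:m}, \ldots, \mu_{i:m}).
\end{equation*}
Since exactly one index attains the minimum sample mean (ties broken uniformly at random, per the convention in Definition \ref{definition_probability_of_choosing_i:mth_mean}), the events underlying $\probabilityOfChoosingim$ for $i = 1, \ldots, m$ partition the probability space, so $\sum_{i = 2}^{m}\probabilityOfChoosingim \le 1$. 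I would then split $\{2, \ldots, m\}$ into $I_{\mathrm{sm}} := \{i : \delta_{i:m} \le \deltaTh\}$ and $I_{\mathrm{lg}} := \{i : \delta_{i:m} > \deltaTh\}$ and bound the two partial sums separately.

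For $i \in I_{\mathrm{sm}}$ I would use $\delta_{i:m} \le \deltaTh$ directly together with the probability bound above:
\begin{equation*}
\sum_{i \in I_{\mathrm{sm}}} \delta_{i:m}\, \probabilityOfChoosingim \;\le\; \deltaTh \sum_{i \in I_{\mathrm{sm}}} \probabilityOfChoosingim \;\le\; \deltaTh \sum_{i = 2}^m \probabilityOfChoosingim \;\le\; \deltaTh.
\end{equation*}
For $i \in I_{\mathrm{lg}}$, Proposition \ref{proposition_probabilityOfChoosingim_upper_bound_reduction_from_m_means_to_2_means} gives $\probabilityOfChoosingim(\mu_{1:m}, \ldots, \mu_{i:m}) \le \mathbb{P}(\hat\mu_{i:m} < \hat\mu_{1:m})$; since the samples from distributions $1$ and $i$ are independent of the rest, this last probability equals $\mathbb{P}_{2:2}(\mu_{1:m}, \mu_{i:m})$ for the two-mean problem with ordered means $\mu_{1:m} < \mu_{i:m}$. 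Hence $\delta_{i:m}\,\probabilityOfChoosingim \le \delta_{i:m}\, \mathbb{P}_{2:2}(\mu_{1:m}, \mu_{i:m}) = \relError(\mu_{1:m}, \mu_{i:m})$ by Proposition \ref{proposition_2D_relative_error_formula}. Because $i \in I_{\mathrm{lg}}$ forces $\mu_{i:m} > \mu_{1:m}(1 + \deltaTh)$ and all means lie in $[\gamma,\infty)$, the hypothesis applies and gives $\relError(\mu_{1:m}, \mu_{i:m}) < T$; with at most $m - 1$ such indices,
\begin{equation*}
\sum_{i \in I_{\mathrm{lg}}} \delta_{i:m}\, \probabilityOfChoosingim \;<\; (m-1)T \;\le\; mT.
\end{equation*}
Adding the two bounds yields $\relError(\musOrderedM) < \deltaTh + mT = \relErrorUBFour(\deltaTh, T, m)$.

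The one point requiring care is the justification that $\sum_{i=2}^m \probabilityOfChoosingim \le 1$ and the clean identification of the restricted probability $\mathbb{P}(\hat\mu_{i:m} < \hat\mu_{1:m})$ with the two-mean quantity $\mathbb{P}_{2:2}(\mu_{1:m},\mu_{i:m})$ — both rely on the standing independence assumption on the $E_{ij}$ and on a fixed tie-breaking rule. Everything else is routine bookkeeping over the two index classes $I_{\mathrm{sm}}$ and $I_{\mathrm{lg}}$, so I would not expect any further obstacle.
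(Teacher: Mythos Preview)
Your proposal is correct and follows essentially the same approach as the paper: split the sum from Proposition \ref{proposition_formula_for_expected_relative_error} according to whether $\delta_{i:m}$ exceeds $\deltaTh$, bound the small-$\delta$ part by $\deltaTh$ via $\sum_i \probabilityOfChoosingim \le 1$, and bound each large-$\delta$ term by the two-mean error via Proposition \ref{proposition_probabilityOfChoosingim_upper_bound_reduction_from_m_means_to_2_means}. The only cosmetic difference is that the paper uses the ordering of the $\mu_{i:m}$ to write the split as a single cut index $l$ rather than your sets $I_{\mathrm{sm}}, I_{\mathrm{lg}}$.
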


\begin{proof}
Let $\muTh := \mu_{1:m} (1 + \deltaTh)$ . Now given $\mu_{i:m}$, let $l$ be the index such that

\begin{eqnarray*}
\mu_{1:m}, \ldots, \mu_{l:m} \leq \muTh < \mu_{(l+1):m}, \ldots, \mu_{m:m}
\end{eqnarray*}

Then we can split the relative error (using proposition \ref{proposition_formula_for_expected_relative_error})
into `head' and `tail' components.

\begin{align*}
\relError(\musOrderedM) & = 
    \sum_{i = 2}^{l} \delta_{i:m}\mathbb{P}_{i:m}(\musOrderedM) + \\&
    \sum_{i = l + 1}^{m} \delta_{i:m}\mathbb{P}_{i:m}(\musOrderedM)
\end{align*}

We can bound the head component in the following manner.
Observe \mbox{$\mu_{i:m} \leq \mu_{1:m}(1 + \deltaTh) \Rightarrow \delta_{i:m} \leq \deltaTh$}.
Hence:

\begin{align*}
\sum_{i = 2}^{l} \delta_{i:m} \mathbb{P}_{i:m}(\musOrderedM)
	& \leq  \sum_{i = 2}^{l} \deltaTh \mathbb{P}_{i:m}(\\ &\musOrderedM) \\
	& =  \deltaTh \sum_{i = 2}^{l} \mathbb{P}_{i:m}(\\ & \musOrderedM) \\
    & \leq  \deltaTh
\end{align*}

Where we have used $\sum_{i = 1}^{m} \mathbb{P}_{i:m}(\musOrderedM) = 1$.
Next we bound the tail component.

\begin{alignat*}{1}
\sum_{i = l + 1}^{m} & \delta_{i:m} \mathbb{P}_{i:m}(\musOrderedM) \\
	& \leq  \sum_{i = l + 1}^{m} \delta_{i:m} \mathbb{P}(\hat{\mu}_{i:m} < \hat{\mu}_{1:m}) 
    \quad \quad (\mathrm{by\ proposition\ } \ref{proposition_probabilityOfChoosingim_upper_bound_reduction_from_m_means_to_2_means}) \\
	& =  \sum_{i = l + 1}^{m} \relError(\eta_1, \eta_2)
    \quad \quad (\eta_1 := \mu_{1:m}, \eta_2 := \mu_{i:m}) \\
    & < \sum_{i = l + 1}^{m} T \quad (\because \eta_2 > \eta_2 (1 + \deltaTh)) \\
    & =  m T
\end{alignat*}

Combining:

\begin{eqnarray*}
\relError(\musOrderedM) < \deltaTh + m T
\end{eqnarray*}

\end{proof}

\ifdraft
TODO: INTERESTING OBS, THE BELOW PARTIAL CHANGES SIGNS WHEN BETA >= 2 AND BETA < 2
\begin{eqnarray*}
\frac{\partial \epsilon}{\partial \mu_{1:2}} & < & 0 \quad when\ \beta \in [0,2)\
\end{eqnarray*}
\fi

\begin{lemma}[Abstract Upper Bound on $\relError$]
\label{lemma_abstract_upper_bound_on_relError}
Given a distribution family $\mathcal{F}$, specified by parameters $\familyParametersAugumented$.
Let the correspoding $\standardizedCDFOfDifference$ be continuous.
Given $m$ distributions from $\mathcal{F}$ specified by:
\begin{eqnarray*}
\mu_{i:m} \in [\gamma, \infty) \quad \quad i = 1, \ldots, m
\end{eqnarray*}

Let the conditions of lemma \ref{lemma_abstract_upper_bound_on_relError_two_means}
(abstract upper bound on $\relError$ for two means) be satisfied. We define:

\begin{eqnarray*}
\relErrorUBFour(\deltaTh, T, m) := \deltaTh + m T
\end{eqnarray*}

for $\deltaTh, T$ as defined in lemma \ref{lemma_abstract_upper_bound_on_relError_two_means}.
This is an upper bound on $\relError$.

\begin{eqnarray*}
\relError(\musOrderedM) & < & \relErrorUBFour
    \quad \quad \forall \mu_{i:m} \in [\gamma, \infty)
\end{eqnarray*}
\end{lemma}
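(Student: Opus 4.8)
The plan is to combine the two previously established results directly: Lemma \ref{lemma_abstract_upper_bound_on_relError_two_means} gives, under its hypotheses (power-variance family with $\beta \in [0,2]$, continuous $\standardizedCDFOfDifference$, existence of a suitable $G$), the bound $\relError(\mu_{1:2},\mu_{2:2}) < \relErrorUBThree(\delta)$ for all $\delta > 0$, together with a monotone, invertible structure on $\relErrorUBThree$; from this one extracts a pair $(\deltaTh, T)$ such that $\relError(\mu_{1:2},\mu_{2:2}) < T$ whenever $\mu_{2:2} > \mu_{1:2}(1+\deltaTh)$ and $\mu_{1:2} \in [\gamma,\infty)$. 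Then Lemma \ref{lemma_relError_upper_bound_reduction_from_m_means_to_2_means} takes exactly that $2$-mean tail bound as input and yields $\relError(\musOrderedM) < \deltaTh + mT = \relErrorUBFour$ for all $\mu_{i:m} \in [\gamma,\infty)$.

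First I would invoke Lemma \ref{lemma_abstract_upper_bound_on_relError_two_means} under the stated hypotheses to obtain the $2$-mean upper bound function $\relErrorUBThree(\delta)$. Next I would argue that $\relErrorUBThree$, being built from the strictly decreasing invertible $\zscoreOfZeroUBTwo$ (Proposition \ref{proposition_zscoreOfZeroUBTwo}) composed with the cdf factor, produces for any target tolerance $T > 0$ a threshold $\deltaTh$ with the property that $\relError(\mu_{1:2},\delta) < \relErrorUBThree(\delta) \le T$ for all $\delta > \deltaTh$ and all $\mu_{1:2} \in [\gamma,\infty)$; this is precisely the form of hypothesis demanded by Lemma \ref{lemma_relError_upper_bound_reduction_from_m_means_to_2_means}. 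Finally I would apply Lemma \ref{lemma_relError_upper_bound_reduction_from_m_means_to_2_means} verbatim with this $(\deltaTh, T)$ pair, concluding
\begin{eqnarray*}
\relError(\musOrderedM) < \deltaTh + m T = \relErrorUBFour(\deltaTh, T, m) \quad \quad \forall \mu_{i:m} \in [\gamma, \infty).
\end{eqnarray*}

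The bulk of the work is already done in the two cited lemmas, so the only genuine obstacle is a bookkeeping one: making sure that the hypotheses of Lemma \ref{lemma_abstract_upper_bound_on_relability_two_means}—continuity of $\standardizedCDFOfDifference$, $\beta \in [0,2]$, and the existence of the bounding function $G$—are indeed what is being assumed here (they are, via the phrase ``let the conditions of lemma \ref{lemma_abstract_upper_bound_on_relError_two_means} be satisfied''), and that the $(\deltaTh, T)$ extracted from the $2$-mean bound is a legitimate pair to feed into the $m$-mean reduction. I expect no surprises: the proof is essentially ``chain the previous two lemmas,'' and the main care needed is ensuring the quantifier over $\mu_{1:2}$ in the $2$-mean statement matches the per-pair application $(\eta_1,\eta_2) = (\mu_{1:m}, \mu_{i:m})$ used inside the reduction lemma, which holds because every such $\eta_1 = \mu_{1:m} \ge \gamma$.
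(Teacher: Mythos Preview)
Your proposal is correct and follows essentially the same approach as the paper: invoke Lemma~\ref{lemma_abstract_upper_bound_on_relError_two_means} to obtain the two-mean tail bound $\relError(\mu_{1:2},\mu_{2:2}) < T$ for $\mu_{2:2} > \mu_{1:2}(1+\deltaTh)$, then feed that pair $(\deltaTh,T)$ directly into Lemma~\ref{lemma_relError_upper_bound_reduction_from_m_means_to_2_means} to obtain $\relError(\musOrderedM) < \deltaTh + mT$. The paper's own proof is even terser than yours; your extra remarks about extracting $(\deltaTh,T)$ from $\relErrorUBThree$ and about $\eta_1 = \mu_{1:m} \ge \gamma$ are fine added bookkeeping, though note that monotonicity of $\relErrorUBThree$ is not guaranteed by Lemma~\ref{lemma_abstract_upper_bound_on_relError_two_means} in the abstract (it depends on the particular $G$), so it is cleaner to simply take $(\deltaTh,T)$ as given by hypothesis rather than rederive their existence.
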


\begin{proof}
Since the conditions of lemma \ref{lemma_abstract_upper_bound_on_relError_two_means} 
(abstract upper bound on $\relError$ two means)
are satisfied we have:

\begin{align*}
	\relError(\mu_{1:2}, \mu_{2:2}) & <  T \\ &
    \quad \forall \mu_{1:2} \in [\gamma, \infty), \mu_{2:2} > \mu_{1:2} (1 + \deltaTh)
\end{align*}

For $\deltaTh, T$ as defined in the same lemma.
Then by lemma \ref{lemma_relError_upper_bound_reduction_from_m_means_to_2_means}
($\relError$ upper bound reduction from $m$ means to $2$ means), we have the upper bound:

\begin{align*}
\relError(\musOrderedM) & < \relErrorUBFour(\deltaTh, T, m)
	\quad \quad \\ & \forall \mu_{i:m} \in [\gamma, \infty)
\end{align*}

\end{proof}

\begin{lemma}(Upper Bound on $\relError$: Gaussian Case)
\label{lemma_upper_bound_relError_gaussian_case}
Let $\mathcal{F}$ be the Gaussian family with given values for $\familyParametersAugumented$.
Further, given $m$ distributions from $\mathcal{F}$, specified by

\begin{eqnarray*}
\mu_{i:m} \in [\gamma, \infty) \quad \quad i = 1, \ldots, m
\end{eqnarray*}

Then given a $T > 0$, suppose $n$ satisfies equations
\ref{equation_relErrorUBThreeNormal_monotonicity_requirement}
and \ref{equation_relErrorUBThreeNormal_piecewise_inverse_requirement}:
\begin{align*}
n & >  \frac{\beta}{C_3^2 (2 - \beta) \delta_c^{2 - \beta}} \\
n & \geq  \frac{1}{2 \pi C_2^2 T^2}\exp\left[-C_2^2  n \delta_c^2\right] \\
\end{align*}

And $\deltaThNormal$ is defined as in equation \ref{equation_definition_of_deltaThNormal}:

\begin{eqnarray*}
\deltaTh_{\normalDistribution} :=
    \left\{
    \begin{array}{ll}
    \left(-\frac{2}{C_2^2 n} \log(\sqrt{2 \pi n} C_2 T)\right)^{\frac{1}{2}}
    & T < \frac{1}{\sqrt{2 \pi} C_2 n^{\frac{1}{2}}} \\
    0 & \mathrm{else}
    \end{array}
    \right.
\end{eqnarray*}

Then define

\begin{eqnarray*}
\relErrorUBFourNormal(T, m) = \deltaThNormal(T) + m T
\end{eqnarray*}

Then

\begin{align*}
\relErrorNormal(\musOrderedM) & < & \relErrorUBFourNormal(T, m)
    \quad \quad \forall \mu_{i:m} \in [\gamma, \infty)
\end{align*}

\end{lemma}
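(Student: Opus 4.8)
The plan is to compose the Gaussian two-mean bound with the generic $m$-to-$2$ reduction, exactly mirroring the proof of Lemma \ref{lemma_abstract_upper_bound_on_relError}. First I would observe that the two displayed hypotheses on $n$ are verbatim equations \ref{equation_relErrorUBThreeNormal_monotonicity_requirement} and \ref{equation_relErrorUBThreeNormal_piecewise_inverse_requirement}, so the hypotheses of Lemma \ref{lemma_upper_bound_on_relError_two_means_gaussian_case} are in force; here $\standardizedCDFOfDifference = \Phi$ is continuous because for the Gaussian family $\hat d$ is itself Gaussian (a sample mean of a difference of two independent Gaussians), and the restriction $\beta \in [0,2)$ needed by the underlying two-mean analysis of subsection \ref{subsection_upper_bounding_relError_two_means_gaussian_case} is implicit, since otherwise equation \ref{equation_relErrorUBThreeNormal_monotonicity_requirement} has no admissible $n$. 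Applying that lemma yields $\relErrorNormal(\delta) < T$ for every $\delta > \deltaThNormal$, with $\deltaThNormal$ given by equation \ref{equation_definition_of_deltaThNormal}.

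Next I would translate this into the $(\mu_{1:2},\mu_{2:2})$ parametrization. By construction $\relErrorNormal(\delta) = \sup_{\mu_{1:2}\in[\gamma,\infty)}\relErrorNormal(\mu_{1:2},\delta)$ and $\delta = (\mu_{2:2}-\mu_{1:2})/\mu_{1:2}$, so the bound above is equivalent to
\begin{eqnarray*}
\relErrorNormal(\mu_{1:2}, \mu_{2:2}) < T \quad \quad \forall\, \mu_{1:2} \in [\gamma, \infty),\ \mu_{2:2} > \mu_{1:2}(1 + \deltaThNormal).
\end{eqnarray*}
This is exactly the hypothesis of Lemma \ref{lemma_relError_upper_bound_reduction_from_m_means_to_2_means} with $\deltaTh$ instantiated as $\deltaThNormal$ and the same $T$; the supremum over $\mu_{1:2}$ is what makes the two-mean bound uniform over all the ordered configurations $\mu_{i:m}$ that lemma quantifies over.

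Finally I would invoke Lemma \ref{lemma_relError_upper_bound_reduction_from_m_means_to_2_means} to get $\relErrorNormal(\mu_{1:m},\ldots,\mu_{m:m}) < \relErrorUBFour(\deltaThNormal, T, m) = \deltaThNormal + mT$ for all $\mu_{i:m}\in[\gamma,\infty)$, and the right-hand side is by definition $\relErrorUBFourNormal(T,m)$. Since the argument is purely a bookkeeping composition of two results already proved, there is no genuine obstacle; the only points requiring care are confirming that the $(n,T)$-conditions stated here match exactly those demanded upstream, and that the passage between the $\delta$- and $(\mu_{1:2},\mu_{2:2})$-parametrizations of the two-mean error is faithful — both routine.
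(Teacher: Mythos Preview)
Your proposal is correct and follows essentially the same route as the paper: invoke Lemma~\ref{lemma_upper_bound_on_relError_two_means_gaussian_case} to obtain the two-mean bound $\relErrorNormal(\mu_{1:2},\mu_{2:2}) < T$ for $\mu_{2:2} > \mu_{1:2}(1+\deltaThNormal)$, then feed this into Lemma~\ref{lemma_relError_upper_bound_reduction_from_m_means_to_2_means} to lift to $m$ means. The paper's proof is terser but identical in structure; your added remarks on continuity of $\Phi$ and the $\delta \leftrightarrow (\mu_{1:2},\mu_{2:2})$ translation are sound elaborations.
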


\begin{proof}
Since the conditions of lemma \ref{lemma_upper_bound_on_relError_two_means_gaussian_case}
(upper bound on $\relError$ with two means: Gaussian case) are satisfied, we have:

\begin{align*}
	\relErrorNormal(\mu_{1:2}, \mu_{2:2}) & <  T \\&
    \forall \mu_{1:2} \in [\gamma, \infty), \mu_{2:2} > \mu_{1:2} (1 + \deltaThNormal)
\end{align*}

For $T > 0$ and $\deltaThNormal$ as defined in above lemma. Hence by lemma 
\ref{lemma_relError_upper_bound_reduction_from_m_means_to_2_means}
($\relError$ upper bound reduction from $m$ means to $2$ means):

\begin{align*}
\relErrorNormal(\musOrderedM) & < & \relErrorUBFourNormal(T, m)
    \quad \quad \forall \mu_{i:m} \in [\gamma, \infty)
\end{align*}

\end{proof}

\begin{lemma}(Upper Bound on $\relError$: General Case)
\label{lemma_upper_bound_on_relError_general_case}
Let $\mathcal{F}$ be some distribution family, having parameters $\familyParametersAugumented$.
For all distributions $\relError(\mu)$ in this family, let the cdf be continuous 
and the kurtosis be uniformly bounded:

\begin{eqnarray*}
\kappa(\mu_{i:m}) & \leq & \kurtosisUB
    \quad \quad \forall \mu_{i:m} \in [\gamma, \infty)
\end{eqnarray*}

Consider $m$ distributions from $\mathcal{F}$, specified by:

\begin{eqnarray*}
\mu_{i:m} \in [\gamma, \infty) \quad \quad i = 1, \ldots, m
\end{eqnarray*}

$\range(f)$ denotes the range of function $f$.
Given a T, let $n, T$ satisfy the following conditions

\begin{eqnarray*}
T & > & 0 
	\\
n & > & \frac{\beta}{C_3^2 (2 - \beta) \delta_c^{2 - \beta}}
	\\
n & \geq & \frac{2}{\pi C_2^2 T^2} \exp\left[- C_2^2 n \delta_c^2\right]
	\\
\beta & < & \frac{4}{3} = 1.333\ldots
	\\
n & \geq & \frac{C_6^{\frac{1}{2}} 2^{\frac{1}{2}}}{C_2^{\frac{3}{2}}T^{\frac{1}{2}}\delta_c}
	\\
\end{eqnarray*}

and define
	$\deltaThGen(T; \familyParametersAugumented)
	  := $

\begin{align*}
    \left\{
    \begin{array}{ll}
    \max \\ \left(
	    \left(-\frac{2}{C_2^2 n} \log(\sqrt{\frac{\pi n}{2}} C_2 T)\right)^{\frac{1}{2}},
    \frac{C_6^{\frac{1}{2}} \sqrt{2} }{C_2^{\frac{3}{2}} n T^{\frac{1}{2}}}
    \right) & 
    T < \sqrt{\frac{2}{\pi}} \frac{1}{C_2 n^{\frac{1}{2}}}
    \\
    \frac{C_6^{\frac{1}{2}} \sqrt{2} }{C_2^{\frac{3}{2}} n T^{\frac{1}{2}}} & 
    T \geq \sqrt{\frac{2}{\pi}} \frac{1}{C_2 n^{\frac{1}{2}}}
    \end{array}
    \right.
\end{align*}
\end{lemma}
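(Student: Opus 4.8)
The plan is to obtain this $m$-mean bound by composing the two-mean general bound of Lemma \ref{lemma_upper_bound_on_relError_two_means_general_case} with the $m$-to-$2$ reduction of Lemma \ref{lemma_relError_upper_bound_reduction_from_m_means_to_2_means}. First I would check that the hypotheses listed here on $n$ and $T$ --- $T>0$, the monotonicity requirement $n > \beta/(C_3^2(2-\beta)\delta_c^{2-\beta})$, the first-piecewise-invertibility requirement $n \geq \frac{2}{\pi C_2^2 T^2}\exp[-C_2^2 n \delta_c^2]$, the constraint $\beta < 4/3$, and $n \geq C_6^{1/2}2^{1/2}/(C_2^{3/2}T^{1/2}\delta_c)$ --- together with the uniform kurtosis bound $\kappa(\mu_{i:m}) \leq \kurtosisUB$ and the continuity of the cdf, are exactly the hypotheses of Lemma \ref{lemma_upper_bound_on_relError_two_means_general_case}, and that $\deltaThGen(T;\familyParametersAugumented)$ is here defined by the very formula (equation \ref{equation_deltaThGen_definition}) that that lemma produces.

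Granting this, Lemma \ref{lemma_upper_bound_on_relError_two_means_general_case} applies and yields $\relErrorGen(\delta) < T$ for all $\delta > \deltaThGen$; since $\relErrorGen(\mu_{1:2},\mu_{2:2}) \leq \relErrorGen(\delta)$ for $\delta = (\mu_{2:2}-\mu_{1:2})/\mu_{1:2}$ (the definition of $\relErrorGen(\delta)$ as a supremum over $\mu_{1:2}$), this is precisely the two-mean hypothesis
\begin{eqnarray*}
\relErrorGen(\mu_{1:2}, \mu_{2:2}) & < & T \quad \forall\, \mu_{1:2} \in [\gamma, \infty),\ \mu_{2:2} > \mu_{1:2}(1 + \deltaThGen)
\end{eqnarray*}
required by Lemma \ref{lemma_relError_upper_bound_reduction_from_m_means_to_2_means}. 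Applying that lemma with its ``$\deltaTh$'' instantiated to $\deltaThGen$ and its ``$T$'' left as $T$ then gives
\begin{eqnarray*}
\relErrorGen(\musOrderedM) & < & \deltaThGen + m T \quad \forall\, \mu_{i:m} \in [\gamma, \infty),
\end{eqnarray*}
which is the asserted bound; I would record the right-hand side as $\relErrorUBFiveGen(T,m) := \deltaThGen(T) + mT$, the general-case analogue of $\relErrorUBFourNormal$ from Lemma \ref{lemma_upper_bound_relError_gaussian_case}.

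There is essentially no analytic obstacle in this proof: the substantive content --- the familywise non-uniform Berry-Esseen estimate controlling $\standardizedCDFOfDifference - \Phi$ (Propositions \ref{proposition_familywise_upper_bound_on_third_absolute_moment_of_standardized_difference} and \ref{proposition:non_uniform_berry_esseen_for_standardizedCDFOfDifference}), the explicit construction, monotonicity, and inversion of $\relErrorUBThreeGen = \relErrorUBThreeNormal + \relErrorUBThreeBE$, and the head/tail splitting behind the $\deltaTh + mT$ reduction --- has already been carried out in the two cited lemmas. The only point requiring care is bookkeeping: verifying that the hypothesis list transcribed into this statement coincides verbatim with that of the two-mean general lemma, so that nothing is silently strengthened or weakened on passing from $m=2$ to general $m$, and that the $\deltaThGen$ used in the reduction is literally the same object defined in both statements. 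If some condition differed one would have to re-derive the missing invertibility or monotonicity facts, but as written the two match.
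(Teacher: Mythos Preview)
Your proposal is correct and follows essentially the same route as the paper: invoke Lemma \ref{lemma_upper_bound_on_relError_two_means_general_case} under the listed hypotheses to obtain the two-mean tail bound $\relErrorGen(\mu_{1:2},\mu_{2:2}) < T$ whenever $\mu_{2:2} > \mu_{1:2}(1+\deltaThGen)$, then feed this into Lemma \ref{lemma_relError_upper_bound_reduction_from_m_means_to_2_means} to obtain $\relErrorGen(\musOrderedM) < \deltaThGen + mT$. The one cosmetic discrepancy is naming: the paper calls the resulting bound $\relErrorUBFourGen(T,m)$, reserving $\relErrorUBFiveGen$ for the later optimized bound $C_7\,m^{1/3}/n^{2/3}$ in Theorem \ref{theorem_minimum_mean_estimation_error_convergence_rate}.
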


Then if we define:

\begin{eqnarray*}
\relErrorUBFourGen(T, m) := \deltaThGen(T) + m T
\end{eqnarray*}

We have:

\begin{eqnarray*}
\relErrorGen(\musOrderedM) & < & \relErrorUBFourGen(T, m)
    \quad \quad \forall \mu_{i:m} \in [\gamma, \infty)
\end{eqnarray*}

\begin{proof}
Since the conditions of lemma \ref{lemma_upper_bound_on_relError_two_means_general_case}
(upper bound on $\relError$ with two means: General case) are satisfied, we have:

\begin{align*}
	\relErrorGen(\mu_{1:2}, \mu_{2:2}) & <  T \\&
    \quad \quad \forall \mu_{1:2} \in [\gamma, \infty), \mu_{2:2} > \mu_{1:2} (1 + \deltaThGen)
\end{align*}

Hence by lemma \ref{lemma_relError_upper_bound_reduction_from_m_means_to_2_means}
($\relError$ upper bound reduction from $m$ means to $2$ means):

\begin{align*}
\relErrorGen(\musOrderedM) & <  \relErrorUBFourGen(T, m)
	\\ &\quad \quad \forall \mu_{i:m} \in [\gamma, \infty)
\end{align*}
\end{proof}

\subsection{M-Mean Case: Main Results}
\label{subsection_m_mean_case_main_results}

In this subsection,
we derive a fundamental bound (theorem \ref{theorem_minimum_mean_estimation_error_convergence_rate})
on the relative error $\relError$ for the 
minimum mean estimation problem. 
This is a simple closed form bound on the error convergence rate:
\begin{eqnarray*}
\relError < \Theta\left(\frac{m^{\frac{1}{3}}}{n^{\frac{2}{3}}}\right)
\end{eqnarray*}
This result is the analog of the standard square root rate of convergence of Monte Carlo algorithms.
In theorem \ref{theorem_upper_bound_on_n_for_given_tolerance}, 
we derive conditions on $m,n$ to hit a given tolerance.

Our approach is to simplify lemma \ref{lemma_upper_bound_on_relError_general_case}
(upper bound on $\relError$ general case) by expressing the free variable $T$ in terms of $m, n$.
We choose a value $T = T^*$ to decrease the upper bound.

\begin{theorem}[Minimum Mean Estimation: Error Convergence Rate]
\label{theorem_minimum_mean_estimation_error_convergence_rate}
Let $\mathcal{F}$ be some distribution family, having parameters $\familyParametersAugumented$.
For all distributions $\relError(\mu)$ in this family, let the cdf be continuous
and the kurtosis be uniformly bounded:

\begin{eqnarray*}
\kappa(\mu) & \leq & \kurtosisUB
    \quad \quad \forall \mu \in [\gamma, \infty)
\end{eqnarray*}

Given $\mu_{i:m} \ \ i = 1, \ldots, m$, let the following conditions be satisfied:

\begin{eqnarray*}
n
    & > & \frac{\beta}{C_3^2 (2 - \beta) \delta_c^{2 - \beta}}
	\\
n
    & \geq & \frac{1}{C_2^2 \delta_c^2}
    \left[\log\left( \frac{2^{\frac{5}{3}}}{\pi C_6^{\frac{2}{3}}}\right)
        + \frac{4}{3} \log(m) + \frac{1}{3} \log(n)\right]
	\\
\beta & < & \frac{4}{3} = 1.333\ldots
	\\
n
    & \geq & \frac{C_6^{\frac{1}{2}} 2}{C_2^{\frac{3}{2}} \delta_c^{\frac{3}{2}}} m^{\frac{1}{2}}
	\\
\end{eqnarray*}

Further if either, the below pair of conditions are both satsified:

\begin{alignat*}{2}
m^4 n & > \frac{C_6^2 \pi^3}{2^5} & & \\
n & \leq 2 C_6^2 m^2
    \left( \frac{1}{- \log\left(\frac{\pi^{\frac{1}{2}} C_6^{\frac{1}{3}}}{2^{\frac{5}{6}} m^{\frac{2}{3}} n^{\frac{1}{6}}} \right)} \right)^3 & \\ \quad & (\text{conditions on first piecewise component of }\deltaThGen)
\end{alignat*}

or the below single condition is satisfied:

\begin{alignat*}{1}
	m^4 n & \leq \frac{C_6^2 \pi^3}{2^5} \\ \quad & (\text{condition on second piecewise component of }\deltaThGen)
\end{alignat*}

then we have a simplified upper bound on the error:

\begin{eqnarray}
\label{equation_simplified_upper_bound_on_relError}
\relErrorUBFiveGen(m; \familyParametersAugumented) & = & C_7 \frac{m^{\frac{1}{3}}}{n^{\frac{2}{3}}} \\
\relErrorGen(\musOrderedM) & < & \relErrorUBFiveGen \\
    \quad \quad \forall \mu_{i:m} \in [\gamma, \infty)
\end{eqnarray}
Where

\begin{eqnarray*}
C_7 := \frac{C_6^{\frac{1}{3}}}{C_2} \frac{3}{2^{\frac{1}{3}}}
\end{eqnarray*}

\end{theorem}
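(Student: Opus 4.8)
The plan is to derive the bound from Lemma~\ref{lemma_upper_bound_on_relError_general_case} (the general-case upper bound on $\relErrorGen$) by making one well-chosen choice of its free parameter $T$. That lemma states, under hypotheses that are exactly the continuity/uniform-kurtosis assumptions plus four displayed inequalities, that $\relErrorGen(\musOrderedM) < \relErrorUBFourGen(T,m) = \deltaThGen(T) + mT$. Here $mT$ is increasing in $T$ while the dominant Berry--Esseen piece of $\deltaThGen(T)$, namely $\relErrorUBThreeBEInverse(T/2) = \frac{C_6^{1/2}\sqrt{2}}{C_2^{3/2}\,n\,T^{1/2}}$, decays like $T^{-1/2}$; balancing $mT \asymp \frac{C_6^{1/2}}{C_2^{3/2} n T^{1/2}}$ forces $T \asymp (mn)^{-2/3}$, which is why I would take
\[
  T^{\ast} := \TStarExpansion .
\]

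The first step is bookkeeping: check that, under the substitution $T = T^{\ast}$, the four displayed inequalities in this theorem are term-for-term the four hypotheses of Lemma~\ref{lemma_upper_bound_on_relError_general_case}. The first ($n > \beta/(C_3^2(2-\beta)\delta_c^{2-\beta})$) and the third ($\beta < 4/3$) are verbatim. Substituting $T = T^{\ast}$ into the lemma's ``first piecewise invertibility of $\relErrorUBThreeNormal$'' condition $n \geq \frac{2}{\pi C_2^2 T^2}\exp[-C_2^2 n \delta_c^2]$, taking logarithms and collecting the $\log m$, $\log n$ and constant terms gives exactly the second condition here; substituting $T = T^{\ast}$ into the lemma's ``first piecewise invertibility of $\relErrorUBThreeBE$'' condition $n \geq \frac{C_6^{1/2}2^{1/2}}{C_2^{3/2} T^{1/2}\delta_c}$ and raising to the power $3/2$ gives exactly the fourth condition here. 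So the lemma applies at $T = T^{\ast}$ and yields $\relErrorGen(\musOrderedM) < \deltaThGen(T^{\ast}) + m T^{\ast}$ for all $\mu_{i:m} \in [\gamma,\infty)$.

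The second step is to evaluate $\deltaThGen(T^{\ast})$, and this is where the ``either $\ldots$ or $\ldots$'' case split enters. A short computation shows that $T^{\ast} < \sqrt{2/\pi}\,/(C_2\sqrt{n})$ is equivalent to $m^4 n > C_6^2\pi^3/2^5$. If $m^4 n \leq C_6^2\pi^3/2^5$ we land in the second branch of the definition of $\deltaThGen$ (equation~\ref{equation_deltaThGen_definition}), where $\deltaThGen(T^{\ast})$ is literally the Berry--Esseen inverse $\frac{C_6^{1/2}\sqrt{2}}{C_2^{3/2}\,n\,(T^{\ast})^{1/2}}$ and there is nothing else to do. If $m^4 n > C_6^2\pi^3/2^5$ we land in the first branch, where $\deltaThGen(T^{\ast})$ is the maximum of that same Berry--Esseen term and a ``Gaussian'' term $\bigl(-\tfrac{2}{C_2^2 n}\log(\sqrt{\pi n/2}\,C_2 T^{\ast})\bigr)^{1/2}$; the inequality $m^4 n > C_6^2\pi^3/2^5$ is exactly what makes the logarithm's argument smaller than $1$, so this term is a genuine real number, and evaluating the logarithm at $T^{\ast}$, squaring, cubing and rearranging shows the Gaussian term is dominated by the Berry--Esseen term precisely under the extra hypothesis $n \leq 2 C_6^2 m^2\bigl(-\log(\cdots)\bigr)^{-3}$ imposed in this case. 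Either way, $\deltaThGen(T^{\ast}) = \frac{C_6^{1/2}\sqrt{2}}{C_2^{3/2}\,n\,(T^{\ast})^{1/2}}$.

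The last step is arithmetic. With $T^{\ast} = \TStarExpansion$ one computes $\frac{C_6^{1/2}\sqrt{2}}{C_2^{3/2}\,n\,(T^{\ast})^{1/2}} = \frac{2^{2/3} C_6^{1/3}}{C_2}\frac{m^{1/3}}{n^{2/3}}$ and $m T^{\ast} = \frac{1}{2^{1/3}}\frac{C_6^{1/3}}{C_2}\frac{m^{1/3}}{n^{2/3}}$, so their sum is $(2^{2/3} + 2^{-1/3})\frac{C_6^{1/3}}{C_2}\frac{m^{1/3}}{n^{2/3}} = \frac{3}{2^{1/3}}\frac{C_6^{1/3}}{C_2}\frac{m^{1/3}}{n^{2/3}} = C_7\,\frac{m^{1/3}}{n^{2/3}} = \relErrorUBFiveGen$, which is the claimed bound. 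I expect the only real difficulty to be the first step: verifying that the (somewhat imposing) list of conditions in the statement is precisely the translation of Lemma~\ref{lemma_upper_bound_on_relError_general_case}'s hypotheses under $T = T^{\ast}$, together with the two threshold inequalities ($T^{\ast}$ versus $\sqrt{2/\pi}/(C_2\sqrt{n})$, and ``Gaussian term $\leq$ Berry--Esseen term at $T^{\ast}$'') that drive the case analysis. Once those identifications are made, the bound follows immediately.
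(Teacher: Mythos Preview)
Your proposal is correct and follows essentially the same approach as the paper: choose $T^{\ast} = \TStarExpansion$ (the paper obtains it by explicitly minimizing $f_2(T)+mT$, you by balancing, but it is the same value), verify that the theorem's four displayed conditions are exactly the hypotheses of Lemma~\ref{lemma_upper_bound_on_relError_general_case} at $T=T^{\ast}$, use the either/or case split to force $\deltaThGen(T^{\ast})$ to equal the Berry--Esseen inverse, and then add $mT^{\ast}$ to get $C_7\,m^{1/3}/n^{2/3}$. Your arithmetic in the last step is correct and matches the paper's.
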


\begin{proof}
We can vary the free variable $T$ to decrease the upper bound $\relErrorUBFourGen(T, m)$.
We first provide motivation for our choice of $T$. Let us assume the upper bound is of the 
following form:

\begin{eqnarray*}
f_3(T, m) & = & f_2(T) + m T \\
\text{where} \quad \quad f_2(T) & = & \frac{2^{\frac{1}{2}}C_6^{\frac{1}{2}}}{C_2^{\frac{3}{2}} n T^{\frac{1}{2}}}
    \quad \quad T > 0
\end{eqnarray*}

$f_2(T)$ is the second term (BE term) in the $\max$ term in $\deltaThGen$.
Roughly, we are assuming that $\deltaThGen$ simplifies to $\relErrorUBThreeBEInverse$
under widely valid conditions.
Then we want to find the minimizer $T^*$ of:

\begin{eqnarray*}
\min_{T}\ f_3(T, m)
\end{eqnarray*}

\begin{eqnarray*}
\partialDerivative{f_3}{T}
    & = & - \frac{1}{2} \frac{2^{\frac{1}{2}} C_6^{\frac{1}{2}}}{C_2^{\frac{3}{2}} n T^{\frac{3}{2}}} + m
\end{eqnarray*}

equating to zero gives

\begin{alignat*}{2}
&
 & \frac{1}{2} \frac{2^{\frac{1}{2}} C_6^{\frac{1}{2}}}{C_2^{\frac{3}{2}} n (T^*)^{\frac{3}{2}}} & =  m \\
& \Leftrightarrow \quad 
 & (T^*)^{\frac{3}{2}} & = \frac{C_6^{\frac{1}{2}}}{2^{\frac{1}{2}} C_2^{\frac{3}{2}} n m} \\
& \Leftrightarrow \quad 
 & T^* & = \frac{C_6^{\frac{1}{3}}}{2^{\frac{1}{3}} C_2 n^{\frac{2}{3}} m^{\frac{2}{3}}}
\end{alignat*}

Further

\begin{alignat*}{1}
\frac{\partial^2 f_3}{\partial T^2} & = \frac{3}{4} \frac{C_6^{\frac{1}{2}} 2^{\frac{1}{2}}}{C_2^{\frac{3}{2}} n T^{\frac{5}{2}}} > 0 \quad \quad \forall T > 0
\end{alignat*}

Hence $T^*$ is the global minimum. The corresponding upper bound:

\begin{alignat*}{1}
f_3(T^*, n)
    & = \frac{2^{\frac{1}{2}} C_6^{\frac{1}{2}}}{C_2^{\frac{3}{2}} n}
    \frac{2^{\frac{1}{6}} C_2^{\frac{1}{2}} m^{\frac{1}{3}} n^{\frac{1}{3}}}{C_6^{\frac{1}{6}}}
    + \frac{m C_6^{\frac{1}{3}}}{2^{\frac{1}{3}} C_2 m^{\frac{2}{3}} n^{\frac{2}{3}}} \\
    & = \frac{m^{\frac{1}{3}}}{n^{\frac{2}{3}}} \frac{C_6^{\frac{1}{3}}}{C_2} \frac{3}{2^{\frac{1}{3}}} \\
    & = C_7 \frac{m^{\frac{1}{3}}}{n^{\frac{2}{3}}}
\end{alignat*}

Where $C_7 := \frac{C_6^{\frac{1}{3}}}{C_2} \frac{3}{2^{\frac{1}{3}}}$.
Now we ask given a $m, n$ when such an upper bound might hold.
We answer this by plugging in the value for $T^*$ into the conditions of lemma
\ref{lemma_upper_bound_on_relError_general_case} (upper bound on $\relError$: general case).
Consider the condition:

\begin{alignat*}{1}
n & \geq \frac{2}{\pi C_2^2 T^2} \exp\left[-C_2^2 n \delta_c^2 \right]
\end{alignat*}

Plugging in $T^*$:

\begin{alignat}{3}
\notag
&  & n 
    & \geq \frac{2}{\pi C_2^2}
    \left( \frac{2^{\frac{1}{3}} C_2 m^{\frac{2}{3}} n^{\frac{2}{3}}}{C_6^{\frac{1}{3}}} \right)^2
    \exp\left[-C_2^2 n \delta_c^2 \right] &
    & \\
\label{equation_first_piecewise_invertibility_relErrorUBThreeNormal_Tstar_by_2}
& \Leftrightarrow \quad & n 
	& \geq \frac{1}{C_2^2 \delta_c^2} \left[ \log\left(\frac{2^{\frac{5}{3}}}{\pi C_6^{\frac{2}{3}}}\right) + 
    \frac{4}{3}\log\left(m\right) + \frac{1}{3}\log(n) \right] &
\end{alignat}
Next consider the condition:

\begin{alignat*}{3}
&  & 
    n & \geq \frac{2^{\frac{1}{2}} C_6^{\frac{1}{2}}}{C_2^{\frac{3}{2}} T^{\frac{1}{2}} \delta_c} &
    &
\end{alignat*}

Plugging in $T^*$:

\begin{alignat}{3}
\notag
&  & 
    n & \geq \frac{2^{\frac{1}{2}} C_6^{\frac{1}{2}}}{C_2^{\frac{3}{2}} \delta_c}
    \left( \TStarExpansionInverse \right)^{\frac{1}{2}} &
    & \\
\notag
& \Leftrightarrow \quad &
    n & \geq \frac{2^{\frac{2}{3}} C_6^{\frac{1}{3}}}{C_2 \delta_c}  m^{\frac{1}{3}} n^{\frac{1}{3}} &
    & \\
\notag
& \Leftrightarrow &
    n^{\frac{2}{3}} & \geq \frac{2^{\frac{2}{3}} C_6^{\frac{1}{3}}}{C_2 \delta_c}  m^{\frac{1}{3}} &
    & \quad \left(\because \frac{1}{n^{\frac{1}{3}}} > 0 \right) \\
\label{equation_first_piecewise_invertibility_relErrorUBThreeBE_at_Tstar_by_2}
& \Leftrightarrow &
    n & \geq \frac{2 C_6^{\frac{1}{2}}}{C_2^{\frac{3}{2}} \delta_c^{\frac{3}{2}}}  m^{\frac{1}{2}} 
\end{alignat}

Since conditions \ref{equation_first_piecewise_invertibility_relErrorUBThreeNormal_Tstar_by_2},
\ref{equation_first_piecewise_invertibility_relErrorUBThreeBE_at_Tstar_by_2} are satisfied by assumption
and further:

\begin{alignat*}{1}
T^* & > 0 \\
n & > \frac{\beta}{C_3^2 (2 - \beta) \delta_c^{2 - \beta}} \\
\beta & < \frac{4}{3}
\end{alignat*}
are also satisfied by assumption, the conditions for lemma 
\ref{lemma_upper_bound_on_relError_general_case}
(upper bound on $\relError$: general case) hold at $T = T^*$.
Thus we have the upper bound on $\relErrorGen$

\begin{alignat*}{1}
\relErrorUBFourGen(T^*, m) & = \deltaThGen(T^*) + m T^*
\end{alignat*}

Where  $\deltaThGen(T)$ is as defined in equation \ref{equation_deltaThGen_definition}.
It remains to see when \mbox{$\deltaThGen(T^*) = f_2(T^*)$}.
We  rewrite $\deltaThGen$. Let

\begin{alignat*}{1}
f_1(T) & = 
    \left\{ 
    \begin{array}{ll}
    \left( - \frac{2}{C_2^2 n} \log(\frac{\pi^{\frac{1}{2}} n^{\frac{1}{2}}}{2^{\frac{1}{2}}} C_2 T) \right)^{\frac{1}{2}} & 0 < T < T_C \\
    0 & T \geq T_c
    \end{array}
    \right.
\end{alignat*}

Where 

\begin{alignat*}{2}
T_c & = \frac{1}{\sqrt{\frac{\pi n}{2}} C_2} & &\\
f_2(T) & = \frac{2^{\frac{1}{2}} C_6^{\frac{1}{2}}}{C_2^{\frac{3}{2}} n T^{\frac{1}{2}}} & & \quad \forall T > 0
\end{alignat*}

Then:

\begin{alignat*}{1}
\deltaThGen(T) & = \max(f_1(T), f_2(T)) \quad \forall T > 0
\end{alignat*}

Now, what are the conditions under which $\deltaThGen(T^*) = f_2(T^*)$.
We need conditions such that 

\begin{alignat*}{1}
f_1(T^*) \leq f_2(T^*)
\end{alignat*}

This will happen when $T^* \geq T_c$, i.e. when:

\begin{alignat}{3}
\notag
 & &
    \TStarExpansion & \geq \frac{1}{\sqrt{\frac{\pi n}{2}} C_2} &
    & \\
\notag
\Leftrightarrow & \quad &
    \frac{\sqrt{\frac{\pi}{2}} C_6^{\frac{1}{3}}}{2^{\frac{1}{3}} m^{\frac{2}{3}}} & \geq n^{\frac{1}{6}} &
    & \\
\notag
\Leftrightarrow & &
    n & \leq \frac{(\frac{\pi}{2})^3 C_6^2}{2^2 m^4} 
	\\
\label{equation_TStar_geq_Tc}
\Leftrightarrow & &
    m^4 n & \leq \frac{\pi^3 C_6^2}{2^5} &
    & 
\end{alignat}

Next, when $T^* < T_c$, $\deltaThGen(T^*)$ will equal $f_2(T^*)$ if $f_1(T^*) \leq f_2(T^*)$.
That is:

\begin{alignat*}{1}
    & \left( - \frac{2}{C_2^2 n} \log(\frac{\pi^{\frac{1}{2}} n^{\frac{1}{2}}}{2^{\frac{1}{2}}} C_2 T^*) \right)^{\frac{1}{2}}
    \leq \frac{2^{\frac{1}{2}} C_6^{\frac{1}{2}}}{C_2^{\frac{3}{2}} n (T^*)^{\frac{1}{2}}}
    \\
\Leftrightarrow 
    n & \leq 2 C_6^2 m^2
    \frac{1}{ \left( - \log\left( \frac{\pi^{\frac{1}{2}} C_6^{\frac{1}{3}}}{2^{\frac{5}{6}} m^{\frac{2}{3}} n^{\frac{1}{6}}} \right) \right)^3 }
     \left(\text{plugging in }T^*\right)
\end{alignat*}

So $\deltaThGen(T^*) = f_2(T^*)$ when the following conditions are both met:

\begin{alignat*}{2}
m^4 n & > \frac{\pi^3 C_6^2}{2^5} &
    \quad & \left(\Leftrightarrow T^* < T_c \right) \\
n & \leq 2 C_6^2 m^2
    \frac{1}{ \left( - \log\left( \frac{\pi^{\frac{1}{2}} C_6^{\frac{1}{3}}}{2^{\frac{5}{6}} m^{\frac{2}{3}} n^{\frac{1}{6}}} \right) \right)^3 } &
    &
\end{alignat*}

So $\deltaThGen(T^*) = f_2(T^*)$ if either condition \ref{equation_TStar_geq_Tc}
or the above pair are satisfied.
We note that
the above pair is easier to satisfy than condition \ref{equation_TStar_geq_Tc}.
Then:

\begin{alignat*}{1}
\relErrorUBFourGen(T^*, m) & = f_2(T^*) + m T^*
\end{alignat*}

and define:
\begin{alignat*}{1}
\relErrorUBFiveGen(m) & := \relErrorUBFourGen(T^*, m) = C_7 \frac{m^{\frac{1}{3}}}{n^{\frac{2}{3}}}
\end{alignat*}
Completing the proof.
\end{proof}

We want to understand the feasible region of $n$ in the above theorem.
In practice $n$ will mostly satisfy the pair of conditions:
\begin{alignat*}{2}
m^4 n & > \frac{C_6^2 \pi^3}{2^5} & & \\
n & \leq 2 C_6^2 m^2
	\left( \frac{1}{- \log\left(\frac{\pi^{\frac{1}{2}} C_6^{\frac{1}{3}}}{2^{\frac{5}{6}} m^{\frac{2}{3}} n^{\frac{1}{6}}} \right)} \right)^3 \\ \quad & (\text{conditions on first piecewise component of }\deltaThGen)
\end{alignat*}

and not:

\begin{alignat*}{1}
	m^4 n & \leq \frac{C_6^2 \pi^3}{2^5} \\ \quad & (\text{condition on second piecewise component of }\deltaThGen)
\end{alignat*}

Hence we will restrict our study to the feasible region when $n$ satisfies the former pair in conjunction
with the other conditions on $n$ from theorem \ref{theorem_minimum_mean_estimation_error_convergence_rate}.
We start with an useful definition

\begin{definition}[Extended Inverse]
\label{definition_extended_inverse}
Given $f: \mathcal{D} \rightarrow \range$ and a target $y^{'}$.
If $|f^{-1}(y^{'})| \leq 1$, then define the extended inverse $x^{'}$ of $y^{'}$ as:

\begin{alignat*}{1}
x^{'} & =
    \left\{
    \begin{array}{ll}
    -\infty   & y^{'} < y\ \forall y \in \range \\
    f^{-1}(y^{'}) & y^{'} \in \range \\
    \infty    & y < y^{'}\ \forall y \in \range
    \end{array}
    \right.
\end{alignat*}
Note that the extended inverse coincides with the inverse when $y^{'} \in \range$.
Hence, hereafter we use the same notation $f^{-1}(y^{'})$ for both.
\end{definition}

\begin{proposition}[Feasible Region for Theorem \ref{theorem_minimum_mean_estimation_error_convergence_rate}]
Consider the conditions on $n$ in theorem \ref{theorem_minimum_mean_estimation_error_convergence_rate}
(minimum mean estimation: error convergence rate).
When:

\begin{alignat*}{1}
m \geq \frac{e^{\frac{1}{4}} \pi^{\frac{3}{4}} C_6^{\frac{1}{2}}}{2^{\frac{5}{4}}}
\end{alignat*}

a feasible region for these conditions is:

\begin{alignat*}{1}
n & >    g_1^{-1}(0) \\
n & \geq g_2^{-1}(0) \\
n & \geq g_3^{-1}(0) \\
n & >    g_4^{-1}(0) \\
n & \leq g_5^{-1}(0)
\end{alignat*}

Where $g_i$ are the conditions of theorem \ref{theorem_minimum_mean_estimation_error_convergence_rate}
expressed in functional form, equations 
\ref{equation_definiton_of_g_1},
\ref{equation_definiton_of_g_2},
\ref{equation_definiton_of_g_3},
\ref{equation_definiton_of_g_4},
\ref{equation_definiton_of_g_5}
respectively. And the inverses are the extended inverses as per definition \ref{definition_extended_inverse}.
\end{proposition}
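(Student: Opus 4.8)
The plan is to treat each of the five conditions on $n$ appearing in Theorem \ref{theorem_minimum_mean_estimation_error_convergence_rate} as a statement of the form ``$g_i(m,n) \bowtie 0$'' for an appropriate choice of sign $\bowtie \in \{>, \geq, \leq\}$, and then to reduce ``$n$ satisfies this inequality'' to a clean one-sided bound on $n$ by inverting $g_i$ in its $n$-argument (using the extended inverse of Definition \ref{definition_extended_inverse} to handle the cases where the target value $0$ falls outside the range). Concretely, I would first rewrite the four ``lower bound'' conditions
\begin{alignat*}{1}
n &> \frac{\beta}{C_3^2 (2-\beta)\delta_c^{2-\beta}}, \\
n &\geq \frac{1}{C_2^2 \delta_c^2}\Big[\log\big(\tfrac{2^{5/3}}{\pi C_6^{2/3}}\big) + \tfrac{4}{3}\log m + \tfrac13 \log n\Big], \\
n &\geq \frac{2 C_6^{1/2}}{C_2^{3/2}\delta_c^{3/2}} m^{1/2}, \\
m^4 n &> \frac{C_6^2 \pi^3}{2^5},
\end{alignat*}
as $g_1(m,n) > 0$, $g_2(m,n)\geq 0$, $g_3(m,n)\geq 0$, $g_4(m,n) > 0$, where each $g_i$ is obtained by moving everything to one side; e.g.\ $g_3(m,n) := n - \tfrac{2C_6^{1/2}}{C_2^{3/2}\delta_c^{3/2}}m^{1/2}$ and $g_4(m,n) := m^4 n - \tfrac{C_6^2\pi^3}{2^5}$. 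I would define the fifth, ``upper bound'' condition,
\begin{alignat*}{1}
n &\leq 2 C_6^2 m^2 \Big(\frac{1}{-\log\big(\tfrac{\pi^{1/2} C_6^{1/3}}{2^{5/6} m^{2/3} n^{1/6}}\big)}\Big)^3,
\end{alignat*}
as $g_5(m,n)\geq 0$ with $g_5$ the difference of the two sides. These are exactly equations \ref{equation_definiton_of_g_1}--\ref{equation_definiton_of_g_5} referenced in the statement.

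Next I would check, for each $g_i$, monotonicity in $n$ so that the extended inverse $g_i^{-1}(0)$ is well defined (cardinality of the preimage $\leq 1$) and so that ``$g_i(m,n) \bowtie 0$'' is equivalent to the corresponding one-sided bound ``$n \bowtie g_i^{-1}(0)$''. For $g_1$, $g_4$ this is immediate: $g_1$ does not depend on $n$ at all (so $g_1^{-1}(0)$ is the constant on the right, interpreted via the extended inverse), and $g_4$ is strictly increasing in $n$. For $g_3$ it is again trivial since $g_3$ is strictly increasing (indeed affine) in $n$. For $g_2$ the right-hand side grows like $\tfrac13\log n$, so $g_2(m,n) = n - (\text{something} + \tfrac13\log n)$ has derivative $1 - \tfrac{1}{3n} > 0$ for $n\geq 1$; hence $g_2$ is strictly increasing in $n$ and invertible. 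The remaining case $g_5$ is the delicate one: the right-hand side $2C_6^2 m^2/(-\log(\pi^{1/2}C_6^{1/3}/(2^{5/6}m^{2/3}n^{1/6})))^3$ is itself a (slowly) increasing function of $n$ through the $n^{1/6}$ inside the logarithm, provided the logarithm's argument stays in $(0,1)$ so that $-\log(\cdot) > 0$; one must verify that under the assumed lower bound $m \geq e^{1/4}\pi^{3/4}C_6^{1/2}/2^{5/4}$ (which is precisely the hypothesis of the Proposition) the quantity $\pi^{1/2}C_6^{1/3}/(2^{5/6}m^{2/3}n^{1/6})$ is bounded away from $1$ on the feasible region, keeping $-\log(\cdot)$ positive and bounded. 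Granting that, both sides of the $g_5$-inequality are increasing in $n$ but the left side ($n$ itself) grows linearly while the right side grows only polylogarithmically-corrected-polynomially, so $g_5$ changes sign exactly once and $g_5^{-1}(0)$ is well defined; then ``$n$ satisfies the $g_5$-condition'' $\iff$ ``$n \leq g_5^{-1}(0)$''.

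Assembling the pieces: the hypotheses of Theorem \ref{theorem_minimum_mean_estimation_error_convergence_rate} that constrain $n$ are exactly $g_1 > 0$, $g_2\geq 0$, $g_3\geq 0$, and then the disjunction ``$g_4 > 0$ and $g_5\geq 0$'' (the first-piecewise case) OR ``$m^4 n \leq C_6^2\pi^3/2^5$'' (the second-piecewise case, i.e.\ $g_4 \leq 0$). The Proposition restricts attention to the first-piecewise branch, so the claimed feasible region is the simultaneous system $n > g_1^{-1}(0)$, $n\geq g_2^{-1}(0)$, $n\geq g_3^{-1}(0)$, $n > g_4^{-1}(0)$, $n\leq g_5^{-1}(0)$, which by the equivalences established above is precisely the set of $n$ satisfying the relevant hypotheses; one should also note nonemptiness is plausible (the lower bounds are $O(m^{1/2})$ up to logs while the upper bound is $\widetilde\Theta(m^2)$) though the Proposition as stated only claims this system \emph{is} a feasible region, not that it is nonempty. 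The one genuinely nontrivial step is the monotonicity / sign-change analysis of $g_5$ and the verification that the hypothesis $m \geq e^{1/4}\pi^{3/4}C_6^{1/2}/2^{5/4}$ keeps the inner logarithm's argument safely in $(0,1)$; I expect that is where the constant $e^{1/4}\pi^{3/4}C_6^{1/2}/2^{5/4}$ comes from, obtained by requiring $\pi^{1/2}C_6^{1/3}/(2^{5/6} m^{2/3} n^{1/6}) < 1$ at the smallest feasible $n$ (which is itself $\Theta(m^{1/2})$ from the $g_3$ bound), and I would carry out that substitution explicitly to pin down the threshold.
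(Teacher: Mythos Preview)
Your overall plan---rewrite each condition as $g_i(n)\bowtie 0$, establish monotonicity in $n$, and invert---matches the paper's. However, two concrete steps go wrong, and they are exactly the places where the hypothesis on $m$ and the definition of $g_5$ enter.

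First, your monotonicity argument for $g_2$ drops a constant. With
\[
g_2(n)=n-\frac{1}{C_2^2\delta_c^2}\Big[\log\!\Big(\tfrac{2^{5/3}}{\pi C_6^{2/3}}\Big)+\tfrac{4}{3}\log m+\tfrac13\log n\Big],
\]
the derivative is $1-\dfrac{1}{3C_2^2\delta_c^2\,n}$, not $1-\tfrac{1}{3n}$. Nothing guarantees $C_2^2\delta_c^2\geq\tfrac13$, so ``$n\geq 1$'' does not suffice. The paper instead argues conditionally: whenever $g_2(n)\geq 0$ one has $n\geq \tfrac{1}{C_2^2\delta_c^2}\log\!\big(\tfrac{2^{5/3}m^{4/3}}{\pi C_6^{2/3}}\big)$, and the hypothesis $m\geq e^{1/4}\pi^{3/4}C_6^{1/2}/2^{5/4}$ is \emph{equivalent} to $3\log\!\big(\tfrac{2^{5/3}m^{4/3}}{\pi C_6^{2/3}}\big)\geq 1$, which then forces $n>\tfrac{1}{3C_2^2\delta_c^2}$ and hence $g_2'(n)>0$. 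So the $m$-threshold is used for $g_2$, not for $g_5$ as you conjectured.

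Second, your $g_5$ is not the paper's $g_5$. The paper does \emph{not} take the raw difference of the two sides; it first observes that the inner logarithm is negative precisely when $g_4(n)>0$ (so this positivity comes from the $g_4$ constraint already imposed, not from the $m$-hypothesis), then takes cube roots and rearranges to obtain
\[
g_5(n)=\log\!\Big(\tfrac{2^{5/6}m^{2/3}}{\pi^{1/2}C_6^{1/3}}\Big)\,n^{1/3}+\tfrac{1}{6}n^{1/3}\log n-2^{1/3}C_6^{2/3}m^{2/3},
\]
with the condition being $g_5(n)\leq 0$. Monotonicity is then a clean calculus fact: $n^{1/3}\log n$ is strictly increasing for $n\geq 1$. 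Your growth-rate heuristic (``linear vs.\ polylog-corrected'') is not a proof of a single sign change for the raw difference, and in any case the statement references equation~\ref{equation_definiton_of_g_5}, which is this rewritten form.

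Minor point: $g_1(n)=n-\text{const}$ certainly depends on $n$; presumably you meant the right-hand side of the original inequality is constant in $n$, which is fine.
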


\begin{proof}
Consider the conditions on $n$ in theorem \ref{theorem_minimum_mean_estimation_error_convergence_rate},
specifically where 'conditions on first piecewise component of $\deltaThGen$' are being satisfied 
in conjunction with the rest.
These $5$ conditions may be split into four lower bounds and one upper bound.
We rewrite these conditions in functional form.
The domain for all these functions will be $[1, \infty)$.
We start by rewriting the lower bounds. Define:

\begin{eqnarray}
\label{equation_definiton_of_g_1}
g_1(n) & := & n - \frac{\beta}{C_3^2 (2 - \beta) \delta_c^{2 - \beta}} \\
\label{equation_definiton_of_g_2}
g_2(n) & := & n - \frac{1}{C_2^2 \delta_c^2}
    [\log\left( \frac{2^{\frac{5}{3}}}{\pi C_6^{\frac{2}{3}}}\right) \\
        & & + \frac{4}{3} \log(m) + \frac{1}{3} \log(n)] \\
\label{equation_definiton_of_g_3}
g_3(n) & := & n - \frac{C_6^{\frac{1}{2}} 2}{C_2^{\frac{3}{2}} \delta_c^{\frac{3}{2}}} m^{\frac{1}{2}} \\
\label{equation_definiton_of_g_4}
g_4(n) & := & n - \frac{C_6^2 \pi^3}{2^5 m^{4}}
\end{eqnarray}

Then the lower bound conditions are:

\begin{alignat*}{1}
g_1(n) & > 0 \\
g_2(n) & \geq 0 \\
g_3(n) & \geq 0 \\
g_4(n) & > 0 \\
\end{alignat*}

If for all $n \geq 1$ such that $g_i(n) \geq 0$:

\begin{alignat*}{1}
\frac{dg_i}{dn} & > 0 
\end{alignat*}

then $g_i$ is strictly increasing when it is non-negative valued.
Hence the $g_i$ have an unique pre-image for $0$ (if it exists):

\begin{alignat*}{1}
|g_i^{-1}(0)| & \leq 1
\end{alignat*}

Again by the above monotonicity property and 
using definition \ref{definition_extended_inverse} (extended inverse),
we can write the feasible regions as

\begin{alignat*}{1}
n & >    g_1^{-1}(0) \\
n & \geq g_2^{-1}(0) \\
n & \geq g_3^{-1}(0) \\
n & >    g_4^{-1}(0)
\end{alignat*}

We now establish the monotonicity of the $g_i$, starting with $g_2$.

\begin{alignat*}{1}
\frac{dg_2}{dn} & = 1 - \frac{1}{3 C_2^2 \delta_c^2} \frac{1}{n}
\end{alignat*}

If $n \geq 1$ is such that $g_2(n) \geq 0$, then:

\begin{alignat*}{1}
n \geq \frac{1}{C_2^2 \delta^2} \log(\frac{2^{\frac{5}{3}} m^{\frac{4}{3}}}{\pi C_6^{\frac{2}{3}}})
\end{alignat*}

But:
\begin{alignat*}{2}
& &
    m & \geq \frac{e^{\frac{1}{4}} \pi^{\frac{3}{4}} C_6^{\frac{1}{2}}}{2^{\frac{5}{4}}} \\
\Leftrightarrow & \quad &
    3 \log\left( \frac{2^{\frac{5}{3}} m^{\frac{4}{3}}}{\pi C_6^{\frac{2}{3}}} \right) & > 1
\end{alignat*}

and so:

\begin{alignat*}{1}
n > \frac{1}{3 C_2^2 \delta_c^2}
\end{alignat*}

then:

\begin{alignat*}{1}
\frac{dg_2}{dn} & > 1 - 1 = 0
\end{alignat*}

The other three lower bound $g_i$ have derivative $1$ everywhere.
Next consider the upper bound condition, we  rewrite this initially as:

\begin{alignat*}{1}
g_{5\text{orig}}(n) & := n - 2 C_6^2 m^2
    \left( \frac{1}{- \log\left(\frac{\pi^{\frac{1}{2}} C_6^{\frac{1}{3}}}{2^{\frac{5}{6}} m^{\frac{2}{3}} n^{\frac{1}{6}}} \right)} \right)^3 \\
g_{5\text{orig}}(n) & \leq 0
\end{alignat*}

We will now rewrite this in a more tractable form and establish the same monotonicity property
for it as well. Consider the upper bound condition:

\begin{alignat*}{1}
n & \leq 2 C_6^2 m^2
    \left( \frac{1}{- \log\left(\frac{\pi^{\frac{1}{2}} C_6^{\frac{1}{3}}}{2^{\frac{5}{6}} m^{\frac{2}{3}} n^{\frac{1}{6}}} \right)} \right)^3
\end{alignat*}

We have
$\log\left(\frac{\pi^{\frac{1}{2}} C_6^{\frac{1}{3}}}{2^{\frac{5}{6}} m^{\frac{2}{3}} n^{\frac{1}{6}}} \right) > 0$ when:
$\frac{\pi^{\frac{1}{2}} C_6^{\frac{1}{3}}}{2^{\frac{5}{6}} m^{\frac{2}{3}} n^{\frac{1}{6}}} < 1$.
But:

\begin{alignat*}{2}
& &
    \frac{\pi^{\frac{1}{2}} C_6^{\frac{1}{3}}}{2^{\frac{5}{6}} m^{\frac{2}{3}} n^{\frac{1}{6}}} & < 1 \\
\Leftrightarrow & & 
    m^4 n & > \frac{C_6^2 \pi^3}{2^5}
\end{alignat*}
This is satisfied when $g_4(n) > 0$ i.e. when $n > g_4^{-1}(0)$.
Then for such a $n$ we can rewrite the upper bound condition as:

\begin{alignat*}{2}
& 
    n \left(- \log\left(\frac{\pi^{\frac{1}{2}} C_6^{\frac{1}{3}}}{2^{\frac{5}{6}} m^{\frac{2}{3}} n^{\frac{1}{6}}} \right) \right)^3
     \leq 2 C_6^2 m^2 \\
\Leftrightarrow  \quad &
    n^{\frac{1}{3}} \left(- \log\left(\frac{\pi^{\frac{1}{2}} C_6^{\frac{1}{3}}}{2^{\frac{5}{6}} m^{\frac{2}{3}} n^{\frac{1}{6}}} \right) \right)
     \leq 2^{\frac{1}{3}} C_6^{\frac{2}{3}} m^{\frac{2}{3}} \\
\Leftrightarrow \quad & 
    n^{\frac{1}{3}} \left( \log\left(
    \frac{2^{\frac{5}{6}} m^{\frac{2}{3}} n^{\frac{1}{6}}}{\pi^{\frac{1}{2}} C_6^{\frac{1}{3}}}
    \right) \right)
     \leq 2^{\frac{1}{3}} C_6^{\frac{2}{3}} m^{\frac{2}{3}} \\
\Leftrightarrow \quad & 
    n^{\frac{1}{3}} \left[
    \log\left(\frac{2^{\frac{5}{6}} m^{\frac{2}{3}}}{\pi^{\frac{1}{2}} C_6^{\frac{1}{3}}}\right) 
    + \frac{1}{6} \log(n) \right] - 2^{\frac{1}{3}} C_6^{\frac{2}{3}} m^{\frac{2}{3}}
     \leq 0
\end{alignat*}

then set:

\begin{alignat}{1}
\label{equation_definiton_of_g_5}
g_5(n) & :=
    \log\left(\frac{2^{\frac{5}{6}} m^{\frac{2}{3}}}{\pi^{\frac{1}{2}} C_6^{\frac{1}{3}}}\right) n^{\frac{1}{3}}
    + \frac{1}{6} \log(n) n^{\frac{1}{3}}
    - 2^{\frac{1}{3}} C_6^{\frac{2}{3}} m^{\frac{2}{3}}
\end{alignat}
then the upper bound condition is the same as:

\begin{alignat*}{1}
g_5(n) \leq 0
\end{alignat*}

Now we want to show the strict increase of $g_5$.
It suffices to show the strict increase of

\begin{alignat*}{1}
h(n) := n^{\frac{1}{3}} \log(n)
\end{alignat*}

But:

\begin{alignat*}{1}
\sgn\left(\frac{dh}{dn}\right) = \sgn\left(\frac{d}{dn} \log(h)\right)
\end{alignat*}

And so it suffices to show the strict increase of $\log(h)$.
This in turn is equivalent to:

\begin{alignat*}{1}
\frac{1}{n} \left(\frac{1}{3} + \frac{1}{\log(n)} \right) > 0
\end{alignat*}

When $n > 0$, suffices to have:

\begin{alignat*}{3}
& &
    \left(\frac{1}{3} + \frac{1}{\log(n)} \right) & > 0
\end{alignat*}
Clearly this is satisfied for all $n \geq 1$.
Now we have:
\begin{alignat*}{1}
n \geq 1 & \quad \Rightarrow \quad \frac{d g_5}{dn} > 0 \\
\end{alignat*}
So:
\begin{alignat*}{1}
n \leq g_5^{-1}(0) & \quad \Rightarrow \quad g_5(n) \leq 0
\end{alignat*}
But:
\begin{alignat*}{1}
n > g_4^{-1}(0) & \quad \Rightarrow \quad  \left( g_{5\text{orig}}(n) \leq 0 \Leftrightarrow g_5(n) \leq 0 \right)
\end{alignat*}
So a feasible region defined by the upper bound condition is:

\begin{alignat*}{1}
g_4^{-1}(0) < n \leq g_5^{-1}(0)
\end{alignat*}

\end{proof}

This is a qualitative result showing that the feasible region has a simple form: an interval.
It is easy to see that this interval is non-empty for a very wide range of $n$ and $m$.
Since the $g_{5r}^{-1}(0)$ term will grow almost like $\Theta(m^2)$
whereas the other terms will grow almost like $\Theta(m^{\frac{1}{2}})$.
In the following theorem we will formally prove the non-emptiness of the interval by
picking the smallest $n$ in the feasible interval (given a $m$).
We now give more context on this theorem.

In the following theorem,
we use the above result (theorem \ref{theorem_minimum_mean_estimation_error_convergence_rate})
to answer a practical question that has strong implications
for algorithmic design.
Given a $m$, we want to find a $n$ (as small as possible) such that $\relError$ is 
below a tolerance:

\begin{alignat*}{1}
\relError(\musOrderedM; \familyParameters)
    & < \frac{p}{100} \quad \quad \forall \mu_{i:m} \in [\gamma, \infty)
\end{alignat*}

Where $p > 0$ is a percentage that we want to upper bound the error with.
A good answer to this question will resolve some crucial algorithmic design questions.
Our strategy is simple. 
Given a target $\frac{p}{100}$, we will solve our upper bound from
equation \ref{equation_simplified_upper_bound_on_relError} for $n$, keeping everything else fixed.

\begin{definition}[$\pMax$]
\label{definition_pMax}
Let $\mathcal{F}$ be some distribution family having parameters $\familyParametersAugumented$.
For all distributions $\relError(\mu)$ in this family, let the cdf be continuous and the kurtosis
be uniformly bounded:

\begin{alignat*}{1}
\kappa(\mu) & \leq \kurtosisUB \quad \quad \forall \mu \in [\gamma, \infty)
\end{alignat*}

Given $\mu_{i:m} \in [\gamma, \infty)$ $i = 1,\ldots,m$.
Then define:

\begin{alignat*}{1}
\pMax & := 150 \delta_c \ \ (\ \geq 150)
\end{alignat*}
\end{definition}

\begin{definition}[$\mMin$]
\label{definition_mMin}
Let $\mathcal{F}$ be some distribution family having parameters $\familyParametersAugumented$.
For all distributions $\relError(\mu)$ in this family, let the cdf be continuous and the kurtosis
be uniformly bounded:

\begin{alignat*}{1}
\kappa(\mu) & \leq \kurtosisUB \quad \quad \forall \mu \in [\gamma, \infty)
\end{alignat*}

Given $\mu_{i:m} \in [\gamma, \infty)$ $i = 1,\ldots,m$ and $0 < p \leq \pMax$ 
(required for first piecewise invertibility of $\relErrorUBThreeBE$ at $\frac{T^*}{2}$),
define

\begin{alignat*}{1}
\mMinOne(p; \familyParameters)
	& := \\ &\frac{1}{4 \times 150^3} \frac{C_2^3}{C_6 C_3^4} \left(\frac{\beta}{2 - \beta}\right)^2
    \frac{p^3}{\delta_c^{4 - 2 \beta}} \\
\mMinTwo(p, m; \familyParameters) 
	& :=  \frac{1}{6 \times 10^6} \frac{p^3}{C_2 C_6 \delta_c^4} \times \\ &
	[\frac{1}{3} \log(\frac{24 \times 10^2}{\pi^2 C_2 C_6}) \\
    & - \frac{1}{3} \log(p) + \log(m) ]^2 \\
\mMinThree(p; \familyParameters)
    & := \frac{\pi^{\frac{2}{3}}}{2 \times 10^{\frac{2}{3}} \times 3^{\frac{1}{3}}}
    C_6^{\frac{1}{3}} C_2^{\frac{1}{3}} p^{\frac{1}{3}} \\
\mMinFour(p, m; \familyParameters)
	& := \\& \frac{3^3 \times 10^2}{2^5} \frac{1}{C_2 C_6 p} \times\\&
    [\frac{1}{3}\log\left(\frac{24 \times 10^2}{\pi^2 C_2 C_6}\right) \\
    & - \frac{1}{3} \log(p) + \log(m) ]^2
\end{alignat*}

and define $\mMin(p; \familyParameters)$ as the infinimum $m$ that satisfies the equations

\begin{alignat*}{2}
m & > \mMinOne &
    \quad & (\text{required for monotonicity of } \relErrorUBThreeNormal) \\
m & \geq \mMinTwo &
    & (\text{required for first 
	invertibility of }\relErrorUBThreeNormal \text{at } \frac{T^*}{2}) \\
m & > \mMinThree &
    & (\text{required for simplified } \deltaThGen) \\
m & \geq \mMinFour &
    & (\text{required for simplified } \deltaThGen)
\end{alignat*}
\end{definition}

\begin{theorem}[Upper Bound on $n$ for Given Tolerance]
\label{theorem_upper_bound_on_n_for_given_tolerance}
Let $\mathcal{F}$ be some distribution family having parameters $\familyParametersAugumented$.
For all distributions $\relError(\mu)$ in this family, let the cdf be continuous and the kurtosis
be uniformly bounded:

\begin{alignat*}{1}
\kappa(\mu) & \leq \kurtosisUB \quad \quad \forall \mu \in [\gamma, \infty)
\end{alignat*}

Given $\mu_{i:m} \in [\gamma, \infty)$ $i = 1,\ldots,m$ and $0 < p \leq \pMax(; \familyParameters)$
and let $m \in \mathbb{Z}^+$ be such that:

\begin{alignat*}{1}
m & > \mMin(p; \familyParameters)
\end{alignat*}
And let:

\begin{alignat*}{1}
\beta < \frac{4}{3} = 1.333\ldots 
\end{alignat*}

And let:

\begin{alignat*}{1}
n
    & = \left\lceil
    \CInTermsOfp
    m^{\frac{1}{2}}
    \right\rceil
\end{alignat*}

We have

\begin{alignat*}{1}
\relError(\musOrderedM) < \frac{p}{100} \quad \quad \forall \mu_{i:m}  \in [\gamma, \infty)
\end{alignat*}

\end{theorem}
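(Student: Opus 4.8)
The plan is to reduce this to Theorem~\ref{theorem_minimum_mean_estimation_error_convergence_rate} (minimum mean estimation error convergence rate), which, under its hypotheses on $n$, yields $\relError(\musOrderedM) < \relErrorUBFiveGen = C_7\, m^{1/3}/n^{2/3}$ with $C_7 = \frac{C_6^{1/3}}{C_2}\frac{3}{2^{1/3}}$. The value $n = \left\lceil \CInTermsOfp\, m^{1/2}\right\rceil$ is tailored precisely so that this upper bound is at most $p/100$: from $n \ge \frac{2\cdot 150^{3/2} C_6^{1/2}}{C_2^{3/2}}\,\frac{m^{1/2}}{p^{3/2}}$ we get $n^{2/3} \ge \bigl(\frac{2\cdot 150^{3/2} C_6^{1/2}}{C_2^{3/2}}\bigr)^{2/3}\frac{m^{1/3}}{p}$, and a one-line constant computation gives $C_7 \big/ \bigl(\frac{2\cdot 150^{3/2} C_6^{1/2}}{C_2^{3/2}}\bigr)^{2/3} = \frac{3}{2\cdot 150} = \frac{1}{100}$, so $C_7\, m^{1/3}/n^{2/3} \le p/100$; combined with the strict inequality furnished by the theorem this gives $\relError(\musOrderedM) < p/100$. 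Thus the entire proof consists of verifying the hypotheses of Theorem~\ref{theorem_minimum_mean_estimation_error_convergence_rate} for this $n$.

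The easy hypotheses come for free: continuity of the cdfs and the uniform kurtosis bound $\kappa(\mu) \le \kurtosisUB$ are exactly the assumptions on $\mathcal{F}$, and $\beta < 4/3$ is assumed. For the rest I would substitute $n = \left\lceil \CInTermsOfp\, m^{1/2}\right\rceil$ into each remaining condition on $n$ in that theorem, using $n \ge \CInTermsOfp\, m^{1/2}$ for the lower-bound conditions and $n \le 2\,\CInTermsOfp\, m^{1/2}$ (valid once $\CInTermsOfp\, m^{1/2} \ge 1$, which the other thresholds force) for the upper-bound one, and show each becomes one of the numbered inequalities in Definitions~\ref{definition_pMax},~\ref{definition_mMin}. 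Specifically: the monotonicity condition $n > \frac{\beta}{C_3^2(2-\beta)\delta_c^{2-\beta}}$ becomes $m > \mMinOne$ after squaring and using $\bigl(\CInTermsOfp\bigr)^2 p^3 = \frac{4\cdot 150^3 C_6}{C_2^3}$; the first-piecewise-invertibility condition for $\relErrorUBThreeBE$ at $T^\ast$, i.e.\ $n \ge \frac{2 C_6^{1/2}}{C_2^{3/2}\delta_c^{3/2}} m^{1/2}$, collapses exactly to $p \le 150\delta_c = \pMax$; the first-piecewise-invertibility condition for $\relErrorUBThreeNormal$, after writing $\log n = \log\bigl(\CInTermsOfp\bigr) + \frac12\log m$, takes the form $m^{1/2} \gtrsim a(p) + b\log m$ and is solved by $m \ge \mMinTwo$; and the two ``simplified $\deltaThGen$'' conditions $m^4 n > \frac{C_6^2\pi^3}{2^5}$ and $n \le 2 C_6^2 m^2\bigl(-\log(\cdots)\bigr)^{-3}$ become $m > \mMinThree$ (from $m^{9/2} \gtrsim p^{3/2}$, with matching exponents since $(p^{3/2})^{2/9}=p^{1/3}$) and $m \ge \mMinFour$ respectively. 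Since $m > \mMin$ means $m$ exceeds all of $\mMinOne,\mMinTwo,\mMinThree,\mMinFour$ simultaneously and $p \le \pMax$ is assumed, all hypotheses hold, and Theorem~\ref{theorem_minimum_mean_estimation_error_convergence_rate} applies.

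The main obstacle is bookkeeping rather than conceptual: one must carry the constants $C_2, C_3, C_6$ faithfully through the substitutions and confirm that the induced thresholds coincide with the hand-tuned constants inside $\mMinOne,\dots,\mMinFour$ and $\pMax$. The genuinely fiddly cases are the two conditions in which $n$ sits inside a logarithm (invertibility of $\relErrorUBThreeNormal$, and the second piece of $\deltaThGen$): there one must turn $m^{1/2} \ge a + b\log m$ into an explicit, if crude, sufficient lower bound on $m$ of order $(\log m)^2$ --- which is exactly why $\mMinTwo$ and $\mMinFour$ carry a squared-logarithm factor. The ceiling in the definition of $n$ affects these bounds only by harmless factors of $2$, absorbed into the constants.
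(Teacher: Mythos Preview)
Your proposal is correct and follows essentially the same route as the paper: reduce to Theorem~\ref{theorem_minimum_mean_estimation_error_convergence_rate}, choose $n$ so that $C_7 m^{1/3}/n^{2/3} \le p/100$, and then check that each hypothesis of that theorem, after substituting $n = C m^{1/2}$ with $C = \CInTermsOfp$, becomes exactly one of the thresholds $p \le \pMax$, $m > \mMinOne,\dots,\mMinFour$. The paper does this in two passes (first rewrite all conditions in terms of a free constant $C$, then solve $C_7/C^{2/3}=p/100$ for $C$ and substitute back), whereas you do it in one, but the algebra is identical. One small point: the paper's derivation of $\mMinFour$ plugs in $n = C m^{1/2}$ exactly rather than $n \le 2Cm^{1/2}$, so the ceiling is in fact silently ignored there as well; your factor-of-$2$ cushion is more honest but would not reproduce the stated constant in $\mMinFour$ on the nose.
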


\begin{proof}
We consider $n$ of the form:

\begin{alignat*}{1}
n & = C m^{\frac{1}{2}}
\end{alignat*}

Where $C > 0$. Then we derive equivalent conditions for the conditions of theorem
\ref{theorem_minimum_mean_estimation_error_convergence_rate} (minimum mean estimation: error convergence rate).
Consider the condition:

\begin{alignat}{3}
\notag
& &
    n & >  \frac{\beta}{C_3^2 (2 - \beta) \delta_c^{2 - \beta}} &
    \quad & \\
\notag
\Leftrightarrow & \quad &
    C m^{\frac{1}{2}} & >  \frac{\beta}{C_3^2 (2 - \beta) \delta_c^{2 - \beta}} &
    \quad & (\text{plugging in for }n) \\
\notag
\Leftrightarrow & \quad &
    m^{\frac{1}{2}} & >  \frac{\beta}{C C_3^2 (2 - \beta) \delta_c^{2 - \beta}} &
    \quad & (\because \frac{1}{C} > 0) \\
\label{equation_monotonicity_relErrorUBThreeNormal_n_equal_Csqrtm}
\Leftrightarrow & \quad &
    m & >  \frac{\beta^2}{C^2 C_3^4 (2 - \beta)^2 \delta_c^{4 - 2\beta}} &
    \quad & 
\end{alignat}
Next consider the condition

\begin{alignat*}{3}
\notag
& &
    n & \geq \frac{1}{C_2^2 \delta_c^2}
    \left[\log\left( \frac{2^{\frac{5}{3}}}{\pi C_6^{\frac{2}{3}}}\right)
        + \frac{4}{3} \log(m) + \frac{1}{3} \log(n)\right] &
    & \\
\notag
\Leftrightarrow & \quad &
    C m^{\frac{1}{2}} & \geq \frac{1}{C_2^2 \delta_c^2}
    \left[\log\left( \frac{2^{\frac{5}{3}}}{\pi C_6^{\frac{2}{3}}}\right)
        + \frac{4}{3} \log(m) + \frac{1}{3} \log(Cm^{\frac{1}{2}})\right] 
    \quad & 
	\\
\notag
\Leftrightarrow & \quad &
    m^{\frac{1}{2}} & \geq \frac{1}{C C_2^2 \delta_c^2}
    \log\left( \frac{2^{\frac{5}{3}} C^{\frac{1}{3}}}{\pi C_6^{\frac{2}{3}}} m^{\frac{3}{2}} \right) &
    \quad & 
	\\
\end{alignat*}
If $\frac{2^{\frac{5}{3}} C^{\frac{1}{3}}}{\pi C_6^{\frac{2}{3}}} m^{\frac{3}{2}} \leq 1$
the inequality is trivially satisfied.
If $\frac{2^{\frac{5}{3}} C^{\frac{1}{3}}}{\pi C_6^{\frac{2}{3}}} m^{\frac{3}{2}} > 1$
then the inequality is equivalent to:

\begin{alignat}{3}
\notag
\Leftrightarrow & \quad &
    m & \geq \frac{1}{C^2 C_2^4 \delta_c^4}
    \left( \log\left( \frac{2^{\frac{5}{3}} C^{\frac{1}{3}}}{\pi C_6^{\frac{2}{3}}} m^{\frac{3}{2}} \right) \right)^2 &
    \quad & 
	\\
\label{equation_first_piecewise_invertibility_of_relErrorUBThreeNormal_at_Tstarby2_n_equal_Csqrtm}
\Leftrightarrow & \quad &
    m & \geq \frac{1}{C^2 C_2^4 \delta_c^4}
    \left(\frac{9}{4}\right)
    \left( \log\left( \frac{2^{\frac{10}{9}} C^{\frac{2}{9}}}{\pi^{\frac{2}{3}} C_6^{\frac{4}{9}}} m \right) \right)^2 &
    \quad &
\end{alignat}

So we replace the original condition with the above stronger condition.
Next, consider the condition

\begin{alignat}{3}
\notag
& &
    n & \geq \frac{C_6^{\frac{1}{2}} 2}{C_2^{\frac{3}{2}} \delta_c^{\frac{3}{2}}} m^{\frac{1}{2}} &
    & \\
\notag
\Leftrightarrow & \quad &
    Cm^{\frac{1}{2}} & \geq \frac{C_6^{\frac{1}{2}} 2}{C_2^{\frac{3}{2}} \delta_c^{\frac{3}{2}}} m^{\frac{1}{2}} &
    \quad & (\text{plugging in for }n) \\
\label{equation_first_piecewise_invertibility_of_relErrorUBThreeBE_at_Tstarby2_n_equal_Csqrtm}
\Leftrightarrow & \quad &
    C & \geq \frac{C_6^{\frac{1}{2}} 2}{C_2^{\frac{3}{2}} \delta_c^{\frac{3}{2}}} &
    \quad &
\end{alignat}

Next consider the condition

\begin{alignat}{3}
\notag
& &
    m^4 n & > \frac{C_6^2 \pi^3}{2^5} &
    & \\
\notag
\Leftrightarrow & \quad &
    m^4 C m^{\frac{1}{2}} & > \frac{C_6^2 \pi^3}{2^5} &
    \quad & (\text{plugging in for }n) \\
\notag
\Leftrightarrow & &
    m^{\frac{9}{2}} & > \frac{C_6^2 \pi^3}{C 2^5} &
    \quad & (\because \frac{1}{C} > 0) \\
\label{equation_simplifying_deltaThGen_eq1_n_equal_Csqrtm}
\Leftrightarrow & &
    m & > \frac{C_6^{\frac{4}{9}} \pi^{\frac{2}{3}}}{C^{\frac{2}{9}} 2^{\frac{10}{9}}} &
    \quad & (\because \text{both sides were positive})
\end{alignat}

Finally consider

\begin{alignat*}{3}
& &
    n & \leq 2 C_6^2 m^2
    \left( \frac{1}{- \log\left(\frac{\pi^{\frac{1}{2}} C_6^{\frac{1}{3}}}{2^{\frac{5}{6}} m^{\frac{2}{3}} n^{\frac{1}{6}}} \right)} \right)^3 &
    & \\
\Leftrightarrow & \quad &
    Cm^{\frac{1}{2}} & \leq 2 C_6^2 m^2
    \left( \frac{1}{- \log\left(\frac{\pi^{\frac{1}{2}} C_6^{\frac{1}{3}}}{2^{\frac{5}{6}} C^{\frac{1}{6}} m^{\frac{3}{4}}} \right)} \right)^3 &
    \quad & 
	\\
\Leftrightarrow & &
    2 C_6^2 m^\frac{3}{2}
    & \geq 
    C
    \left( - \log\left(\frac{\pi^{\frac{1}{2}} C_6^{\frac{1}{3}}}{2^{\frac{5}{6}} C^{\frac{1}{6}} m^{\frac{3}{4}}} \right) \right)^3 &
    & 
\end{alignat*}
Because $m^{\frac{1}{2}} > 0$. And 
$-\log\left(\frac{\pi^{\frac{1}{2}} C_6^{\frac{1}{3}}}{2^{\frac{5}{6}} C^{\frac{1}{6}} m^{\frac{3}{4}}} \right) > 0$ 
when 
\mbox{$m^{\frac{3}{4}} > \frac{\pi^{\frac{1}{2}} C_6^{\frac{1}{3}}}{2^{\frac{5}{6}} C^{\frac{1}{6}}}$}
(which is equivalent to condition \ref{equation_simplifying_deltaThGen_eq1_n_equal_Csqrtm})

\begin{alignat}{3}
\notag
\Leftrightarrow & \quad &
    m^\frac{3}{2}
    & \geq 
    \frac{C}{2 C_6^2}
    \left( - \log\left(\frac{\pi^{\frac{1}{2}} C_6^{\frac{1}{3}}}{2^{\frac{5}{6}} C^{\frac{1}{6}} m^{\frac{3}{4}}} \right) \right)^3 &
    & 
	\\
\notag
\Leftrightarrow & \quad &
    m
    & \geq 
    \frac{C^{\frac{2}{3}}}{2^{\frac{2}{3}} C_6^{\frac{4}{3}}}
    \left( - \log\left(\frac{\pi^{\frac{1}{2}} C_6^{\frac{1}{3}}}{2^{\frac{5}{6}} C^{\frac{1}{6}} m^{\frac{3}{4}}} \right) \right)^2 &
    & 
	\\
\notag
\Leftrightarrow & \quad &
    m
    & \geq 
    \frac{C^{\frac{2}{3}}}{2^{\frac{2}{3}} C_6^{\frac{4}{3}}}
    \left( \frac{3}{4} \right)^2
    \left( \log\left(\frac{2^{\frac{10}{9}} C^{\frac{2}{9}}}{\pi^{\frac{2}{3}} C_6^{\frac{4}{9}}} m \right) \right)^2 &
    & \\
\label{equation_simplifying_deltaThGen_eq2_n_equal_Csqrtm}
\Leftrightarrow & \quad &
    m
    & \geq 
    \frac{3^2}{2^{\frac{14}{3}}} \frac{C^{\frac{2}{3}}}{C_6^{\frac{4}{3}}}
    \left[ \log\left(\frac{2^{\frac{10}{9}} C^{\frac{2}{9}}}{\pi^{\frac{2}{3}} C_6^{\frac{4}{9}}}\right) + \log(m) \right]^2 &
    &
\end{alignat}

Hence if conditions \ref{equation_monotonicity_relErrorUBThreeNormal_n_equal_Csqrtm},
\ref{equation_first_piecewise_invertibility_of_relErrorUBThreeNormal_at_Tstarby2_n_equal_Csqrtm},
\ref{equation_first_piecewise_invertibility_of_relErrorUBThreeBE_at_Tstarby2_n_equal_Csqrtm},
\ref{equation_simplifying_deltaThGen_eq1_n_equal_Csqrtm} and
\ref{equation_simplifying_deltaThGen_eq2_n_equal_Csqrtm}
are satisfied and also:

\begin{alignat*}{1}
\beta < \frac{4}{3}
\end{alignat*}

then all the requirements for theorem \ref{theorem_minimum_mean_estimation_error_convergence_rate}
(minimum mean estimation: error convergence rate)  are satisfied with $n = C m^{\frac{1}{2}}$.
Thus:

\begin{alignat*}{1}
\relErrorUBFiveGen(m; \familyParametersAugumented)
    & = C_7 \frac{m^{\frac{1}{3}}}{C^{\frac{2}{3}} m^{\frac{1}{3}}} = \frac{C_7}{C^{\frac{2}{3}}}
\end{alignat*}
is a constant for all $m, n$.
We want to express this constant as a percentage $p > 0$:

\begin{alignat}{3}
\notag
& &
    \relErrorUBFiveGen & = \frac{p}{100} &
    & \\
\notag
\Leftrightarrow & &
    \frac{C_7}{C^{\frac{2}{3}}} & = \frac{p}{100} &
    \quad & (\text{plugging in for }n) \\
\label{equation_C_for_tolerance}
\Leftrightarrow & &
    C & = \CInTermsOfp &
    & (\because p, C > 0)
\end{alignat}

We will now plug this requirement into the previously derived set of conditions.
Consider condition \ref{equation_first_piecewise_invertibility_of_relErrorUBThreeBE_at_Tstarby2_n_equal_Csqrtm}:

\begin{alignat}{3}
\notag
& &
    C & \geq \frac{2 C_6^{\frac{1}{2}}}{C_2^{\frac{3}{2}} \delta_c^{\frac{3}{2}}} &
    & \\
\notag
\Leftrightarrow & \quad &
    \CInTermsOfp & \geq \frac{2 C_6^{\frac{1}{2}}}{C_2^{\frac{3}{2}} \delta_c^{\frac{3}{2}}} &
    \quad & (\text{plugging in for }C) \\
\notag
\Leftrightarrow & \quad &
    {\frac{2 \times 150^{\frac{3}{2}} C_6^{\frac{1}{2}}}{C_2^{\frac{3}{2}}}}  
    \frac{C_2^{\frac{3}{2}} \delta_c^{\frac{3}{2}}}{2 C_6^{\frac{1}{2}}}
    & \geq p^{\frac{3}{2}} &
    & (\because p^{\frac{3}{2}}, \frac{C_2^{\frac{3}{2}} \delta_c^{\frac{3}{2}}}{2 C_6^{\frac{1}{2}}} > 0 ) \\
\notag
\Leftrightarrow & \quad &
    p & \leq 150 \delta_c &
    & 
	\\
\label{equation_first_piecewise_invertibility_of_relErrorUBThreeBE_at_Tstarby2_n_equal_Csqrtm_C_for_p}
\Leftrightarrow & \quad &
    p & \leq \pMax
    & 
\end{alignat}

Consider condition \ref{equation_monotonicity_relErrorUBThreeNormal_n_equal_Csqrtm}

\begin{alignat}{3}
\notag 
& &
    m & >  \frac{\beta^2}{C^2 C_3^4 (2 - \beta)^2 \delta_c^{4 - 2\beta}} &
    & \\
\notag 
\Leftrightarrow & \quad &
    m & > \left(\CInTermsOfpInverse\right)^2 \frac{\beta^2}{C_3^4 (2 - \beta)^2 \delta_c^{4 - 2\beta}} &
    \quad & 
	\\
\label{equation_monotonicity_relErrorUBThreeNormal_n_equal_Csqrtm_C_for_p}
\Leftrightarrow & \quad &
    m & > \frac{1}{4 \times 150^3} \frac{C_2^3}{C_6 C_3^4} \left( \frac{\beta}{2 - \beta} \right)^2
    \frac{p^3}{\delta_c^{4 - 2 \beta}}
    &
\end{alignat}

Next consider condition
\ref{equation_first_piecewise_invertibility_of_relErrorUBThreeNormal_at_Tstarby2_n_equal_Csqrtm}.

\begin{alignat}{1}
\notag
    m & \geq \frac{1}{C^2 C_2^4 \delta_c^4}
    \left(\frac{9}{4}\right)
    \left( \log\left( \frac{2^{\frac{10}{9}} C^{\frac{2}{9}}}{\pi^{\frac{2}{3}} C_6^{\frac{4}{9}}} m \right) \right)^2 \\
\label{equation_first_piecewise_invertibility_of_relErrorUBThreeNormal_at_Tstarby2_n_equal_Csqrtm_C_for_p}.
\Leftrightarrow
    m & \geq \frac{1}{6 \times 10^6} \frac{1}{C_2 C_6} \frac{p^3}{\delta_c^4}
    [\frac{1}{3} \log\left(\frac{24 \times 10^2}{\pi^2 C_2 C_6}\right) \\
    & \quad \quad -\frac{1}{3}\log(p) + \log(m)]^2
\end{alignat}

Next consider equation \ref{equation_simplifying_deltaThGen_eq1_n_equal_Csqrtm}:

\begin{alignat}{3}
\notag
& &
    m & > \frac{C_6^{\frac{4}{9}} \pi^{\frac{2}{3}}}{C^{\frac{2}{9}} 2^{\frac{10}{9}}} &
    & \\
\notag
& &
    m & > \frac{C_6^{\frac{4}{9}} \pi^{\frac{2}{3}}}{2^{\frac{10}{9}}} \left( \CInTermsOfpInverse \right)^{\frac{2}{9}} &
    \quad & (\text{plugging in for }C) \\
\label{equation_simplifying_deltaThGen_eq1_n_equal_Csqrtm_C_for_p}
& &
    m & > \frac{\pi^{\frac{2}{3}}}{2 \times 10^{\frac{2}{3}} \times 3^{\frac{1}{3}}} C_6^{\frac{1}{3}} C_2^{\frac{1}{3}} p^{\frac{1}{3}}
    &
\end{alignat}

Finally consider condition
\ref{equation_simplifying_deltaThGen_eq2_n_equal_Csqrtm}.

\begin{alignat}{1}
\notag
    m
    & \geq 
    \frac{3^2}{2^{\frac{14}{3}}} \frac{C^{\frac{2}{3}}}{C_6^{\frac{4}{3}}}
    \left[ \log\left(\frac{2^{\frac{10}{9}} C^{\frac{2}{9}}}{\pi^{\frac{2}{3}} C_6^{\frac{4}{9}}}\right) + \log(m) \right]^2 \\
\label{equation_simplifying_deltaThGen_eq2_n_equal_Csqrtm_C_for_p}
\Leftrightarrow 
    m & \geq \frac{3^3 \times 10^2}{2^5} \frac{1}{C_2 C_6 p}
    [\frac{1}{3} \log\left( \frac{2^3 \times 3 \times 10^2}{\pi^2 C_2 C_6} \right)\\
    & -\frac{1}{3} \log(p) + \log(m) ]^2
\end{alignat}

Since conditions
\ref{equation_first_piecewise_invertibility_of_relErrorUBThreeBE_at_Tstarby2_n_equal_Csqrtm_C_for_p},
\ref{equation_monotonicity_relErrorUBThreeNormal_n_equal_Csqrtm_C_for_p},
\ref{equation_first_piecewise_invertibility_of_relErrorUBThreeNormal_at_Tstarby2_n_equal_Csqrtm_C_for_p},
\ref{equation_simplifying_deltaThGen_eq1_n_equal_Csqrtm_C_for_p} and
\ref{equation_simplifying_deltaThGen_eq2_n_equal_Csqrtm_C_for_p}
are satisfied by assumption and also:
\begin{alignat*}{1}
\beta & < \frac{4}{3} \\
n & \geq \CInTermsOfp m^{\frac{1}{2}}
\end{alignat*}

We have:

\begin{alignat*}{1}
\relErrorUBFiveGen & \leq \frac{p}{100}
\end{alignat*}

\end{proof}

\section{MCPAM Proofs and Experimental Details}

\subsection{Theory}
\label{section_MCPAM_theory}

\begin{theorem}[MCPAM Guarantees $k \geq 1$]
\label{theorem_mcpam_guarantees_k_geq_1}
We analyze MCPAM inner loop (line 5), with $\tau = 0, n_{\max} = \infty$.
Let $\Delta = \min |\Ecc(\kTuple_{i_1}) - \Ecc(\kTuple_{i_2})|$, with 
$\kTuple_{i_1}, \kTuple_{i_2}$ taken over all possible k-points generatable from $\{x_i\}$.
Let $\Delta > 0$ and be independent of $m$, let the distributions $\Ecc(\kTuple)$ be supported on $\mathbb{R}$.
Let $C$ be the runtime cost of loop 5.
Then loop  $5$ is guaranteed to terminate and (expectation over the $\{\underlyingSpaceSample_i\}_{i = 1}^m$)

\begin{alignat*}{1}
\expectation C  = \mathcal{O}(k m)
\end{alignat*}

Upon exit, with high probability, either one of the following holds:
\begin{itemize}
\item we have found a smaller $\Ecc$ than $\Ecc(\candidateMedoid)$
\item there are no smaller $\Ecc$ than $\Ecc(\candidateMedoid)$ in the 
    swap set constructed from $\candidateMedoid$
\end{itemize}
And the confidence interval 
$[\EccHat_{\text{lo}}(\candidateMedoidUpdated), \EccHat_{\text{hi}}(\candidateMedoidUpdated)]$
has true coverage of $\geq \alpha$.

For the distributed version, computational cost is $\mathcal{O}(\frac{km}{c})$
and communication cost is $\Theta(1)$.
\end{theorem}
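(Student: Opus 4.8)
The plan is to treat MCPAM\_INNER (the while-loop at line 5) as a race in which the geometrically growing sample size $n$ eventually makes every empirical eccentricity in play concentrate within $\Delta/8$ of its true value and makes every confidence half-width fall below $\Delta/8$; once that happens one of the break tests necessarily holds, and the role of the hypothesis $\Delta=\Theta(1)$ is that the sample size $n^\star$ at which it happens need not grow with $m$. First I would record the costs: evaluating $\EccHat_{\mathrm{lo}},\EccHat_{\mathrm{hi}}$ for all $km$ single-swap neighbours $\candidateMedoidSwap$ over the $n$ points $\{y_j\}$ costs $\mathcal{O}(kmn)$ (with the usual caching of per-point nearest-centre distances), plus two $\mathcal{O}(km)$ $\argmin$'s; the one-time evaluations at lines 1 and 13 cost $\mathcal{O}(km)$. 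Since the iteration sample sizes $n_t=1000\cdot10^{t-1}$ grow geometrically, $\sum_t n_t\le 2\,n_{\mathrm{final}}$, so $\expectation C\le\mathcal{O}(km)\cdot\expectation[n_{\mathrm{final}}]$ and everything reduces to $\expectation[n_{\mathrm{final}}]=\mathcal{O}(1)$.

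Next, let $S_0$ be the $km$ swap neighbours of $\candidateMedoid$, put $\mu^\star=\min_{\kTuple\in S_0}\Ecc(\kTuple)$, and call a draw $\{y_j\}$ (of size $n$) \emph{good} if $|\EccHat(\kTuple,\{y_j\})-\Ecc(\kTuple)|<\Delta/8$ for every $\kTuple\in S_0$, $|\EccHat(\candidateMedoid,\{x_i\})-\Ecc(\candidateMedoid)|<\Delta/8$, and every half-width in sight is below $\Delta/8$. On a good draw the gap forces a break: since $\candidateMedoid$ is itself a constructible k-tuple, $|\Ecc(\candidateMedoid)-\Ecc(\kTuple)|\ge\Delta$ for every swap $\kTuple$, so either $\Ecc(\candidateMedoid)\le\mu^\star-\Delta$ — whence $\EccHat_{\mathrm{hi}}(\candidateMedoid)<\Ecc(\candidateMedoid)+\tfrac{\Delta}{4}$ while $\min_{\kTuple}\EccHat_{\mathrm{lo}}(\kTuple)>\mu^\star-\tfrac{\Delta}{4}\ge\Ecc(\candidateMedoid)+\tfrac{3\Delta}{4}$, so the line-9 test holds and the loop breaks — or $\Ecc(\candidateMedoid)\ge\mu^\star+\Delta$, whence a symmetric estimate makes the line-11 test hold and the loop breaks after setting $\candidateMedoidUpdated=\kTuple_{\mathrm{minhi}}$. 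A Chebyshev bound using the uniform variance bound $\sigma_{\max}^2$ available from the power-variance family (the constructible k-tuples are finitely many and have bounded eccentricity) together with the uniformly bounded kurtosis shows that for $n\ge n^\star$ a draw is good with probability at least $1-\rho$ for some $\rho<1$; since line 6 draws a fresh independent $\{y_j\}$ each iteration, $\mathbb{P}(n_{\mathrm{final}}\ge 10^t n^\star)\le\rho^{\,t}$, giving both termination with probability one and $\expectation[n_{\mathrm{final}}]=\mathcal{O}(n^\star)$. \emph{This is the crux, and the honest obstacle sits here:} the union bound over the $km$ concentration events inflates $n^\star$ to something that nominally grows with $km$ (a $\log$ factor under sub-Gaussian tails, a power under only finite-moment tails), so the clean $\mathcal{O}(km)$ bound really asks us to treat $n^\star$ as a constant — legitimate when $\Delta$ dominates these lower-order terms, which is the regime observed in the experiments ($n=10^3$ throughout), but a fully $m$-uniform statement would need the union bound replaced by a one-sided maximal inequality controlling $\min_{\kTuple}\EccHat_{\mathrm{lo}}(\kTuple)$ and $\min_{\kTuple}\EccHat_{\mathrm{hi}}(\kTuple)$ directly.

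Correctness on exit then comes essentially for free on the good draw: exiting via line 9 happens exactly when $\Ecc(\candidateMedoid)\le\mu^\star-\Delta$, i.e.\ no swap neighbour has smaller eccentricity; exiting via line 11 sets $\candidateMedoidUpdated=\kTuple_{\mathrm{minhi}}$ with $\EccHat_{\mathrm{hi}}(\kTuple_{\mathrm{minhi}})\le\EccHat_{\mathrm{lo}}(\candidateMedoid)$, so $\Ecc(\kTuple_{\mathrm{minhi}})\le\EccHat_{\mathrm{hi}}(\kTuple_{\mathrm{minhi}})\le\EccHat_{\mathrm{lo}}(\candidateMedoid)\le\Ecc(\candidateMedoid)$, a genuine decrease; these are exactly the two bulleted alternatives, holding with probability $\ge 1-\rho$. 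For the returned interval, it is the textbook interval $\EccHat(\candidateMedoidUpdated,\{x_i\})\pm z_{\alpha}\sqrt{\Var(\EccHat(\candidateMedoidUpdated,\{x_i\}))}$ built from the full $m$-sample at line 1 or 13; conditioning on the realised centre $\candidateMedoidUpdated$ and using that $\EccHat(\cdot,\{x_i\})$ is an average of $m$ i.i.d.\ summands, the CLT (Berry--Esseen for the finite-$m$ claim) gives coverage of $\Ecc(\candidateMedoidUpdated)$ at level $\ge\alpha$ — a true confidence interval rather than an in-sample one.

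Finally, for the distributed version, partition $\{x_i\}$ into $c$ chunks of size $m/c$ once. Worker $w$ holds the local points it needs to form exactly the $km/c$ swaps that substitute one of them, evaluates those swaps over the broadcast set $\{y_j\}$ at cost $\mathcal{O}(k(m/c)n)=\mathcal{O}(km/c)$ since $n=\mathcal{O}(1)$, and returns only the k-tuple and value attaining its local minimum of $\EccHat_{\mathrm{hi}}$ and of $\EccHat_{\mathrm{lo}}$; the sums at lines 1 and 13 are accumulated chunk-wise in $\mathcal{O}(km/c)$ with an $\mathcal{O}(c)$ reduction. Thus per-node compute is $\mathcal{O}(km/c)$, while the only recurring traffic is the broadcast of $\{y_j\}$ (size $n$) and the return of $\mathcal{O}(c)$ candidate k-tuples, i.e.\ $\Theta(n)=\Theta(1)$; the initial $\mathcal{O}(m/c)$ data placement is amortised over the run. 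Assembling the four parts gives the theorem.
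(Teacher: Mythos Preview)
Your argument follows essentially the same route as the paper's: termination because the confidence intervals shrink (you via a concrete ``good event'' and Chebyshev, the paper via SLLN), expected cost via a geometric tail on the number of iterations with $\expectation[F]$ claimed independent of $m$, correctness of the exit alternative via coverage of the true eccentricities by the intervals, and the distributed claims by the obvious partition-and-reduce accounting.

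Where the two differ in texture is instructive. Your treatment is more explicit: you name the threshold $n^\star$, compute that a good draw forces one of the two break conditions via the $\Delta/8$ arithmetic, and get $\mathbb{P}(n_{\mathrm{final}}\ge 10^t n^\star)\le\rho^t$ from independence of the fresh draws. The paper instead asserts $\lim_n p(n)=1$, writes $\expectation(F)$ as a non-homogeneous geometric sum, and checks convergence by the ratio test; it then simply declares ``the $\expectation(F)$ sum is dependent on the data distribution and independent of $m$.'' Your honest flag about the union bound over $km$ swap candidates is precisely the point the paper elides here: nothing in the paper's proof actually shows $p(n)$ (and hence $\expectation(F)$, or your $n^\star$) is $m$-free --- it is asserted, with the $\Delta=\Theta(1)$ hypothesis doing informal work. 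So your caveat is apt and, if anything, more forthright than the original. The confidence-interval coverage claim is handled equivalently (the paper cites a sequential Monte Carlo result of Glynn; you invoke CLT/Berry--Esseen directly), and your distributed accounting matches the paper's one-line remark.
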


\begin{proof}

It is easy to show via SLLN that sample mean and sample variance converge almost
surely to the mean and variance respectively, for r.v with finite variance.
This gives that the width of the confidence intervals of the $\EccHat$ 
go to zero almost surely as $n$ increases.
Since $\Delta > 0$, the intervals become non-overlapping and either line 9 or 11 of MCPAM
must be satisfied a.s . Hence we will exit from the loop of line 5.

Let $p(n)$ be the probability of exiting from loop 5. We have shown
$\lim_{n \rightarrow \infty} p(n) = 1$. 
We are increasing  $n$ in steps of 10.
Let $F$ be number of iterations of loop 5 before exit.
This is a non-homogenous geometric variable.
Let $C(F)$ denote the runtime cost. 
We have per iteration cost $k m n = k m 10^{2 + f}$.
 By summing the geometric progression
$k m 10^3, k m 10^4, \ldots$ we get:

\begin{alignat*}{1}
C(f) & = \frac{10^3}{9} k m (10^f - 1)\\
\expectation C(F) & \geq \frac{10^3}{9} k m (10^{\expectation(F)} - 1) \quad (\text{by Jensens}) \\
\expectation(F) & = \sum_{i = 1}^{\infty} i p(10^{i + 2}) \prod_{j = 1}^{i - 1} (1 - p(10^{j + 2}))  
\end{alignat*}
The $\expectation(F)$ sum is dependent on the data distribution and independent of $m$.
It is easy to see the convergence of this sum, by applying the ratio test.
Let $T_i$ be the $i^{\text{th}}$ term.

\begin{alignat*}{1}
\lim_{i \rightarrow \infty} |\frac{T_{i+1}}{T_i}| 
    & = \lim_{i \rightarrow \infty} \frac{i + 1}{i} \frac{p(10^{i + 3})}{p(10^{i + 2})} (1 - p(10^{i + 2})) \\
    & = \lim_{i \rightarrow \infty} \frac{i + 1}{i} \lim_{i \rightarrow \infty} \frac{p(10^{i + 3})}{p(10^{i + 2})} \lim_{i \rightarrow \infty} (1 - p(10^{i + 2})) \\
    & = 0
\end{alignat*}

Hence the average runtime is $\expectation(C(F)) = \mathcal{O}(k m)$.
We believe the sum $\expectation(F)$ will be upper bounded by $\frac{1}{p(10^3)}$.

The procedure we follow in loop 5 is termed sequential Monte Carlo.
It is well studied and has a convergence result \cite{glynn1992SequentialStopping} similar to the CLT.
The confidence interval $[\EccHat_{\text{lo}}(\kTuple), \EccHat_{\text{hi}}(\kTuple)]$ contains 
$\Ecc(\kTuple)$ with $1 - \alpha$ probability as $m$ increases.
Finally, note that condition 9 is checking (across $il$)
$\Ecc(\candidateMedoid) < \Ecc(\candidateMedoidSwap)$ for the box confidence region 
of 
$
[\EccHat_{\text{lo}}(\candidateMedoid), \EccHat_{\text{hi}}(\candidateMedoid)]
\times 
[\EccHat_{\text{lo}}(\candidateMedoidSwap), \EccHat_{\text{hi}}(\candidateMedoidSwap)]
$.
The box confidence region is a superset of the actual  $1 - \alpha$ ellipsoidal confidence
region, even in the correlated means case. The result on the distributed version is immediate
from the above.
\end{proof}

\begin{theorem}[MCPAM Guarantees $k = 1$]
\label{theorem_mcpam_guarantees_k_1}
We analyze 1-medoid MCPAM, with $\tau = 0, n_{\max} = \infty$.
If the conditions of \ref{theorem_mcpam_guarantees_k_geq_1} hold with $k = 1$,
Then MCPAM is guaranteed to terminate and
$\expectation C  = \mathcal{O}(m)$
Upon exit, we have found the sample medoid with high probability
and the confidence interval around the estimate has true coverage of $\geq \alpha$.
\end{theorem}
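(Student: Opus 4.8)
The plan is to reduce this statement to Theorem~\ref{theorem_mcpam_guarantees_k_geq_1} by exploiting the structural collapse that occurs when $k=1$. As noted in the description of the Single Medoid Variant, when $k=1$ the swap set built around $\candidateMedoid$ is always the full sample $\{x_i\}$ and does not change as $\candidateMedoid$ is updated; hence the outer loop (line~3) and inner loop (line~5) fuse into a single loop in which a successful swap becomes a \emph{continue} (line~14 modified), a failure to find any improvement \emph{terminates} the program (line~10 modified), and the body repeats unconditionally otherwise, so that the only exit is via the test on line~9. First I would make precise that, in this fused form, every intermediate state of the algorithm is exactly a state reachable by the $k\geq 1$ algorithm run with $k=1$, so that the machinery of Theorem~\ref{theorem_mcpam_guarantees_k_geq_1} transfers verbatim.

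Next I would establish termination and the runtime bound. By the SLLN the sample means and sample variances of the distances converge almost surely, so the widths of the intervals $[\EccHat_{\text{lo}}(\kTuple),\EccHat_{\text{hi}}(\kTuple)]$ over $\kTuple\in\{x_i\}$ shrink to $0$ as $n$ grows, while $\EccHat(\kTuple)\to\Ecc(\kTuple)$; since the minimum eccentricity gap $\Delta$ is strictly positive, the (finitely many) intervals eventually become pairwise disjoint. Once that happens, either line~9 fires and the program halts, or line~11 fires and, because the swap is always to $\argmin_i \EccHat_{\text{hi}}(x_i)$, we jump directly to the globally minimal point and line~9 fires on the next pass. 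Thus the number of swaps is $\mathcal{O}(1)$ in expectation (controlled by $\Delta$ and the distribution, not by $m$), and the number of $n$-doublings before exit is likewise $\mathcal{O}(1)$ in expectation by the non-homogeneous-geometric / ratio-test argument used in the proof of Theorem~\ref{theorem_mcpam_guarantees_k_geq_1}. Each pass costs $\mathcal{O}(kmn)=\mathcal{O}(mn)$ work (lines~7--8 being the heavy $\argmin$) with $n$ attaining only an $m$-independent value in expectation, so $\expectation C=\mathcal{O}(m)$.

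Finally I would verify the output guarantees. Exit occurs only through line~9, i.e. $\EccHat_{\text{hi}}(\candidateMedoid)<\EccHat_{\text{lo}}(\kTuple_{\text{minlo}})$, meaning the confidence interval of $\candidateMedoid$ lies strictly below the lowest confidence interval among all other points of $\{x_i\}$. The sequential-Monte-Carlo stopping result \cite{glynn1992SequentialStopping} gives that each such interval covers the corresponding true $\Ecc$ with probability $\geq\alpha$; combining this over the finitely many points shows that with high probability $\Ecc(\candidateMedoid)\leq\Ecc(x_i)$ for all $i$, i.e. $\candidateMedoid$ is the element of $\{x_i\}$ of least eccentricity (the sample medoid), and $[\EccHat_{\text{lo}}(\candidateMedoid),\EccHat_{\text{hi}}(\candidateMedoid)]$ is a valid confidence interval for $\Ecc(\candidateMedoid)$ of coverage $\geq\alpha$. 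The main obstacle is the second paragraph: making rigorous that the expected number of swap-and-doubling iterations is bounded independently of $m$ (so the runtime is genuinely $\mathcal{O}(m)$ and not $\mathcal{O}(m^2)$), together with the union bound over the $m$ intervals underlying the ``high probability'' claim — this is precisely where the $\Delta=\Theta(1)$ hypothesis carries the argument.
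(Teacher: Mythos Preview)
Your proposal is correct and follows essentially the same approach as the paper: reduce to Theorem~\ref{theorem_mcpam_guarantees_k_geq_1}, observe that in the fused $k=1$ loop the only exit is via line~9, and read off the sample-medoid and coverage conclusions from that exit condition. Your write-up is in fact considerably more careful than the paper's own three-sentence proof, which simply asserts that ``the arguments of Theorem~\ref{theorem_mcpam_guarantees_k_geq_1} apply essentially unchanged'' and that line~9 implies the lowest $\Ecc$ with probability $1-\alpha$; the concerns you flag in your final paragraph (bounding the expected number of swap iterations independently of $m$, and the implicit union bound over $m$ intervals) are real subtleties that the paper leaves unaddressed.
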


\begin{proof}
In the 1-medoid case, we finally exit loop 5 only via condition 9.
The arguments of theorem \ref{theorem_mcpam_guarantees_k_geq_1}
apply essentially unchanged.
When we satisfy line 9, $\EccHat(\candidateMedoid)$ now has the lowest
$\Ecc$ with probability $1 - \alpha$.
\end{proof}

\subsection{Experiments}

\subsubsection{Datasets}
\label{subsubsection_mcpam_datasets}
We use the collection of datasets provided in \cite{ClusteringDatasets} for most of our evaluation.
{\textbf{S1, S2, S3, S4}} are datasets of size $5000$ in two dimensions with increasing overlap among a cluster.
For ex- S4 will have significantly higher overlap among the clusters compared to S1, S2 and S3.
They all have 15 clusters.
{\textbf{Leaves}} is taken from \cite{mallah2013leaves} it contains 1600 rows with 64 attributes.
There are 100 clusters.
{\textbf{letter1} to \textbf{letter4}} is borrowed from \cite{frey1991letter} each of them have around 4600 rows with 16 attributes with increasing overlap among the classes.
They all have 26 clusters.

For large scale run we used \textbf{Foursquare checkin} dataset \cite{yang2016foursquare} which contains around 32 million rows.

In addition we have generated two non-Euclidean datasets.

{\textbf{M1,M2,M3,M4:}} These are synthetic datasets with mixed (numeric and categorical) attributes.
Each dataset consists of 2 numeric attributes 
and 1 categorical attribute with 2 levels. There are  3200 points in total 
and  32 clusters. Each cluster is a hybrid distribution, the numeric  attributes are drawn from uncorrelated multivariate gaussian.
The categorical attribute  follows a Bernoulli distributon.
The datasets M1,M2,M3,M4 are in increasing order of overlap.
We use Gowers distance \cite{gowerDist} with equal weights.

{\textbf{BillionOne:}} This is a synthetic, mixed (numeric and categorical attributes) dataset 
with a billion points.
We have 5 real valued attributes, 1 categorical attribute with 24 levels.
Each cluster is a hybrid distribution, the numeric attributes are
drawn from an uncorrelated multivariate gaussian. The categorical attribute is drawn
from a categorical distribution, aka generalized binomial distribution.
There are 24 such clusters.
We use Gowers distance \cite{gowerDist} with equal weights.

\subsubsection{Experimental Setup}
\label{subsubsection_mcpam_experimental_setup}

\textbf{Software Setup:} For PAM we used R's 
\href{https://cran.r-project.org/package=cluster}{cluster}
package:
cluster\_2.0.3, R version 3.2.3 .
For dbscan we used R's \href{https://cran.r-project.org/package=dbscan}{dbscan} package:
dbscan\_1.1-2 . The OS was Ubuntu 16.04 .

\textbf{Hardware Setup:} For small scale time and memory
comparison we used commodity 16 GB RAM laptop, with a 6th Generation Intel i7 processor. 
For Foursquare dataset we used a local cluster of 4
commodity machines.
Each with 32 GB RAM and Intel Core i7 CPUs, connected by a 1 Gigabit network. 
Each machine was running multiple workers, but the workers were isolated in different userspaces, 
so no two workers were affecting each other despite running on the same machine. 
For 1 Billion data point run we used 4 C5.4x large each with four workers and a C5.2x large as master.

\textbf{Distribution Setup:} For distribution, we implemented a master worker topology. 
For which we use Flask to create REST API endpoints and Redis as Message Broker 
for making asynchronous requests.
Given $c$ workers, the data is partitioned into $c$ chunks of $m/c$ points.
In our implementation the Master does out of core random sampling.
The data is on the hard disk, but is never loaded into memory.
Alternatively, it is also possible for master to not have access to any data just knowledge 
of how many data points each worker has is sufficient.

\subsubsection{Experimental Results (Contd.)}
\label{subsubsection_experimental_results_contd}

Table \ref{figure_cc_comparison} compares clustering cost between PAM and MCPAM.

\begin{figure}
\label{figure_cc_comparison}
\centering
\begin{tabular}{@{}lll@{}}
\toprule
\textbf{Dataset} & \textbf{PAM-CC} & \multicolumn{1}{c}{\textbf{\begin{tabular}[c]{@{}c@{}}MCPAM-CC\\ (mean)\end{tabular}}} \\ \midrule
\multicolumn{1}{|l|}{S1} & \multicolumn{1}{l|}{42767.52} & \multicolumn{1}{l|}{46250.96} \\ \midrule
\multicolumn{1}{|l|}{S2} & \multicolumn{1}{l|}{52284.17} & \multicolumn{1}{l|}{59683.26} \\ \midrule
\multicolumn{1}{|l|}{S3} & \multicolumn{1}{l|}{60695.48} & \multicolumn{1}{l|}{67777.88} \\ \midrule
\multicolumn{1}{|l|}{S4} & \multicolumn{1}{l|}{57481.78} & \multicolumn{1}{l|}{66843.88} \\ \midrule
\multicolumn{1}{|l|}{Leaves} & \multicolumn{1}{l|}{0.00278} & \multicolumn{1}{l|}{0.0034} \\ \midrule
\multicolumn{1}{|l|}{letter1} & \multicolumn{1}{l|}{13496.55} & \multicolumn{1}{l|}{15265.26} \\ \midrule
\multicolumn{1}{|l|}{letter2} & \multicolumn{1}{l|}{13081.18} & \multicolumn{1}{l|}{15222.28} \\ \midrule
\multicolumn{1}{|l|}{letter3} & \multicolumn{1}{l|}{11666.38} & \multicolumn{1}{l|}{13750.14} \\ \midrule
\multicolumn{1}{|l|}{letter4} & \multicolumn{1}{l|}{12879.32} & \multicolumn{1}{l|}{14947.95} \\ \midrule
\multicolumn{1}{|l|}{M1} & \multicolumn{1}{l|}{0.315} & \multicolumn{1}{l|}{0.411} \\ \midrule
\multicolumn{1}{|l|}{M2} & \multicolumn{1}{l|}{0.454} & \multicolumn{1}{l|}{0.546} \\ \midrule
\multicolumn{1}{|l|}{M3} & \multicolumn{1}{l|}{0.511} & \multicolumn{1}{l|}{0.603} \\ \midrule
\multicolumn{1}{|l|}{M4} & \multicolumn{1}{l|}{0.554} & \multicolumn{1}{l|}{0.644} \\ \bottomrule
\end{tabular}
\caption{
\label{figure_cc_comparison}
Clustering cost(CC) comparison between PAM and MCPAM on various datasets. 
Less CC is better.
MCPAM closely follows PAM in general.}
\end{figure}

Table \ref{billion_scaling} details the scaling of MCPAM on BillionOne data set.

\begin{table}[]
\centering
\scalebox{0.7}{
\begin{tabular}{@{}|l|l|l|l|l|l|@{}}
\toprule
\textbf{$K$}
    & \textbf{Num Workers} 
    & \multicolumn{1}{c|}{\textbf{\begin{tabular}[c]{@{}c@{}}Num MCPAM\\ Iters\end{tabular}}}
    & \textbf{Runtime (s)} 
    & \textbf{Memory (MB)} 
    \\ \midrule
1 & 12 & 2 & 5800 & 4030 \\ \midrule 
2 & 12 & 2 & 5870 & 4030 \\ \midrule 
5 & 12 & 3 & 6310 & 4030 \\ \midrule 
10 & 12 & 3 & 7300 & 4030 \\ \midrule 
15 & 12 & 6 & 8510 & 4030 \\ \midrule 
20 & 12 & 8 & 9830 & 4030 \\ \midrule 
30 & 12 & 10 & 11800 & 4030 \\ \midrule 
50 & 12 & 5 & 15700 & 4030 \\ \midrule 
100 & 12 & 1 & 25400 & 4030 \\ \bottomrule 
\end{tabular}}
\caption{
\label{billion_scaling}
Scaling of MCPAM on BillionOne dataset.
Due to the large size of the dataset, we only run with 12 workers.}
\end{table}

\end{document}